\colorlet{linkequation}{blue}
\def\shownotes{1} 
\newcommand{\authnote}[2]{$\ll$\textsf{\small #1 notes: #2}$\gg$}
\newcommand{\authnote}[2]{}
\newcommand{\ip}[2]{\langle #1, #2 \rangle}
\newcommand{\cC}{\mathcal{C}}
\newcommand{\cO}{\mathcal{O}}
\newcommand{\cX}{\mathcal{X}}
\newcommand{\cL}{\mathcal{L}}
\newcommand{\RR}{\mathbb{R}}
\newcommand{\NN}{\mathbb{N}}
\newtheorem{theorem}{Theorem}[section]
\newtheorem{lemma}[theorem]{Lemma}
\newtheorem{corollary}[theorem]{Corollary}
\newtheorem{proposition}[theorem]{Proposition}
\theoremstyle{definition}
\newtheorem{definition}[theorem]{Definition}
\newtheorem{remark}[theorem]{Remark}
\newcommand{\tb}[1]{\textbf{#1}}
\newcommand{\vertiii}[1]{{\left\vert\kern-0.25ex\left\vert\kern-0.25ex\left\vert #1 
    \right\vert\kern-0.25ex\right\vert\kern-0.25ex\right\vert}}
\DeclareMathOperator{\Tr}{tr}
\newcommand{\norm}[1]{\left\|{#1}\right\|}
\newcommand{\fnorm}[1]{\left\|{#1}\right\|_{\sf F}}
\newcommand{\nucnorm}[1]{\left\|{#1}\right\|_*}
\title{Large Learning Rate Tames Homogeneity: Convergence and Balancing Effect}
\author{Yuqing Wang,\ Minshuo Chen,\ Tuo Zhao,\ Molei Tao \\
Georgia Institute of Technology\\
\texttt{\{ywang3398,mchen393,tourzhao,mtao\}@gatech.edu} \\
}
\begin{document}

\maketitle

\begin{abstract}
Recent empirical advances show that training deep models with large learning rate often improves generalization performance. However, theoretical justifications on the benefits of large learning rate are highly limited, due to challenges in analysis. In this paper, we consider using Gradient Descent (GD) with a large learning rate on a homogeneous matrix factorization problem, i.e., $\min_{X, Y} \|A - XY^\top\|_{\sf F}^2$. We prove a convergence theory for constant large learning rates well beyond $2/L$, where $L$ is the largest eigenvalue of Hessian at the initialization. Moreover, we rigorously establish an implicit bias of GD induced by such a large learning rate, termed `balancing', meaning that magnitudes of $X$ and $Y$ at the limit of GD iterations will be close even if their initialization is significantly unbalanced. Numerical experiments are provided to support our theory.
\end{abstract}

\section{Introduction}
Training machine learning models such as deep neural networks involves optimizing highly nonconvex functions. Empirical results indicate an intimate connection between training algorithms and the performance of trained models \citep{le2011optimization, bottou2018optimization, zhang2021understanding, soydaner2020comparison, zhou2020towards}. Especially for widely used first-order training algorithms (e.g., GD and SGD), the learning rate is of essential importance and has received extensive focus from researchers \citep{smith2017cyclical, jastrzkebski2017three, smith2018disciplined,  gotmare2018closer, liu2019variance, li2019exponential}. A recent perspective is that large learning rates often lead to improved testing performance compared to the counterpart trained with small learning rates \citep{smith2019super, yue2020salr}.
Towards explaining the better performance, a common belief is that large learning rates encourage the algorithm to search for flat minima, which often generalize better and are more robust than sharp ones \citep{seong2018towards, lewkowycz2020large}.

Despite abundant empirical observations, theoretical understandings of the benefits of large learning rate are still limited for non-convex functions, partly due to challenges in analysis. For example, the convergence (of GD or SGD) under large learning rate is not guaranteed. Even for globally smooth functions, very few general results exist if the learning rate exceeds certain threshold \citep{NEURIPS2020_1b9a8060}. Besides, popular regularity assumptions such as global smoothness for simplified analyses are often absent in homogeneous models, including commonly used ReLU neural networks.

This paper theoretically studies the benefits of large learning rate in a matrix factorization problem
\begin{align}
\label{problem:intro_matrix_factorization}
\min_{X,Y}~ \frac{1}{2} \fnorm{A-XY^\top}^2,\quad \text{where}\ A\in\RR^{n\times n},\ X,Y\in\RR^{n\times d}.
\end{align}
We consider Gradient Descent (GD) for solving \eqref{problem:intro_matrix_factorization}: at the $k$-th iteration, we have
\begin{align*}
X_{k+1} = X_k+h(A-X_kY_k^\top )Y_k \quad \text{and} \quad
Y_{k+1} = Y_k+h(A^\top -Y_kX_k^\top )X_k,
\end{align*}
where $h$ is the learning rate. Despite its simple formula, problem~\eqref{problem:intro_matrix_factorization} serves as an important foundation of a variety of problems, including matrix sensing \citep{chen2015fast, bhojanapalli2016dropping, tu2016low}, matrix completion \citep{keshavan2010matrix, hardt2014understanding}, and linear neural networks \citep{ji2018gradient, gunasekar2018implicit}. 

Problem \eqref{problem:intro_matrix_factorization} possesses several intriguing properties. Firstly, the objective function is non-convex, and critical points are either global minima or saddles (see e.g., \citet{baldi1989neural,li2019symmetry,pmlr-v108-valavi20a,chen2018landscape}). Secondly, problem \eqref{problem:intro_matrix_factorization} is homogeneous in $X$ and $Y$, meaning that rescaling $X, Y$ to $aX, a^{-1}Y$ for any $a \neq 0$ will not change the objective's value. This property is shared by commonly used ReLU neural networks. A direct consequence of homogeneity is that global minima of \eqref{problem:intro_matrix_factorization} are non-isolated and can be unbounded. The curvatures at these global minima are highly dependent on the magnitudes of $X, Y$. When $X, Y$ have comparable magnitudes, the largest eigenvalue of Hessian is small, and this corresponds to a flat minimum; on the contrary, unbalanced $X$ and $Y$ give a sharp minimum. Last but not the least, the homogeneity impairs smoothness conditions of \eqref{problem:intro_matrix_factorization}, rendering the gradient being not Lipschitz continuous unless $X, Y$ are bounded. See a formal discussion in Section \ref{sec:background}.

Existing approaches for solving \eqref{problem:intro_matrix_factorization} often uses explicit regularization \citep{ge2017no,tu2016low,cabral2013unifying,li2019non}, or infinitesimal (or diminishing) learning rates  for controlling the magnitudes of $X, Y$ \citep{NEURIPS_Du2018algorithmic,ye2021global}. In this paper, we go beyond the scope of aforementioned works, and analyze GD with a large learning rate for solving \eqref{problem:intro_matrix_factorization}. In particular, we allow the learning rate $h$ to be as large as approximately $4/L$ (see more explanation in Section~\ref{sec:background}), where $L$ denotes the largest eigenvalue of Hessian at GD initialization. In connection to empirical observations, we provide positive answers to the following two questions:
\begin{center}
\it Does GD with large learning rate converge at least for some cases of \eqref{problem:intro_matrix_factorization}? \\
Does larger learning rate biases toward flatter minima (i.e., $X, Y$ with comparable magnitudes)?
\end{center}

We theoretically show the convergence of GD with large learning rate for the two situations $n=1,d\in\NN^+$ or $d=1,n\in\NN^+$ with isotropic $A$. We also observe a, perhaps surprising, ``balancing effect'' for general matrix factorization (i.e., any $d$, $n$, and $A$), meaning that when $h$ is sufficiently large, the difference between $X$ and $Y$ shrinks significantly at the convergence of GD compared to its initial, even if the initial point is close to an unbalanced global minimum. In fact, with a proper large learning rate $h$, $\fnorm{X_k-Y_k}^2$ may decrease by an arbitrary factor at its limit. The following is a simple example of our theory for $n=1$ (i.e. scalar factorization), and more general results will be presented later with a precise bound for $h$ depending on the initial condition and $A$.

\begin{theorem}[Informal version of Thm.\ref{thm:scalar_convergence} \& \ref{thm:scalar_balancing}]
Given scalar $A$ and initial condition $X_0,Y_0\in \mathbb{R}^{1\times d}$ chosen almost everywhere, with learning rate $h \lesssim 4/L$, GD converges to a global minimum $(X,Y)$ satisfying $
\|X\|_{\sf F}^2+\|Y\|_{\sf F}^2\le \frac{2}{h}$.
Consequently, its extent of balancing is quantified by $
\|X-Y\|_{\sf F}^2\le \frac{2}{h}-2A$.
\label{cor:scalarFactorizationBalancing}
\end{theorem}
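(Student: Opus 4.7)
The key reduction is the linear change of variables $P_k := X_k + Y_k$, $Q_k := X_k - Y_k$, under which the two GD updates diagonalize into
\[
P_{k+1} = (1 + h r_k) P_k, \qquad Q_{k+1} = (1 - h r_k) Q_k, \qquad r_k := A - X_k Y_k^\top = A - \tfrac14\bigl(\|P_k\|_{\sf F}^2 - \|Q_k\|_{\sf F}^2\bigr).
\]
Hence $P_k$ is always a scalar multiple of $P_0$ and $Q_k$ of $Q_0$, and the whole trajectory is captured by the two-dimensional map $\Phi:(u,v)\mapsto\bigl((1+hr)^2 u,\,(1-hr)^2 v\bigr)$ on $u_k:=\|P_k\|_{\sf F}^2$, $v_k:=\|Q_k\|_{\sf F}^2$, with $r=A-(u-v)/4$. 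Its fixed-point set is the line $\{u-v=4A\}$ (equivalently $\{r=0\}$), and since $t_k:=\|X_k\|_{\sf F}^2+\|Y_k\|_{\sf F}^2=(u_k+v_k)/2$ and $\|X_k-Y_k\|_{\sf F}^2=v_k=t_k-2A+2r_k$, both conclusions of the theorem reduce to proving $t_\infty\le 2/h$ at the limit.

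For convergence the key calculation yields the identity
\[
t_{k+1}-t_k = h r_k^2\,(h t_k - 4) + 4 h A r_k.
\]
When $A=0$ this directly makes $t_k$ monotonically decreasing as long as $h t_k\le 4$; by induction $h t_k\le h t_0\lesssim 4$ throughout, and telescoping gives $\sum_k (4-h t_k) r_k^2 \le t_0 < \infty$, forcing $r_k\to 0$. For general $A$ the extra $4hAr_k$ term spoils monotonicity, and I would couple this identity with the multiplicative relation $u_{k+1}v_{k+1}=(1-h^2 r_k^2)^2 u_k v_k$, which exhibits $u_k v_k$ as a Lyapunov function whenever $h|r_k|\le\sqrt{2}$—a property I expect to maintain inductively using the hypothesis $h\lesssim 4/L$ together with the feasibility constraint $|u-v|\le u+v$. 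Combining monotonicity of $uv$ with boundedness of $t$ yields $\sum r_k^2<\infty$, hence $r_k\to 0$; convergence of $(u_k,v_k)$ to a single fixed point $(u^\ast,v^\ast)$ on $\{u-v=4A\}$ then follows from local contractivity of $\Phi$ transverse to this line once $r_k$ is small.

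To upgrade this to $t^\ast:=(u^\ast+v^\ast)/2\le 2/h$, I would linearize $\Phi$ at $(u^\ast,v^\ast)$: a short calculation gives Jacobian eigenvalues $1$ (tangent to the fixed-point line) and $1-ht^\ast$ (transverse), so every fixed point with $ht^\ast>2$ is a strict saddle of the discrete map. The Center--Stable Manifold Theorem applied to $\Phi$, in the spirit of Lee et al.'s framework for first-order methods avoiding strict saddles, then shows that the set of initial conditions whose orbit converges to such a point is a countable union of embedded lower-dimensional manifolds, hence Lebesgue-null—precisely the null set absorbed into the ``almost everywhere'' hypothesis. A direct computation gives $\det\mathrm{D}\Phi=(1-h^2 r^2)\bigl(1-3 h^2 r^2 - h t + 2 h^2 A r\bigr)$, which vanishes only on a codimension-one subset, so $\Phi$ is a local diffeomorphism where needed. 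Combining, $h t^\ast\le 2$ holds for a.e.\ initialization, i.e., $\|X\|_{\sf F}^2+\|Y\|_{\sf F}^2\le 2/h$, and then $\|X-Y\|_{\sf F}^2=t^\ast-2A\le 2/h-2A$.

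The principal obstacle is the convergence step for $A\ne 0$. Because $h$ is allowed past the classical $2/L$ threshold, the objective $\tfrac12 r_k^2$ does not decrease at every iterate, so the descent-lemma route is unavailable. The candidate Lyapunov $u_k v_k$ is attractive but requires the inductive a priori bound $h|r_k|\le\sqrt{2}$ whose proof is the crux of the argument; establishing this bound is where the quantitative hypothesis $h\lesssim 4/L$ is really used. A subsidiary technical issue is verifying the local-diffeomorphism hypothesis of the stable-manifold theorem on the reduced phase space, in particular handling the singular locus $\det\mathrm{D}\Phi=0$ where the transverse eigenvalue $1-ht$ passes through zero.
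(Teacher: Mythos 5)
Your reduction is correct and the supporting computations check out: the change of variables $P=X+Y$, $Q=X-Y$ does diagonalize the update into $P_{k+1}=(1+hr_k)P_k$, $Q_{k+1}=(1-hr_k)Q_k$; the identity $t_{k+1}-t_k=hr_k^2(ht_k-4)+4hAr_k$ agrees with the paper's Lemma on one-step change of $\|x_k\|^2+\|y_k\|^2$; and the Jacobian eigenvalues $1$ and $1-ht^*$ at a fixed point on $\{r=0\}$ reproduce exactly the stability criterion the paper uses in Section 5, so your route to the bound $t^*\le 2/h$ (stability plus a center--stable-manifold argument to discard unstable fixed points for a.e.\ initialization) is legitimate and is in fact how the paper proves balancing in the general-matrix case. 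However, for the scalar theorem the paper does \emph{not} obtain $t^*\le 2/h$ this way: it proves convergence constructively and shows the iterates \emph{enter and stay in} $\{t\le 2/h\}$, and that convergence proof is the real content of Theorems~\ref{thm:scalar_convergence}--\ref{thm:scalar_balancing}. This is precisely where your plan has a genuine gap, which you partly acknowledge but which is larger than a technical loose end.

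Concretely: (i) the inductive bound $h|r_k|\le\sqrt2$ needed for $u_kv_k$ to be a Lyapunov function is not established and is not obviously available --- in the admissible region $ht_k\le 4$ one can have $X_k\approx -Y_k$ with large norm, giving $h|r_k|=h(\mu+v_k/4)$ arbitrarily close to $2>\sqrt2$, and the paper's Phase-1 analysis shows $|r_k|$ is genuinely non-monotone (the multiplier $s_k$ satisfies $|s_k|>1$ on large subsets), so the bound cannot be propagated by a one-line induction. (ii) Even granting monotonicity of $u_kv_k$, the implication ``monotone $u_kv_k$ plus bounded $t_k$ $\Rightarrow\sum r_k^2<\infty$'' only holds if $u_kv_k$ converges to a \emph{positive} limit; if $u_kv_k\to 0$ (e.g.\ convergence toward the invariant set $\{v=0\}$, on which the dynamics is a one-dimensional large-step GD that for generic large $h$ admits period-$2$ and longer orbits, cf.\ Appendix~\ref{sec:periodic_orbits}) nothing forces $r_k\to 0$, and this case is not addressed. (iii) A minor point: the fixed-point set of your reduced map $\Phi$ is not only $\{u-v=4A\}$; it also contains points with $u=0$ or $v=0$ where $hr=\pm 2$, corresponding to period-$2$ orbits of the original GD, which must be ruled out as limits. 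The paper fills exactly this hole with the long two-phase argument of Appendix~\ref{app:scalar_convergence_balancing}: a quantified one-step-or-two-step decrease of $t_k$ until the iterate enters $\{t\le 2/h\}$, followed by a detailed state-space partition driven by the sign and magnitude of $s_k$. Without a substitute for that analysis, your proposal establishes the balancing inequality \emph{conditional on convergence}, but not the convergence claim itself.
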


We remark that having a learning rate $h \approx 4/L$ is far beyond the commonly analyzed regime in optimization.
Even for globally $L$-smooth objective, traditional theory requires $h < 2/L$ for GD convergence and $h = 1/L$ is optimal for convex functions \citep{boyd2004convex}, not to mention that our problem \eqref{problem:intro_matrix_factorization} is never globally $L$-smooth due to homogeneity. Modified equation provides a tool for probing intermediate learning rates (see \citet[Chapter 9]{hairer2006geometric} for a general review, and \citet[Appendix A]{NEURIPS2020_1b9a8060} for the specific setup of GD), but the learning rate here is too large for modified equation to work (see Appendix \ref{sec:modified_eqn}). In fact, besides blowing up, GD with large learning rate may have a zoology of limiting behaviors (see e.g., Appendix~\ref{sec:periodic_orbits} for convergence to periodic orbits under our setup, and \citet{NEURIPS2020_1b9a8060} for convergence to chaotic attractors).


Our analyses (of convergence and balancing) leverage various mathematical tools, including a proper partition of state space and its dynamical transition (specifically invented for this problem), stability theory of discrete time dynamical systems \citep{alligood1996chaos}, and geometric measure theory~\citep{federer2014geometric}.

The rest of the paper is organized as: Section \ref{sec:background} provides the background of studying \eqref{problem:intro_matrix_factorization} and discusses related works; Section \ref{sec:scalar_factorization} presents convergence and balancing results for scalar factorization problems; Section \ref{sec:rank_one_approx_isotropic} generalizes the theory to rank-$1$ matrix approximation; Section \ref{sec:general_matrix_factorization} studies problem \eqref{problem:intro_matrix_factorization} with arbitrary $A$ and its arbitrary-rank approximation; Section \ref{sec:discussion} summarizes the paper and discusses broadly related topics and future directions. 


\section{Background and Related Work}\label{sec:background}

\noindent {\bf Notations}. $\|v \|$ is $\ell_2$ norm of a column or row vector $v$. $\fnorm{M}$ is the Frobenius norm of a matrix $M$.

\paragraph{Sharp and flat minima in \eqref{problem:intro_matrix_factorization}}

We discuss the curvatures at global minima of \eqref{problem:intro_matrix_factorization}. To ease the presentation, consider simplified versions of \eqref{problem:intro_matrix_factorization} with either $n = 1$ or $d = 1$. In this case, $X$ and $Y$ become vectors and we denote them as $x, y$, respectively. We show the following proposition characterizing the spectrum of Hessian at a global minimum.

\begin{proposition}
\label{pro:eigenvalue_hessian}
When $n=1, d\in\mathbb{N}^+$ or $d = 1, n\in\mathbb{N}^+$ in \eqref{problem:intro_matrix_factorization}, the largest eigenvalue of Hessian at a global minimum $(x,y)$ is $\norm{x}^2+\norm{y}^2$ and the smallest eigenvalue is $0$.
\end{proposition}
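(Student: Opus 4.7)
The plan is to compute the Hessian of $f(X,Y) = \frac{1}{2}\fnorm{A - XY^\top}^2$ in block form, evaluate it at a global minimum where the residual $R := A - XY^\top$ vanishes, and then read off its spectrum by exploiting the low-rank structure that emerges. I handle the two simplified settings $n=1$ and $d=1$ separately but by the same strategy.

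For $n=1$, I treat $x,y \in \RR^d$ as $X^\top, Y^\top$, so $f = \frac{1}{2}(A - x^\top y)^2$. Direct differentiation yields the Hessian blocks $\nabla^2_{xx} f = yy^\top$, $\nabla^2_{yy} f = xx^\top$, and $\nabla^2_{xy} f = yx^\top - rI$ with $r = A - x^\top y$. At a global minimum $r = 0$, so the full Hessian $H \in \RR^{2d \times 2d}$ reduces to the rank-one outer product $vv^\top$ with $v = (y;x) \in \RR^{2d}$. Its only nonzero eigenvalue equals $\norm{v}^2 = \norm{x}^2 + \norm{y}^2$, and the remaining $2d-1$ eigenvalues are zero, which establishes the claim.

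For $d=1$, I write $x,y \in \RR^n$ and $f = \frac{1}{2}\fnorm{A - xy^\top}^2$. A parallel computation yields $\nabla^2_{xx} f = \norm{y}^2 I$, $\nabla^2_{yy} f = \norm{x}^2 I$, and $(\nabla^2_{xy} f)_{ij} = 2 x_i y_j - A_{ij}$, which at a global minimum (where $A = xy^\top$) collapses to $xy^\top$. To diagonalize $H$ I would decompose $\RR^{2n}$ into the two-dimensional span of $(x;0)$ and $(0;y)$ and its orthogonal complement. On the former, the ansatz $(\alpha x;\beta y)$ produces two eigenvalues: $\norm{x}^2+\norm{y}^2$ (for $(\alpha,\beta)\propto(\norm{y}^2,\norm{x}^2)$) and $0$ (for $\alpha+\beta=0$); on the latter, $(u;0)$ with $u\perp x$ is an eigenvector with eigenvalue $\norm{y}^2$, and $(0;v)$ with $v\perp y$ is an eigenvector with eigenvalue $\norm{x}^2$. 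These exhaust all $2n$ eigenvalues and confirm that the maximum is $\norm{x}^2+\norm{y}^2$ and the minimum is $0$.

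The core step in both cases is the observation that the residual vanishes at every global minimum, which forces the cross-block of the Hessian into a rank-one form and reduces the spectral analysis to a routine block diagonalization; I do not anticipate a substantive obstacle. Minor care is needed in the degenerate case $x=0$ or $y=0$, which forces $A=0$ and makes $H$ block diagonal, but the identity $\max\{\norm{x}^2,\norm{y}^2\} = \norm{x}^2+\norm{y}^2$ still holds there and preserves the claim.
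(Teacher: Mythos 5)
Your $n=1$ half is correct, and it is actually a more streamlined route than the paper's: Theorem~\ref{thm:scalar_hessian_eigenvalue} derives the full characteristic polynomial at an arbitrary point via the matrix determinant lemma and then specializes to $x^\top y=\mu$, whereas you observe directly that the vanishing residual collapses the Hessian to the rank-one outer product $(y;x)(y;x)^\top$, from which the spectrum is immediate.

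The $d=1$ half has a genuine gap. With $d=1$ and $n>1$ the problem is under-parameterized: a global minimizer gives the best rank-one approximation $xy^\top=\sigma_1 u_1 v_1^\top$ of $A$, so the residual $R=A-xy^\top=\sum_{i\ge 2}\sigma_i u_i v_i^\top$ does \emph{not} vanish unless $\mathrm{rank}(A)\le 1$. Hence the cross block is $\nabla^2_{xy}f=2xy^\top-A=xy^\top-R$, not $xy^\top$, and your off-span eigenvectors fail: for $(u;0)$ with $u\perp x$ one gets $H(u;0)=\left(\norm{y}^2 u;\ -R^\top u\right)$, and $R^\top u\neq 0$ whenever $u$ has a component along $u_2,\dots,u_n$. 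The claimed spectrum on the complement ($\norm{y}^2$ and $\norm{x}^2$, each with multiplicity $n-1$) is also false for general $A$. The paper's argument (Theorem~\ref{thm:rank1_general_balancing}) handles this by rotating to diagonal $A$ via the SVD and invoking the first-order conditions $(A-xy^\top)y=0$ and $(A^\top-yx^\top)x=0$ to cancel the residual cross terms inside a block determinant; each singular value $\mu_i$ of $A$ then contributes a pair of Hessian eigenvalues solving $\nu^2-(\norm{x}^2+\norm{y}^2)\nu+(\mu_{\max}^2-\mu_i^2)=0$. Only the block with $\mu_i=\mu_{\max}$ produces the pair $\{0,\ \norm{x}^2+\norm{y}^2\}$; the remaining blocks give pairs strictly inside $(0,\norm{x}^2+\norm{y}^2)$, which is why the stated extremes survive. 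As written, your $d=1$ argument only covers $\mathrm{rank}(A)\le 1$; to repair it you would need to incorporate the nonzero residual and the stationarity conditions as above.
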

Homogenity implies global minimizers of \eqref{problem:intro_matrix_factorization} are not isolated, which is consistent with the 0 eigenvalue.
On the other hand, if the largest eigenvalue $\norm{x}^2+\norm{y}^2$ is large (or small), then the curvature at such a global minimum is sharp (or flat), in the direction of the leading eigenvector. Meanwhile, note this sharpness/flatness is an indication of the balancedness between magnitudes of $x, y$ at a global minimum. To see this, singular value decomposition (SVD) yields that at a global minimum, $(x, y)$ satisfies $\fnorm{xy^\top}^2=\sigma_{\max}^2(A)$. Therefore, large $\norm{x}^2+\norm{y}^2$ is obtained when $|\|x\|-\|y\||$ is large, i.e., $x$ and $y$ magnitudes are unbalanced, and small $\norm{x}^2+\norm{y}^2$ is obtained when balanced.

\paragraph{Large learning rate}

We study smoothness properties of \eqref{problem:intro_matrix_factorization} and demonstrate that our learning rate is well beyond conventional optimization theory. We first define the smoothness of a function.
\begin{definition}[$L$-smooth]
\label{def:l_smooth}
A function $f\in\cC^1$ defined on $\RR^N$ is {\it $L$-smooth} if for all $u_1,u_2\in \RR^N$,
\begin{align}
\label{eqn:l-smooth}
    \|\nabla f(u_1)-\nabla f(u_2)\|\le L\|u_1-u_2\|.
\end{align}
If further $f\in\cC^2$, then $\nabla^2 f\preceq L I$. Moreover, if we have~\eqref{eqn:l-smooth} for $u_1,u_2\in \cX\subseteq \RR^N$, we call it locally $L$-smooth.
\end{definition}

In traditional optimization \citep{nesterov2003introductory,polyak1987introduction,nesterov1983method,polyak1964some,beck2009fast}, most analyzed objective functions often satisfy (i) (some relaxed form of) convexity or strong convexity, and (ii) $L$-smoothness. Choosing a step size $h < 2/L$ guarantees the convergence of GD to a minimum by the existing theory (reviewed in Appendix~\ref{app:gd_converge_2/L}). Our choice of learning rate $h \approx 4/L$ (more precisely, $4/L_0$; see below) goes beyond the classical analyses.

Besides, in our problem \eqref{problem:intro_matrix_factorization}, the regularity is very different. Even simplified versions of \eqref{problem:intro_matrix_factorization}, i.e., with either $n = 1$ or $d = 1$, suffer from (i) non-convexity and (ii) unbounded eigenvalues of Hessian, i.e., no global smoothness (see Appendix~\ref{app:scalar_non_convex} for more details). As shown in \citet{NEURIPS_Du2018algorithmic,ye2021global,ma2021beyond}, decaying or infinitesimal learning rate ensures that the GD trajectory stays in a locally smooth region. However, the gap between the magnitudes of $X, Y$ can only be maintained in that case. We show, however, that larger learning rate can shrink this gap. More precisely, if initial condition is well balanced, there is no need to use large learning rate; otherwise, we can use learning rate as large as approximately $4/L$, and within this range, larger $h$ provides smaller gap between $X$ and $Y$ at the limit (i.e. infinitely many GD iterations).

\paragraph{Related work}
Matrix factorization problems in various forms have been extensively studied in the literature. The version of \eqref{problem:intro_matrix_factorization} is commonly known as the low-rank factorization, although here $d$ can arbitrary and we consider both $d\leq \text{rank}(A)$ and $d>\text{rank}(A)$ cases.
\citet{baldi1989neural,li2019symmetry, valavi2020landscape} provide landscape analysis of \eqref{problem:intro_matrix_factorization}. \citet{ge2017no,tu2016low} propose to penalize the Frobenius norm of $\fnorm{X^\top X - Y^\top Y}^2$ to mitigate the homogeneity and establish global convergence guarantees of GD for solving \eqref{problem:intro_matrix_factorization}. \citet{cabral2013unifying,li2019non} instead penalize individual Frobenius norms of $\fnorm{X}^2 + \fnorm{Y}^2$. We remark that penalizing $\fnorm{X}^2 + \fnorm{Y}^2$ is closely related to nuclear norm regularization, since the variational formula for nuclear norm $\nucnorm{Z}^2 = \min_{Z = XY^\top} \fnorm{X}^2 + \fnorm{Y}^2$. More recently, \citet{liu2021noisy} consider using injected noise as regularization to GD and establish a global convergence (see also \cite{zhou2019toward,liu2022noise}. More specifically, by perturbing the GD iterate $(X_k, Y_k)$ with Gaussian noise, GD will converge to a flat global optimum. On the other hand, \citet{NEURIPS_Du2018algorithmic,ye2021global,ma2021beyond} show that even without explicit regularization, when the learning rate of GD is infinitesimal, i.e., GD approximating gradient flow, $X$ and $Y$ maintain the gap in their magnitudes. Such an effect is more broadly recognized as implicit bias of learning algorithms \citep{neyshabur2014search, gunasekar2018implicit, soudry2018implicit,Li2020Implicit,li2021implicit}. Built upon this implicit bias, \citet{NEURIPS_Du2018algorithmic} further prove that GD with diminishing learning rates converges to a bounded global minimum of \eqref{problem:intro_matrix_factorization}, and this conclusion 
is recently extended to the case of a constant small learning rate \citep{ye2021global}. Our work goes beyond the scopes of these milestones and considers matrix factorization with much larger learning rates.

Additional results exist that demonstrate large learning rate can improve performance in various learning problems. Most of them involve non-constant learn rates. Specifically,
\citet{li2019towards} consider a two-layer neural network setting, where using learning rate annealing (initially large, followed by small ones) can improve classification accuracy compared to training with small learning rates. \citet{nakkiran2020learning} shows that the observation in \citet{li2019towards} even exists in convex problems. \citet{lewkowycz2020large} study constant large learning rates, and demonstrate distinct algorithmic behaviors of large and small learning rates, as well as empirically illustrate large learning rate yields better testing performance on neural networks. Their analysis is built upon the neural tangent kernel perspective \citep{jacot2018neural}, with a focus on the kernel spectrum evolution under large learning rate. 
Worth noting is,
\citet{NEURIPS2020_1b9a8060} also study constant large learning rates, and show that large learning rate provides a mechanism for GD to escape local minima, alternative to noisy escapes due to stochastic gradients.


\section{Overparameterized scalar factorization}
\label{sec:scalar_factorization}

In order to provide intuition before directly studying the most general problem, we begin with a simple special case, namely factorizing a scalar by two vectors. It corresponds to  \eqref{problem:intro_matrix_factorization} with $n = 1$ and $d\in\mathbb{N}^{+}$, and this overparameterized problem is written as
\begin{equation}
    \min_{x,y \in \RR^{1\times d}}~ \frac{1}{2} (\mu-x y^\top)^2,
\label{eqn:scalarFactorizationObjective}    
\end{equation}
where $\mu$ is assumed without loss of generality to be a positive scalar. Problem \eqref{eqn:scalarFactorizationObjective} can be viewed as univariate regression using a linear two-layer neural network with the quadratic loss, which is studied in \citet{lewkowycz2020large} with atomic data distribution. Yet our analysis in the sequel can be used to study arbitrary univariate data distributions; see details in Section \ref{sec:discussion}.

Although simplified, problem \eqref{eqn:scalarFactorizationObjective} is still nonconvex and exhibits the same homogeneity as \eqref{problem:intro_matrix_factorization}. The convergence of its large learning rate GD optimization was previously not understood, let alone balancing. Many results that we will obtain for \eqref{eqn:scalarFactorizationObjective} will remain true for more general problems.

We first prove that GD converges despite of $>2/L$ learning rate and for almost all initial conditions:

\begin{theorem}[Convergence]
\label{thm:scalar_convergence}
Given $(x_0^\top,y_0^\top)\in\ (\mathbb{R}^d \times \mathbb{R}^d) \backslash\mathcal{B}$ where $\mathcal{B}$ is some  Lebesgue measure-0 set, when the learning rate $h$ satisfies $$h\le \min\left\{\frac{4}{\|x_0\|^2+\|y_0\|^2+4\mu},\  \frac{1}{3\mu}\right\},$$ GD converges to a global minimum. 
\label{thm:scalarFactorizationConvergence}
\end{theorem}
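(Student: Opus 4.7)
The plan is to reduce the vector GD dynamics to a closed two-dimensional nonlinear planar map, prove convergence of that map to its global-minimum manifold for almost every initial condition, and then lift the result back to the original iterates. The key observation is that the updates $x_{k+1} = x_k + hr_k y_k$, $y_{k+1} = y_k + hr_k x_k$ with $r_k = \mu - x_k y_k^\top$ diagonalize under $s_k := x_k + y_k$, $d_k := x_k - y_k$, yielding $s_{k+1} = (1+hr_k)\, s_k$ and $d_{k+1} = (1-hr_k)\, d_k$. The directions of $s_0$ and $d_0$ are therefore preserved and only the squared norms $v_k := \|s_k\|^2$ and $u_k := \|d_k\|^2$ carry nontrivial information, giving the self-contained planar system $v_{k+1} = (1+hr_k)^2 v_k$, $u_{k+1} = (1-hr_k)^2 u_k$, with $r_k = \mu - (v_k - u_k)/4$. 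Convergence of $(x_k,y_k)$ reduces to convergence of the infinite products $\prod_j (1 \pm hr_j)$. In this notation, global minima form the one-parameter family $\{r=0\} = \{v-u = 4\mu\}$, whose Hessian curvature, by Proposition~\ref{pro:eigenvalue_hessian}, is $q^* = (u^*+v^*)/2 \ge 2\mu$.

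Next I would analyze this reduced map. Linearizing around $\{r=0\}$ produces $\tilde r_{k+1} \approx (1 - hq^*)\,\tilde r_k$, so the family is locally attracting precisely when $q^* < 2/h$, matching the flat-minimum bound of the informal statement. The two learning-rate hypotheses would be used in tandem to show that admissible initializations enter this contracting regime in finitely many steps. The bound $h \le 4/(q_0+4\mu)$ gives $h q_0 + 4h\mu \le 4$, which (via the identity $q_{k+1} = q_k(1 + h^2 r_k^2) + 4hr_k p_k$ derived from the $u,v$ recursions) I would use to prevent $q_k$ from overshooting $2/h$ during the transient. The bound $h \le 1/(3\mu)$ controls $|hr_k|$ so that the multipliers $(1 \pm hr_k)^2$ cannot amplify $u_k$ or $v_k$ unboundedly. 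By splitting into cases on the sign of $r_k$ (undershoot vs.\ overshoot) and tracking a Lyapunov-type functional combining $|r_k|$ with a penalty on $q_k$, one then shows that $|r_k|$ decays geometrically once in the attracting neighborhood; the resulting summability $\sum_k |hr_k| < \infty$ makes the products $\prod_j (1 \pm hr_j)$ converge, delivering convergence of $s_k$, $d_k$, and hence $(x_k, y_k)$ to a global minimum.

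The excluded set $\mathcal{B}$ collects the degeneracies of the reduced map: initializations with $x_0 = 0$, $y_0 = 0$, or $x_0 = \pm y_0$ (which reduce the problem to a 1D subproblem and are handled separately), configurations for which $1 \pm hr_k = 0$ at some finite $k$ (so $u$ or $v$ collapses exactly), and initializations on the stable manifolds of the non-attracting fixed-point sets with $q^* \ge 2/h$. Each such condition is polynomial in $(x_0, y_0)$ and finitely many iterates, so $\mathcal{B}$ is a countable union of algebraic varieties of positive codimension, hence Lebesgue-null by standard stable-manifold and geometric-measure-theoretic arguments (cf.\ \citet{alligood1996chaos}, \citet{federer2014geometric}). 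The hard part will be the global-convergence step for the reduced map: because $h$ is far beyond the classical stability threshold $2/L$, neither the loss nor any naive distance to the fixed-point manifold is monotone along a generic orbit, and persistent sign flips of $r_k$ (overshoot) must be ruled out. Designing a Lyapunov functional that is strictly decreasing under the full nonlinear $(u,v)$ dynamics, and using the two bounds on $h$ simultaneously (each controlling a different failure mode — runaway growth of $q_k$ versus runaway growth of $|hr_k|$), is expected to be the crux of the argument.
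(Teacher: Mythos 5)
Your reduction is sound and is in fact the same two--dimensional reduced system the paper works with, just in nicer coordinates: your $(v,u)=(\|x+y\|^2,\|x-y\|^2)$ is a linear change of variables from the paper's $(\|x\|^2+\|y\|^2,\;x^\top y)$, and your multiplicative form $v_{k+1}=(1+hr_k)^2v_k$, $u_{k+1}=(1-hr_k)^2u_k$ is equivalent to the paper's recursions for $u_k^2=\|x_k\|^2+\|y_k\|^2$ and $x_{k+1}^\top y_{k+1}-\mu=s_k(x_k^\top y_k-\mu)$. Your identification of the local contraction criterion $q^*<2/h$ and of the roles of the two hypotheses on $h$ also matches the paper.

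The gap is that the entire difficulty of the theorem is deferred rather than resolved. You write that a ``Lyapunov-type functional combining $|r_k|$ with a penalty on $q_k$'' will show geometric decay ``once in the attracting neighborhood,'' but (i) no such functional is exhibited, and (ii) the hard part is precisely reaching that neighborhood. When $q_k>2/h$ the linearized multiplier $1-hq_k$ is below $-1$, so $|r_k|$ is locally \emph{expanding} near the minimum manifold; moreover none of the natural candidates ($q_k$, $|r_k|$, the loss) is monotone in this regime --- the paper explicitly shows $q_k$ may decrease only every \emph{other} step, and only outside neighborhoods of the two curves $x^\top y=\mu$ and $x^\top y=-h\mu q/(4-hq)$, which require separate treatment (GD is shown to jump across the valley there rather than decrease $q$). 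A strictly decreasing Lyapunov functional in the naive sense is therefore unlikely to exist, which is why the paper replaces it with a two-phase, region-by-region, one-or-two-step-decrease argument (Lemmas~\ref{sublem:4/h_onestepdecrease}--\ref{lem:4/hto2/h}) followed by a delicate tracking of $s_k,s_{k+1},s_{k+2}$ in the ball $q\le 2/h$ (Lemmas~\ref{lem:s:x^T y<mu}--\ref{sublem:lower_bound_s_in_1/h+2hmu^2}), where $s_k$ can still dip below $-1$. Your summability claim $\sum_k|hr_k|<\infty$ rests entirely on this unproved entry-and-contraction step. Separately, your description of $\mathcal{B}$ as a countable union of algebraic varieties of ``positive codimension'' via ``standard stable-manifold arguments'' is too quick: the unbalanced minima form a continuum of non-hyperbolic fixed points, and the paper instead excludes convergence to them by showing the dynamics is expanding transversally there unless some finite iterate lands exactly on an algebraic null set, whose preimages are null by a co-area/nonsingularity argument (Theorem~\ref{thm:inverse_map_null_set}); that preimage step, not a stable-manifold theorem, is what needs to be supplied.
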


Theorem \ref{thm:scalarFactorizationConvergence} says that choosing a constant learning rate depending on GD initialization guarantees the convergence for almost every starting point in the whole space. This result is even stronger than the already nontrivial convergence under small learning rate with high probability over random initialization \citep{ye2021global}. Furthermore, the upper bound on $h$ is sufficiently large: on the one hand, suppose GD initialization $(x_0, y_0)$ is close to an unbalanced global minimum. By Proposition~\ref{pro:eigenvalue_hessian}, we can check that the largest eigenvalue $L(x_0,y_0)$ of Hessian $\nabla^2 f(x_0, y_0)$ is approximately $ \norm{x_0}^2 + \norm{y_0}^2$. Consequently, our upper bound of $h$ is almost $4/L$, which is beyond $2/L$ (see Section~\ref{sec:background} for more details). On the other hand, we observe numerically that the $4/L$ upper bound is actually very close to the stability limit of GD when initialized away from the origin (see more details in Appendix~\ref{app:scalar_stability_limit_h}).

\begin{figure}[!htb]
    \centering
    \includegraphics[trim=5cm 1.3cm 5cm 5cm,width=0.6\textwidth]{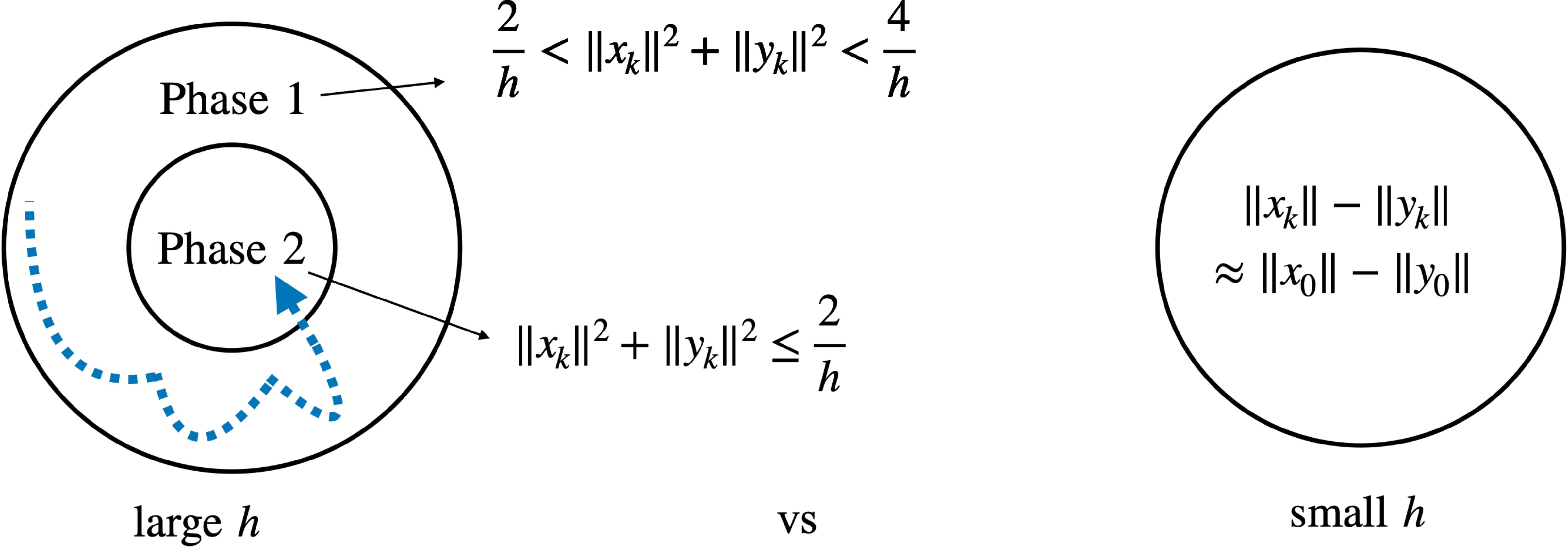}
    \caption{The dynamics of GD under different learning rate $h$} 
    \label{fig:two_phase}
\end{figure}

The convergence in Theorem \ref{thm:scalarFactorizationConvergence} has an interesting searching-to-converging transition as depicted in Figure \ref{fig:two_phase}, where we observe two phases. In Phase $1$, large learning rate drives GD to search for flat regions, escaping from the attraction of sharp minima. After some iterations, the algorithm enters the vicinity of a global minimum with more balanced magnitudes in $x, y$. Then in Phase $2$, GD converges to the found balanced global minimum. We remark that the searching-to-converging transition also appears in \citet{lewkowycz2020large}. However, the algorithmic behaviors are not the same. In fact, in our searching phase (phase 1), the objective function does not exhibit the blow-up phenomenon. In our convergence phase (phase 2), the analysis relies on a detailed state space partition (see Line 192) due to nonconvex nature of \eqref{eqn:scalarFactorizationObjective}, while the analysis in \citet{lewkowycz2020large} is akin to monotone convergence in a convex problem.

In comparison with the dynamics of small learning rate, we note that the searching phase (Phase $1$) is vital to the convergence analysis. Meanwhile, the searching phase induces a balancing effect of large learning rate. The following theorem explicitly quantifies the extent of balancing.
\begin{theorem}[Balancing]
\label{thm:scalar_balancing}
Under the same initial condition and learning rate $h$ as Theorem~\ref{thm:scalarFactorizationConvergence}, GD for \eqref{eqn:scalarFactorizationObjective} converges to a global minimizer $(x,y)$ satisfying
\[
\|x\|^2+\|y\|^2\le \frac{2}{h}.
\]
Consequently, its extent of balancing is quantified by
\begin{align*}
\|x-y\|^2\le \frac{2}{h}-2\mu.
\end{align*}
\end{theorem}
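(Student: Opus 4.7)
The two claims collapse into one with a clean algebraic reduction. At a global minimum $(x,y)$ of \eqref{eqn:scalarFactorizationObjective} we have $xy^\top=\mu$, i.e.\ $\langle x,y\rangle=\mu$, so
\[
\|x-y\|^2=\|x\|^2+\|y\|^2-2\langle x,y\rangle=(\|x\|^2+\|y\|^2)-2\mu.
\]
Because Theorem~\ref{thm:scalar_convergence} already guarantees that, off a Lebesgue null set $\mathcal{B}$, GD converges to some global minimizer $(x,y)$, the entire theorem reduces to establishing the single bound $\|x\|^2+\|y\|^2\le 2/h$ at the limit.

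To get that bound, the plan is a local stability computation at the fixed point. Write the GD map as $G(u,v)=(u+h(\mu-uv^\top)v,\ v+h(\mu-uv^\top)u)$. A direct differentiation (using $\mu-xy^\top=0$ at the limit) yields
\[
DG(x,y)=I-hH,\qquad H=\begin{pmatrix}y^\top y & y^\top x\\ x^\top y & x^\top x\end{pmatrix},
\]
which is exactly $I-h\nabla^2 f(x,y)$. By Proposition~\ref{pro:eigenvalue_hessian} (and since $H$ is the Gram matrix of the single stacked vector $(y,x)^\top$), $H$ has a unique nonzero eigenvalue $\|x\|^2+\|y\|^2$ with the remaining $2d-1$ eigenvalues equal to $0$, so $DG(x,y)$ has a single eigenvalue $\lambda^\star=1-h(\|x\|^2+\|y\|^2)$ together with $2d-1$ eigenvalues equal to $1$. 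The unit eigenvalues are the center directions tangent to the manifold of global minima; the lone eigenvalue $\lambda^\star$ governs the dynamics transverse (normal) to that manifold.

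The final step is to argue that a limit point with $\|x\|^2+\|y\|^2>2/h$, hence $\lambda^\star<-1$, can only be reached from a measure-zero set of initializations, which we absorb into $\mathcal{B}$. I would invoke the Center--Stable Manifold theorem (for the $C^2$ map $G$) to produce, near each such ``bad'' minimizer, a local stable/center-stable manifold of dimension at most $2d-1$; points off this manifold are pushed away exponentially in the normal direction by the factor $|\lambda^\star|>1$, so no trajectory outside the manifold can converge to the point. Covering the minimum manifold by countably many such local charts and taking the union preserves measure zero, after which Theorem~\ref{thm:scalar_convergence}'s exceptional set can be enlarged accordingly.

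The main obstacle is this last measure-zero argument, precisely because the linearization is highly non-hyperbolic: $2d-1$ center directions coexist with the single (potentially) unstable direction, so the classical hyperbolic stable-manifold theorem does not apply off-the-shelf. One must use a center--stable version (or, equivalently, perform a one-dimensional normal-form reduction transverse to the minimum manifold, since the homogeneity structure makes that normal dynamics effectively one-dimensional), and then Lindelöf-cover the minimum manifold to union the local exceptional sets without losing the null-set property. Everything else in the argument—the Jacobian computation, the eigenvalue structure, and the algebraic reduction of the two claims—is short and mechanical.
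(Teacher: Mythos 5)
Your reduction of the second inequality to the first (via $\langle x,y\rangle=\mu$ at a global minimum) is exactly what the paper does. For the key bound $\|x\|^2+\|y\|^2\le 2/h$, however, you take a genuinely different route. The paper's proof of Theorem~\ref{thm:scalar_convergence} is trajectory-based and already \emph{produces} the bound: Phase~1 (Lemma~\ref{lem:4/hto2/h}) shows $u_k^2=\|x_k\|^2+\|y_k\|^2$ decreases (in one or two steps) until the iterates enter $\{u^2\le 2/h\}$, and Phase~2 (Lemma~\ref{lem:convergein2/h}) shows convergence occurs inside that region; so Theorems~\ref{thm:scalar_convergence} and~\ref{thm:scalar_balancing} are proved simultaneously by Theorems~\ref{thm:scalar_convergence_technical} and~\ref{thm:scalar_convergence_technical_2}, whose conclusion is convergence to a point of $\{u^2\le 2/h,\ x^\top y=\mu\}$. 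Your proposal instead treats convergence as a black box and rules out unbalanced limits by an a posteriori linear-stability argument at the fixed point. That is precisely the strategy the paper reserves for the general case in Section~\ref{sec:general_matrix_factorization} (Theorem~\ref{thm:general_case}), where convergence is \emph{assumed} rather than proved; your eigenvalue computation ($\lambda^\star=1-h(\|x\|^2+\|y\|^2)$ plus $2d-1$ unit eigenvalues tangent to the minimum manifold) matches Theorem~\ref{thm:scalar_hessian_eigenvalue}. What the paper's route buys is that it avoids the center--stable manifold machinery entirely: because the residual evolves as $x_{k+1}^\top y_{k+1}-\mu=s_k\,(x_k^\top y_k-\mu)$ with an explicit scalar $s_k$, the ``normal dynamics'' you allude to is literally one-dimensional, and Lemma~\ref{lemma:>2/hnotconverge} shows $|s_k|>1$ uniformly near any unbalanced minimum, so convergence there is impossible unless the orbit lands on $\{x^\top y=\mu\}$ exactly in finite time.

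The concrete gap in your sketch is the global part of the measure-zero argument. The local center--stable manifold theorem (which does apply to non-hyperbolic fixed points of a $C^1$ map, as you note) only controls points whose \emph{entire forward orbit} remains in a small neighborhood of the bad minimizer; the full set of initializations converging to it is $\bigcup_{k\ge 0}\psi^{-k}(W^{cs}_{\mathrm{loc}})$, where $\psi$ is the GD map. Covering the minimum manifold by countably many charts handles the union over fixed points, but not the union over time: you must also show that preimages of null sets under $\psi$ are null. This is \emph{not} automatic for a non-invertible smooth map, and it is exactly what the paper establishes separately via the co-area formula (Theorem~\ref{thm:inverse_map_null_set} and Corollary~\ref{cor:nonsingularmap}), after checking that $\{\det D\psi=0\}$ is a null set. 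Without this step, your enlarged exceptional set is not known to have measure zero. Once you add (i) the pullback argument through all iterates of $\psi$, backed by the a.e.\ nondegeneracy of $D\psi$, and (ii) a citable center--stable manifold statement valid for a continuum of fixed points, your route closes; alternatively, you can bypass the manifold theorem altogether by exploiting, as the paper does, that the transverse dynamics is the explicit scalar recursion $\epsilon_{k+1}=s_k\epsilon_k$ with $|s_k|>1$ on $\{u^2>2/h+\epsilon h(\mu+\epsilon)\}$.
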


One special case of Theorem~\ref{thm:scalar_balancing} is the following theorem, which states that no matter how close to a global minimum does GD start, if this minimum does not correspond to well-balanced norms, a large learning rate will take the iteration to a more balanced limit. We also demonstrate a sharp shrinkage in the distance between $x$ and $y$.
\begin{corollary}[From `unbalanced' to `balanced']
\label{cor:unbalanced_to_balanced}
For any $\delta \in (0, \mu)$, let the GD initialization satisfy $$(x_0,y_0)\in\big\{(u,v): |uv^\top-\mu|<\delta,~ \|u\|^2+\|v\|^2>8\mu\big\}\backslash\mathcal{B},$$ where $\mathcal{B}$ is some  Lebesgue measure-0 set.  When the learning rate $h$ satisfies $h=\frac{4}{\|x_0\|^2+\|y_0\|^2+4\mu},$
the extent of balancing at the limiting point $(x,y)$ of GD obeys 
\[
\|x-y\|^2 < \frac{1}{2}\|x_0-y_0\|^2 + 2\mu.
\]
\end{corollary}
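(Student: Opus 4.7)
The plan is to apply Theorem~\ref{thm:scalar_balancing} with the specific choice $h=4/(\|x_0\|^2+\|y_0\|^2+4\mu)$ and then rewrite the resulting bound in terms of $\|x_0-y_0\|^2$ using the near-minimum assumption $|x_0 y_0^\top-\mu|<\delta$. Nothing dynamical beyond what Theorems~\ref{thm:scalarFactorizationConvergence} and~\ref{thm:scalar_balancing} already give is needed; the content of the corollary is really an algebraic rearrangement, with one sanity check on admissibility of $h$.

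First I would verify that this $h$ falls within the range allowed by Theorem~\ref{thm:scalarFactorizationConvergence}, so that convergence and the balancing bound $\|x\|^2+\|y\|^2\le 2/h$ both apply. The first clause $h\le 4/(\|x_0\|^2+\|y_0\|^2+4\mu)$ holds with equality by construction. The second clause $h\le 1/(3\mu)$ is equivalent to $\|x_0\|^2+\|y_0\|^2\ge 8\mu$, which is exactly where the assumption $\|x_0\|^2+\|y_0\|^2>8\mu$ is used. Thus $(x_0,y_0)$ (outside the measure-zero bad set $\mathcal{B}$) together with this $h$ meets the hypotheses of Theorem~\ref{thm:scalar_balancing}.

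Next I would plug the chosen $h$ into $\|x-y\|^2\le 2/h-2\mu$:
\begin{equation*}
\|x-y\|^2 \;\le\; \frac{\|x_0\|^2+\|y_0\|^2+4\mu}{2} - 2\mu \;=\; \frac{\|x_0\|^2+\|y_0\|^2}{2}.
\end{equation*}
Then I would use the identity $\|x_0\|^2+\|y_0\|^2=\|x_0-y_0\|^2+2x_0 y_0^\top$ to get
\begin{equation*}
\|x-y\|^2 \;\le\; \tfrac{1}{2}\|x_0-y_0\|^2 + x_0 y_0^\top.
\end{equation*}
Finally, the hypothesis $|x_0 y_0^\top-\mu|<\delta$ with $\delta<\mu$ gives $x_0 y_0^\top<\mu+\delta<2\mu$, producing the desired strict inequality $\|x-y\|^2<\tfrac{1}{2}\|x_0-y_0\|^2+2\mu$.

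There is no genuine obstacle here, since the heavy lifting is in Theorem~\ref{thm:scalar_balancing}; the only thing one must be careful about is (i) confirming that the chosen $h$ is admissible for the convergence theorem, which is where the quantitative unbalancedness assumption $\|x_0\|^2+\|y_0\|^2>8\mu$ enters, and (ii) using $\delta<\mu$ strictly so that the inner-product term is bounded by $2\mu$ and gives a strict (rather than weak) inequality.
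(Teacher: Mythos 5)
Your proposal is correct and follows essentially the same route as the paper: invoke Theorem~\ref{thm:scalar_balancing} with the given $h$ (after checking, as you do, that $\|x_0\|^2+\|y_0\|^2>8\mu$ makes $h\le 1/(3\mu)$ automatic), then convert $2/h-2\mu$ into a bound involving $\|x_0-y_0\|^2$ via $\|x_0\|^2+\|y_0\|^2=\|x_0-y_0\|^2+2x_0y_0^\top$ and the hypothesis $x_0y_0^\top<\mu+\delta<2\mu$. The paper arranges the same algebra slightly differently (lower-bounding $\|x_0-y_0\|^2$ by $4/h-6\mu-2\delta$ and then comparing), but the two derivations are equivalent.
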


Both Theorem~\ref{thm:scalar_balancing} and Corollary~\ref{cor:unbalanced_to_balanced} suggest that larger learning rate yields better balancing effect, as $\norm{x - y}^2$ at the limit of GD may decrease a lot and is controlled by the learning rate. We remark that the balancing effect is a consequence of large learning rate, as small learning rate can only maintain the difference in magnitudes of $x, y$ \citep{NEURIPS_Du2018algorithmic}.

In addition, the actual balancing effect can be quite strong with $\norm{x_k-y_k}^2$ decreasing to be almost 0 at its limit under a proper choice of large learning rate. Figure~\ref{fig:perfect_balance_example} illustrates an almost perfect balancing case when $h=\frac{4}{\norm{x_0}^2+\norm{y_0}^2+4}\approx 0.0122$ is chosen as the upper bound. The difference $\norm{x_k-y_k}$ decreases from approximately $17.9986$ to $0.0154$ at its limit. Additional experiments with various learning rates and initializations can be found in Appendix~\ref{app:experiments}.

\begin{figure}[ht]
    \centering
    \includegraphics[trim=5cm 1.3cm 5cm 4cm,width=0.3\textwidth]{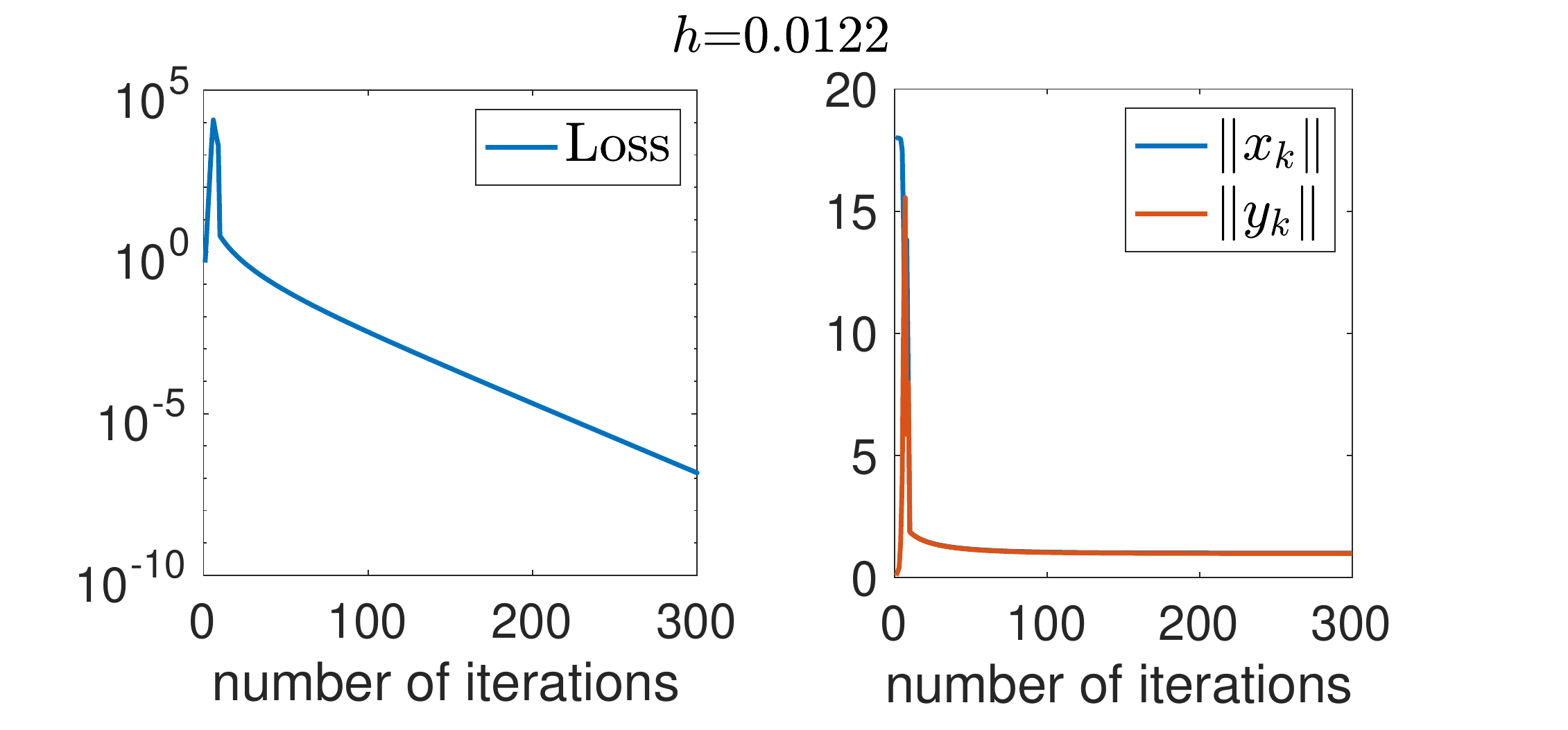}
    \caption{The objective function is $(1-xy^\top)^2/2$, where $x^\top,y^\top\in\RR^{10}$. Highly unbalanced initial condition is uniformly randomized, with the norms to be $\norm{x_0}=18,\norm{y_0}=0.09$.}
    \label{fig:perfect_balance_example}
\end{figure}

\paragraph{Technical Overview} 
We sketch the main ideas behind Theorem \ref{thm:scalarFactorizationConvergence}, which lead to the balancing effect in Theorem \ref{thm:scalar_balancing}. Full proof is deferred to Appendix \ref{app:scalar_convergence_balancing}.

The convergence is proved by handling Phase $1$ and $2$ separately (see Fig.\ref{fig:two_phase}). In Phase $1$, we prove that $\norm{x_k}^2 + \norm{y_k}^2$ has a decreasing trend as GD searches for flat minimum. We show that $\norm{x_k}^2 + \norm{y_k}^2$ may not be monotone, i.e., it either decreases every iteration or decreases every other iteration.

In Phase $2$, we carefully partition the state space and show GD at each partition will eventually enter a monotone convergence region. Note that the partition is based on detailed understanding of the dynamics and is highly nontrivial. Attentive readers may refer to Appendix \ref{app:scalar_convergence_balancing} for more details. The combination of Phase 1 \& 2 is briefly summarized as a proof flow chart in Figure \ref{fig:flow_chart_scalar_decomposition_proof}.
\begin{figure}[ht]
    \centering
    \includegraphics[width=0.75\textwidth]{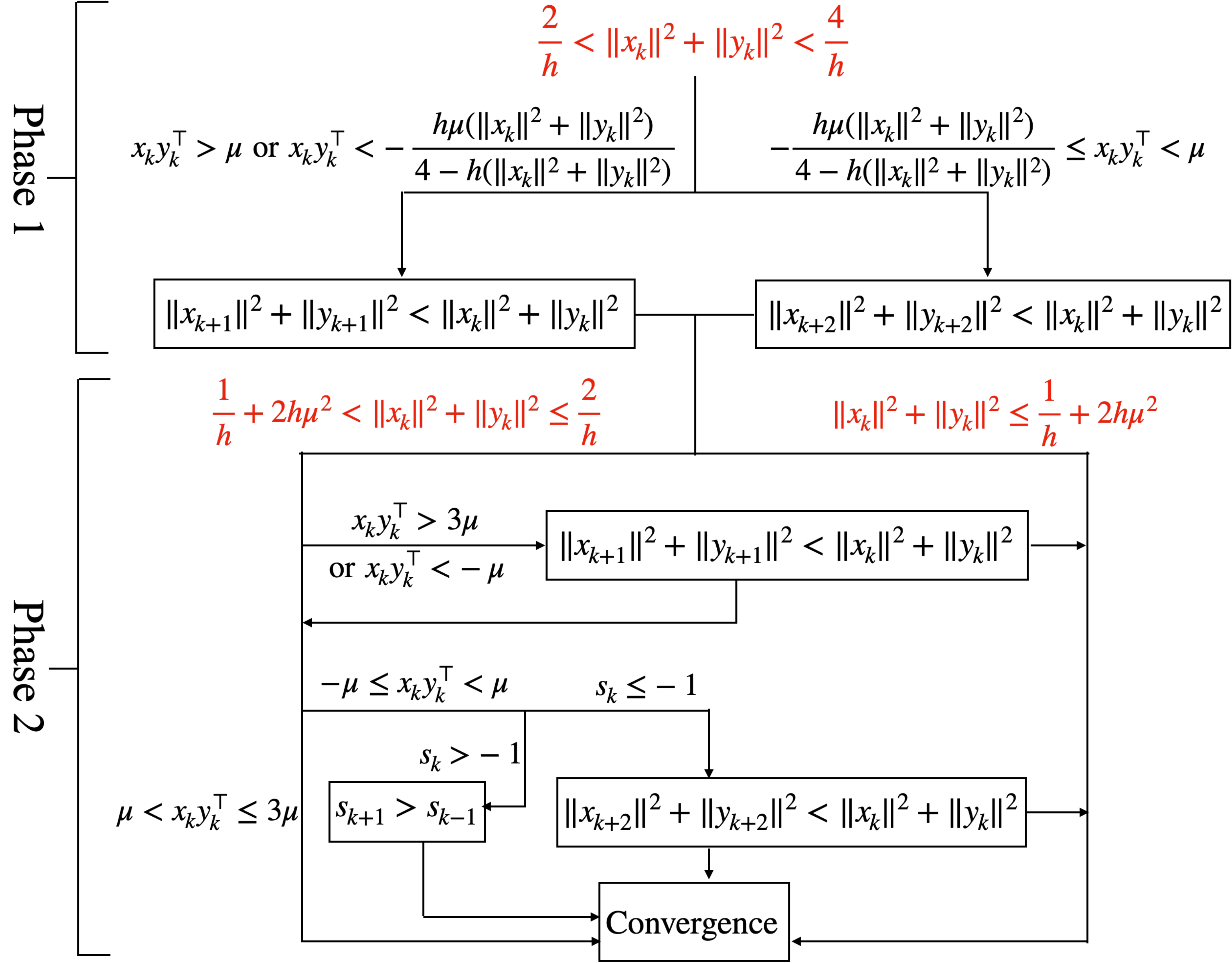}
    \caption{Proof overview of Theorem~\ref{thm:scalarFactorizationConvergence}. At the $k$-th iteration, we denote $(x_k, y_k)$ as the iterate and $s_k$ is defined as $x_{k+1} y_{k+1}^\top-\mu=s_k(x_k y_k^\top-\mu)$.}
    \label{fig:flow_chart_scalar_decomposition_proof}
\end{figure}


\section{Rank-1 approx. of isotropic $A$ (an under-parameterized case)}\label{sec:rank_one_approx_isotropic}

Given insights from scalar factorization, we consider rank-$1$ factorization of an isotropic matrix $A$, i.e., $A=\mu I_{n\times n}$ with $\mu>0$, $d = 1$, and $n \in \mathbb{N}^+$. The corresponding optimization problem is 
\begin{align}
\label{problem:rank_one_isotropic}
    \min_{x,y\in\RR^{n\times 1}} \frac{1}{2} \fnorm{\mu I_{n\times n}-xy^\top}^2.
\end{align}
Although similar at an uncareful glance, Problems \eqref{problem:rank_one_isotropic} and \eqref{eqn:scalarFactorizationObjective} are rather different unless $n = d = 1$. First of all, Problem \eqref{problem:rank_one_isotropic} is under-parameterized for $n > 1$, while \eqref{eqn:scalarFactorizationObjective} is overparameterized. More importantly, we'll show that, when $(x, y)$ is a global minimum of \eqref{problem:rank_one_isotropic}, $x, y$ must be aligned, i.e., $x = \ell y$ for some $\ell > 0$. In the scalar factorization problem, however, no such alignment is required. As a result, the set of global minima of \eqref{problem:rank_one_isotropic} is an $n$-dimensional submanifold embedded in a $2n$-dimensional space, while in the scalar factorization problem the set of global minimum is a $(2d-1)$-dimensional submanifold --- one rank deficient --- in a $2d$-dimensional space. We expect the convergence in \eqref{problem:rank_one_isotropic} is more complicated than that in \eqref{eqn:scalarFactorizationObjective}, since searching in \eqref{problem:rank_one_isotropic} is demanding.

To prove the convergence of large learning rate GD for \eqref{sec:rank_one_approx_isotropic}, our theory consists of two steps: (i) show the convergence of the alignment between $x$ and $y$ (this is new); (ii) use that to prove the convergence of the full iterates (i.e., $x$ \& $y$). Step (i) first:
\begin{theorem}[Alignment]
\label{thm:alignment}
Given $(x_0,y_0)\in\ (\mathbb{R}^n \times \mathbb{R}^n) \backslash\mathcal{B}$, where $\mathcal{B}$ is some  Lebesgue measure-0 set, when learning rate $h\le \min\left\{\frac{4}{\|x_0\|^2+\|y_0\|^2+4\sqrt{7}\mu},\  \frac{1}{2\sqrt{7}\mu}\right\} $, the iterator $(x_k, y_k)$ of GD at the $k$-th iteration satisfies $|\cos(\angle(x_k,y_k))| \to 1$ as $k\to\infty$.
\end{theorem}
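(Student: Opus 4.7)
The plan is to reduce the dynamics to a two-dimensional invariant subspace, introduce a signed wedge quantity whose decay relative to the radial norms forces alignment, and then transplant the phase-partition machinery of Theorem \ref{thm:scalar_convergence} to handle the coupling.

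\textbf{Step 1 (Invariant 2D subspace).} Since $(\mu I-x_ky_k^\top)y_k = \mu y_k-\|y_k\|^2 x_k$, the update $x_{k+1} = (1-h\|y_k\|^2)x_k+h\mu y_k$ is a linear combination of $x_k$ and $y_k$, and symmetrically for $y_{k+1}$. Hence the entire orbit is confined to $V:=\text{span}(x_0,y_0)$, and after fixing an orthonormal basis on $V$ I treat the iteration as a map on $\mathbb{R}^2\times\mathbb{R}^2$. The measure-zero exceptional set $\mathcal{B}$ will absorb the degenerate case $\dim V\le 1$ (for which $|\cos\angle(x_k,y_k)|=1$ holds trivially) together with the stable manifolds of non-minimizing critical points, handled by the stable-manifold theorem as in the scalar case.

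\textbf{Step 2 (Wedge identity).} Define $\gamma_k:=\det[x_k\mid y_k]$ in the chosen basis, so $|\gamma_k|=\|x_k\|\|y_k\|\,|\sin\angle(x_k,y_k)|$. Using bilinearity and $x_k\wedge x_k=y_k\wedge y_k=0$, a direct computation gives the central identity
\begin{align*}
\gamma_{k+1} = \bigl[(1-h\|x_k\|^2)(1-h\|y_k\|^2) - h^2\mu^2\bigr]\,\gamma_k =: \lambda_k\,\gamma_k,
\end{align*}
so that $\sin^2\angle(x_{k+1},y_{k+1}) = \lambda_k^2\,\|x_k\|^2\|y_k\|^2/(\|x_{k+1}\|^2\|y_{k+1}\|^2)\cdot\sin^2\angle(x_k,y_k)$. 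Alignment is therefore equivalent to showing that this cumulative product tends to zero.

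\textbf{Step 3 (Coupling with the norm dynamics).} I then adapt the Phase~1/Phase~2 state-space partition of Theorem \ref{thm:scalar_convergence} to show that, under the stated learning-rate bound, $\|x_k\|^2+\|y_k\|^2$ stays bounded along the orbit and eventually enters a neighborhood of the global-minimum set $\{x\parallel y,\ \|x\|\|y\|\approx\mu\}$. The enlarged constant $\sqrt{7}$ (compared with the $1$ and $3$ in Theorem \ref{thm:scalar_convergence}) provides the slack needed so that the additional $h^2\mu^2$ term appearing in $\lambda_k$ does not destabilize the radial dynamics. Once inside this neighborhood, $\|x_k\|\|y_k\|$ is bounded away from $0$, the ratio $\|x_{k+1}\|\|y_{k+1}\|/(\|x_k\|\|y_k\|)\to 1$, and $\lambda_k\approx 1-h(\|x_k\|^2+\|y_k\|^2)$ has modulus at most $1-\eta$ for some $\eta>0$ by the learning-rate bound; iterating the wedge identity then yields $|\sin\angle(x_k,y_k)|\to 0$, i.e.\ $|\cos\angle(x_k,y_k)|\to 1$.

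\textbf{Main obstacle.} The delicate part is the simultaneous control in Step~3: a bound on $|\gamma_k|$ alone is insufficient, because $\|x_k\|\|y_k\|$ could a priori vanish just as quickly, leaving $|\sin\theta_k|$ bounded away from $0$. Hence the radial Lyapunov-type estimate from the scalar analysis must be carried alongside a matching lower bound on $\|x_k\|\|y_k\|$ and an upper bound on $|\lambda_k|$ throughout the "searching" phase, when the iterate is still far from any minimizer and $|\lambda_k|$ can temporarily exceed one. Verifying that the exceptional orbits (those lying persistently on the algebraic locus $|\lambda_k|\equiv 1$, or converging to saddles) form a Lebesgue-null set $\mathcal{B}$ is an additional but mostly standard application of the stable-manifold theorem for non-hyperbolic discrete dynamical systems.
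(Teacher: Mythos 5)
Your Steps 1--2 are correct and in fact reproduce the paper's central identity in different clothing: your $\lambda_k=(1-h\|x_k\|^2)(1-h\|y_k\|^2)-h^2\mu^2$ equals the paper's $l_k=1-h(V_k+W_k)+h^2(V_kW_k-\mu^2)$, and your wedge recursion $\gamma_{k+1}=\lambda_k\gamma_k$ is exactly $V_{k+1}W_{k+1}-U_{k+1}^2=l_k^2(V_kW_k-U_k^2)$ since $\gamma_k^2$ is the Gram determinant. The 2D-invariant-subspace observation is a nice (and valid) simplification the paper does not use explicitly.

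The gap is in Step 3, and it is the heart of the theorem. First, your assertion that $\lambda_k\approx 1-h(\|x_k\|^2+\|y_k\|^2)$ has modulus at most $1-\eta$ ``by the learning-rate bound'' is false in the regime that actually matters: when $\|x_k\|^2+\|y_k\|^2$ is near $2/h$ (which Phase 1 only guarantees as an upper bound, not a strict gap), $1-h(V_k+W_k)$ is near $-1$, and the correction $h^2(V_kW_k-\mu^2)$ can be negative (when $V_kW_k<\mu^2$), pushing $\lambda_k$ below $-1$. The paper's Lemma on alignment spends several pages on precisely this: a case analysis in the variables $(l_k, hU_k, h\mu)$ showing that whenever $l_k\le -1$, either $l_{k+1}$ or $l_{k+2}$ increases by a quantified amount (sometimes proportional to $V_kW_k-U_k^2$ itself), so that eventually $l_k>-1+C$ uniformly; only then does the product $\prod\lambda_k^2$ contract. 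Your proposal names this as the ``main obstacle'' but does not supply the argument, so the decisive estimate is missing. Second, your plan to first show the orbit ``enters a neighborhood of the global-minimum set $\{x\parallel y,\ \|x\|\|y\|\approx\mu\}$'' and then deduce alignment is circular: membership in that set is the alignment statement, and in the paper's logic the order is reversed --- alignment is proved using only the norm boundedness $V_k+W_k\le \tfrac{2}{h}+O(h\mu^2)$ (no proximity to minimizers), and is then an input to the convergence proof. Relatedly, passing from $V_kW_k-U_k^2\to 0$ to $|\sin\angle(x_k,y_k)|\to 0$ needs a lower bound on $V_kW_k$ that cannot be borrowed from convergence to a minimum without that same circularity; you correctly flag this but do not resolve it.
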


\begin{proof}[Proof sketch]
A sufficient condition for the convergence of $|\cos(\angle(x_k,y_k))|$ is $\norm{x_k}^2\norm{y_k}^2 - (x_k^\top y_k)^2 \to 0$. To ease the presentation, let $U_k=x_k^\top y_k,\ V_k=x_k^\top x_k$, and $W_k=y_k^\top y_k$. By the GD update and some algebraic manipulation, we derive
\begin{align*}
  V_{k+1}W_{k+1}-U_{k+1}^2=r_k \cdot (V_{k}W_{k}-U_{k}^2) ,
\end{align*}
where $r_k = (1 - h (V_k + W_k) + h^2 (V_k W_k - \mu^2))^2$. When $k$ is sufficiently large, we can show a uniform upper bound on $r_k < 1 - c$ for some constant $c > 0$. In this way, we deduce that $V_{k}W_{k}-U_{k}^2$ will exponentially decay and converge to $0$. More details are provided in Appendix \ref{app:rank1_convergence_balancing}.
\end{proof}

Theorem \ref{thm:alignment} indicates that GD iterations will converge to the neighbourhood of $\{(x,y):x=\ell y, \text{for some} ~\ell\in\RR\backslash\{0\}\}$. This helps establish the global convergence as stated in Step (ii). 
\begin{theorem}[Convergence]
\label{thm:rank1_convergence}
Under the same initial conditions and learning rate $h$ as Theorem~\ref{thm:alignment}, GD for \eqref{problem:rank_one_isotropic} converges to a global minimum.
\end{theorem}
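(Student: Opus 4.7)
The idea is to bootstrap on Theorem~\ref{thm:alignment}: once $x_k$ and $y_k$ are nearly parallel, the rank-$1$ isotropic iteration reduces (up to a vanishing perturbation) to the scalar factorization dynamics of Section~\ref{sec:scalar_factorization}, so Theorem~\ref{thm:scalarFactorizationConvergence} applies. The crucial structural observation is that the aligned manifold $\mathcal{M}:=\{(x,y)\in\RR^n\times\RR^n:\ y=cx\text{ for some }c\in\RR\}$ is \textbf{invariant} under GD: if $y_k=c_k x_k$, the updates $x_{k+1}=(1-h\norm{y_k}^2)x_k+h\mu y_k$ and $y_{k+1}=(1-h\norm{x_k}^2)y_k+h\mu x_k$ both remain in the span of $x_k$, and the scalar coefficients $(a_k,b_k):=(\norm{x_k},\,c_k\norm{x_k})$ satisfy exactly the scalar factorization recursion $a_{k+1}=(1-hb_k^2)a_k+h\mu b_k$, $b_{k+1}=(1-ha_k^2)b_k+h\mu a_k$ of Section~\ref{sec:scalar_factorization} with $d=1$. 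Hence on $\mathcal{M}$, Theorem~\ref{thm:scalarFactorizationConvergence} immediately gives $a_k b_k\to\mu$.

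To handle the general (near-aligned, not exactly aligned) case, I would first derive the closed-form recursion for the triple $(U_k,V_k,W_k):=(x_k^\top y_k,\norm{x_k}^2,\norm{y_k}^2)$ from the GD update, and establish a uniform bound on $V_k+W_k$ along the trajectory via a Lyapunov-type argument parallel to the one used in Theorem~\ref{thm:alignment}. Combined with the geometric decay $D_{k+1}=r_k D_k$, $\limsup_k r_k<1$, from the proof sketch of Theorem~\ref{thm:alignment}, where $D_k:=V_k W_k-U_k^2$, this shows $D_k\to 0$ exponentially. Decomposing $y_k=c_k x_k+e_k$ with $e_k\perp x_k$ gives $\norm{e_k}\,\norm{x_k}=\sqrt{D_k}$, so on the projected coefficients the iteration differs from the scalar factorization recursion above only by a perturbation of size $O(\sqrt{D_k})$. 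I would then transfer Theorem~\ref{thm:scalarFactorizationConvergence} to this perturbed setting: its state-space partition (Figure~\ref{fig:flow_chart_scalar_decomposition_proof}) routes each cell contractively into a terminal monotone-convergence region in finitely many steps via strict inequalities, so the argument is robust to sufficiently small additive perturbations. Choosing $K$ large enough that $\sqrt{D_k}$ is below the contraction tolerance for $k\ge K$ and restarting the scalar argument at $(x_K,y_K)$ then yields $x_k y_k^\top\to\mu vv^\top$ for some unit vector $v$, a global minimum of~\eqref{problem:rank_one_isotropic}.

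The main obstacle is this time-scale matching: during Phase~1 of the scalar proof, iterates may swing across state space, so one must ensure the alignment defect is already small enough not to push them across basin boundaries. I would address this by (i) using the explicit exponential rate for $D_k$ to pick $K$ quantitatively in terms of the scalar-proof contraction constants, and (ii) enlarging the Lebesgue-null set $\mathcal{B}$ from Theorem~\ref{thm:alignment} to exclude orbits captured by the origin saddle, which ensures $\norm{x_k}$ stays bounded below so that $\norm{e_k}^2=D_k/V_k\to 0$ in the decomposition above. The standard center-stable-manifold theorem keeps this enlargement measure zero, since the origin has a nontrivial unstable eigenspace for the GD map (the linearization has eigenvalues $1\pm h\mu$ with $h\mu<1$).
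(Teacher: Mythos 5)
Your overall strategy is the same as the paper's: use the alignment result to view the rank-$1$ dynamics as the scalar factorization recursion for $(U_k,V_k,W_k)=(x_k^\top y_k,\norm{x_k}^2,\norm{y_k}^2)$ perturbed by the defect $D_k=V_kW_k-U_k^2$, and then rerun the two-phase scalar analysis. The invariance of the aligned manifold and the reduction to the scalar recursion there are correct (modulo signs in defining $a_k,b_k$), and the decomposition $\norm{e_k}\norm{x_k}=\sqrt{D_k}$ is the right way to see the perturbation.

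The genuine gap is the central step, which you assert rather than prove: that the scalar convergence proof is ``robust to sufficiently small additive perturbations.'' This is not a routine robustness statement here, for two reasons. First, the ordering of the argument matters: the geometric decay of $D_k$ (and even its boundedness) is itself only available once the trajectory is known to stay bounded, and in the paper the Phase-1 decrease and boundedness lemmas (Lemmas~\ref{lem:rank1_4/h_to_2/h}, \ref{lem:rank1_onestepdecrease}, \ref{lem:rank1_twostepdecrease}, \ref{sublem:rank1_2/h_all_bound}) must hold \emph{before} the defect is small. They do not treat the defect as a small perturbation at all; they exploit that it enters every decrease inequality with a favorable sign, e.g.\ $u_{k+1}^2-u_k^2$ acquires the extra term $h(4-hu_k^2)(U_k^2-V_kW_k)\le 0$, so the scalar one-step and two-step decreases survive even when $D_k$ is large. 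Your plan to ``establish a uniform bound on $V_k+W_k$ via a Lyapunov-type argument parallel to Theorem~\ref{thm:alignment}'' is circular as stated, since the alignment proof presupposes exactly this boundedness. Second, the terminal convergence is not a uniform contraction: the factor satisfies only $|s_k|\le\max\{C_2,\,1-C_3(x_k^\top y_k-\mu)^2\}$, which degenerates to $1$ precisely as the iterate approaches the minimum, so a generic small-perturbation argument yields convergence only to an $O(\sqrt{D_K})$ neighborhood, not to the minimum. The paper closes this by an explicit interleaving estimate,
\begin{align*}
|x_{k+1}^\top y_{k+1}-\mu|\le (1-C_3\epsilon_1^2)\,|x_k^\top y_k-\mu|+2h^2\mu\,(V_{K_1}W_{K_1}-U_{K_1}^2)\,C_0^{2(k-K_1)},
\end{align*}
followed by a two-parameter limiting argument ($\epsilon_2\ll\epsilon_1\ll\epsilon_0$) showing $kC_4^k\to 0$ dominates. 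Your proposal needs an analogue of both ingredients — the sign structure in Phase~1 and the degenerate-contraction-versus-geometric-decay estimate in Phase~2 — before the transfer to Theorem~\ref{thm:scalarFactorizationConvergence} is legitimate.
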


Similar to the over-parametrized scalar case, this convergence can also be split into two phases where phase 1 motivates the balancing behavior with the decrease of $\norm{x_k}^2+\norm{y_k}^2$, and phase 2 ensures the convergence. The following balancing theorem is thus obtained.

\begin{theorem}[Balancing]
\label{thm:rank_one_isotropic_norm_balancing}
Under the same initial conditions and learning rate $h$ as Theorem~\ref{thm:alignment}, GD for \eqref{problem:rank_one_isotropic} converges to a global minimizer that obeys
\begin{align*}
\|x\|^2+\|y\|^2\le \frac{2}{h},
\end{align*}
 and its extent of balancing is quantified by
 \begin{align*}
 \|x-y\|^2\le \frac{2}{h}-2\mu.
\end{align*}
\end{theorem}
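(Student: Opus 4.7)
The plan is to deduce the balancing bound from Theorems~\ref{thm:alignment} and~\ref{thm:rank1_convergence} combined with a two-phase analysis of $\|x_k\|^2+\|y_k\|^2$ along the GD trajectory, paralleling the scalar argument behind Theorem~\ref{thm:scalar_balancing}.

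By Theorem~\ref{thm:rank1_convergence}, GD converges to a global minimizer $(x,y)$ of \eqref{problem:rank_one_isotropic}. Since $xy^\top$ has rank at most one, Eckart--Young forces any such minimum to satisfy $xy^\top=\mu vv^\top$ for some unit vector $v$, so $x=\ell v$ and $y=(\mu/\ell)v$ for some $\ell\neq 0$; in particular $x^\top y=\mu$ at the limit, consistent with the alignment of Theorem~\ref{thm:alignment}. Given this identity, the distance bound reduces to the norm bound via $\|x-y\|^2=\|x\|^2+\|y\|^2-2x^\top y\le \tfrac{2}{h}-2\mu$, so the whole theorem is implied by $\|x\|^2+\|y\|^2\le 2/h$ at the limit. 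Conceptually the latter matches the spectral stability requirement $hL\le 2$ for a stable fixed point of GD, where $L=\|x\|^2+\|y\|^2$ is the top Hessian eigenvalue by Proposition~\ref{pro:eigenvalue_hessian}; but a trajectory-based argument is still needed, both to obtain it quantitatively and to rule out (via the Lebesgue-measure-zero exceptional set $\mathcal{B}$) landing on an unstable equilibrium.

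Writing $V_k=\|x_k\|^2$, $W_k=\|y_k\|^2$, $U_k=x_k^\top y_k$ and expanding $x_{k+1}=(1-hW_k)x_k+h\mu y_k$ together with the analogous update for $y_{k+1}$, one gets the key recursion $V_{k+1}+W_{k+1}=(1-hW_k)^2V_k+(1-hV_k)^2W_k+2h\mu\bigl(2-h(V_k+W_k)\bigr)U_k+h^2\mu^2(V_k+W_k)$. In Phase~1, when $V_k+W_k>2/h$, I would partition the $(V_k,W_k,U_k)$ state space and show by a case analysis that $V_k+W_k$ has a decreasing trend, either monotone or every other iteration exactly as in the scalar proof; the learning-rate hypothesis $h\le 4/(V_0+W_0+4\sqrt{7}\mu)$ is calibrated so that the quadratic contraction from the first two terms dominates the $U_k$ cross term. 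In Phase~2, once $V_k+W_k\le 2/h$ and the iterate is near the minimum manifold, the exponential decay of $V_kW_k-U_k^2$ established inside the proof of Theorem~\ref{thm:alignment} lets me substitute $U_k\approx\pm\sqrt{V_kW_k}$ up to a geometric error, reducing the recursion essentially to the scalar Phase~2 analysis and giving forward invariance of $\{V+W\le 2/h\}$ up to corrections that vanish in the limit.

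The hard part will be Phase~1. With three coupled scalars $(V_k,W_k,U_k)$ rather than the two in the scalar problem, the partition must be sharp enough to pin down the sign of the $U_k$-dependent term uniformly across cells, and the alignment error that feeds into later iterates must be tracked so as not to destroy the every-other-step monotonicity; indeed the constant $\sqrt{7}$ in the learning-rate hypothesis is precisely what these alignment-error bounds produce when propagated one step forward. Once Phase~1 is established, taking $k\to\infty$ in the inequality $V_k+W_k\le 2/h$ and invoking the algebraic identity above delivers both stated bounds simultaneously.
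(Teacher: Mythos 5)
Your proposal is correct and follows essentially the same route as the paper: reduce the second bound to $\|x\|^2+\|y\|^2\le 2/h$ via $x^\top y=\mu$ at the limit, then run a two-phase trajectory analysis on $u_k^2=V_k+W_k$ in which the extra alignment term $U_k^2-V_kW_k\le 0$ is controlled (the paper's Lemmas~\ref{lem:rank1_4/h_to_2/h}, \ref{sublem:rank1_2/h_all_bound}, \ref{lem:rank1_2/h_converge}, and \ref{lem:rank1_notconverge}, the last of which is what pins the limit inside $\{u^2\le 2/h\}$ rather than the slightly larger invariant region). The details you flag as hard (the sign of the cross term, propagation of the alignment error, and the origin of $\sqrt{7}$) are exactly where the paper's effort goes, so no gap in the plan itself.
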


This conclusion is the same as the one in Section~\ref{sec:scalar_factorization}. A quantitatively similar corollary like Corollary~\ref{cor:unbalanced_to_balanced} can also be obtained from the above theorem, namely, if $x_0$ and $y_0$ start from an unbalanced point near a minimum, the limit will be a more balanced one.


\section{General matrix factorization}
\label{sec:general_matrix_factorization}

In this section, we consider problem \eqref{problem:intro_matrix_factorization} with an arbitrary matrix $A \in \RR^{n \times n}$. We replicate the problem formulation here for convenience,
\begin{align}
\label{eqn:general}
  \min_{X,Y\in\mathbb{R}^{n\times d}} \frac{1}{2}\fnorm{A-X Y^\top }^2.
\end{align}

Note this is the most general case with $n,d\in\mathbb{N}^+$ and any square matrix $A$. Due to this generalization, we no longer utilize the convergence analysis and instead, establish the balancing theory via stability analysis of GD as a discrete time dynamical system.

Let $\mu_1\ge\mu_2\ge\cdots\ge\mu_n\ge 0$ be the singular values of $A$. Assume for technical convenience $\fnorm{A}^2 = \sum_{i=1}^n \mu_i^2$ being independent of $d$ and $n$. We denote the singular value decomposition of $A$ as $A = U D V^\top$, where $U, D, V \in \RR^{n \times n}$, $U,V$ are orthogonal matrices and $D$ is diagonal. Then we establish the following balancing effect.
\begin{theorem}
\label{thm:general_case}
Given almost all the initial conditions, for any learning rate $h$ such that GD for~\eqref{eqn:general} converges to a point $(X,Y)$, there exists $c=c(n, d) > c_0$ with constant $c_0 > 0$ independent of $h$, $n$, and $d$, such that $(X, Y)$ satisfies
\begin{align*}
    c( \fnorm{X}^2+\fnorm{Y}^2)< \frac{2}{h},
\end{align*}
and the extent of balancing is quantified by $
   \fnorm{X-(UV^\top) Y}^2< \frac{2}{ch}-2\sum_{i=1}^{\min\{d,n\}}\mu_i$, which means
\[
   \big|\fnorm{X}-\fnorm{Y}\big|^2 < \frac{2}{ch}-2\sqrt{\sum_{i=1}^{\min\{d,n\}}\mu_i^2} .
\]

In particular, when $d=1$, i.e., rank-$1$ factorization of an arbitrary $A$, the constant $c$ equals $1$.
\end{theorem}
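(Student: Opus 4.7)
The strategy is stability analysis of the GD map $\Phi_h(X,Y) = (X - h\nabla_X f,\ Y - h\nabla_Y f)$. If GD converges to $(X,Y)$ from a set of initial conditions of positive Lebesgue measure, then $(X,Y)$ is a Lyapunov-stable fixed point of $\Phi_h$, so its Jacobian $D\Phi_h(X,Y) = I - h\,\nabla^2 f(X,Y)$ has spectral radius at most $1$; equivalently, the spectrum of $\nabla^2 f(X,Y)$ lies in $[0, 2/h]$. Critical points of~\eqref{eqn:general} are either global minima or strict saddles (each saddle possesses a strictly negative Hessian eigenvalue, by the landscape analyses cited in Section~\ref{sec:background}), so standard saddle-avoidance/stable-manifold results imply that initial conditions leading to a saddle form a Lebesgue measure-zero set. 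Hence for almost every initial condition, the limit is a global minimum and $\lambda_{\max}(\nabla^2 f(X,Y)) \le 2/h$.

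Next I would lower-bound $\lambda_{\max}(\nabla^2 f(X,Y))$ in terms of $\fnorm{X}^2 + \fnorm{Y}^2$. Using $A = UDV^\top$, the change of variables $\tilde X = U^\top X,\ \tilde Y = V^\top Y$ reduces~\eqref{eqn:general} to factorizing the diagonal $D$; Frobenius norms, the GD iteration, and the quantity $\fnorm{X-(UV^\top)Y} = \fnorm{\tilde X - \tilde Y}$ are all preserved, so one may assume $A = D$ is diagonal. At a global minimum of the reduced problem, $\tilde X \tilde Y^\top$ equals the top-left truncation $D_{\min(d,n)}$; when $d < n$, the columns of $\tilde X$ and $\tilde Y$ are supported on the top $\min(d,n)$ rows, and the residual $R = D - D_{\min(d,n)}$ lives on the complementary rows and columns (when $d \ge n$, $R=0$). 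The Hessian quadratic form reads
\begin{align*}
\mathrm{Hess}[(\Delta\tilde X,\Delta\tilde Y)] \;=\; \fnorm{\tilde X \Delta\tilde Y^\top + \Delta\tilde X \tilde Y^\top}^2 \;-\; 2\langle R,\ \Delta\tilde X \Delta\tilde Y^\top\rangle.
\end{align*}
On the test direction $(\Delta\tilde X,\Delta\tilde Y) = (\tilde Y,\tilde X)$, the cross term vanishes by the disjoint-support orthogonality between $R$ and $\tilde Y \tilde X^\top$, and the first term equals $\fnorm{\tilde X\tilde X^\top + \tilde Y\tilde Y^\top}^2$. Since $\tilde X\tilde X^\top + \tilde Y\tilde Y^\top$ is symmetric PSD, has rank at most $\min(d,n)$, and trace $\fnorm{\tilde X}^2 + \fnorm{\tilde Y}^2$, the Cauchy--Schwarz inequality $\fnorm{M}^2 \ge (\tr M)^2/\mathrm{rank}(M)$ together with normalization by $\fnorm{(\tilde Y,\tilde X)}^2 = \fnorm{X}^2 + \fnorm{Y}^2$ gives the Rayleigh-quotient bound $\lambda_{\max}(\nabla^2 f) \ge c(n,d)(\fnorm{X}^2 + \fnorm{Y}^2)$ with explicit $c(n,d) \ge 1/\min(d,n) > 0$. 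For $d=1$, Proposition~\ref{pro:eigenvalue_hessian} already yields $\lambda_{\max} = \fnorm{X}^2 + \fnorm{Y}^2$ exactly, so $c=1$.

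Combining these two steps gives $c(\fnorm{X}^2 + \fnorm{Y}^2) \le 2/h$, the first claim. For the balancing bounds, the structure $XY^\top = A_{\min(d,n)} = U_d D_d V_d^\top$ at a global minimum yields $\tr(X^\top (UV^\top) Y) = \tr\bigl((UV^\top)\,A_{\min(d,n)}^\top\bigr) = \sum_{i=1}^{\min(d,n)} \mu_i$ by direct computation. Because $UV^\top$ is orthogonal and hence isometric, $\fnorm{X - (UV^\top)Y}^2 = \fnorm{X}^2 + \fnorm{Y}^2 - 2\sum_{i=1}^{\min(d,n)}\mu_i \le 2/(ch) - 2\sum_{i=1}^{\min(d,n)}\mu_i$. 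The bound on $\big|\fnorm{X}-\fnorm{Y}\big|^2$ then follows from the Frobenius Cauchy--Schwarz $\fnorm{X}\fnorm{Y} \ge \fnorm{XY^\top} = \sqrt{\sum_{i=1}^{\min(d,n)} \mu_i^2}$. I expect the main obstacle to be the Hessian lower bound: the cross term $-2\langle R,\Delta X\Delta Y^\top\rangle$ is not sign-controlled in general, and it is the SVD reduction together with a symmetry-respecting test direction $(\tilde Y,\tilde X)$ that forces it to vanish; without this reduction, bounding the cross term directly appears difficult. The strict-saddle property underlying Step 1 is a secondary technicality that must be verified across the regimes $d \le \mathrm{rank}(A)$ and $d > \mathrm{rank}(A)$, but largely follows from landscape analyses already in the literature.
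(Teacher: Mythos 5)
Your overall architecture coincides with the paper's: for almost every initialization the limit must be a linearly stable fixed point, forcing $\lambda_{\max}(\nabla^2 f)\le 2/h$ there; the change of variables $\tilde X=U^\top X$, $\tilde Y=V^\top Y$ reduces everything to diagonal nonnegative $D$; and the two balancing inequalities follow from $\Tr(\tilde X\tilde Y^\top)=\sum_{i\le\min(d,n)}\mu_i$ and $\fnorm{X}\fnorm{Y}\ge\fnorm{XY^\top}$, exactly as in Appendix I. Where you genuinely diverge is the lower bound $\lambda_{\max}(\nabla^2 f)\ge c\,(\fnorm{X}^2+\fnorm{Y}^2)$: you evaluate the Rayleigh quotient on the single direction $(\Delta\tilde X,\Delta\tilde Y)=(\tilde Y,\tilde X)$, use stationarity (and the symmetry of the residual after the diagonal reduction) to kill the cross term, and apply $\fnorm{M}^2\ge(\Tr M)^2/\mathrm{rank}(M)$ to $M=\tilde X\tilde X^\top+\tilde Y\tilde Y^\top$. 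The paper instead vectorizes, computes $\Tr(M)=n(\fnorm{X}^2+\fnorm{Y}^2)$ and $\Tr(M^2)$ of the full Hessian via Kronecker-product identities, and uses $\lambda_{\max}\ge\Tr(M)/N$ with $N$ the number of nonzero eigenvalues. Your route is more elementary and self-contained, and it correctly recovers $c=1$ for $d=1$ via Proposition 2.1.

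The gap is the constant. Your bound gives $c(n,d)\ge 1/\mathrm{rank}(\tilde X\tilde X^\top+\tilde Y\tilde Y^\top)$, which is only $\ge 1/(2\min(d,n))$ a priori (the rank is $2\min(d,n)$, not $\min(d,n)$, unless you first show the column spaces of $\tilde X$ and $\tilde Y$ coincide at the minimum) and in any case tends to $0$ as the dimensions grow, whereas the theorem asserts $c(n,d)>c_0$ with $c_0$ independent of $n$ and $d$. No single test direction can do better: at the balanced minimum $\tilde X=\tilde Y=n^{-1/4}I_n$ of $D=n^{-1/2}I_n$ (consistent with the paper's normalization $\fnorm{A}=\cO(1)$) one computes $\lambda_{\max}(\nabla^2 f)/(\fnorm{X}^2+\fnorm{Y}^2)=1/n$, so your $1/\min(d,n)$ is essentially the true order of the Rayleigh-quotient ratio at a general global minimum. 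The paper's dimension-free $c_0$ comes from the trace-counting step ($\Tr(M^2)=\cO(n)$, hence $N=\cO(n)$ nonzero eigenvalues), which leans on the normalization $\fnorm{A}=\cO(1)$ and on taking $\fnorm{X},\fnorm{Y}=\cO(1)$ at the minimum — ingredients you never invoke. As written, your argument therefore proves the statement only with $c_0$ replaced by $1/(2\min(d,n))$; to reach the claimed dimension-independent constant you would have to import the paper's normalization and trace argument (or restrict to $d=1$). A secondary point: ``positive-measure basin implies Lyapunov stability'' is not the right justification, since the global minima form a continuum and each individual one may attract only a null set; the correct statement, which the paper also uses, is that the union of stable sets of fixed points having a Jacobian eigenvalue of modulus greater than one is Lebesgue-null (center-stable manifold theorem plus a covering argument).
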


We observe that the extent of balancing can be quantified under some rotation of $Y$. This is necessary, since for factorizing a general matrix $A$ (which can be asymmetric), at a global minimum, $X, Y$ may only align after a rotation (which is however fixed by $A$, independent of initial or final conditions). Figure~\ref{fig:over_parameterized_balancing} illustrates an example of the balancing effect under different learning rates. Evidently, larger learning rate leads to 
a more balanced global minimizer. Additional experiments with various dimensions, learning rates, and initializations can be found in Appendix~\ref{app:experiments}; a similar balancing effect is also shown for additional problems including matrix sensing and matrix completion there.
\begin{figure}[ht]
    \centering
    \includegraphics[trim=5cm 1.3cm 5cm 1.3cm,width=0.85\textwidth]{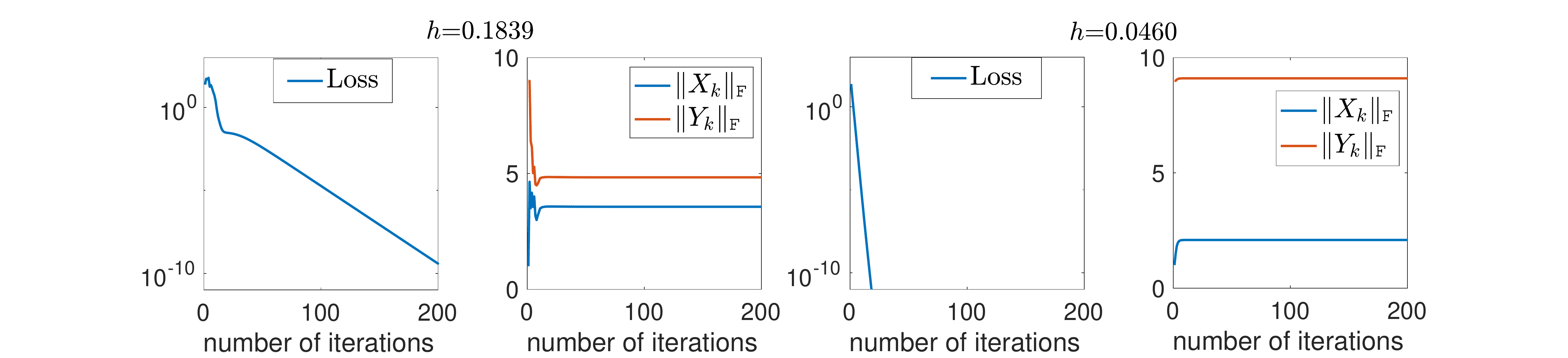}
    \caption{Balancing effect of general matrix factorization. We independently generate elements in  $A\in\RR^{6\times 6}$ from a Gaussian distribution. We choose $X, Y \in \RR^{6 \times 100}$ and randomly pick a pair of initial point $(X_0, Y_0)$ with $\fnorm{X_0}=1$ and $\fnorm{Y_0}=9$.}
    \label{fig:over_parameterized_balancing}
\end{figure}

Different from previous sections, Theorem \ref{thm:general_case} builds on stability analysis by viewing GD as a dynamical system in discrete time. More precisely, the proof of Theorem \ref{thm:general_case} (see Appendix \ref{app:general_matrix_factorization_balancing} for details) consists of two parts: (i) the establishment of an easier but equivalent problem via the rotation of $X$ and $Y$, (ii) stability analysis of the equivalent problem.

For (i), by singular value decomposition (SVD), $A=UDV^\top $, where $U,D,V\in\mathbb{R}^{n\times n}$, $U$ and $V$ are orthogonal matrices, and $D$ is a non-negative diagonal matrix. Let $X_k=UR_k,\ Y_k=VS_k.$ Then
\[
\begin{cases}
&X_{k+1}=X_k+h(A-X_kY_k^\top )Y_k\\
&Y_{k+1}=Y_k+h(A-X_kY_k^\top )^\top X_k
\end{cases}
\]
\[
\Leftrightarrow 
\begin{cases}
&U R_{k+1}=U R_k+h(UDV^\top -U R_kS_k^\top V^\top )VS_k\\
&VS_{k+1}=VS_k+h(UDV^\top -U R_kS_k^\top V^\top )^\top U R_k
\end{cases}
\]
\[
\Leftrightarrow 
\begin{cases}
&R_{k+1}= R_k+h(D-R_kS_k^\top )S_k\\
&S_{k+1}=S_k+h(D- R_kS_k^\top )^\top  R_k
\end{cases}.
\]
Therefore, GD for problem \eqref{eqn:general} is equivalent to GD for the following problem
\begin{align*}
     \min_{R,S\in\mathbb{R}^{n\times d}} \frac{1}{2} \fnorm{D-R S^\top }^2
\end{align*}
and it thus suffices to work with diagonal non-negative $A$.

For (ii), here is a brief description of the idea of stability analysis: 
consider each iteration of GD as a mapping $\psi$ from $u_k \sim (X_k, Y_K)$ to $u_{k+1} \sim (X_{k+1}, Y_{k+1})$, where matrices $X$ and $Y$ are flattened and concatenated into a vector so that $\psi$ is a closed map on vector space $\mathbb{R}^{2dn}$. GD iteration is thus a discrete time dynamical system on state space $\mathbb{R}^{2dn}$ given by
\begin{align*}
u_{k+1} = \psi(u_k) = u_k - h \nabla f(u_k),
\end{align*}
where $f$ is the objective function $f(u_k) = \frac{1}{2} \fnorm{A - X_kY_k^\top}^2$, and gradient returns a vector that collects all component-wise partial derivatives. 

It's easy to see that any stationary point of $f$, denoted by $u^*$, is a fixed point of $\psi$, i.e., $u^*=\psi(u^*)$. What fixed point will the iterations of $\psi$ converge to? For this, the following notions are helpful:
\begin{proposition}\label{def:stable}
Consider a fixed point $u^*$ of $\psi$. If all the eigenvalues of Jacobian matrix $\nabla \psi(u^*)$ are of complex modulus less than $1$, it is a {\it stable} fixed point.
\end{proposition}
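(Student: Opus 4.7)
The plan is a textbook Lyapunov-type argument for discrete-time dynamical systems, adapted to our map $\psi(u) = u - h\nabla f(u)$. Write $J = \nabla\psi(u^*)$ so that, by hypothesis, the spectral radius satisfies $\rho(J) < 1$. Fix any $\alpha \in (\rho(J), 1)$. The first step is to construct an adapted norm $\|\cdot\|_*$ on $\mathbb{R}^{2dn}$ whose induced operator norm obeys $\|J\|_* \le \alpha$. This is a classical consequence of the Jordan canonical form of $J$: each Jordan block can be conjugated by a diagonal rescaling $\mathrm{diag}(1,\delta,\delta^2,\dots)$ that shrinks off-diagonal entries to an arbitrarily small size, so that $\|J\|_*$ can be made arbitrarily close to $\rho(J)$. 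The possibility of complex eigenvalues is handled by working over the complexification and then restricting to the real part, or equivalently by pairing complex-conjugate Jordan blocks.

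Next, since $\psi$ is smooth (indeed $C^\infty$, as $f$ is polynomial in the entries of $X$ and $Y$), I would Taylor expand at $u^*$ to write
\[
\psi(u^* + v) = u^* + Jv + r(v), \qquad \|r(v)\|_* = o(\|v\|_*).
\]
Choosing $\eta = (1-\alpha)/2$ and picking $\delta > 0$ small enough that $\|r(v)\|_* \le \eta\|v\|_*$ whenever $\|v\|_* \le \delta$, I obtain for every $u$ in the ball $B_\delta = \{u : \|u - u^*\|_* \le \delta\}$ the contractive estimate
\[
\|\psi(u) - u^*\|_* \le (\alpha + \eta)\|u - u^*\|_* < \|u - u^*\|_*.
\]
Thus $B_\delta$ is forward invariant under $\psi$, and iterating gives $\|\psi^k(u) - u^*\|_* \le (\alpha+\eta)^k \|u - u^*\|_* \to 0$, establishing both Lyapunov stability (trajectories remain near $u^*$) and local attraction to $u^*$. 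Equivalence of finite-dimensional norms then converts this into stability with respect to the Euclidean norm of interest.

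The main (mild) obstacle is controlling the nonlinear remainder $r$ uniformly in a neighborhood of $u^*$, which the smoothness of our polynomial objective handles automatically via the mean-value form of the Taylor remainder. A secondary conceptual point worth highlighting for the reader is how this proposition will be used downstream: at a stationary point $u^*$ of $f$, the Jacobian becomes $J = I - h\,\nabla^2 f(u^*)$, so the spectral condition $\rho(J) < 1$ reduces to $0 < h\lambda_i(\nabla^2 f(u^*)) < 2$ for every nonzero Hessian eigenvalue. Hence stability of a global minimizer under GD with step $h$ forces an upper bound on the local curvature, which is precisely the mechanism linking large $h$ to flat (well-balanced) limiting points in the subsequent analysis. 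Because the proposition itself is essentially classical (see Alligood et al., already cited), I would keep its proof short and devote the real effort to the curvature computation at the balancing-type fixed points.
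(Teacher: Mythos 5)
Your adapted-norm contraction argument is correct and is the standard proof of this classical fact. Note, however, that the paper itself offers \emph{no} proof of this proposition: it is stated as a known result from discrete-time dynamical systems with a citation to Alligood et al., and its label (\texttt{def:stable}) suggests the authors regard it as essentially a definition. So your write-up supplies the justification the paper leaves implicit, and all the steps check out: the spectral-radius-approximating norm from the Jordan form, the Taylor remainder bound on a small ball, forward invariance, and geometric convergence. One caution on your closing remark about the downstream use: for this objective the global minima are non-isolated (by homogeneity), so the Hessian at any minimizer has zero eigenvalues, $\nabla\psi(u^*)=I-h\nabla^2 f(u^*)$ has eigenvalue exactly $1$, and the strict hypothesis $\rho(\nabla\psi(u^*))<1$ is never literally met at the points of interest. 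The paper's actual argument therefore runs through the \emph{unstable} direction: it only extracts the necessary condition $|1-h\lambda_{\max}|\le 1$ from the requirement that no eigenvalue exceed $1$ in modulus, rather than invoking this proposition's sufficient condition directly. Your proof of the proposition is fine as stated; just do not expect it to apply verbatim to the balancing fixed points without handling the neutral directions along the manifold of minima.
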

\begin{proposition}
Consider a fixed point $u^*$ of $\psi$. If at least one eigenvalue of Jacobian matrix $\nabla \psi(u^*)$ is of complex modulus greater than $1$, it is an {\it unstable} fixed point.
\end{proposition}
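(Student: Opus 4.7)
The plan is to prove the standard dynamical-systems fact that an eigenvalue $\lambda$ of $J:=\nabla\psi(u^*)$ with $|\lambda|>1$ forces arbitrarily small perturbations of $u^*$ to escape any fixed small neighborhood under iteration of $\psi$. Since $\psi(u)=u-h\nabla f(u)$ and $f$ is polynomial in the entries of $(X,Y)$, $\psi$ is $\mathcal{C}^\infty$, so one can Taylor-expand as $\psi(u^*+v)=u^*+Jv+R(v)$ with $\|R(v)\|\le C\|v\|^2$ on some neighborhood of $0$. The task then reduces to showing that the quadratic remainder $R$ cannot destroy the linear expansion coming from $\lambda$.

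The first step I would take is to isolate the expanding direction. Using the (real) Jordan form of $J$, let $E^u$ denote the generalized eigenspace corresponding to all eigenvalues of modulus strictly greater than $1$, and let $E^{cs}$ be the complementary $J$-invariant subspace. Fix any $\mu\in(1,|\lambda|)$; a standard construction produces an adapted norm $\|\cdot\|_*$ on $\mathbb{R}^{2dn}$ such that $\|Jw\|_*\ge\mu\|w\|_*$ for every $w\in E^u$, and such that the projection $P$ onto $E^u$ along $E^{cs}$ has unit operator norm and commutes with $J$. This adapted norm is what makes the expansion visible; in the plain Euclidean norm, higher-dimensional Jordan blocks can transiently contract vectors even when all eigenvalues exceed $1$ in modulus.

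The second step is a cone-invariance plus induction argument. For small $\alpha>0$ define the unstable cone $K_\alpha=\{v:\|(I-P)v\|_*\le\alpha\|Pv\|_*\}$. Using the quadratic bound on $R$, I would show that there exist $\epsilon>0$ and $\mu'\in(1,\mu)$ such that for every $v\in K_\alpha$ with $\|v\|_*\le\epsilon$, the image $\psi(u^*+v)-u^*$ lies again in $K_\alpha$ and satisfies $\|P(\psi(u^*+v)-u^*)\|_*\ge\mu'\|Pv\|_*$. Granted this, pick any initial perturbation $v_0\in E^u$ with $0<\|v_0\|_*<\epsilon$; by induction, as long as $\|v_k\|_*<\epsilon$ the iterate stays in $K_\alpha$ and the unstable component grows by a factor of at least $\mu'$ per step, so $\|v_k\|_*>\epsilon$ after finitely many iterations. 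Since $\|v_0\|_*$ can be taken arbitrarily small, every neighborhood of $u^*$ contains a point whose orbit leaves the $\epsilon$-ball, which is what instability means.

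The main obstacle will be the cone-invariance verification, since the nonlinear term $R$ couples the $E^u$ and $E^{cs}$ blocks and the block structure of $J$ is only clean in the adapted norm. The way to handle it is purely quantitative: after fixing $\mu$, choose $\alpha$ small relative to the spectral gap between $\mu$ and the $E^{cs}$-expansion rate of $J$, then shrink $\epsilon$ so that $C\epsilon$ is small compared to both $\mu-1$ and $\alpha$. With these choices the linear expansion on $E^u$ strictly dominates the cross-coupling introduced by $R$, and the induction closes. Everything beyond this is bookkeeping, and mirrors the standard unstable-manifold / Hartman--Grobman arguments (see e.g.\ Katok--Hasselblatt) specialized here to the very mild setting of a polynomial $\psi$.
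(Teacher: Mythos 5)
Your outline is correct: it is the standard linearization argument for instability of a hyperbolically expanding direction --- Taylor-expand $\psi$ about $u^*$, pass to an adapted norm in which the restriction of $J=\nabla\psi(u^*)$ to the unstable generalized eigenspace expands by a factor $\mu>1$ while the complementary block expands by at most $\nu<\mu'$, and close a cone-invariance induction showing the unstable component grows geometrically until the orbit exits a fixed $\epsilon$-ball. All the quantitative choices you describe ($\epsilon$ small relative to $\mu-1$ and to the cone opening $\alpha$, $\mu'>\nu$ achievable because the center-stable eigenvalues have modulus at most $1$) do go through, and the fact that $\psi$ is polynomial makes the quadratic remainder bound immediate. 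The comparison with the paper is that there is nothing to compare against: the paper does not prove this proposition at all. It states the stable and unstable criteria as standard facts of discrete-time dynamical systems, cites \citet{alligood1996chaos}, and (as the label of the companion statement suggests) essentially treats them as definitions. So you are supplying a genuine proof where the authors defer to a textbook. One caveat worth keeping in mind: what the paper actually \emph{uses} downstream is not mere Lyapunov instability but the stronger claim that the set of initial conditions converging to such a fixed point is negligible (measure zero), which requires the center-stable manifold theorem rather than the cone argument alone. Your proof establishes exactly what the proposition literally asserts, but it would not by itself justify the ``for almost all initial conditions'' reasoning in the proof of Theorem~\ref{thm:general_case}.
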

Roughly put, the stable set of an unstable fixed point is of negligible size when compared to that of a stable fixed point, and thus what GD converges to is a stable fixed point for almost all initial conditions \citep{alligood1996chaos}. Thus, we investigate the stability of each global minimum of $f$ (each saddle of $f$ is an unstable fixed point of $\psi$ and thus is irrelevant). By a detailed evaluation of $\nabla \psi$'s eigenvalues (Appendix~\ref{app:general_matrix_factorization_balancing}), we see that a global minimum $(X,Y)$ of $f$ corresponds to a stable fixed point of GD iteration if $\left|1 - c h \left(\fnorm{X}^2 + \fnorm{Y}^2\right)\right| < 1$, i.e., it is balanced as in Thm. \ref{thm:general_case}.

\section{Conclusion and Discussion}\label{sec:discussion}

In this paper, we demonstrate an implicit regularization effect of large learning rate on the homogeneous matrix factorization problem solved by GD. More precisely, a phenomenon termed as ``balancing'' is theoretically illustrated, which says the difference between the two factors $X$ and $Y$ may decrease significantly at the limit of GD, and the extent of balancing can increase as learning rate increases. In addition, we provide theoretical analysis of the convergence of GD to the global minimum, and this is with large learning rate that can exceed the typical limit of $2/L$, where $L$ is the largest eigenvalue of Hessian at GD initialization.

For the matrix factorization problem analyzed here, large learning rate avoids bad regularities induced by the homogeneity between $X$ and $Y$. We feel it is possible that such balancing behavior can also be seen in problems with similar homogeneous properties, for example, in tensor decomposition \citep{kolda2009tensor}, matrix completion \citep{keshavan2010matrix, hardt2014understanding}, generalized phase retrieval \citep{candes2015phase, sun2018geometric}, and neural networks with homogeneous activation functions (e.g., ReLU). Besides the balancing effect, the convergence analysis under large learning rate may be transplanted to other non-convex problems and help discover more implicit regularization effects.

In addition, factorization problems studied here are closely related to two-layer linear neural networks. For example, one-dimensional regression via a two-layer linear neural network can be formulated as the scalar factorization problem \eqref{eqn:scalarFactorizationObjective}: Suppose we have a collection of data points $(x_i, y_i) \in \RR \times \RR$ for $i = 1, \dots, n$. We aim to train a linear neural network $y = (u^\top v) x$ with $u, v \in \RR^d$ for fitting the data. We optimize $u, v$ by minimizing the quadratic loss,
\begin{align}\label{eq:twolayer_NN}
(u^*, v^*) \in \arg\min_{u, v}~ \frac{1}{n} \sum_{i=1}^n \left(y_i - (u^\top v) x_i \right)^2 = \arg\min_{u, v}\frac{1}{n} \left(\frac{\sum_{i=1}^n x_i y_i}{\sum_{i=1}^n x_i^2} - u^\top v\right)^2.
\end{align}
As can be seen, taking $\mu = \frac{\sum_{i=1}^n x_i y_i}{\sum_{i=1}^n x_i^2}$ recovers \eqref{eqn:scalarFactorizationObjective}. In this regard, our theory indicates that training of $u, v$ by GD with large learning rate automatically balances $u, v$, and the obtained minimum is flat. This may provide some initial understanding of the improved performance brought by large learning rates in practice. Note that \eqref{eq:twolayer_NN} generalizes to arbitrary data distribution of training a two-layer linear network with atomic data (i.e., $x = 1$ and $y = 0$) in \citet{lewkowycz2020large}.

It is important to clarify, however, that there is a substantial gap between this demonstration and extensions to general neural networks, including deep linear and nonlinear networks. Although we suspect that large learning rate leads to similar balancing effect of weight matrices in the network, rigorous theoretical analysis is left as a future direction.

\newpage

\section*{Acknowledgments}
We thank anonymous reviewers and area chair for suggestions that improved the quality of this paper. The authors are grateful for partial supports from NSF DMS-1847802 (YW and MT) and ECCS-1936776 (MT).

\bibliography{ref}
\bibliographystyle{iclr2022_conference}
\clearpage

\appendix

\noindent\rule[0.5ex]{\linewidth}{2pt}
\begin{center}
  \textbf{\Large Supplementary Materials for
    ``Large Learning Rate Tames Homogeneity: Convergence and Balancing Effect''}
\end{center}
\noindent\rule[0.5ex]{\linewidth}{1pt}

\section{Additional Experiments}
\label{app:experiments}
\subsection{More results for matrix factorization}

In this section, we present more experiments with different choices of $n,d$, various initializations and scalings, and a broader range of learning rates. All these experiments verify our claim on the balancing effect of large learning rate, i.e., the shrinkage of the gap between the magnitudes of $X$ and $Y$ exists for general matrix factorization problems, i.e., for any choice of $n,d\in\NN^+$. 

We first provide examples of scalar factorization in Figure~\ref{fig:review_scalar_scale1},~\ref{fig:review_scalar_scale2}, and~\ref{fig:review_scalar_scale3} to numerically justify our theory in Section~\ref{sec:scalar_factorization}. In the three figures, the initial conditions randomly generated, respectively with $(\norm{x_0},\norm{y_0})=(9,1)$, $(\norm{x_0},\norm{y_0})=(19,1)$, and $(\norm{x_0},\norm{y_0})=(99,1)$; the learning rates are chosen within the range of Theorem~\ref{thm:scalar_convergence} from large to small as $h_0,\frac{6}{7}h_0,\frac{5}{7}h_0,\frac{4}{7}h_0,\frac{3}{7}h_0,\frac{2}{7}h_0$ for the 1st-6th columns respectively where $h_0=4/(\norm{x_0}^2+\norm{y_0}^2+8)$. The learning rates of the left three columns are larger than $2/L$ ($L$ is the local Lipschitz constant of gradient near the initial condition), where we can see the decrease in the gap between $\norm{x_k}$ and $\norm{y_k}$ and larger learning rate leads to smaller gap; the right three correspond to $h<2/L$ where there are almost no changes in $\norm{x_k}$ and $\norm{y_k}$ as $k$ increases. Moreover, the loss does not decrease monotonically at the beginning of the iterations for all the $h>2/L$ cases while in the later iterations GD shows monotone convergence; for the right three columns ($h<2/L$ cases), we can see monotone decrease of the loss. This validates our two-phase pattern of convergence (see e.g. Figure~\ref{fig:two_phase} and Section~\ref{sec:scalar_factorization} for detailed explanation).

\begin{figure}[ht]
    \centering
    \includegraphics[trim={4cm 1cm 3cm 1cm},clip,width=\textwidth]{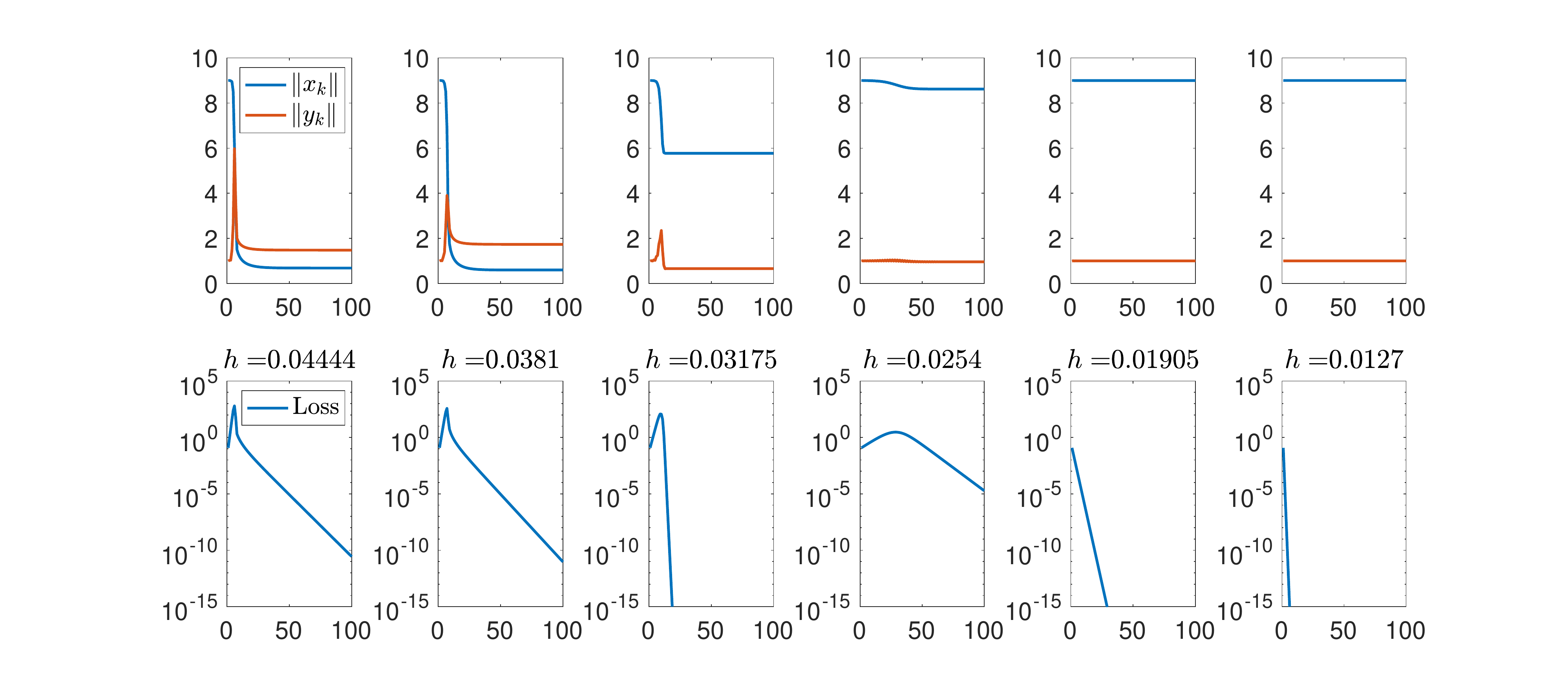}
    \caption{Scalar factorization with $\norm{x_0}=9, \norm{y_0}=1$. The $x-$axis represents the number of iterations $k$; the $y-$axis represents the value for the norm of $x_k$ and $y_k$ in the first row, and the value for loss in the second row; the learning rate $h$ for each column is the same.}
    \label{fig:review_scalar_scale1}
\end{figure}
\begin{figure}[ht]
    \centering
    \includegraphics[trim={4cm 1cm 3cm 1cm},clip,width=\textwidth]{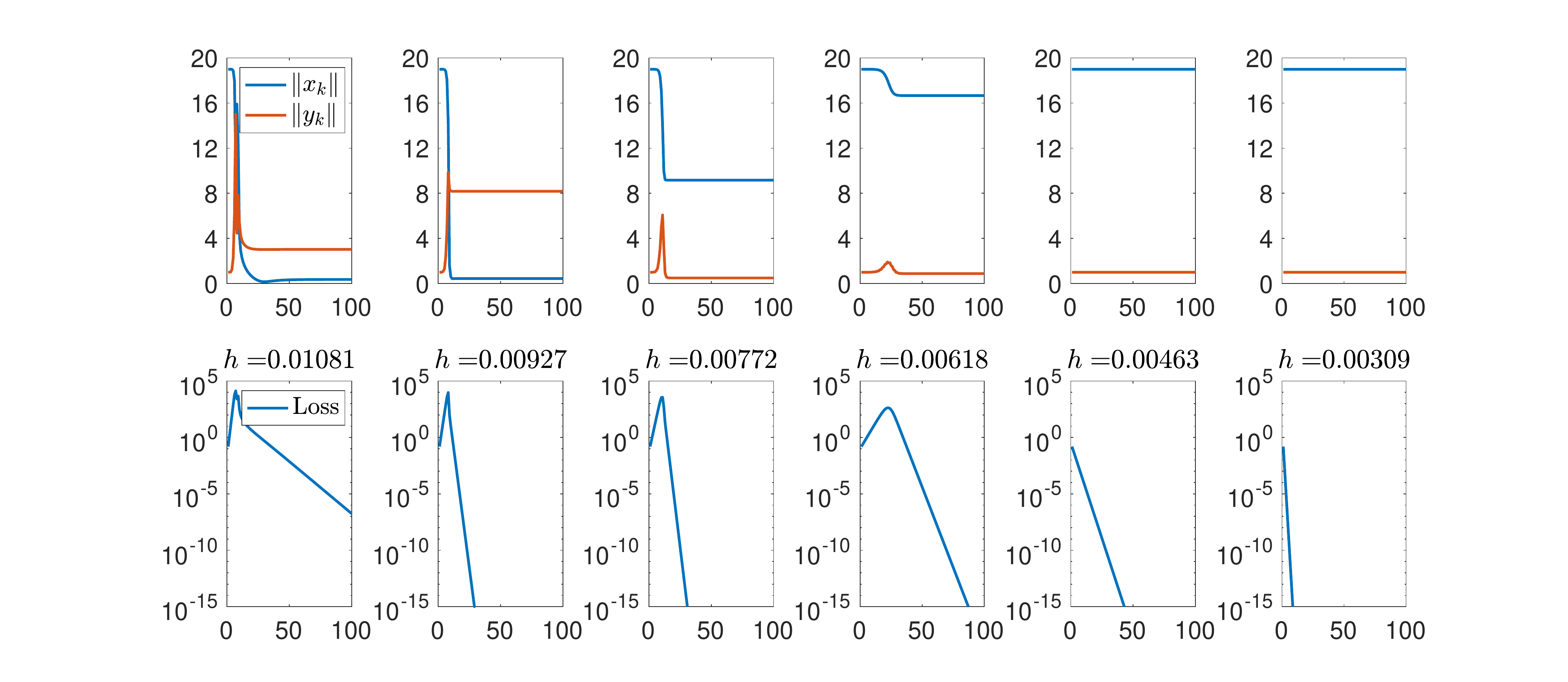}
    \caption{Scalar factorization with $\norm{x_0}=19, \norm{y_0}=1$. The $x-$axis represents the number of iterations $k$; the $y-$axis represents the value for the norm of $x_k$ and $y_k$ in the first row, and the value for loss in the second row; the learning rate $h$ for each column is the same.}
    \label{fig:review_scalar_scale2}
\end{figure}
\begin{figure}[ht]
    \centering
    \includegraphics[trim={4cm 1cm 3cm 1cm},clip,width=\textwidth]{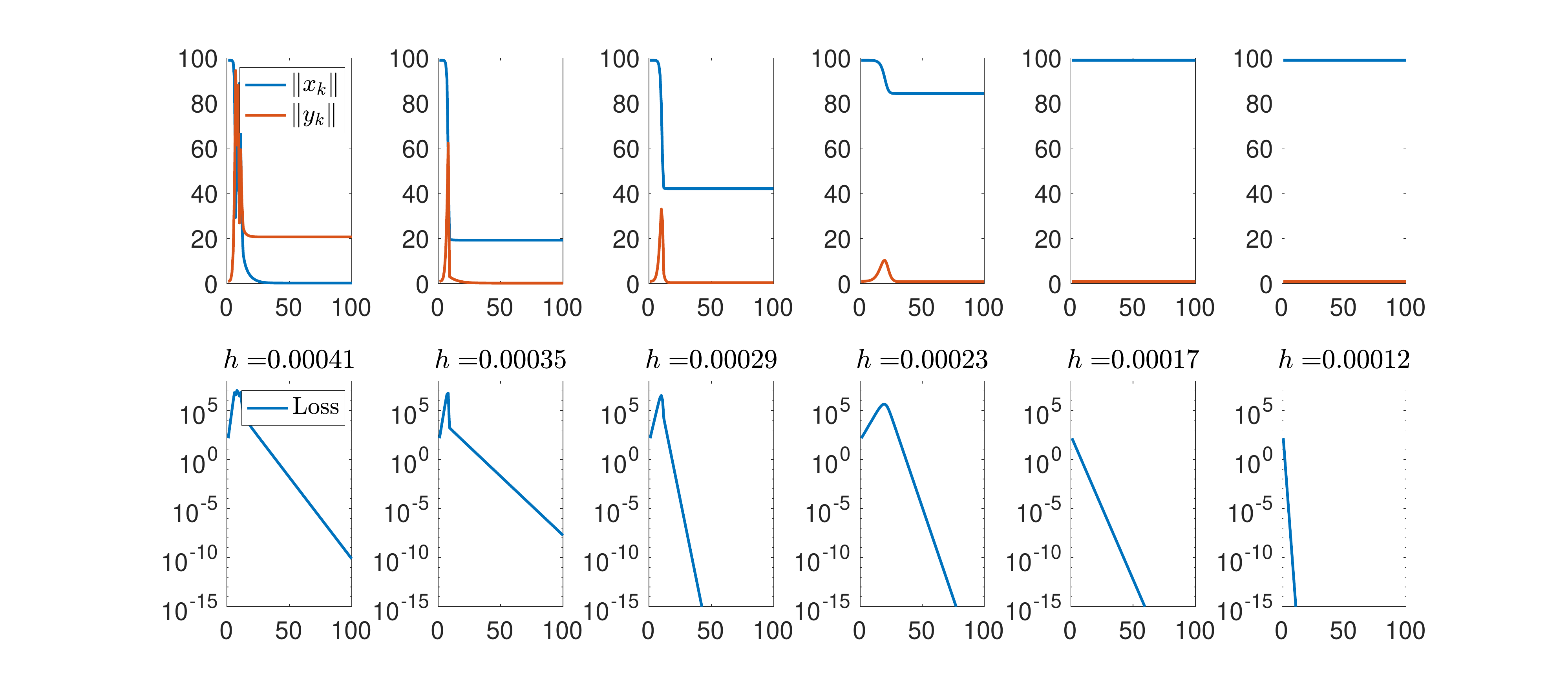}
    \caption{Scalar factorization with $\norm{x_0}=99, \norm{y_0}=1$. The $x-$axis represents the number of iterations $k$; the $y-$axis represents the value for the norm of $x_k$ and $y_k$ in the first row, and the value for loss in the second row; the learning rate $h$ for each column is the same.}
    \label{fig:review_scalar_scale3}
\end{figure}

Then we experiment with the general matrix factorization as a supplement of our theory in Section~\ref{sec:general_matrix_factorization} where we only rigorously prove the balancing effect given the convergence of GD. In the following examples, we show that there indeed exist large learning rates that trigger the shrinkage of the gap between $\fnorm{X_k}$ and $\fnorm{Y_k}$ and at the same time guarantee the convergence of GD. Figure~\ref{fig:review_general_scale1},~\ref{fig:review_general_scale2}, and~\ref{fig:review_general_scale3} correspond to the over-parameterized version of matrix factorization problem~\eqref{eqn:general} with $A\in\RR^{6\times 6}$ (asymmetric, generated by $i.i.d.$ Gaussian) and $X,Y\in\RR^{6\times 100}$. The initial conditions are randomly generated with $(\fnorm{X_0},\fnorm{Y_0})=(9,1)$, $(\fnorm{X_0},\fnorm{Y_0})=(19,1)$, and $(\fnorm{X_0},\fnorm{Y_0})=(99,1)$ respectively. Similarly,  the learning rates are chosen from large to small as $h_0,\frac{6}{7}h_0,\frac{5}{7}h_0,\frac{4}{7}h_0,\frac{3}{7}h_0,\frac{2}{7}h_0$ for the 1st-6th columns respectively where $\frac{6}{7}h_0$ (the 2nd column) is picked near the stability limit.  We also similarly provide two examples of the under-parameterized version in Figure~\ref{fig:review_under_scale1} and~\ref{fig:review_under_scale2}. The two examples correspond to problem~\eqref{eqn:general} with $A\in\RR^{100\times 100}$ (asymmetric, similarly generated as the previous one) and $X,Y\in\RR^{100\times 3}$. Here we use a shifted loss which is to subtract the global minimum error.

As is shown in Figure~\ref{fig:review_general_scale1},~\ref{fig:review_general_scale2}, ~\ref{fig:review_general_scale3},~\ref{fig:review_under_scale1}, and~\ref{fig:review_under_scale2}, we can observe the similar phenomenon as the scalar case, in the sense that (1) larger learning rate gives a smaller the gap between $\fnorm{X_k}$ and $\fnorm{Y_k}$ in the limit, except for the overly large $h$ that causes GD to diverge; (2) when the learning rate becomes sufficiently big, a two-phase pattern of convergence appears, which does not manifest in traditional optimization analysis and thus indicates that the learning rate is already larger than that permitted by traditional theory, and yet one still has convergence.

More precisely, the 2nd-4th columns are the large learning rate cases, where one observes a shrinkage of the unbalancedness and non-monotonicity of the loss at the beginning, while the right two columns corresponds to the small learning rates for which $\fnorm{X_k}$ and $\fnorm{Y_k}$ barely change as $k$ increases, and the decrease of loss is monotone. These are all evidence of the consistency between the most general case of matrix factorization in Section \ref{sec:general_matrix_factorization} and the special cases in Section~\ref{sec:scalar_factorization} and~\ref{sec:rank_one_approx_isotropic}.

\begin{figure}[ht]
    \centering
    \includegraphics[trim={4cm 1cm 3cm 0.7cm},clip,width=\textwidth]{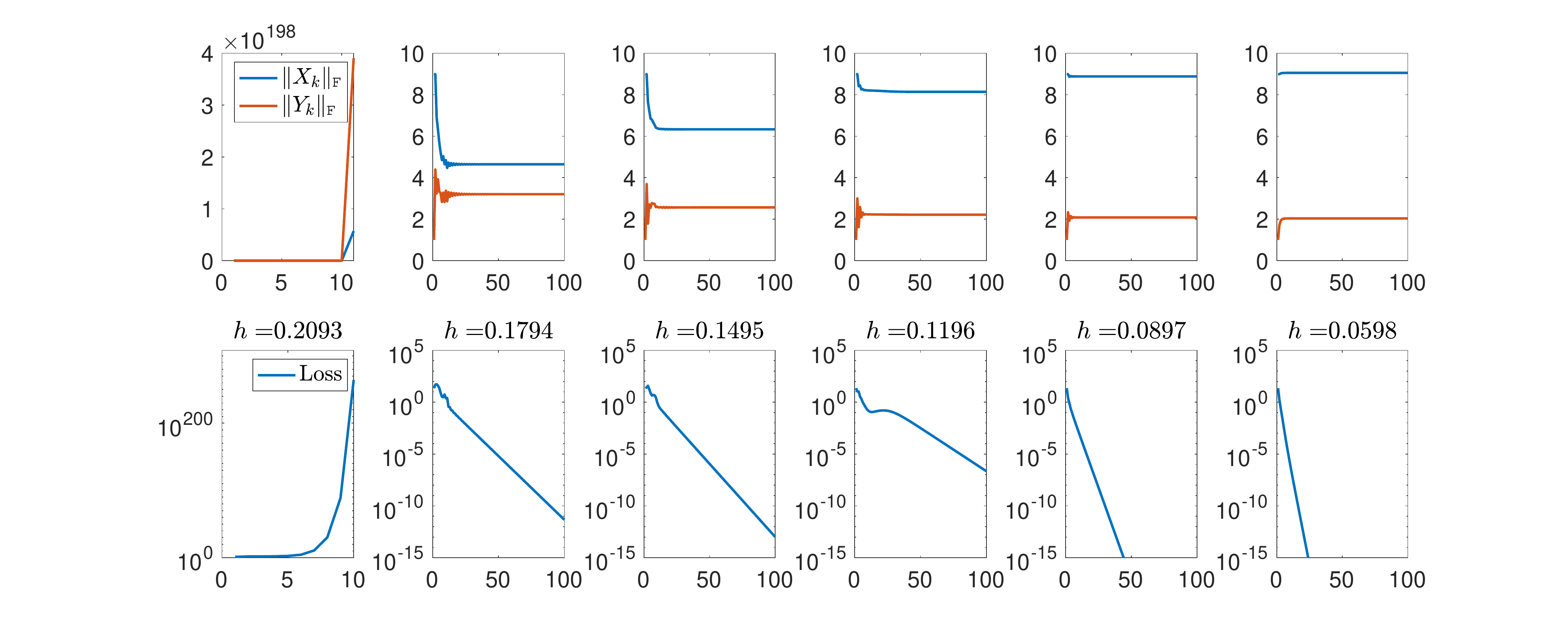}
    \caption{General over-parameterized matrix factorization with $\fnorm{X_0}=9\ \fnorm{Y_0}=1$. The $x-$axis represents the number of iterations $k$; the $y-$axis represents the value for the norm of $X_k$ and $Y_k$ in the first row, and the value for loss in the second row; the learning rate $h$ for each column is the same.}
    \label{fig:review_general_scale1}
\end{figure}
\begin{figure}[ht]
    \centering
    \includegraphics[trim={4cm 1cm 3cm 0.7cm},clip,width=\textwidth]{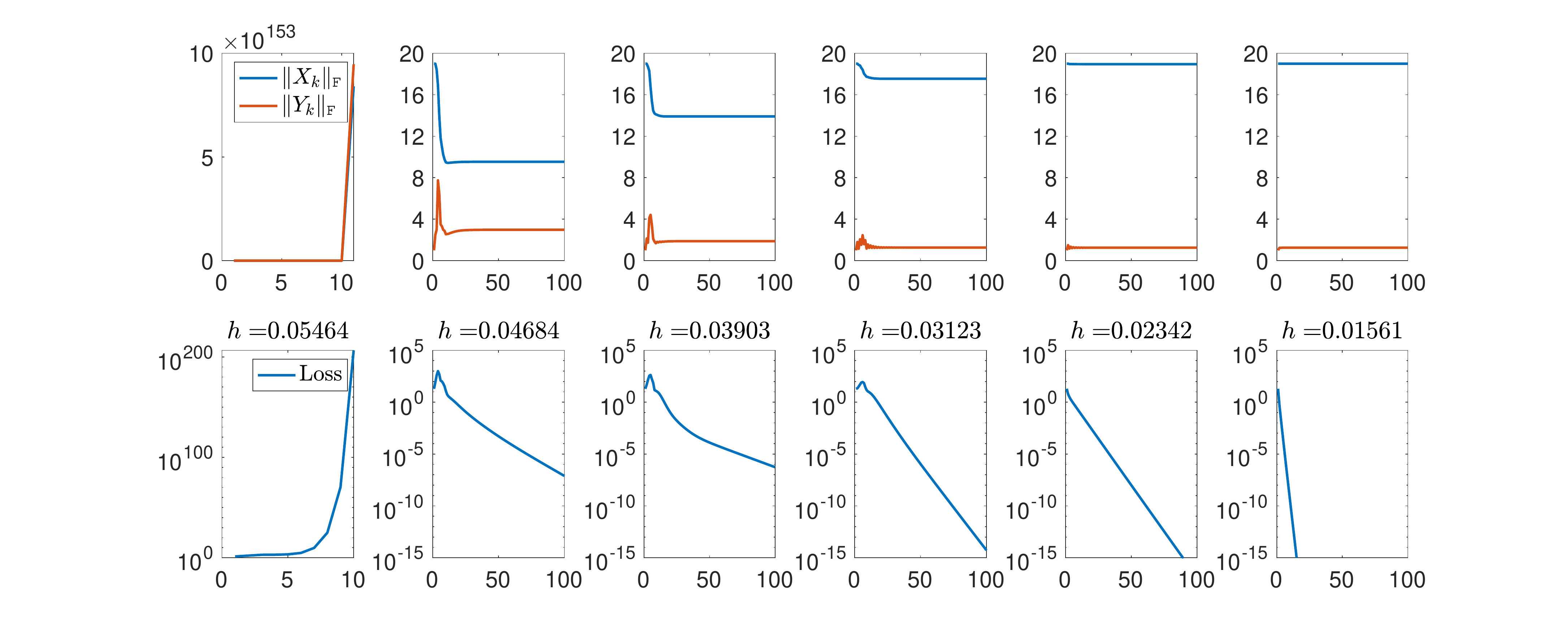}
    \caption{General over-parameterized matrix factorization with $\fnorm{X_0}=19\ \fnorm{Y_0}=1$. The $x-$axis represents the number of iterations $k$; the $y-$axis represents the value for the norm of $X_k$ and $Y_k$ in the first row, and the value for loss in the second row; the learning rate $h$ for each column is the same.}
    \label{fig:review_general_scale2}
\end{figure}
\begin{figure}[ht]
    \centering
    \includegraphics[trim={4cm 0.5cm 3cm 0.7cm},clip,width=\textwidth]{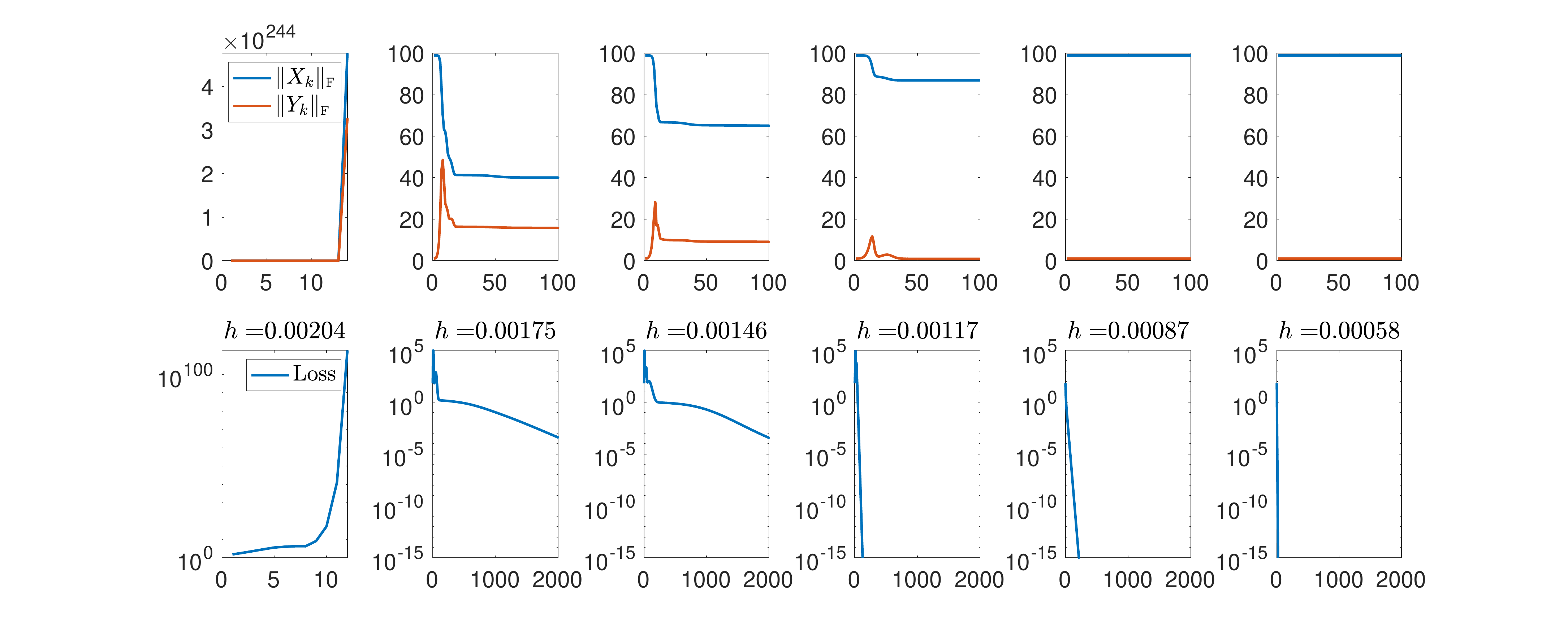}
    \caption{General over-parameterized matrix factorization with $\fnorm{X_0}=99\ \fnorm{Y_0}=1$. The $x-$axis represents the number of iterations $k$; the $y-$axis represents the value for the norm of $X_k$ and $Y_k$ in the first row, and the value for loss in the second row; the learning rate $h$ for each column is the same. Note the $x-$axis range of the 1st row is shortened to better show the changes of $\fnorm{X_k}$ and $\fnorm{Y_k}$ at the beginning of the iterations. }
    \label{fig:review_general_scale3}
\end{figure}

\begin{figure}[ht]
    \centering
    \includegraphics[trim={4cm 0cm 3cm 0.7cm},clip,width=\textwidth]{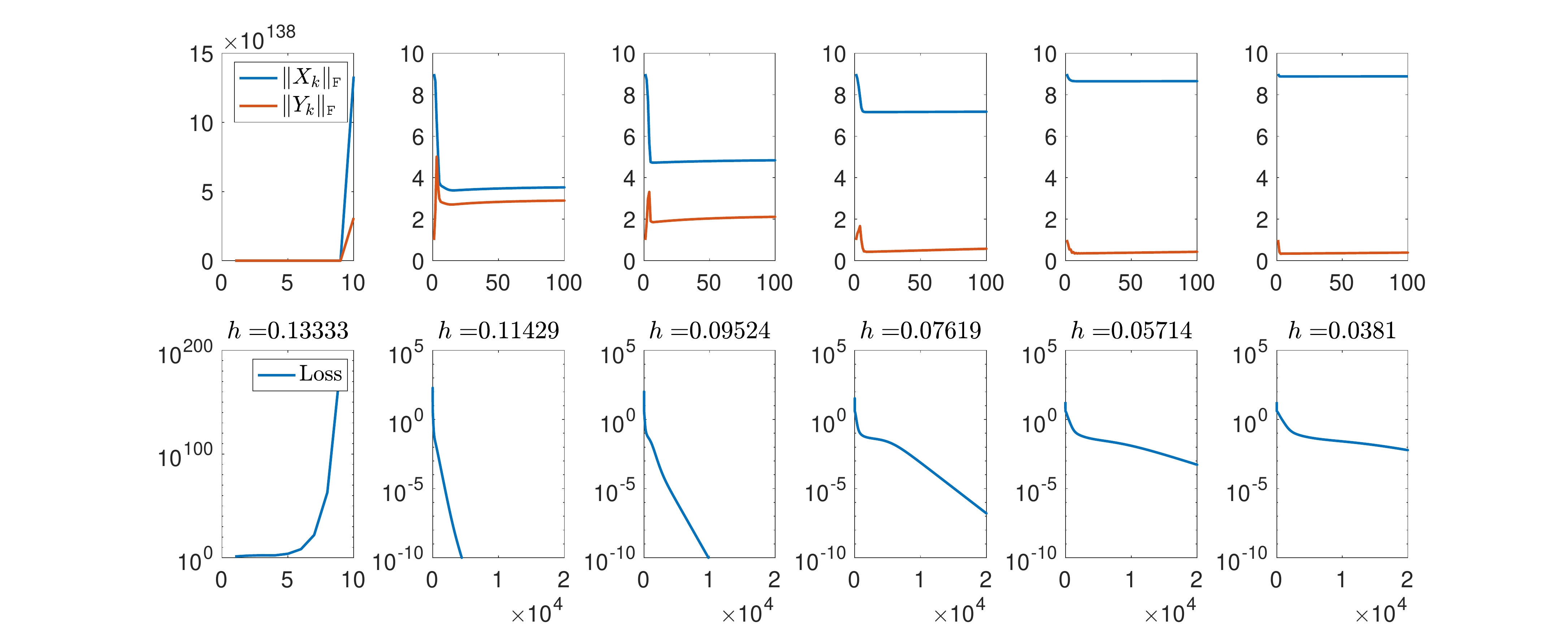}
    \caption{General under-parameterized matrix factorization with $\fnorm{X_0}=9\ \fnorm{Y_0}=1$. The $x-$axis represents the number of iterations $k$; the $y-$axis represents the value for the norm of $X_k$ and $Y_k$ in the first row, and the value for loss in the second row; the learning rate $h$ for each column is the same. Note the $x-$axis range of the 1st row is shortened to better show the changes of $\fnorm{X_k}$ and $\fnorm{Y_k}$ at the beginning of the iterations.}
    \label{fig:review_under_scale1}
\end{figure}
\begin{figure}[ht]
    \centering
    \includegraphics[trim={4cm 0cm 3cm 0.7cm},clip,width=\textwidth]{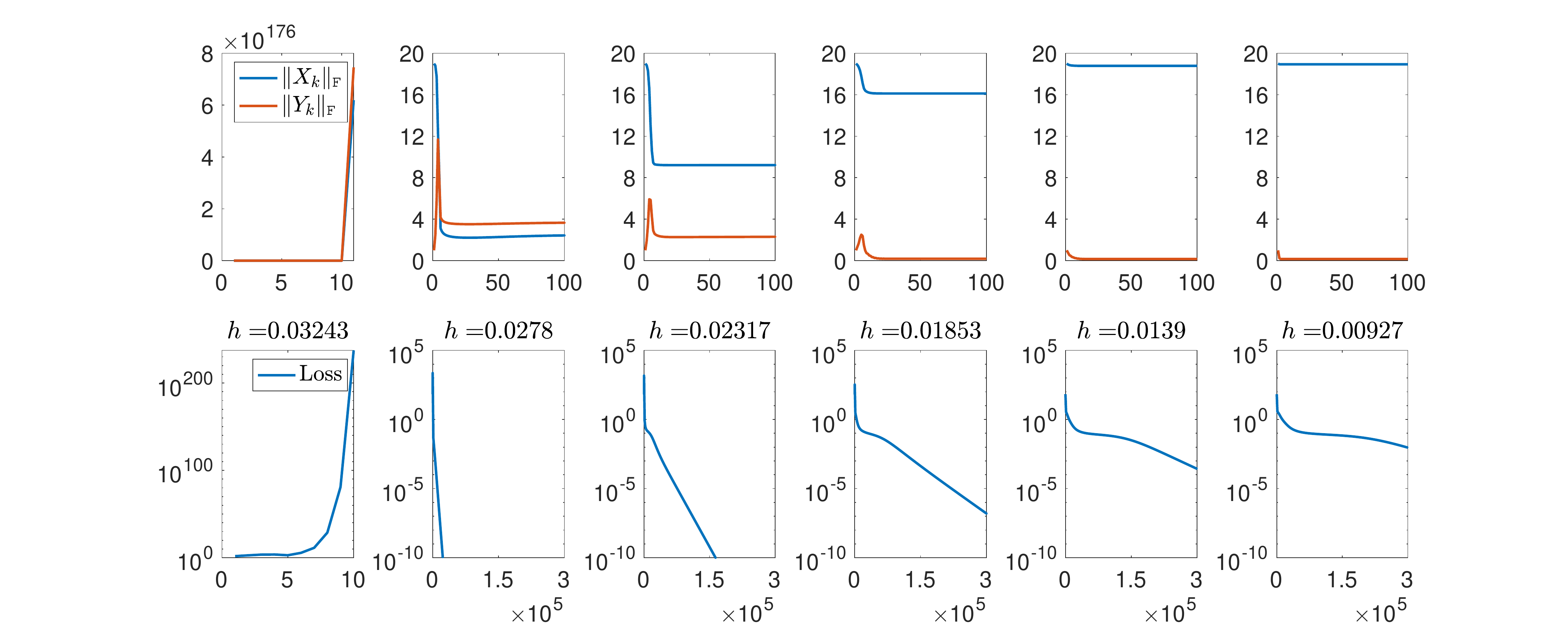}
    \caption{General under-parameterized matrix factorization with $\fnorm{X_0}=19\ \fnorm{Y_0}=1$. The $x-$axis represents the number of iterations $k$; the $y-$axis represents the value for the norm of $X_k$ and $Y_k$ in the first row, and the value for loss in the second row; the learning rate $h$ for each column is the same. Note the $x-$axis range of the 1st row is shortened to better show the changes of $\fnorm{X_k}$ and $\fnorm{Y_k}$ at the beginning of the iterations.}
    \label{fig:review_under_scale2}
\end{figure}

\subsection{Matrix sensing and matrix completion}
As we demonstrated, the balancing effect is a nontrivial implicit bias created by large learning rate in GD, and this was rigorously established for the problem of matrix factorization. Matrix factorization already corresponds to a class of important problems as we consider arbitrary $n$ and $d$; however, we feel balancing is an effect even more general, and thus we now demonstrate it empirically on two related additional problems, namely matrix sensing and matrix completion. 

In Figure~\ref{fig:matrix_sensing}, we consider matrix sensing corresponding to the problem $$\min_{X,Y\in\RR^{n\times d}}\frac{1}{2m}\sum_{i=1}^m (b_i-\ip{A_i}{XY^\top})^2.$$
Here  $A_i\in\RR^{100\times 100}$ are generated from element-wise $i.i.d.$ Gaussian; $b_i\in\RR$ are generated from uniform distribution $[0,1]$; $m=10$; $X,Y\in\RR^{100\times 6}$; $\ip{U}{V}=\Tr(V^\top U)$. The problem is solved via GD. Experiments in the figure correspond to learning rates chosen from large to small as $h_0,\frac{4}{5}h_0,\frac{3}{5}h_0,\frac{2}{5}h_0$ where $h_0$ (the 1st column) is picked near the stability limit. We can see from the figure that matrix sensing exhibits a similar balancing effect with matrix factorization that larger learning rate leads to more balanced norms between $X$ and $Y$.

In Figure~\ref{fig:matrix_completion}, we consider matrix completion corresponding to the problem $$\min_{X,Y\in\RR^{n\times d}} \frac{1}{2}\fnorm{P_{\Omega}(A-XY^T)}^2,$$
where $A\in\RR^{10\times 10}$ is a low-rank matrix with rank 2; $X,Y\in\RR^{10\times 2}$; $P_\Omega(U)=(U_{ij})_{(i,j)\in\Omega}+(0)_{(i,j)\notin\Omega}$ with sparsity$-0.6$ where sparsity$=$(number of non-zero elements)$/$(number of all elements). The problem is solved via GD. In the above mentioned figure, the learning rates are similarly chosen as the above matrix sensing example. Likewise, the decrease of learning rate results in the increase in the gap between the norms of $X$ and $Y$.

Both the matrix sensing and matrix completion above hold the same homogeneity property between $X$ and $Y$. The ``balancing effect" that we proved for matrix factorization is also observed when and only when the learning rate $h$ is large, in which case the norms of $X$ and $Y$ become close at the convergence of GD (and yes, GD still converges even though $h$ is large enough such that the convergence is not monotone.

We feel the techniques invented and employed in this paper can extend to these two cases, but that is beyond the scope of this paper.

\begin{figure}[ht]
    \centering
    \includegraphics[trim={2cm 1cm 2cm 1cm},clip, width=0.9\textwidth]{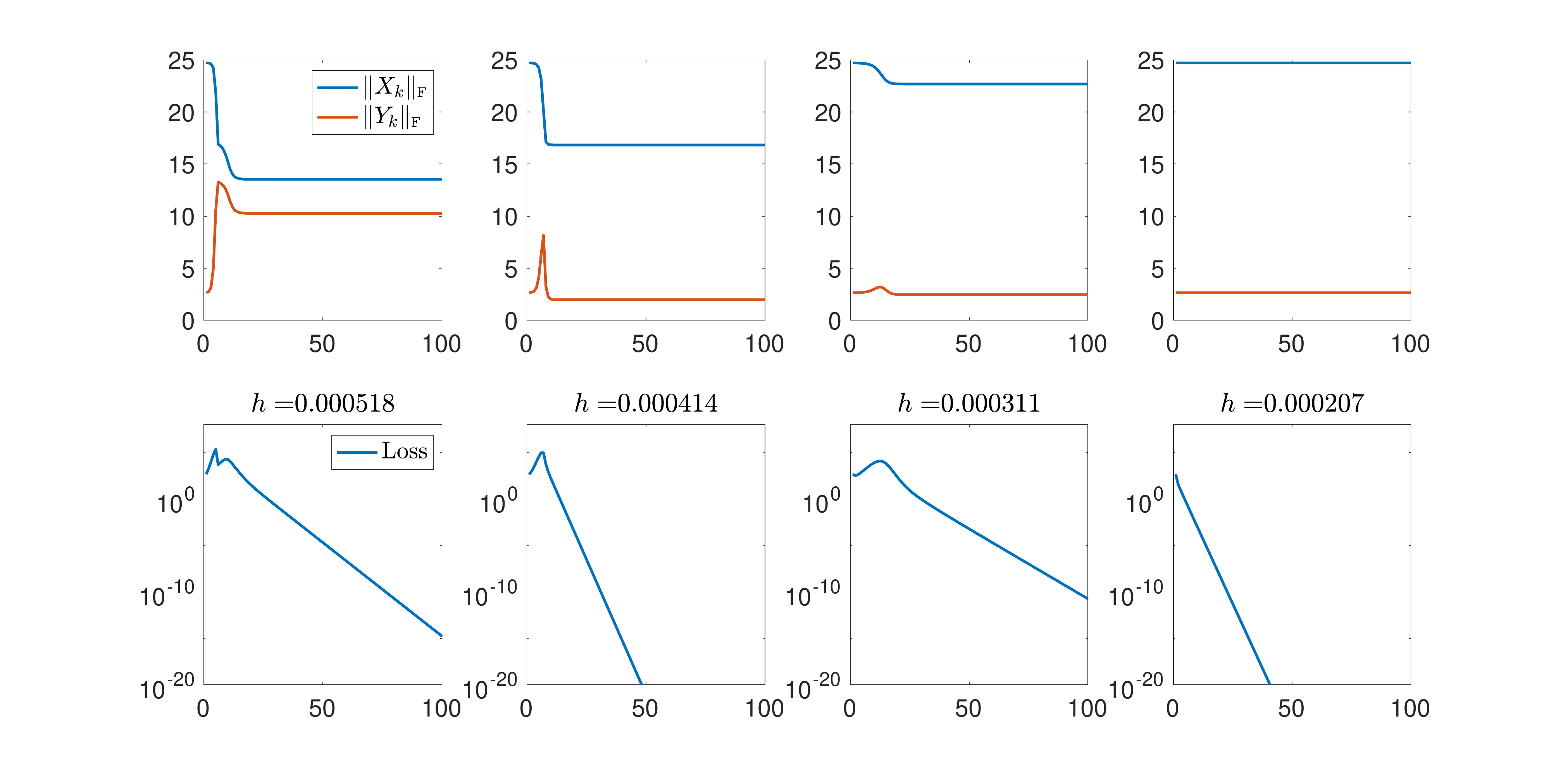}
    \caption{Matrix sensing. The $x-$axis represents the number of iterations $k$; the $y-$axis represents the value for the norm of $X_k$ and $Y_k$ in the first row, and the value for loss in the second row; the learning rate $h$ for each column is the same.} 
    \label{fig:matrix_sensing}
\end{figure}

\begin{figure}[ht]
    \centering
    \includegraphics[trim={2cm 0 2cm 1cm},clip,width=0.9\textwidth]{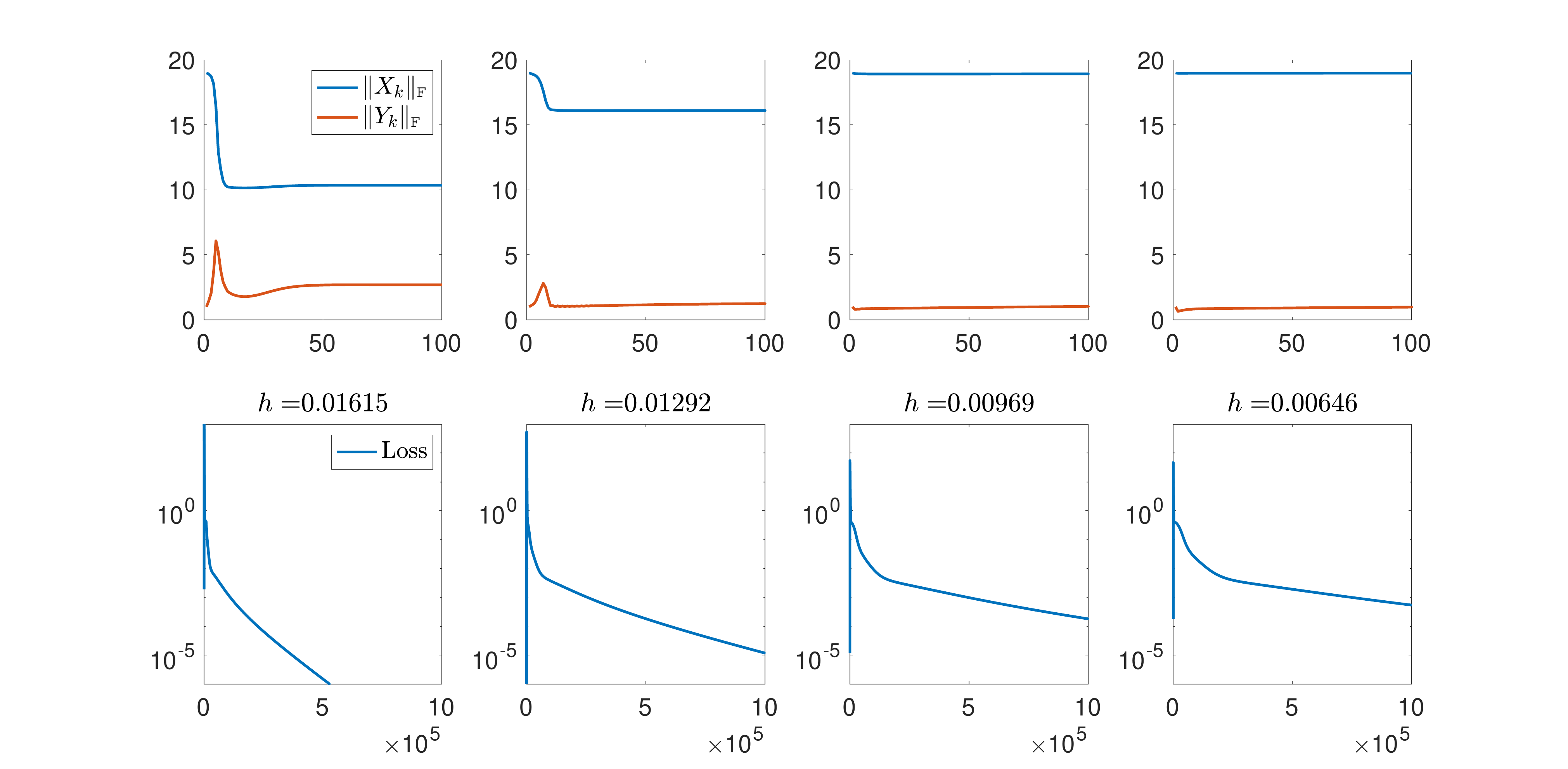}
    \caption{Matrix completion. The $x-$axis represents the number of iterations $k$; the $y-$axis represents the value for the norm of $X_k$ and $Y_k$ in the first row, and the value for loss in the second row; the learning rate $h$ for each column is the same. Note the $x-$axis range of the 1st row is shortened to better show the changes of $\fnorm{X_k}$ and $\fnorm{Y_k}$ at the beginning of the iterations.}
    \label{fig:matrix_completion}
\end{figure}

\section{GD converging to periodic orbit}
\label{sec:periodic_orbits}
Consider the objective $(1-xy)^2/2$. Take step size $h=1.9$. Then GD can converge to periodic orbits with period 2, 3 and 4 respectively in Figure~\ref{fig:periodic_orbits}.
\begin{figure}[ht]
    \centering
    \includegraphics[width=1\textwidth]{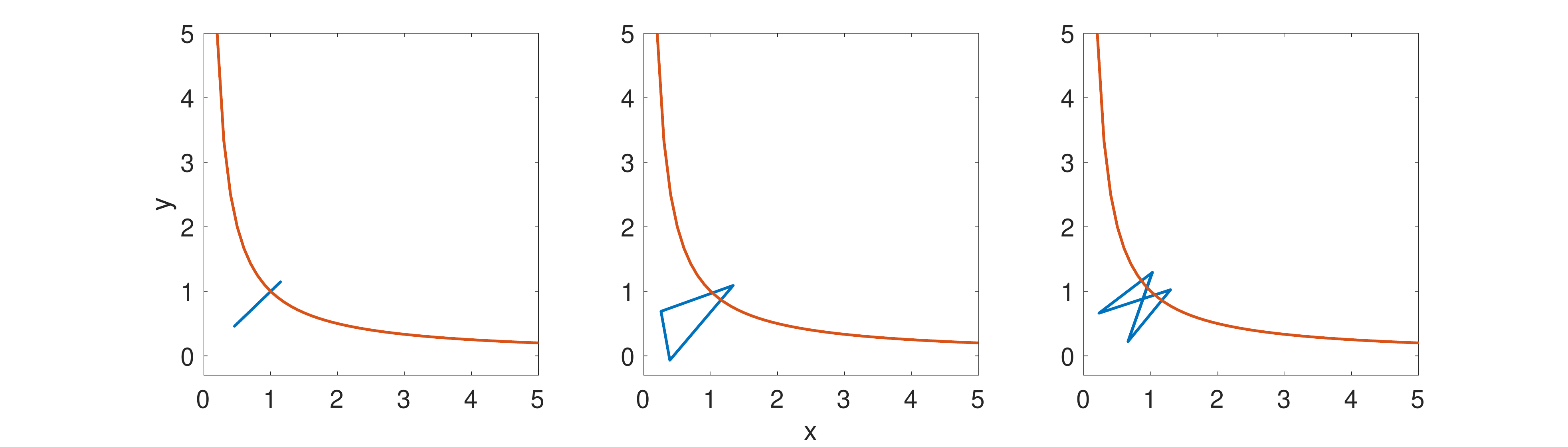}
    \caption{Three orbits of period 2, 3, and 4. The blue line are the orbits; the red line is a reference line of the global minima $xy=1$.}
    \label{fig:periodic_orbits}
\end{figure}

\section{Modified equation fails for large learning rates}
\label{sec:modified_eqn}
Consider the objective function $f(x,y)=(1-xy)^2/2$. Then the GD update is the following
\begin{align}
\label{eqn:1D_GD}
    \begin{cases}
    x_{k+1}=x_k+h(1-x_k y_k)y_k &\\
    y_{k+1}=y_k+h(1-y_k x_k)x_k&
    \end{cases}\Rightarrow 
    \begin{pmatrix}
    x_{k+1}\\ y_{k+1}
    \end{pmatrix}= \begin{pmatrix}
    x_{k}\\ y_{k}
    \end{pmatrix}+h (-\nabla f(x_k,y_k)).
\end{align}
By backward error analysis \citep[Chapter 9]{hairer2006geometric}, the modified equation can better approximate GD than gradient flow and is defined as follows
\begin{align}
\label{eqn:modified_eqn}
    \begin{pmatrix}
    \dot{x}\\ \dot{y}
    \end{pmatrix}=-\nabla f(x,y)-\frac{h}{2}\nabla^2 f(x,y) \nabla f(x,y)+\cO(h^2).
\end{align}
Figure~\ref{fig:modifEqn_vs_GD} shows the trajectories of the 1st order modified equation of~\eqref{eqn:modified_eqn} and GD~\eqref{eqn:1D_GD} with initial condition $x=4, y=10$ and $h=0.026$. The $x$-axis represents the time $t$ and for GD, the time point for $k$th step is $kh$. We compare the absolute values of $x$ and $y$ of both methods due to the symmetry of the global minima $xy=1$. As is shown in the figure, even if GD almost converges to the most balanced minimizer, the solutions of modified equation are still far away from each other, $x\approx 0.1$ and $y\approx 9$. 
Actually, large learning rate $h$ fall outside the convergence domain of the modified equation which thus is not an appropriate tool for the analysis of norm balancing.

\begin{figure}[ht]
    \centering
    \includegraphics[width=0.6\textwidth]{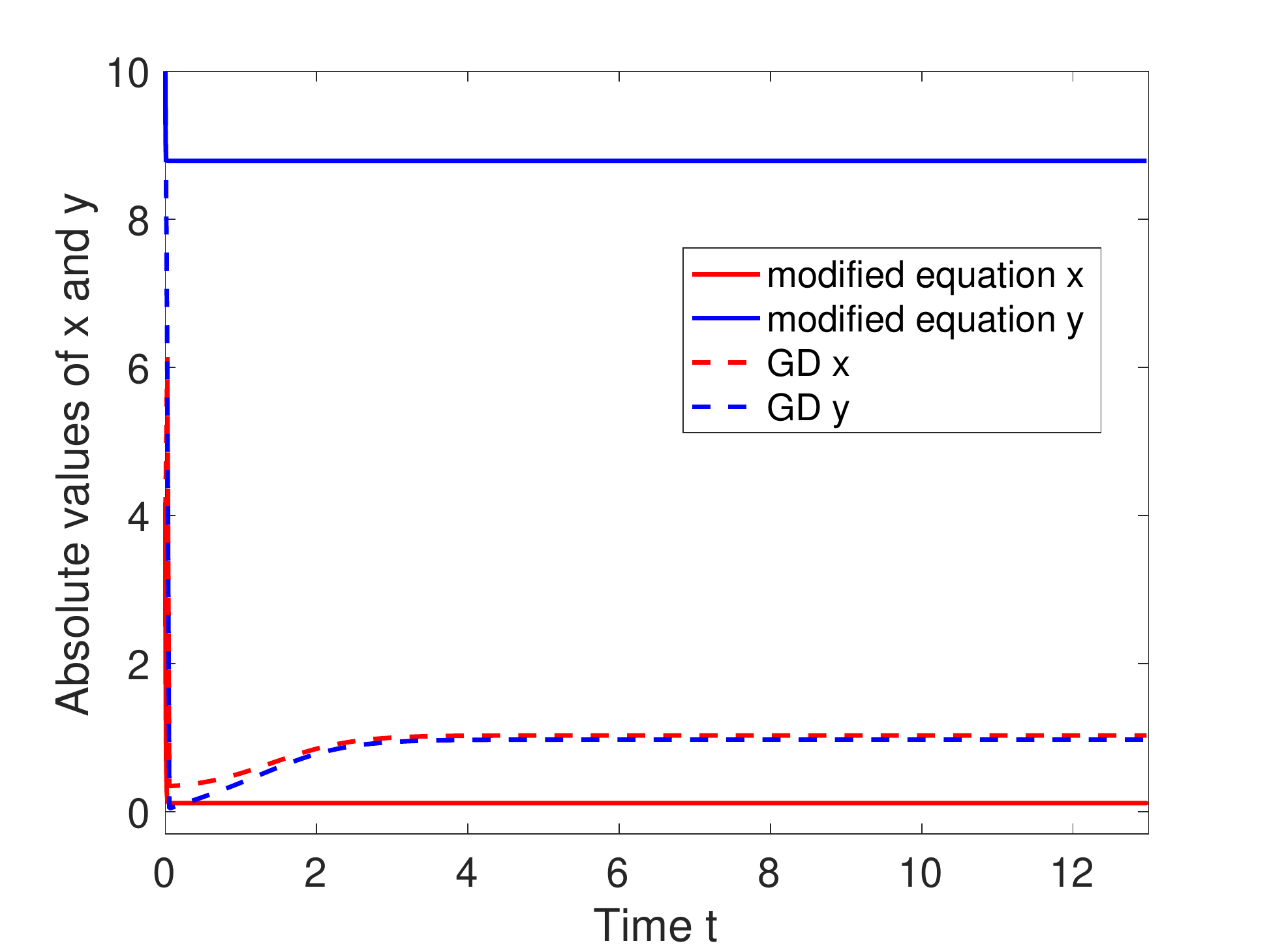}
    \caption{Trajectories: modified equation vs GD}
    \label{fig:modifEqn_vs_GD}
\end{figure}

\section{Proof of Proposition~\ref{pro:eigenvalue_hessian}}
\label{app:proposition_sharp_and_flat_minima}
\begin{proof}[Proof of Proposition~\ref{pro:eigenvalue_hessian}]
This proposition is a direct consequence of Theorem~\ref{thm:scalar_hessian_eigenvalue} and~\ref{thm:rank1_general_balancing}.
\end{proof}

\section{Over-parametrized scalar decomposition: stability limit}
\label{app:scalar_stability_limit_h}
Consider the objective $(1-xy)^2/2$. Choose the initial condition to be $x_0=20,\ y_0=0.07$ and use GD update. The upper bound of $h$ in Theorem~\ref{thm:scalarFactorizationConvergence} is $\frac{4}{x_0^2+y_0^2+4\mu}$, where $\mu=1$. In Figure~\ref{fig:scalar_stability_limit_h}, when $h=\frac{4}{x_0^2+y_0^2+4}$ (the left one), GD converges; however, when $h$ is slightly larger than this bound, it blows up. Hence our restriction for $h$ is very close to the stability limit.

\begin{figure}[ht]
    \centering
    \includegraphics[width=0.8\textwidth]{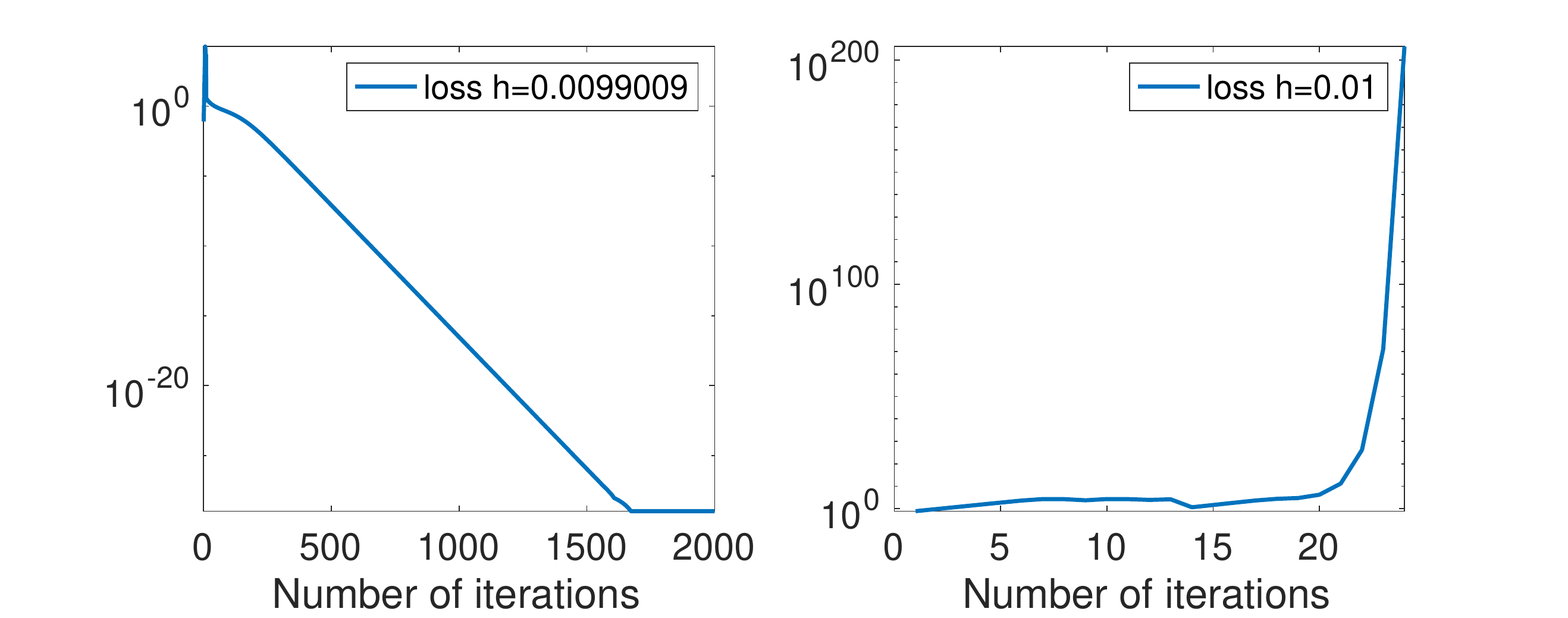}
    \caption{The loss with slightly different $h$ near its upper bound}
    \label{fig:scalar_stability_limit_h}
\end{figure}

\section{Overparametrized objective: large learning rate}
In this section, we use column vector instead of row vector for sake of better understanding, i.e., our objective function is $(\mu-x^\top y)^2/2$.
\subsection{Eigenvalues of Hessian}
\label{app:scalar_eigenvalues_Hessian}
\begin{lemma}[Matrix determinant lemma]
\label{lem:matrix_determinant}
Suppose $A$ is an invertible $n\times n$ matrix and $u, v\in\RR^n$ are column vectors. Then
\begin{align*}
\det(A+uv^\top )=(1+v^\top A^{-1}u)\det A.    
\end{align*}
\end{lemma}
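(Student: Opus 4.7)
The plan is to prove this classical identity by reducing to the $A=I$ case and then evaluating the determinant of a rank-one perturbation of the identity.

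First, I would factor out $A$ using multiplicativity of determinants: $\det(A+uv^\top)=\det(A)\det(I+A^{-1}uv^\top)$. Setting $w=A^{-1}u$, it suffices to establish the special case $\det(I+wv^\top)=1+v^\top w$ for arbitrary $w,v\in\RR^n$, since $v^\top w=v^\top A^{-1}u$, which is exactly the scalar appearing on the right-hand side of the lemma.

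For the reduced identity, I would use a block-matrix Schur-complement trick. Consider the $(n+1)\times(n+1)$ matrix
\[
M=\begin{pmatrix} I_n & -w \\ v^\top & 1 \end{pmatrix},
\]
and compute $\det M$ in two different ways. Eliminating the top-left $I_n$ block gives the Schur complement $1-v^\top I_n^{-1}(-w)=1+v^\top w$, so $\det M=\det(I_n)\cdot(1+v^\top w)=1+v^\top w$. Eliminating the bottom-right scalar $1$ instead gives the Schur complement $I_n-(-w)\cdot 1^{-1}\cdot v^\top=I_n+wv^\top$, hence $\det M=1\cdot\det(I_n+wv^\top)$. Equating the two expressions yields $\det(I_n+wv^\top)=1+v^\top w$, and multiplying by $\det A$ completes the proof.

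I do not anticipate any real obstacle here, as this is a standard identity; the only subtlety is ensuring the Schur-complement pivots are invertible, which is automatic since both $I_n$ and the scalar $1$ are. An alternative route is the spectral argument that $I+wv^\top$ acts as the identity on the $(n-1)$-dimensional hyperplane $\{v\}^{\perp}$ and has $1+v^\top w$ as its remaining eigenvalue (with $w$ a corresponding eigenvector when $w\neq 0$), so its determinant is $(1+v^\top w)\cdot 1^{n-1}$. The block-matrix proof is preferred because it handles the degenerate configurations ($w=0$, or $w$ and $v$ not in generic position) uniformly without case analysis.
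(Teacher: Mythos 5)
Your proof is correct. The paper itself states this matrix determinant lemma as a known classical fact and does not supply a proof, so there is nothing to compare against; your reduction to $\det(I+wv^\top)=1+v^\top w$ via $\det(A+uv^\top)=\det(A)\det(I+A^{-1}uv^\top)$, followed by the two Schur-complement evaluations of $\det\begin{pmatrix} I_n & -w \\ v^\top & 1\end{pmatrix}$, is a standard and complete argument, and both pivots ($I_n$ and the scalar $1$) are indeed always invertible, so no degenerate cases are missed.
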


\begin{theorem}
\label{thm:scalar_hessian_eigenvalue}
The eigenvalues of the Hessian of $(\mu-x^\top y)^2/2$ are $\pm (\mu-x^\top y)$ repeated $n-1$ times and $\frac{1}{2}\big(\|x\|^2+\|y\|^2\pm \sqrt{(\|x\|^2+\|y\|^2)^2+4(\mu-x^\top y)^2-8(\mu-x^\top y)x^\top y} \big)$. Especially, at $x^\top y=\mu$, the eigenvalues are $\|x\|^2+\|y\|^2$ and 0 repeated $2n-1$ times. 
\end{theorem}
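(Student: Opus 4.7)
The Hessian of $f(x,y)=\tfrac{1}{2}(\mu-x^\top y)^2$ at $(x,y)\in\RR^n\times\RR^n$ is the $2n\times 2n$ block matrix
\[
H = \begin{pmatrix} y y^\top & -r I + y x^\top \\ -r I + x y^\top & x x^\top \end{pmatrix}, \qquad r := \mu - x^\top y,
\]
so the eigenvalue equation $H(u,v)^\top = \lambda (u,v)^\top$ is the coupled pair $(x^\top v + y^\top u)\,y - r v = \lambda u$ and $(x^\top v + y^\top u)\,x - r u = \lambda v$. The plan is to diagonalize $H$ using the orthogonal decomposition $\RR^{2n} = W \oplus W^\perp$, where $W := \{(u,v) : u,v \in \mathrm{span}\{x,y\}\}$ is the (generically) four-dimensional data subspace. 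Both $W$ and $W^\perp$ are invariant under $H$: the forcing terms $(x^\top v + y^\top u)y$ and $(x^\top v + y^\top u)x$ lie in $W$, while the coupling $-r v$ respects the splitting.

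On $W^\perp$ (of dimension $2(n-2)$), the scalars $x^\top v$ and $y^\top u$ vanish, so the equations collapse to $\lambda u = -r v$ and $\lambda v = -r u$, which force $\lambda^2 = r^2$. Each sign $\lambda = \pm r$ then yields an eigenspace of dimension $n-2$: take any $u\in\mathrm{span}\{x,y\}^\perp$ and set $v=\mp u$. On $W$, I parameterize $u = ax+by$ and $v = cx+dy$ and take inner products with $x$ and $y$ (writing $X=\|x\|^2$, $Y=\|y\|^2$, $Z=x^\top y$), producing four scalar equations in $(a,b,c,d)$. Introducing the symmetric and antisymmetric combinations $a\pm d$ and $b\pm c$ decouples the $4\times 4$ system into two $2\times 2$ sub-problems.

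The antisymmetric sub-problem forces $\lambda^2 = r^2$ once more, contributing one additional eigenvector for each of $\lambda = \pm r$; this brings the total multiplicity of each of $\pm r$ to $(n-2)+1 = n-1$, matching the claim. The symmetric sub-problem (with $a = d$, $b = c$, where $\alpha+\beta = x^\top v + y^\top u = 2aZ + b(X+Y)$) reduces to the $2\times 2$ eigenvalue equation
\[
\lambda^2 - (X+Y)\,\lambda + (2 r Z - r^2) = 0,
\]
whose roots are exactly $\tfrac12\bigl(\|x\|^2+\|y\|^2 \pm \sqrt{(\|x\|^2+\|y\|^2)^2 + 4(\mu-x^\top y)^2 - 8(\mu-x^\top y)\,x^\top y}\bigr)$. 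Multiplicities sum to $2(n-1)+2 = 2n$, accounting for the full spectrum. The special case follows by substitution: at a global minimum $x^\top y = \mu$, hence $r = 0$, so the $\pm r$ eigenvalues coalesce to $0$ with multiplicity $2(n-1)$, and the quadratic degenerates to $\lambda(\lambda - (\|x\|^2+\|y\|^2)) = 0$, adding one more $0$ together with the leading eigenvalue $\|x\|^2 + \|y\|^2$, for a total of $0$ repeated $2n-1$ times and $\|x\|^2+\|y\|^2$ once.

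The main obstacle is the algebraic bookkeeping inside $W$: I need to verify that the symmetric and antisymmetric ans\"atze together span $W$ and contribute independent eigenvectors, and to handle the degenerate configurations where $x$ and $y$ are collinear (so $W$ has dimension $< 4$) or one of them is zero. These edge cases can be dispatched either by continuity of the spectrum in $(x,y)$, or by the alternative route of computing $\det(H - \lambda I)$ directly via a Schur complement, where each block is a rank-one perturbation of $-\lambda I$ and the determinants reduce through repeated application of the matrix determinant lemma (Lemma~\ref{lem:matrix_determinant})---which the paper has apparently set up for exactly this purpose.
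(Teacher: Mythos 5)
Your proposal reaches the correct characteristic polynomial and the approach is genuinely different from the paper's. The paper writes
\[
\lambda I_{2n}-H=\begin{pmatrix}\lambda I_n & (\mu-x^\top y)I_n\\ (\mu-x^\top y)I_n & \lambda I_n\end{pmatrix}-\begin{pmatrix}y\\x\end{pmatrix}\begin{pmatrix}y\\x\end{pmatrix}^\top
\]
and applies the matrix determinant lemma once, globally, together with the explicit inverse of the commuting block matrix $B$; this yields $\det(\lambda I-H)=(\lambda^2-(\mu-x^\top y)^2)^{n-1}(\lambda^2-\lambda(\|x\|^2+\|y\|^2)-(\mu-x^\top y)^2+2(\mu-x^\top y)x^\top y)$ uniformly in $(x,y)$, with the degenerate configurations absorbed by continuity of the characteristic polynomial. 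Your invariant-subspace route is more hands-on: it exhibits the eigenspaces explicitly (which the determinant computation does not), at the cost of the case analysis for collinear or vanishing $x,y$ that you correctly flag. Your closing remark that the paper "set up" Lemma~\ref{lem:matrix_determinant} for a determinant computation is accurate, though the paper uses a single rank-one update of the full $2n\times 2n$ matrix rather than a Schur complement.

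One step in your argument is wrong as stated, though the conclusion survives. Writing $X=\|x\|^2$, $Y=\|y\|^2$, $Z=x^\top y$, the four scalar equations on $W$ are $-rc=\lambda a$, $\sigma-rd=\lambda b$, $\sigma-ra=\lambda c$, $-rb=\lambda d$ with $\sigma=aZ+bY+cX+dZ$. The symmetric ansatz $a=d$, $b=c$ does give a closed $2\times 2$ problem (there $\sigma=2aZ+b(X+Y)$, as you write). But the antisymmetric ansatz $a=-d$, $b=-c$ does \emph{not}: for such vectors $\sigma=b(Y-X)$, and the two middle equations force $b(X-Y)=0$, so the antisymmetric subspace is invariant only when $\|x\|=\|y\|$. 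In the coordinates $(a-d,\,b-c,\,a+d,\,b+c)$ the $4\times 4$ system is only block \emph{triangular}: the antisymmetric pair satisfies the closed relations $r(b-c)=\lambda(a-d)$ and $r(a-d)=\lambda(b-c)$, but the symmetric equation for $\lambda(b+c)$ picks up a term $-(X-Y)(b-c)$. Block triangularity is enough to factor the characteristic polynomial as $(\lambda^2-r^2)\bigl(\lambda^2-(X+Y)\lambda+2rZ-r^2\bigr)$, and since $H$ is symmetric the algebraic multiplicities so obtained are the true eigenvalue multiplicities; but you should not claim a decoupling into two independent sub-problems, nor that the $\pm r$ eigenvectors inside $W$ are purely antisymmetric.
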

\begin{proof}
Consider the objective $f(x,y)=(\mu-x^\top y)^2/2$. Its Hessian is the following
\begin{align*}
H&=\begin{pmatrix}
yy^\top  & (x^\top y-\mu)I_n+yx^\top \\
(x^\top y-\mu)I_n+xy^\top  & xx^\top 
\end{pmatrix}\\
&=\begin{pmatrix}
0 & (x^\top y-\mu)I_n\\
(x^\top y-\mu)I_n & 0
\end{pmatrix}
+\begin{pmatrix}
y\\ x
\end{pmatrix}\begin{pmatrix}
y\\ x
\end{pmatrix}^\top .
\end{align*}
Then to calculate the eigenvalues, we also need
\begin{align*}
\lambda I_{2n}-H=\begin{pmatrix}
\lambda I_n & (\mu-x^\top y)I_n\\
(\mu-x^\top y)I_n & \lambda I_n
\end{pmatrix}
-\begin{pmatrix}
y\\ x
\end{pmatrix}\begin{pmatrix}
y\\ x
\end{pmatrix}^\top \overset{\Delta}{=} B-\begin{pmatrix}
y\\ x
\end{pmatrix}\begin{pmatrix}
y\\ x
\end{pmatrix}^\top .
\end{align*}
By Lemma~\ref{lem:matrix_determinant}, we have for invertible $B$
\begin{align*}
\det \bigg(B-\begin{pmatrix}
y\\ x
\end{pmatrix}\begin{pmatrix}
y\\ x
\end{pmatrix}^\top \bigg) =\bigg(1-\begin{pmatrix}
y\\ x
\end{pmatrix}^\top B^{-1}\begin{pmatrix}
y\\ x
\end{pmatrix} \bigg)\det B.    
\end{align*}
Since $(\mu-x^\top y)I_n$ and $\lambda I_n$ commute, we have
\begin{align*}
    \det B=\det ((\lambda^2-(\mu-x^\top y)^2)I_n)=(\lambda^2-(\mu-x^\top y)^2)^n.
\end{align*}
By the formula for inversion of block matrix, we have for $\lambda^2-(\mu-x^\top y)^2\ne 0$, 
\begin{align*}
    B^{-1}=\begin{pmatrix}
    \frac{\lambda}{\lambda^2-(\mu-x^\top y)^2}I_n & -\frac{\mu-x^\top y}{\lambda^2-(\mu-x^\top y)^2}I_n\\
    -\frac{\mu-x^\top y}{\lambda^2-(\mu-x^\top y)^2}I_n & \frac{\lambda}{\lambda^2-(\mu-x^\top y)^2}I_n
    \end{pmatrix}.
\end{align*}
Then combining all these and by the continuity of characteristic polynomial, we obtain the following expression for all $\lambda$
\begin{align*}
    \det(\lambda I_{2n}-H)=(\lambda^2-(\mu-x^\top y)^2)^{n-1} (\lambda^2-\lambda(x^\top x+y^\top y)-(\mu-x^\top y)^2+2(\mu-x^\top y)x^\top y).
\end{align*}
When $x^\top y=\mu$, it becomes
\begin{align*}
    \det(\lambda I_{2n}-H)=\lambda^{2n-1} (\lambda-(x^\top x+y^\top y)).
\end{align*}
\end{proof}

\subsection{Local non-convexity near the global minimum}
\label{app:scalar_non_convex}
 Consider the region $D=\{\delta>x^\top y-\mu>0\}$, a small neighbourhood of the global minima $\mu=x^\top y$ for some $\delta>0$. Consider two points $[x,y],[\Bar{x},\Bar{y}]\in D$ with $x=c\Bar{x}$ and $y=\Bar{y}/c$ for $c>0$ a constant. Then 
\begin{align*}
f(x,y)+\ip{\nabla f(x,y)}{[\Bar{x}-x,\Bar{y}-y]}&=(\mu-x^\top y)^2/2+(\mu-x^\top y)\big(y^\top (x-\Bar{x})+x^\top (y-\Bar{y})\big)\\
&=(\mu-\Bar{x}^\top \Bar{y})^2/2+(2-c-1/c)(\mu-\Bar{x}^\top \Bar{y})\Bar{x}^\top \Bar{y}\\
&\ge (\mu-\Bar{x}^\top \Bar{y})^2/2=f(\Bar{x},\Bar{y}).
\end{align*}
This contradicts the definition of convexity. Therefore the objective is not locally convex.

\subsection{A review of traditional convergence analysis of GD under $L$-smoothness}
\label{app:gd_converge_2/L}
For general function $f$, GD is defined as follows
\begin{align}
    x_{k+1}=x_k-h\nabla f(x_k).
\end{align}
\begin{theorem}
If $f:\RR^N\to\RR$ is $L-$smooth, then with $h<\frac{2}{L}$, GD converges to a stationary point.
\end{theorem}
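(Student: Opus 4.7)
The plan is to invoke the standard descent lemma implied by $L$-smoothness: for any $u_1, u_2 \in \RR^N$,
\[
f(u_2) \le f(u_1) + \langle \nabla f(u_1), u_2 - u_1\rangle + \frac{L}{2}\|u_2 - u_1\|^2.
\]
This quadratic upper bound follows from the Lipschitz gradient condition in Definition~\ref{def:l_smooth} by writing $f(u_2) - f(u_1) = \int_0^1 \langle \nabla f(u_1 + t(u_2 - u_1)), u_2 - u_1\rangle \, dt$, subtracting $\langle \nabla f(u_1), u_2 - u_1\rangle$, and bounding the difference in gradients by $Lt\|u_2 - u_1\|$. I would first record this lemma (it is textbook material) and then apply it along the GD trajectory.

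Applying the descent lemma with $u_1 = x_k$ and $u_2 = x_{k+1} = x_k - h \nabla f(x_k)$ yields
\[
f(x_{k+1}) \le f(x_k) - h\Bigl(1 - \frac{Lh}{2}\Bigr)\|\nabla f(x_k)\|^2.
\]
Since $h < 2/L$, the coefficient $\alpha := h(1 - Lh/2)$ is strictly positive, so the sequence $\{f(x_k)\}$ is monotonically non-increasing. Assuming $f$ is bounded below (implicit in saying GD ``converges to a stationary point''), the telescoping sum gives $\sum_{k=0}^{\infty} \|\nabla f(x_k)\|^2 \le \alpha^{-1}\bigl(f(x_0) - \inf f\bigr) < \infty$, which forces $\|\nabla f(x_k)\| \to 0$. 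In particular, every accumulation point of $\{x_k\}$ is a stationary point of $f$.

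The main subtlety is the interpretation of ``converges to a stationary point''. The descent-lemma argument above only establishes that the gradient norms vanish along the iterates, not that the iterate sequence itself converges as a point. To upgrade to actual sequential convergence one typically needs additional structure: for example, coercivity of $f$ to ensure the iterates stay bounded, together with isolation of critical values, or a Kurdyka--{\L}ojasiewicz inequality that controls $\|x_{k+1} - x_k\|$ by a desingularizing function of $f(x_k) - f^\ast$. Under the common weaker reading where ``convergence to a stationary point'' is identified with $\nabla f(x_k) \to 0$, the plan above is already a complete proof; under the strong reading, the structural assumption needed to close the gap is the main obstacle and would be invoked explicitly.
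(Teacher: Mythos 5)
Your proposal is correct and follows essentially the same route as the paper's proof: apply the descent lemma at each iterate, telescope the sum of $\|\nabla f(x_k)\|^2$, and conclude the gradients vanish, with the paper likewise implicitly assuming $f$ is bounded below (it sets $f^\ast = \min f$) and interpreting ``converges to a stationary point'' in the weak sense $\|\nabla f(x_k)\| \to 0$. Your closing remarks on upgrading to sequential convergence go beyond what the paper does but do not change the argument.
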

\begin{proof}
Let $\min f=f^*$.
\begin{align*}
    f(x_{k+1})&\le f(x_k)+\ip{\nabla f(x_k)}{x_{k+1}-x_k}+\frac{L}{2}\|x_{k+1}-x_k\|^2\\
    &=f(x_k)-h(1-\frac{L}{2}h)\|\nabla f(x_k)\|^2.
\end{align*}
Then
\begin{align*}
    \sum_{k=1}^N\|\nabla f(x_k)\|^2&\le \frac{1}{h(1-\frac{L}{2}h)}(f(x_0)-f(x_N))\\
    &\le \frac{1}{h(1-\frac{L}{2}h)}(f(x_0)-f^*).
\end{align*}
Therefore $\lim_{k\to\infty}\|\nabla f(x_k)\|^2=0$, i.e., GD converges to a stationary point.
\end{proof}

\section{Proof of Theorem~\ref{thm:scalar_convergence}, Theorem~\ref{thm:scalar_balancing}, and Corollary~\ref{cor:unbalanced_to_balanced}}
\label{app:scalar_convergence_balancing}

In this section, we use column vectors for $x$ and $y$ instead of row vectors for better understanding. Then the objective function is $\frac{1}{2}(\mu-x^\top y)^2$.

The following Theorem~\ref{thm:scalar_convergence_technical} and \ref{thm:scalar_convergence_technical_2} are the main theorems of the over-parametrized scalar case. Next, for sake of convenience, let $u_k^2=\|x_k\|^2+\|y_k\|^2$.
\begin{theorem}[Main 1]
\label{thm:scalar_convergence_technical}
Let  $h=\frac{4}{u_0^2+4c\mu}$. Assume $c\ge 1$ and $u_0^2>8\mu $. Then GD converges to a point in $\{(x,y): \|x\|^2+\|y\|^2\le \frac{2}{h}, x^\top y=\mu \}$ except for a Lebesgue measure-0 set of $(x_0,y_0)$.
\end{theorem}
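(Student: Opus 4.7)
The plan is to reduce the dynamics to two scalar quantities that are invariant under the rotational symmetries of \eqref{eqn:scalarFactorizationObjective}, namely the error $e_k := \mu - x_k^\top y_k$ and the norm sum $u_k^2 := \|x_k\|^2 + \|y_k\|^2$. Expanding the GD update directly yields the closed recursions
\begin{align*}
e_{k+1} &= e_k\bigl(1 - h u_k^2 - h^2 e_k(\mu - e_k)\bigr), \\
u_{k+1}^2 &= u_k^2(1 + h^2 e_k^2) + 4 h e_k(\mu - e_k),
\end{align*}
so the entire trajectory of $(x_k,y_k)$ is driven by the two-dimensional reduced system in $(u_k^2, e_k)$, modulo a rotation that preserves all norms and inner products. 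The conclusion of the theorem becomes $e_k \to 0$ together with $u_k^2 \to u_\infty^2 \le 2/h$.

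Following the searching-to-converging picture of Figure~\ref{fig:flow_chart_scalar_decomposition_proof}, in Phase~1 I aim to show that from $u_0^2 > 8\mu$ the orbit enters a target band $u_k^2 \lesssim 2/h$ in finitely many steps. Under $h = 4/(u_0^2 + 4c\mu)$ the step size lies past the local stability limit, so $e_k$ overshoots zero and flips sign whenever $|e_k|$ is not yet small. Exploiting this overshoot, I expect to prove a two-step Lyapunov bound $u_{k+2}^2 \le u_k^2 - \delta$ with $\delta > 0$ outside the target band: although a single step may increase $u_k^2$ through the $h^2 e_k^2 u_k^2$ term, the cross term $4 h e_k(\mu - e_k)$ is strongly negative after a sign flip and dominates over two consecutive steps. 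The bookkeeping splits according to the sign of $e_k$ and the size of $|h u_k^2 - 1|$.

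In Phase~2 I would partition the reduced $(u^2, e)$-plane inside the target band into finitely many cells, indexed by the sign of $e_k$, the comparison of $u_k^2$ with $2/h$, and the magnitude of the contraction factor $s_k := 1 - h u_k^2 - h^2 e_k(\mu - e_k)$. The goal is to verify that this partition forms a finite directed graph with a unique absorbing cell in which $|s_k| \le 1 - c_0$ uniformly. Inside that cell $|e_k| \to 0$ geometrically, and summability of $e_k^2$ combined with the $u_k^2$ recursion forces $u_k^2$ to have a limit bounded by $2/h$; the non-absorbing cells must each be shown to be exited in a uniformly bounded number of iterations, even though $|s_k| > 1$ can occur transiently. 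Finally, the Lebesgue-null exception covers orbits that limit onto a saddle of $f$ or onto an unstable periodic orbit of the GD map $\psi(x,y) = (x,y) - h\nabla f(x,y)$; since $\psi$ is a smooth local diffeomorphism wherever its Jacobian is nonsingular, the center-stable manifold theorem \citep{alligood1996chaos} realizes the basin of any such unstable invariant as a countable union of immersed submanifolds of strictly positive codimension.

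The hardest step, I expect, is Phase~2: designing a state-space partition whose transition graph is finite with a single absorbing cell, and extracting quantitative exit times for the transient cells, is the combinatorial heart of the argument and requires a case analysis tuned precisely to the step size $h = 4/(u_0^2 + 4c\mu)$. Phase~1 is, by comparison, a two-step Lyapunov computation, and the measure-zero exclusion is a standard application of smooth discrete-time dynamics.
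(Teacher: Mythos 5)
Your reduction to the closed two-dimensional system in $(u_k^2, e_k)$ is exactly the reduction the paper works with (its $s_k$ is your contraction factor, and its one-step identities for $u_{k+1}^2-u_k^2$ and $x_{k+1}^\top y_{k+1}-\mu$ are your two recursions), and the two-phase architecture you describe is the paper's. However, Phase~1 as you state it has a genuine gap: a uniform two-step Lyapunov bound $u_{k+2}^2\le u_k^2-\delta$ with $\delta>0$ ``outside the target band'' does not exist, because the decrement $u_{k+1}^2-u_k^2=4he_k(\mu-e_k)+h^2e_k^2u_k^2$ vanishes as $e_k\to 0$, and the set $\{e=0,\ u^2>2/h\}$ consists of genuine fixed points of the GD map (the unbalanced global minima). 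The decrease therefore degenerates in a neighborhood of these points, and an orbit could in principle stall there. The paper closes this with a separate expulsion argument: when $|e_k|<\epsilon$ and $u_k^2\ge \frac{2}{h}+\epsilon h(\mu+\epsilon)$ one has $|s_k|>1$, so $|e_k|$ strictly grows until the orbit either leaves the degenerate neighborhood (after which the Lyapunov decrease is bounded below) or drops into $u^2\le 2/h$. Your proposal needs this mechanism explicitly; without it the claimed $\delta$ is simply false. The same mechanism is also what ultimately forces the limit to satisfy $u_\infty^2\le 2/h$ -- summability of $e_k^2$ alone gives existence of a limit for $u_k^2$ but not the bound.

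Relatedly, your measure-zero bookkeeping is aimed at the wrong exceptional set. The relevant bad set is not primarily saddles or unstable periodic orbits but (i) initial conditions whose orbit lands exactly on $\{e=0,\ u^2>2/h\}$ or on $\{s=0\}$ in finitely many steps, and (ii) orbits converging to the continuum of unbalanced minima. Because these unstable fixed points form a positive-dimensional non-isolated family, the single-fixed-point center-stable manifold theorem does not directly show their collective basin is null. The paper instead proves that the GD map $\psi$ has an a.e.\ nonsingular Jacobian and that preimages of Lebesgue-null sets under such maps are null (via the co-area formula), so that $\bigcup_n\psi^{-n}(B)$ is null for the finitely many algebraic bad sets $B$; combined with the expansion $|e_{k+1}|>|e_k|$ near the unbalanced minima, this rules out convergence to them off a null set. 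You should replace the center-stable manifold step with an argument of this type (or justify a normally-hyperbolic version). The Phase~2 plan (partition by the sign of $e_k$, the size of $s_k$, and the comparison of $u_k^2$ with thresholds such as $\frac{1}{h}+2h\mu^2$) matches the paper's, with the caveat that transient cells are exited not in a uniformly bounded number of steps but because each visit costs a definite decrease of $u_k^2$, which is bounded below.
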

\begin{proof}
By Lemma~\ref{lem:4/hto2/h} and Lemma~\ref{lem:convergein2/h}, the theorem holds.
\end{proof}

\begin{theorem}[Main 2]
\label{thm:scalar_convergence_technical_2}
Let  $h=\frac{4}{8\mu+4c\mu}=\frac{1}{(2+c)\mu}$. Assume $c\ge 1$ and $u_0^2\le 8\mu $. Then GD converges to a point in $\{(x,y): \|x\|^2+\|y\|^2\le \frac{2}{h}, x^\top y=\mu \}$ except for a Lebesgue measure-0 set of $(x_0,y_0)$.
\end{theorem}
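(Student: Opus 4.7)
The plan is to replicate the two-phase structure of Theorem~\ref{thm:scalar_convergence_technical}, where Phase 1 (searching) drives $u_k^2 := \|x_k\|^2 + \|y_k\|^2$ into the target region $\{u^2 \le 2/h\} = \{u^2 \le 2(2+c)\mu\}$, and Phase 2 (convergence) handles the contraction of $s_k := x_k^\top y_k - \mu$ inside this region. The essential feature distinguishing this theorem from the previous one is that $h = 1/((2+c)\mu)$ is fixed by $c$ alone (not tuned to $u_0$) and satisfies $h \le 1/(3\mu)$; together with $u_0^2 \le 8\mu$ this yields the a priori bound $hu_0^2 \le 8/3$, which will be the main tool for keeping the early iterates under control.

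First I would derive the two core recursions $s_{k+1} = s_k\bigl(1 - hu_k^2 + h^2 s_k(s_k+\mu)\bigr)$ and $u_{k+1}^2 = u_k^2(1 + h^2 s_k^2) - 4h s_k(s_k+\mu)$ by directly plugging the GD update $x_{k+1} = x_k - hs_k y_k$, $y_{k+1} = y_k - hs_k x_k$ into the definitions; these recursions already underlie the analysis of Theorem~\ref{thm:scalar_convergence_technical}, so they can be reused. Split Phase 1 into two sub-cases. When $c \ge 2$ we have $2/h \ge 8\mu \ge u_0^2$, so the iterate starts inside the target region and Phase 1 is vacuous. When $c \in [1,2)$, $u_0^2$ may exceed $2/h$; here I would use the $u_{k+1}^2$ recursion together with the a priori bound $hu_k^2 \le 8/3$ to show that $u_k^2$ enters $\{u^2 \le 2/h\}$ in a uniformly bounded number of steps, mirroring the role played by Lemma~\ref{lem:4/hto2/h} in the large-initialization regime.

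Once inside $\{u^2 \le 2/h\}$, Phase 2 proceeds verbatim by the partition-of-state-space and monotonicity arguments of Lemma~\ref{lem:convergein2/h}: the multiplier $\phi_k := 1 - hu_k^2 + h^2 s_k(s_k+\mu)$ controlling $s_{k+1} = \phi_k s_k$ lies in $(-1,1)$ with a quantitative margin on each cell of the partition, forcing $s_k \to 0$ geometrically. Combined with the invariant $u_k^2 \le 2/h$ (which must also be propagated along Phase 2, again by reusing the corresponding invariance lemma from Theorem~\ref{thm:scalar_convergence_technical}), this identifies the limit with a global minimum in $\{x^\top y = \mu,\ u^2 \le 2/h\}$. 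The measure-zero exceptional set collects initial conditions whose orbits land on the stable manifolds of the origin, saddles, or other unstable fixed points of the GD map, and is handled by the same stable-manifold and geometric-measure-theoretic argument as in Theorem~\ref{thm:scalar_convergence_technical}.

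The main obstacle is Phase 1 in the sub-case $c \in [1,2)$: because $h$ is not matched to $u_0$, the decrease identity that drives the Phase 1 analysis when $u_0^2 > 8\mu$ is not directly available, and I will have to argue that no iterate strays too far outside $\{u^2 \le 2/h\}$ before being pulled back in. The a priori bound $hu_k^2 \le 8/3$ keeps the multiplicative term $h^2 s_k^2$ small; the remaining task is a sign analysis of the correction $-4hs_k(s_k+\mu)$, distinguishing the cases $s_k > 0$ and $s_k < 0$, to show that either $u_{k+1}^2 < u_k^2$ or $u_{k+1}^2$ already lies in the target region. If this can be closed, the rest of the proof reduces to reusing the partition and contraction estimates already established for the large-initialization regime, and Theorem~\ref{thm:scalar_convergence_technical_2} follows.
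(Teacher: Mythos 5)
Your overall architecture --- Phase 1 to drive $u_k^2$ into $\{u^2\le 2/h\}$, Phase 2 via the partition/contraction argument of Lemma~\ref{lem:convergein2/h}, and the stable-set/geometric-measure-theoretic treatment of the exceptional initial conditions --- is exactly the paper's, and your two recursions for $s_{k+1}$ and $u_{k+1}^2$ are correct. The problem is that you leave the one load-bearing step open: Phase 1 in the sub-case $c\in[1,2)$, where $u_0^2$ may exceed $2/h=2(2+c)\mu$. You correctly identify this as ``the main obstacle,'' assert that the Phase-1 decrease machinery ``is not directly available'' because $h$ is not tuned to $u_0$, and then only sketch a replacement sign analysis of $-4hs_k(s_k+\mu)$ conditioned on ``if this can be closed.'' That conditional is precisely the content of the theorem in the only nontrivial regime, so as written the proposal is not a proof.

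The missing observation is that the Phase-1 machinery \emph{is} directly available. Lemma~\ref{lem:4/hto2/h} only requires that $h$ be expressible as $h=\frac{4}{u_0^2+4\tilde c\mu}$ for some $\tilde c\ge 1$. Here $h=\frac{4}{8\mu+4c\mu}$ and $u_0^2\le 8\mu$, so
\[
h=\frac{4}{u_0^2+4\tilde c\mu},\qquad \tilde c = c+\frac{8\mu-u_0^2}{4\mu}\;\ge\; c\;\ge\;1,
\]
i.e., the learning rate is automatically of the required form with an even \emph{larger} effective constant (and $h\mu\le 1/3$ holds, which is the condition the supporting two-step-decrease lemma needs in this regime). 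The paper's proof of this theorem is therefore a two-line reduction: if $u_0^2>2/h$, invoke Lemma~\ref{lem:4/hto2/h} with $\tilde c$ in place of $c$ to enter $\{u^2\le 2/h\}$, then apply Lemma~\ref{lem:convergein2/h}; the measure-zero set is assembled from Lemma~\ref{lemma:>2/hnotconverge}, the origin's basin, and the hyperplane $u_0^2=0$, as you describe. Replacing your unfinished sign analysis with this re-parameterization turns your outline into the paper's argument.
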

\begin{proof}
For the measure-0 set, by Lemma~\ref{lemma:>2/hnotconverge}, the set of points converging to $\{\norm{x}^2+\norm{y}^2>\frac{2}{h}\}$ is measure-0; by the proof of Lemma~\ref{lem:convergein2/h}, the set of points converging to the origin is measure-0; also $\{u_0^2=\norm{x}^2+\norm{y}^2=0\}$ is a hyperplane and thus is measure-0. Hence the set of all the initial conditions not converging to $\{\norm{x}^2+\norm{y}^2>\frac{2}{h}, x^\top y=\mu\}$ is measure-0. 

Since $u_0^2\le 8\mu$, if $u_0^2>\frac{2}{h}$, by Lemma~\ref{lem:4/hto2/h}, it will decrease to $u_k^2\le\frac{2}{h}$ for some $k$. Then by Lemma~\ref{lem:convergein2/h}, we have the convergence.
\end{proof}

With the above two theorems, we can prove all the theorems and corollary in Section~\ref{sec:scalar_factorization}.
\begin{proof}[Proof of Theorem~\ref{thm:scalarFactorizationConvergence} and Theorem~\ref{thm:scalar_balancing}]
From Theorem~\ref{thm:scalar_convergence_technical}, $h=\frac{4}{u_0^2+4c\mu}$ for $c\ge 1$ which implies $h\le\frac{4}{u_0^2+4\mu}$ for all $u_0^2>8\mu$. Similarly, for Theorem~\ref{thm:scalar_convergence_technical_2}, $h\le\frac{1}{3\mu}$. When $u_0^2=8\mu$, $\frac{4}{u_0^2+4\mu}=\frac{4}{8\mu+4\mu}=\frac{1}{3\mu}$; when $u_0^2>8\mu$, $\frac{4}{u_0^2+4\mu}<\frac{1}{3\mu}$; when $u_0^2<8\mu$, $\frac{4}{u_0^2+4\mu}>\frac{1}{3\mu}$. Also, all the limit points are in $\{\norm{x}^2+\norm{y}^2\le\frac{2}{h}\}$.
Then we can get Theorem~\ref{thm:scalarFactorizationConvergence} and~\ref{thm:scalar_balancing} where the second inequality of Theorem~\ref{thm:scalar_balancing} is because at the global minimum $x^\top y=\mu$ and also $\|x-y\|^2=\|x\|^2+\|y\|^2-2x^\top y$.
\end{proof}
\begin{proof}[Proof of Corollary~\ref{cor:unbalanced_to_balanced}]
For $|x_0^\top  y_0-\mu|<\delta,\  \|x_0\|^2+\|y_0\|^2>8\mu$, we have $$x_0^\top  y_0<\mu+\delta\Rightarrow \norm{x_0-y_0}^2=\|x_0\|^2+\|y_0\|^2-2x_0^\top  y_0>\frac{4}{h}-4\mu-2(\mu+\delta)=\frac{4}{h}-6\mu-2\delta.$$
Also, from Theorem~\ref{thm:scalar_balancing}, $ \norm{x-y}^2\le\frac{2}{h}-2\mu$ and 
$\frac{2}{h}-2\mu<\frac{4}{h}-6\mu-2\delta<\frac{4}{h}-8\mu$. We then obtain $\norm{x-y}^2\le\frac{1}{2}\norm{x_0-y_0}^2+2\mu$.
\end{proof}

\noindent We will divide the proof of the two main theorems into two phases: (1) when $\frac{2}{h}<x_k^2+y_k^2<\frac{4}{h}$, we would like to show that GD escapes to a smaller ball $x_k^2+y_k^2\le \frac{2}{h}$except for a measure-0 set; (2) once GD enters $x_k^2+y_k^2\le \frac{2}{h}$, it will converge to the global minimum inside this region except for a measure-0 set.

\noindent

\subsection{Phase 1:  \texorpdfstring{$\frac{2}{h}<u_k^2<\frac{4}{h}$}%
{TEXT}}
We first deal with the situation where GD just converges in this region by showing that such points form a null set. This is stated in Theorem~\ref{lemma:>2/hnotconverge}.

\noindent Theorem~\ref{thm:inverse_map_null_set} and Corollary~\ref{cor:nonsingularmap} are the preliminary of proving Theorem~\ref{lemma:>2/hnotconverge}. Also Corollary~\ref{cor:nonsingularmap} is a direct result of Theorem~\ref{thm:inverse_map_null_set}.

\begin{theorem}
\label{thm:inverse_map_null_set}Let $f:\RR^N\to\RR^M$ and $f\in\cC^1$. If the set of critical points of $f$ is a null-set, i,e.,
\begin{align*}
\cL(\{x\in\RR^N:\nabla f(x)\ \mathrm{is\ not\ invertible} \})=0,
\end{align*}
then $\cL(f^{-1}(B))=0$ for any null-set B.
\end{theorem}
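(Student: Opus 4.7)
The statement implicitly requires $N=M$ (otherwise $\nabla f$ cannot be invertible), and the natural strategy is to combine the inverse function theorem with the fact that locally Lipschitz maps send Lebesgue null sets to Lebesgue null sets. Concretely, I would split $\RR^N$ into the singular set $C = \{x:\nabla f(x) \text{ is not invertible}\}$ and the regular set $R = \RR^N \setminus C$. Since $C$ is assumed to be null, $f^{-1}(B)\cap C$ is automatically null; the real work is in controlling $f^{-1}(B)\cap R$.

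The plan for $R$: because $\det\nabla f$ is continuous, $R$ is open. By the inverse function theorem, for each $x\in R$ there is an open ball $U_x\subset R$ on which $f$ restricts to a $C^1$ diffeomorphism onto $f(U_x)$. Since $R\subset \RR^N$ is second countable, I extract a countable subcover $\{U_i\}_{i\in\NN}$ so that $f^{-1}(B)\cap R = \bigcup_i f^{-1}(B)\cap U_i$. On each $U_i$, I have $f^{-1}(B)\cap U_i = (f|_{U_i})^{-1}(B\cap f(U_i))$, and $(f|_{U_i})^{-1}$ is a $C^1$ map defined on the open set $f(U_i)$.

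Now I would exhaust each $U_i$ by an increasing sequence of compact subsets $K_{i,n}\subset U_i$, so that $f(U_i)=\bigcup_n f(K_{i,n})$ with $f(K_{i,n})$ compact. On a neighborhood of each compact $f(K_{i,n})$, the $C^1$ inverse $(f|_{U_i})^{-1}$ has uniformly bounded derivative and is therefore Lipschitz. Invoking the classical fact that Lipschitz maps preserve Lebesgue null sets, $(f|_{U_i})^{-1}(B\cap f(K_{i,n}))$ is null; taking the countable union over $n$ and then over $i$ shows $f^{-1}(B)\cap R$ is null, which combined with the triviality on $C$ yields the conclusion.

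The only delicate point, and the step I expect to require the most care in writing, is justifying that $(f|_{U_i})^{-1}$ is actually Lipschitz on each compact piece (as opposed to merely $C^1$ on the open image) and that Lipschitz maps preserve Lebesgue measure zero; both are standard but must be invoked with the right compactness setup to avoid circularity. Everything else (openness of $R$, second countability, the inverse function theorem, and the countable-union closure of null sets) is routine.
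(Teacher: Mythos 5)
Your proposal is correct, but it takes a genuinely different route from the paper. You decompose $\RR^N$ into the singular set $C$ and the regular set $R=\{\det\nabla f\neq 0\}$, localize on $R$ via the inverse function theorem and second countability, and then pull $B$ back through the local $C^1$ inverses, using the fact that Lipschitz maps send Lebesgue null sets to null sets; countable unions over the compact exhaustions finish the job. The paper instead keeps the forward map and integrates: after noting that $G=\{\det\nabla f\neq 0\}$ is open and that $G\cap f^{-1}(z)$ is a $C^1$ submanifold, it applies the co-area formula from geometric measure theory to the indicator of $\bigcup_{z\in B}G\cap B_r\cap f^{-1}(z)$, concludes that this indicator times $|\det\nabla f|$ vanishes almost everywhere, and hence that the set is null. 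In the relevant case $N=M$ the co-area formula degenerates to the change-of-variables formula, so the two arguments are essentially dual: the paper pushes forward and reads nullity off from $\int (\mathbf{1}_B\circ f)\,|\det\nabla f|\,d\cL=0$, while you pull the null set back through the inverse. Your version is more elementary (no geometric measure theory needed) and avoids a dimensional awkwardness in the paper's write-up, which describes the level sets as $(N-1)$-dimensional and integrates over $\RR$ as if $M=1$, even though the theorem is invoked with $M=N=2d$ for the GD map. The one point you flag---that the local inverse is Lipschitz on each compact piece rather than merely $C^1$ on an open image---is indeed the only step needing care (locally Lipschitz plus compactness, or a finite cover by convex balls where the derivative is bounded), and it goes through; alternatively, for preserving null sets it suffices to cover each compact piece by finitely many balls on which the inverse is Lipschitz, so no global Lipschitz constant is even required.
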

\begin{proof}
Let $G=\{|\nabla f|\ne 0\}$ and $G$ is an open set, where $|\nabla f|$ denotes the determinant of $\nabla f$. By implicit function theorem, we have $G\cap f^{-1}(z)$ is a $(N-1)$-submanifold in $\cC^1$.\\
Consider $G\cap \{f\in B\}=\bigcup_{z\in B}G\cap f^{-1}(z)$, where $\cL(G\cap f^{-1}(z))=0$ from the above discussion. Hence, if $B$ is countable, $\cL(G\cap \{f\in B\})=0.$

When $B$ is uncountable, using co-area formula from geometric measure theory, we have for any bounded open ball $B_r$,
\begin{align*}
\int_{G\cap B_r}g|\nabla f|d\cL=\int_\RR \int_{f^{-1}(z)\cap G\cap B_r}g(x)d\cL_{N-1}(x)dz=\int_B \int_{f^{-1}(z)\cap G\cap B_r}g(x)d\cL_{N-1}(x)dz.
\end{align*}
Let $g$ be the indicator of $\bigcup_{z\in B}G\cap B_r\cap f^{-1}(z)$. Then since $B$ is a null-set, we have
\begin{align*}
\int_B \int_{f^{-1}(z)\cap G\cap B_r}g(x)d\cL_{N-1}(x)dz&=\int_B \int_{f^{-1}(z)\cap G\cap B_r}d\cL_{N-1}(x)dz=0\\
&\Rightarrow \int_{G\cap B_r}g|\nabla f|d\cL=0.
\end{align*}
Then $g|\nabla f|=0$ a.e. in $G\cap B_r$. Also, $|\nabla f|\ne 0$ a.e.. Therefore, $g=0$ a.e.
in $G\cap B_r$, i.e.,
\begin{align*}
\cL\bigg(\bigcup_{z\in B}G\cap B_r\cap f^{-1}(z) \bigg)=\int gd\cL=0.
\end{align*}
Since we can find a sequence of bounded open ball $B_r$, s.t., $\RR^N=\bigcup_{r=1}^\infty B_r$, and also $\cL(G^c)=0$, we have
\begin{align*}
\cL(f^{-1}(B))\le\cL\bigg( \bigcup_{z\in B}G\cap f^{-1}(z) \bigg)+\cL(G^c) =\cL\bigg(\bigcup_{r=1}^\infty \bigcup_{z\in B}G\cap B_r\cap f^{-1}(z) \bigg)+\cL(G^c)=0.
\end{align*}
\end{proof}

\begin{corollary}
\label{cor:nonsingularmap}
Let $\psi:\RR^{2d}\to\RR^{2d}$ be the GD iteration map, i.e., $\psi(x,y)=[x+h(\mu-x^\top y)y,y+h(\mu-x^\top y)x]^\top $. Then if 
\begin{align*}
\cL(\{\det(D\psi)=0\})=0,
\end{align*}
then $\cL(\psi^{-1}(B))=0$ for any null-set B.
\end{corollary}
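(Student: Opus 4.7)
The plan is to apply Theorem~\ref{thm:inverse_map_null_set} directly, since this corollary is essentially a specialization of that theorem to the particular map $\psi$ arising from the GD iteration on the scalar factorization objective. Thus the proposal is a verification that the hypotheses of the theorem are met in this setting, with $N = M = 2d$ so that the notion of ``non-invertible Jacobian'' coincides with ``vanishing Jacobian determinant''.

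First, I would verify that $\psi$ is of class $\cC^1$. This is immediate because each component of $\psi(x,y) = [\,x + h(\mu - x^\top y)y,\ y + h(\mu - x^\top y)x\,]^\top$ is a polynomial in the $2d$ coordinates of $(x,y)$, hence $\psi \in \cC^\infty(\RR^{2d}; \RR^{2d})$ and in particular $\cC^1$.

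Second, since $\psi$ maps $\RR^{2d}$ to $\RR^{2d}$, the Jacobian $D\psi$ is a square matrix at every point, and a critical point in the sense of Theorem~\ref{thm:inverse_map_null_set} (namely $D\psi$ not invertible) is exactly the same as a point where $\det(D\psi) = 0$. Therefore the hypothesis $\cL(\{\det(D\psi) = 0\}) = 0$ of the corollary is precisely the hypothesis $\cL(\{x : \nabla f(x) \text{ is not invertible}\}) = 0$ of Theorem~\ref{thm:inverse_map_null_set} applied to $f = \psi$.

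Third, I would invoke Theorem~\ref{thm:inverse_map_null_set} with this identification to conclude that $\cL(\psi^{-1}(B)) = 0$ for every null set $B \subseteq \RR^{2d}$. Since the proof reduces to unpacking definitions and checking that $\psi$ is smooth enough, there is no real obstacle: the work is entirely hidden inside Theorem~\ref{thm:inverse_map_null_set} (whose proof uses the implicit function theorem on the open set $\{|\det D\psi| \ne 0\}$ together with the coarea formula to transfer the null-set property through $\psi^{-1}$). The one point that would merit a brief remark is that the polynomial smoothness of $\psi$ makes the $\cC^1$ requirement trivial, so the entire content of the corollary is the non-degeneracy assumption on $D\psi$, which will be addressed separately in the sequel when the analysis actually verifies $\cL(\{\det(D\psi) = 0\}) = 0$ for the specific $\psi$ at hand.
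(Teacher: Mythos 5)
Your proposal is correct and matches the paper's treatment: the paper simply notes that this corollary is a direct consequence of Theorem~\ref{thm:inverse_map_null_set}, which is exactly your specialization with $f=\psi$, $N=M=2d$, using that the polynomial map $\psi$ is $\cC^1$ and that non-invertibility of a square Jacobian is the same as $\det(D\psi)=0$.
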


\begin{lemma}
\label{lemma:>2/hnotconverge}
Given $h=\frac{4}{u_0^2+4c\mu}$ and $u_k^2>\frac{2}{h}$, GD will not converge to the points in $\{\|x\|^2+\|y\|^2>\frac{2}{h},x^\top y=\mu\}$, except for a measure-0 set.
\end{lemma}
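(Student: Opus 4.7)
Write $\psi$ for the GD iteration map and let $S := \{(x,y) : \|x\|^2 + \|y\|^2 > 2/h,\ x^\top y = \mu\}$ be the ``unbalanced'' global minima. I will show that the set $A$ of initial conditions whose trajectory converges to a point of $S$ is Lebesgue null. The plan has two steps: (i) any such trajectory must land on the full minima manifold $M := \{x^\top y = \mu\}$ at a \emph{finite} iterate, so $A \subseteq \bigcup_{k \ge 0} \psi^{-k}(M)$; then (ii) invoke Corollary~\ref{cor:nonsingularmap} to conclude that this countable union is Lebesgue null.

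\textbf{Step (i).} Track the residual $e_k := x_k^\top y_k - \mu$ along the orbit. Since $x_{k+1} = x_k - he_k y_k$ and $y_{k+1} = y_k - he_k x_k$, a direct expansion yields
\[
e_{k+1} \;=\; e_k\bigl(1 - h u_k^2 + h^2 e_k(e_k + \mu)\bigr), \qquad u_k^2 := \|x_k\|^2 + \|y_k\|^2.
\]
Suppose for contradiction that $(x_k, y_k) \to (x^*, y^*) \in S$ while $e_k \ne 0$ for every $k$. Then $e_k \to 0$ and $u_k^2 \to \|x^*\|^2 + \|y^*\|^2 > 2/h$. Choose $\eta > 0$ with $h(\|x^*\|^2 + \|y^*\|^2) > 2 + 2\eta$; for all sufficiently large $k$ one has simultaneously $h u_k^2 > 2 + \eta$ and $h^2|e_k(e_k+\mu)| < \eta/2$, so the bracket in the recursion has modulus at least $1 + \eta/2$. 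This forces $|e_k|$ to grow geometrically, contradicting $e_k \to 0$. Hence $e_k = 0$ at some finite step $k$; since $\nabla f(x,y) = -(\mu - x^\top y)(y,x)$ vanishes whenever $x^\top y = \mu$, the iterate is then a fixed point of $\psi$ and the trajectory remains at $(x_k, y_k) \in S \subseteq M$ forever. In either case $u_0 \in \psi^{-k}(M)$ for some $k \ge 0$.

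\textbf{Step (ii).} The manifold $M$ is the smooth zero-set of the nondegenerate function $(x,y) \mapsto x^\top y - \mu$, hence a Lebesgue-null hypersurface in $\mathbb{R}^{2d}$. To apply Corollary~\ref{cor:nonsingularmap} iteratively and conclude each $\psi^{-k}(M)$ is null, I need the critical set $\{\det D\psi = 0\}$ to be null. Writing $D\psi = I - h\nabla^2 f$ and using the explicit eigenvalues from Theorem~\ref{thm:scalar_hessian_eigenvalue}, $\det D\psi(x,y)$ is a polynomial in the entries of $(x,y)$ (the two square-rooted eigenvalues enter as a conjugate pair, so the determinant itself is polynomial). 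This polynomial is not identically zero --- indeed it equals $1$ when $h = 0$ --- so its zero set is a proper real-algebraic subvariety of $\mathbb{R}^{2d}$ of Lebesgue measure zero. Thus each $\psi^{-k}(M)$ is null, and so is the countable union $\bigcup_{k \ge 0} \psi^{-k}(M) \supseteq A$.

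\textbf{Main obstacle.} The delicate step is (i): a priori, the sign of $e_k$ may oscillate and the correction $h^2 e_k(e_k+\mu)$ is not sign-definite, so one might worry about cancellations that fake convergence. The uniform spectral gap $h u_k^2 - 2 \ge \eta$, inherited from the hypothesis $u^* \in S$, removes this concern: once the expansion factor has modulus bounded below by $1 + \eta/2$ for all late $k$, geometric growth of $|e_k|$ is automatic regardless of the sign pattern.
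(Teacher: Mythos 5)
Your argument is correct and follows essentially the same route as the paper's proof: the paper likewise combines (a) the geometric expansion of the residual $|x_k^\top y_k-\mu|$ when $u_k^2$ stays bounded away from (above) $2/h$, with (b) Corollary~\ref{cor:nonsingularmap} applied to a countable union of preimages of a null hypersurface (the paper takes preimages of $B=\{s_k=0\}$ rather than of $M=\{x^\top y=\mu\}$, an immaterial difference). One small repair: to see that $\det D\psi$ is not identically zero for the \emph{fixed} $h>0$ in play, evaluating at $h=0$ is not the right test --- instead evaluate at $(x,y)=(0,0)$, where $\det D\psi=(1-h^2\mu^2)^{d}\cdot(\cdots)\neq 0$ since $h\mu<1$ (the paper just writes out the polynomial explicitly for $d=1$).
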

\begin{proof}
 For $d=1$,
\begin{align*}
\det(D\psi)=1 - h x^2 - h y^2 - 3 h^2 x^2 y^2 + 4 h^2 x y \mu - h^2 \mu^2=p_1(x,y)\cdots p_m(x,y)=0,
\end{align*}
where $p_i(x,y)$ is irreducible polynomial and $p_i(x,y)=0$ is a co-dimensional-1 manifold. Then $\{(x,y):\det(D\psi)=0\}=\bigcup_{i=1}^m \{p_i(x,y)=0\}$ are measure zero, i.e., $\cL(\{\det(D\psi)=0\})=0$. Similarly for $d>1$, we also have $\cL(\{\det(D\psi)=0\})=0$.

From the GD iteration, we have
\begin{align*}
    x_{k+1}^\top y_{k+1}-\mu=(x_k^\top y_k-\mu)\cdot (1 - h (\|x_k\|^2+\|y_k\|^2) - h^2 x_k^\top  y_k (\mu - x_k^\top  y_k)).
\end{align*}
Then let $B=\{(x,y):1-h(\|x\|^2+\|y\|^2)-h^2x^\top y(\mu-x^\top y)=0\}$. For any $(x,y)\in B$, let $[x_+,y_+]^\top =\psi(x,y)$. Then $x_+y_+=\mu$. Similarly, $\cL(B)=0$. By Corollary~\ref{cor:nonsingularmap} $\cL(\psi^{-1}(B))=0$ and then $\cL(\psi^{-n}(B))=\cL(\psi^{-1}\circ\cdots\circ\psi^{-1}(B))=0$. Let $\psi^{-0}(B)=B$ and $G=\bigcup_{i=0}^\infty\psi^{-n}(B)$. Hence 
\begin{align*}
\cL(G)=\cL\left(\bigcup_{i=0}^\infty\psi^{-n}(B)\right)\le \sum_{i=0}^\infty\cL(\psi^{-n}(B))=0.
\end{align*}
Moreover, for any $0<\epsilon<\mu$, assume $|x_k^\top y_k-\mu|<\epsilon$. When $u_k^2\ge \frac{2}{h}+\epsilon h(\mu+\epsilon)$,
 \begin{align*}
 |x_{k+1}^\top y_{k+1}-\mu|&=|x_k^\top y_k-\mu|\cdot |1 - h u_k^2 - h^2 x_k^\top  y_k (\mu - x_k^\top  y_k)|\\
 &= |x_k^\top y_k-\mu|\cdot (-1 + h u_k^2 + h^2 x_k^\top  y_k (\mu - x_k^\top  y_k))\\
 &\ge|x_k^\top y_k-\mu|\cdot \bigg(-1 + h \Big(\frac{2}{h}+\epsilon h(\mu+\epsilon)\Big) - h^2 (\mu+\epsilon)\epsilon\bigg)\\
 &> |x_k^\top y_k-\mu|.
 \end{align*}
 Hence, $\{\|x\|^2+\|y\|^2>\frac{2}{h},x^\top y=\mu\}$ is not the limit of this GD map $\psi$, except for the measure-0 set $G$.
\end{proof}

Next we show that GD will be bounded inside $\{\norm{x}^2+\norm{y}^2<\frac{4}{h}\}$.

\begin{lemma}
\label{sublem:all_u_k^2_bounded_by_4/h}
Given $h=\frac{4}{u_0^2+4c\mu}$, then for $0\le k<\min\{k: u_k^2\le\frac{2}{h}\}$, we have  $u_k^2\le\frac{4}{h}-3\mu<\frac{4}{h}$ for all k.
\end{lemma}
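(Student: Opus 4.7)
The argument is by induction on $k$, running over the (possibly empty) prefix of indices for which $u_k^2 > 2/h$. For the base case $k=0$, plugging in $h = 4/(u_0^2+4c\mu)$ gives $\tfrac{4}{h}-3\mu = u_0^2 + (4c-3)\mu$, which is at least $u_0^2+\mu$ because $c\ge 1$, so $u_0^2 \le \tfrac{4}{h}-3\mu < \tfrac{4}{h}$ directly. For the inductive step I would first derive a clean one-step identity by expanding the GD update. With $e_k := \mu - x_k^\top y_k$ and $p_k := x_k^\top y_k$, a direct computation yields
\[
u_{k+1}^2 - u_k^2 \;=\; 4h\mu\,e_k - h e_k^2\,(4 - hu_k^2),
\]
and, moreover, $e_{k+1} = e_k\,(1 - hu_k^2 - h^2 e_k p_k)$. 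Under the inductive hypothesis we have $4 - hu_k^2 \ge 3h\mu > 0$, so the right-hand side above is a concave parabola in $e_k$ with vertex at $e_k^\star = 2\mu/(4-hu_k^2)$ and maximum value $4h\mu^2/(4-hu_k^2)$.

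Splitting on the sign of $e_k$, the case $e_k \le 0$ is immediate: both summands of $u_{k+1}^2 - u_k^2$ are then non-positive, so $u_{k+1}^2 \le u_k^2 \le \tfrac{4}{h}-3\mu$ and the invariant is preserved for free. The case $e_k > 0$ is the crux. The naive one-step upper bound $u_{k+1}^2 \le u_k^2 + 4h\mu^2/(4-hu_k^2)$, evaluated at the boundary $u_k^2 = \tfrac{4}{h}-3\mu$, yields $\tfrac{4}{h}-\tfrac{5\mu}{3}$, which just fails the desired invariant $\tfrac{4}{h}-3\mu$. To close the induction I would use a two-step analysis exploiting the recursion for $e_{k+1}$ together with $hu_k^2 > 2$: the factor $|1 - hu_k^2| > 1$ is expected to flip the sign of $e_{k+1}$ and amplify its magnitude, forcing the next step to decrease $u$, i.e., $u_{k+2}^2 \le u_k^2$. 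This would realize the ``decrease at every iteration or every other iteration'' phenomenon announced in the main text and, together with the easy case, close the induction.

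\textbf{Main obstacle.} The single-step estimate barely misses the desired bound at the boundary, so a two-step compensation is unavoidable. The challenge is making this compensation quantitative and uniform precisely in the small window $0 < e_k < 4\mu/(4-hu_k^2)$ where the one-step bound is nearly tight: one must rule out configurations in which the cross term $h^2 e_k p_k$ offsets $1 - hu_k^2$ and prevents the expected sign flip in $e_{k+1}$. I would handle this by a case analysis on $\mathrm{sign}(p_k)$ combined with the norm constraint $|p_k| \le u_k^2/2$ (since $2|p_k| \le \|x_k\|^2 + \|y_k\|^2$), plus the standing bounds $h\mu \le 1/(4c) \le 1/4$ and $hu_k^2 \in (2,4)$ to get clean numerical estimates. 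An alternative that I would keep in reserve is to engineer a monotone one-step potential of the form $V_k = u_k^2 + \varphi(e_k, u_k^2)$ with $\varphi$ absorbing the single-step overshoot (plausibly with $\varphi$ proportional to $e_k^2/(4-hu_k^2)$); identifying the right $\varphi$ so that $V_{k+1} \le V_k \le \tfrac{4}{h}-3\mu$ would give a conceptually simpler proof, and comparing the two approaches should reveal which cross-term cancellation is doing the real work.
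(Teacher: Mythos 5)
Your strategy is the same as the paper's: base case $u_0^2=\tfrac{4}{h}-4c\mu\le\tfrac{4}{h}-4\mu$; one-step decrease when $e_k\le 0$ (and, as your window $0<e_k<4\mu/(4-hu_k^2)$ implicitly acknowledges, also when $e_k$ exceeds that window, i.e.\ $x_k^\top y_k<-\tfrac{h\mu u_k^2}{4-hu_k^2}$, which is the paper's Lemma~\ref{sublem:4/h_onestepdecrease}); a two-step decrease on the remaining window (the paper's Lemma~\ref{sublem:4/h_twostepdecrease}); and the intermediate bound $u_{k+1}^2-u_k^2\le \tfrac{4h\mu^2}{4-hu_k^2}$. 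Your one-step identity, the recursion for $e_{k+1}$, and the vertex/maximum of the parabola are all correct. One structural point: your stated induction hypothesis $u_k^2\le\tfrac4h-3\mu$ cannot close on its own, as your boundary evaluation shows; the induction must carry the two-tier invariant that the \emph{anchor} iterates (those reached after a completed one- or two-step decrease) satisfy $u_k^2\le u_0^2\le\tfrac4h-4\mu$, whence the one-step overshoot is at most $\tfrac{4h\mu^2}{4-hu_k^2}=\mu/c_k\le\mu$ and the intermediate iterate lands at $\le\tfrac4h-3\mu$. This is exactly how the paper phrases it ($u_1^2\le u_0^2+\mu/c$ and $u_2^2<u_0^2$, iterated).

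The genuine gap is the two-step decrease $u_{k+2}^2\le u_k^2$ on the window $0\le e_k\le 4\mu/(4-hu_k^2)$, which you explicitly defer; this is where essentially all the work of the paper's Lemma~\ref{sublem:4/h_twostepdecrease} lies. Your ``sign flip plus amplification'' heuristic is the right picture but the amplification can genuinely fail: with $s_k=1-hu_k^2-h^2x_k^\top y_k(\mu-x_k^\top y_k)$ one gets $s_k\le 1-hu_k^2<-1$ only when $x_k^\top y_k\ge 0$; on the sub-window $-\tfrac{h\mu u_k^2}{4-hu_k^2}\le x_k^\top y_k<0$ the cross term works against you and the paper only proves $s_k<-1+2h^2\mu^2$, so $|s_k|$ may be slightly below $1$ and the next step's gain $4h\mu|e_{k+1}|$ need not dominate the increase $4h\mu e_k$ by itself; the deficit has to be absorbed by the quadratic term and by a direct expansion of $u_{k+2}^2-u_k^2$ in powers of $s_k$, carried out separately in the two sub-cases. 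One must also verify $4-hu_{k+1}^2>0$ before the second step can be estimated at all, which again uses the intermediate bound. None of this is executed in your proposal. Finally, a numerical slip in your toolkit: $h\mu\le 1/(4c)$ is false in general; $h=4/(u_0^2+4c\mu)$ only gives $h\mu\le 1/c$, and $h\mu<1/3$ under the standing assumptions $u_0^2>8\mu$, $c\ge 1$.
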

\begin{proof}
First $u_0^2=\frac{4}{h}-4c\mu\le\frac{4}{h}-4\mu<\frac{4}{h}$. Then if  $x_0^\top y_0>\mu$ or $x_0^\top y_0< -\frac{h\mu u_0^2}{4-h u_0^2}$, from Lemma~\ref{sublem:4/h_onestepdecrease}, $u_1^2<u_0^2<\frac{4}{h}$. If  $ -\frac{h\mu u_0^2}{4-h u_0^2}\le x_0^\top y_0<\mu$, from Lemma~\ref{sublem:4/h_twostepdecrease} and its proof, $u_{2}^2<u_0^2<\frac{4}{h}$ and $u_{1}\le u_0^2+\frac{\mu}{c}\le u_0^2+\mu\le\frac{4}{h}-3\mu<\frac{4}{h}$. Therefore, iteratively we have $u_k^2<\frac{4}{h}$.
\end{proof}

Therefore, without loss of generality, we can just assume $u_k^2\le u_0^2$ for a fixed $k$th iteration that we need to analyze because for every two step there exists an $i$th iteration such that $u_i^2\le u_0^2$ and we can choose $k=i$ to do the analysis.

\noindent  Lemma~\ref{sublem:4/h_onestepdecrease} and~\ref{sublem:4/h_twostepdecrease} describe one-step or two-step decay of $u_k^2$, which lead to the primary result, Lemma~\ref{lem:4/hto2/h}, in phase one. Moreover, the proof of  Lemma~\ref{lem:4/hto2/h} contains a finer characterization of $u_k^2$ making it possible to end phase one and enter phase two. 

\begin{lemma}
\label{lem:4/hto2/h}
Given $h=\frac{4}{u_0^2+4c\mu}$ for $c\ge 1$, GD will enter $\{\|x\|^2+\|y\|^2\le \frac{2}{h}\}$ except for a measure-0 set of initial conditions.
\end{lemma}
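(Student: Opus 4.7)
The plan is to combine the per-step decrement estimates (Lemmas~\ref{sublem:4/h_onestepdecrease} and~\ref{sublem:4/h_twostepdecrease}) with the measure-zero statement of Lemma~\ref{lemma:>2/hnotconverge} to show that the only way for GD to remain in Phase~1 ($u_k^2>2/h$) forever is to accumulate on the critical manifold $\{x^\top y=\mu,\ \|x\|^2+\|y\|^2>2/h\}$, which is possible only for a Lebesgue null set of initializations.

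First, I would use Lemma~\ref{sublem:all_u_k^2_bounded_by_4/h} to confine the iterates inside $\{u^2\le 4/h-3\mu\}$ as long as they have not yet entered $\{u^2\le 2/h\}$; this bounded region is exactly where the two decay sublemmas apply. I would then partition it according to the thresholds appearing in those sublemmas (the sign/size of $x_k^\top y_k-\mu$ against $\mu$ and against $-h\mu u_k^2/(4-hu_k^2)$) into a ``one-step decrease'' piece, where $u_{k+1}^2<u_k^2$, and a ``two-step decrease'' piece, where $u_{k+2}^2<u_k^2$ (and $u_{k+1}^2$ is controlled by $u_k^2+\mu/c\le 4/h-3\mu$, as in the proof of Lemma~\ref{sublem:all_u_k^2_bounded_by_4/h}). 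Consequently the subsequence $(u_{2k}^2)$ is monotonically decreasing and bounded below by $2/h$, hence converges to some $\ell\ge 2/h$.

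Next, I would run the escape argument by contradiction. Suppose the orbit stays in $\{u^2>2/h\}$ for all $k$. Since $u_{2k}^2\to\ell$, the two-step decrement tends to $0$. Inspection of the decrement formulas in the two sublemmas shows that the decrement is bounded below by a positive function that vanishes only when $x_k^\top y_k=\mu$, so the hypothesis forces $|x_k^\top y_k-\mu|\to 0$ (along a subsequence). Combined with boundedness, this forces the iterates to accumulate on the critical manifold $\{x^\top y=\mu,\ u^2\ge\ell\}\subset\{x^\top y=\mu,\ u^2>2/h\}$. Lemma~\ref{lemma:>2/hnotconverge} says the set of initializations whose GD orbit accumulates on this manifold is Lebesgue null, contradicting our assumption. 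Excluding this null set (together with the null set where $\det(D\psi)=0$ used in Lemma~\ref{lemma:>2/hnotconverge}) gives entry into $\{u^2\le 2/h\}$ in finitely many steps.

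The main obstacle I anticipate is that the decrement provided by the sublemmas degenerates quadratically near the critical manifold $\{x^\top y=\mu\}$, so there is no single uniform escape rate over all of Phase~1; one cannot bound the escape time purely from below. My workaround is precisely to avoid a quantitative escape estimate: instead, I use Lemma~\ref{lemma:>2/hnotconverge} to rule out asymptotic trapping on the only obstruction (the fixed-point manifold), which suffices for a qualitative ``finitely many steps'' conclusion up to measure zero. A secondary technicality is that the orbit may alternate between the one-step and two-step regimes; working with the subsequence $(u_{2k}^2)$ rather than $(u_k^2)$ absorbs this oscillation because both regimes deliver a decrement over any two consecutive iterations.
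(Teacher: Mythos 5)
Your overall strategy (confine the orbit via Lemma~\ref{sublem:all_u_k^2_bounded_by_4/h}, extract a decreasing subsequence of $u_k^2$ from the one-/two-step decrement lemmas, and argue that permanent trapping in $\{u^2>2/h\}$ can only happen by degenerating onto a critical locus) is the same skeleton as the paper's, but two of your steps do not hold as stated. First, the decrement supplied by Lemmas~\ref{sublem:4/h_onestepdecrease} and~\ref{sublem:4/h_twostepdecrease} does \emph{not} vanish only at $x^\top y=\mu$: it also degenerates along the curve $x^\top y=-\tfrac{h\mu u^2}{4-hu^2}$, which is the interface between the one-step and two-step regimes (the factor $(4-hu_k^2)x_k^\top y_k+hu_k^2\mu$ in the one-step decrement tends to $0$ there). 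The paper treats this locus separately: within a $\delta$-neighborhood of it one has $s_k<-1+2h^2\mu^2<0$, hence $x_{k+1}^\top y_{k+1}>\mu$, so one skips that step and applies the $x^\top y>\mu$ estimate to the next iterate; outside the $\delta$-neighborhood a uniform decrement $\beta_1$ holds. Your dichotomy ``decrement small $\Rightarrow$ $x_k^\top y_k\to\mu$'' is false without this case. (A minor further issue: $(u_{2k}^2)$ need not be monotone, since the one-/two-step regimes do not respect parity; you must extract a subsequence $k_j$ with $k_{j+1}-k_j\in\{1,2\}$.)

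Second, and more seriously, Lemma~\ref{lemma:>2/hnotconverge} does not deliver what you ask of it. Its content is (i) the set of initializations that land \emph{exactly} on $\{x^\top y=\mu\}$ in finitely many steps is null, and (ii) the pointwise repelling estimate $|x_{k+1}^\top y_{k+1}-\mu|>|x_k^\top y_k-\mu|$ when $|x_k^\top y_k-\mu|<\epsilon$ and $u_k^2\ge \tfrac2h+\epsilon h(\mu+\epsilon)$. It says nothing about the set of initializations whose orbits merely \emph{accumulate subsequentially} on $\{x^\top y=\mu,\ u^2>2/h\}$, which is what your contradiction needs to exclude. To close the argument you must run the paper's expulsion dichotomy explicitly: as long as $u_k^2\ge\tfrac2h+\epsilon h(\mu+\epsilon)$, each step inside the $\epsilon$-neighborhood of $\{x^\top y=\mu\}$ strictly amplifies $|x_k^\top y_k-\mu|$, so every excursion into that neighborhood terminates and is preceded/followed by steps carrying a uniform decrement of $u^2$; the only alternative is that at some step $u_K^2<\tfrac2h+\epsilon_K h(\mu+\epsilon_K)$ with $x_K^\top y_K>\mu$, and then the one-step estimate $u_{K+1}^2-u_K^2\le-4h\mu\epsilon_K$ forces $u_{K+1}^2\le\tfrac2h$ directly — which is the desired entry, not a contradiction. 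This second branch is also what handles the boundary case $\ell=2/h$ of your limit, where your repelling factor degenerates to $1$ and no geometric expulsion is available. As written, your proof has a genuine gap at precisely the point where the lemma is hard.
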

\begin{proof}
By Lemma~\ref{sublem:all_u_k^2_bounded_by_4/h} and its discussion, assume without loss of generality  $\frac{2}{h}<u_k^2\le u_0^2$. From Lemma~\ref{sublem:4/h_onestepdecrease} and~\ref{sublem:4/h_twostepdecrease}, the region where the decrease of $u_k^2$ may be small is when $x_k, y_k$ are close to $x_k^\top y_k=\mu$ or $x_k^\top y_k= -\frac{h\mu u_k^2}{4-h u_k^2}$.

Since 
 \begin{align*}
 x_{k+1}^\top y_{k+1}-\mu=(x_k^\top y_k-\mu)(1-hu_k^2-h^2x_k^\top y_k(\mu-x_k^\top y_k)),
 \end{align*}
consider $s_k=1-hu_k^2-h^2x_k^\top y_k(\mu-x_k^\top y_k)$. From the proof of Lemma~\ref{sublem:4/h_twostepdecrease}, we know when 
$x_k^\top y_k= -\frac{h\mu u_k^2}{4-h u_k^2}$, $s_k<-1+2h^2\mu^2<0$. Therefore, there exists $\delta>0$, s.t., for all $x_k,y_k\in\{|x_k^\top  y_k+\frac{h\mu u_k^2}{4-h u_k^2}|<\delta, \frac{2}{h}<u_k^2\le u_0^2\}$, $s_k<0$. Then $x_{k+1}^\top y_{k+1}>\mu$. Hence, when $-\delta\le x_k^\top y_k+\frac{h\mu u_k^2}{4-h u_k^2}<0$, we will skip this step and consider the decrease of the next step with $x_{k+1}^\top y_{k+1}>\mu$. Also, there exist $\beta_1=\beta_1(\delta)>0$, s.t., when $x_k^\top y_k+\frac{h\mu u_k^2}{4-h u_k^2}<-\delta$, 
\begin{align*}
u_{k+1}^2-u_k^2=h(\mu-x_k^\top y_k)((4-h u_k^2)x_k^\top y_k+h u_k^2\mu)<-\beta_1.
\end{align*}

From the proof of Lemma~\ref{sublem:4/h_twostepdecrease}, when $ -\frac{h\mu u_k^2}{4-h u_k^2}\le x_k^\top y_k\le 0$, for $\beta_2=\max\{h\mu^2(4-hu_0^2)(1 - 2 h^2 \mu^2)^2,h\mu^2(8-h(2u_0^2+\frac{\mu}{c}))\}>0$,
 \begin{align*}
 u_{k+2}^2-u_k^2
   &<\max\{-h(\mu-x_k^\top y_k)\mu (4 - h u_{k+1}^2) (1 - 2 h^2 \mu^2)^2,-h(\mu-x_k^\top y_k)^2(8-h(u_k^2+u_{k+1}^2)) \}\\
   &\le -\beta_2.
 \end{align*}

Fix an small $\epsilon$ in $0<\epsilon<\mu$. When $x_k^\top y_k\ge\mu+\epsilon$, from the proof of Lemma~\ref{sublem:4/h_onestepdecrease}, we have for $\beta_3=4h\mu\epsilon>0$,
\begin{align*}
u_{k+1}^2-u_k^2\le4h\mu(\mu-x_k^\top y_k)\le-4h\mu\epsilon=-\beta_3.
\end{align*}

When $0<x_k^\top y_k\le\mu-\epsilon$ with $\epsilon>0$, from the proof of Lemma~\ref{sublem:4/h_twostepdecrease}, we have for $\beta_4=h\epsilon^2(8-h(2u_0^2+\frac{\mu}{c}))>0$,
\begin{align*}
u_{k+2}^2-u_k^2\le -h(\mu-x_k^\top y_k)^2(8 - h (u_k^2 + u_{k+1}^2))\le -\beta_4.
\end{align*}

When $|x_k^\top y_k-\mu|< \epsilon$, assume $|x_k^\top y_k-\mu|=\epsilon_k$. In this case, we have $s_k<0$ meaning GD oscillates around $x^\top y=\mu$. Also, if $0<x_k^\top y_k<\mu$ and $u_k^2>\frac{2}{h}$, $s_k=1-hu_k^2-h^2x^\top y(\mu-x^\top y)<-1$, i.e., if GD is in $0<x^\top y<\mu$, then we have $|x_{k+1}^\top y_{k+1}-\mu|>|x_k^\top y_k-\mu|$. Hence we only need to focus on the other side which is $x_k^\top y_k>\mu$. From Lemma~\ref{lemma:>2/hnotconverge}, if $u_k^2\ge\frac{2}{h}+\epsilon h(\mu+\epsilon)$, $|x_{k+1}^\top y_{k+1}-\mu|>|x_k^\top y_k-\mu|$. Then within finite steps (note all these steps satisfy either one-step or two-step decrease of $u_k^2$; we ignore these steps only because the decrease maybe small), we will have either $|x_k^\top y_k-\mu|$ keeps increasing or $u_K^2<\frac{2}{h}+\epsilon_K h(\mu+\epsilon_K)$ (here we also consider $x_K^\top y_K>\mu$). For the former one, the decrease of $u_k^2$ for each step will be lower bounded away from 0; for the latter one, from the discussion above, $u_{K+1}^2-u_K^2\le -4h\mu\epsilon_K\Rightarrow u_{K+1}^2\le\frac{2}{h}$.

From Lemma~\ref{lemma:>2/hnotconverge}, we know GD will not terminate in finite steps except for measure-0 set. From all the discussion above, we have that $u_k^2$ decreases by a constant for either one-step or two-step except and hence GD will enter $\|x\|^2+\|y\|^2\le \frac{2}{h}$ except for a measure-0 set.
\end{proof}

\vskip 0.2in

\begin{lemma}
\label{sublem:4/h_onestepdecrease}
Given $h=\frac{4}{u_0^2+4c\mu}$ and $u_k^2<\frac{4}{h}$, when $x_k^\top y_k>\mu$ or $x_k^\top y_k< -\frac{h\mu u_k^2}{4-h u_k^2}$, we have $u_{k+1}^2-u_k^2< 0.$
\end{lemma}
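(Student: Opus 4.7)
The plan is to compute $u_{k+1}^2 - u_k^2$ explicitly from the GD update, factor it into a product whose signs we can control, and then handle the two stated cases separately.

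First I would expand using the updates $x_{k+1} = x_k + h(\mu - x_k^\top y_k) y_k$ and $y_{k+1} = y_k + h(\mu - x_k^\top y_k) x_k$. Squaring and adding gives
\begin{align*}
\|x_{k+1}\|^2 + \|y_{k+1}\|^2 = u_k^2 + 4h(\mu - x_k^\top y_k) x_k^\top y_k + h^2 (\mu - x_k^\top y_k)^2 u_k^2,
\end{align*}
which I would then regroup by pulling out the common factor $h(\mu - x_k^\top y_k)$ to obtain
\begin{align*}
u_{k+1}^2 - u_k^2 = h(\mu - x_k^\top y_k)\bigl[(4 - h u_k^2)\, x_k^\top y_k + h \mu u_k^2\bigr].
\end{align*}
This matches the identity invoked (without derivation) in the proof of Lemma~\ref{lem:4/hto2/h}, and the rest of the argument is just a sign check on the two bracketed factors.

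Next I would use the hypothesis $u_k^2 < 4/h$ to conclude $4 - h u_k^2 > 0$, so that the second bracket is an affine function of $x_k^\top y_k$ with positive slope and with root exactly at $x_k^\top y_k = -h\mu u_k^2/(4 - h u_k^2)$. For the first case $x_k^\top y_k > \mu$: the factor $\mu - x_k^\top y_k$ is strictly negative, while $(4-hu_k^2) x_k^\top y_k + h\mu u_k^2 > (4 - h u_k^2)\mu + h\mu u_k^2 = 4\mu > 0$, so the product is strictly negative. For the second case $x_k^\top y_k < -h\mu u_k^2/(4 - h u_k^2)$: since this threshold is negative and strictly less than $\mu$, the factor $\mu - x_k^\top y_k$ is strictly positive, while the bracket $(4 - h u_k^2) x_k^\top y_k + h\mu u_k^2$ is strictly negative by the definition of its root. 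Hence the product is again strictly negative.

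Since $h > 0$, in both cases $u_{k+1}^2 - u_k^2 < 0$, completing the proof. There is no genuine obstacle here; the only subtlety is verifying that the strict inequalities propagate through each factor, which is clean because the hypothesis $u_k^2 < 4/h$ is strict and the case conditions on $x_k^\top y_k$ are strict as well. This lemma is the one-step workhorse that feeds into the two-step variant (Lemma~\ref{sublem:4/h_twostepdecrease}) and ultimately into the escape argument of Lemma~\ref{lem:4/hto2/h}.
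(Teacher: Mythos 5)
Your proof is correct and follows essentially the same route as the paper: both start from the identity $u_{k+1}^2-u_k^2=h(\mu-x_k^\top y_k)\bigl((4-hu_k^2)x_k^\top y_k+h\mu u_k^2\bigr)$ and then do a sign check on the two factors in each case, using $4-hu_k^2>0$. The only difference is cosmetic (you lower-bound the bracket by $4\mu$ directly, while the paper bounds the whole product by $4h\mu(\mu-x_k^\top y_k)$), and you additionally spell out the derivation of the identity, which the paper leaves implicit.
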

\begin{proof}
\begin{align*}
u_{k+1}^2-u_k^2=h(\mu-x_k^\top y_k)((4-h u_k^2)x_k^\top y_k+h u_k^2\mu)=h(\mu-x_k^\top y_k)(4x_k^\top y_k+h u_k^2(\mu-x_k^\top y_k))
\end{align*}
If $x_k^\top y_k>\mu$, by $u_k^2< \frac{4}{h}$,
\begin{align*}
u_{k+1}^2-u_k^2=h(\mu-x_k^\top y_k)(4x_k^\top y_k+h u_k^2(\mu-x_k^\top y_k))
\le 4h\mu(\mu-x_k^\top y_k)<0.
\end{align*}
If $x_k^\top y_k< -\frac{h\mu u_k^2}{4-h u_k^2}\Rightarrow x_k^\top y_k<\mu$, then, 
\begin{align*}
u_{k+1}^2-u_k^2=h(\mu-x_k^\top y_k)((4-h u_k^2)x_k^\top y_k+h u_k^2\mu)
<0.
\end{align*}
\end{proof}
\begin{lemma}
\label{sublem:4/h_twostepdecrease}
Given $h=\frac{4}{u_0^2+4c\mu}$ , $c>\max\{\frac{1}{2},\ 2h\mu\}$, $c> 0$, and $u_k^2\le u_0^2$, when $ -\frac{h\mu u_k^2}{4-h u_k^2}\le x_k^\top y_k< \mu$, then $u_{k+2}^2-u_k^2<0$.
 \end{lemma}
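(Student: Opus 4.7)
The plan is to chain two GD iterations explicitly and show that $u_{k+2}^2 - u_k^2 < 0$ on the whole region $-\tfrac{h\mu u_k^2}{4-hu_k^2} \le p_k < \mu$ where the one-step test of Lemma~\ref{sublem:4/h_onestepdecrease} fails. Write $p_j := x_j^\top y_j$, $\delta_j := \mu - p_j$, and $s_j := 1 - h u_j^2 - h^2 p_j \delta_j$. From the GD update one obtains (as in the proof of Lemma~\ref{sublem:4/h_onestepdecrease}) the identities $u_{j+1}^2 - u_j^2 = h\delta_j(hu_j^2\delta_j + 4p_j)$ and $\delta_{j+1} = \delta_j s_j$. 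Applying these for $j = k, k+1$ and substituting $p_{k+1} = \mu - \delta_k s_k$ yields the closed form
\[
u_{k+2}^2 - u_k^2 = h^2\delta_k^2 u_k^2 + 4hp_k\delta_k + h\delta_k^2 s_k^2\,(hu_{k+1}^2 - 4) + 4h\mu\delta_k s_k,
\]
which I treat as a downward-opening quadratic in $s_k$, since $hu_{k+1}^2 \le 4$ by Lemma~\ref{sublem:all_u_k^2_bounded_by_4/h}.

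Case 1: $0 \le p_k < \mu$. Here $p_k\delta_k \ge 0$, so in the regime where $hu_k^2 \ge 2$ (which is the regime in which Lemma~\ref{lem:4/hto2/h} invokes the present lemma) one has $s_k \le 1 - hu_k^2 \le -1$, putting $p_{k+1} > \mu$ in Lemma~\ref{sublem:4/h_onestepdecrease}'s decrease regime. Evaluating the closed form at $s_k = -1$ gives exactly $-h\delta_k^2(8 - h(u_k^2+u_{k+1}^2))$, and a vertex calculation (the vertex of the $s_k$-quadratic sits at $2\mu/[\delta_k(4-hu_{k+1}^2)] > 0$) shows the bound persists for all $s_k \le -1$. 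Negativity then follows from $h(u_k^2 + u_{k+1}^2) < 8$: using $u_k^2 \le u_0^2$ and $u_{k+1}^2 \le u_0^2 + \mu/c$ from Lemma~\ref{sublem:all_u_k^2_bounded_by_4/h} together with $h = 4/(u_0^2+4c\mu)$, this reduces to $1/c < 8c$, satisfied by $c > 1/2$.

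Case 2: $-\tfrac{h\mu u_k^2}{4-hu_k^2} \le p_k < 0$. Now $p_k\delta_k < 0$, $s_k$ is no longer forced below $-1$, and the quadratic must be analyzed directly. I would complete the square in $s_k$ and use the boundary inequality $(4 - hu_k^2)(-p_k) \le hu_k^2\mu$ to bound the residual, producing the hybrid estimate
\[
u_{k+2}^2 - u_k^2 \le \max\Bigl\{-h\delta_k \mu(4 - hu_{k+1}^2)(1 - 2h^2\mu^2)^2,\ -h\delta_k^2(8 - h(u_k^2+u_{k+1}^2))\Bigr\},
\]
exactly the bound invoked in the proof of Lemma~\ref{lem:4/hto2/h}. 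The second summand is negative by the Case 1 argument. The first requires $4 - hu_{k+1}^2 > 0$ (given by Lemma~\ref{sublem:all_u_k^2_bounded_by_4/h}) and $1 - 2h^2\mu^2 > 0$; the hypothesis $c > 2h\mu$, combined with $h\mu \le 1/c$ coming from $h \le 1/(c\mu)$, gives $2h^2\mu^2 \le 2h\mu/c < 1$. The main obstacle is precisely this case: the two candidate bounds are tight on different sub-intervals of $[-\tfrac{h\mu u_k^2}{4-hu_k^2}, 0]$, and the square completion only closes after expanding $4h\mu\delta_k s_k$ through the definition of $s_k$ and carefully regrouping via $p_k = \mu - \delta_k$. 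The remainder is disciplined algebra together with the uniform magnitude bounds supplied by Lemma~\ref{sublem:all_u_k^2_bounded_by_4/h}.
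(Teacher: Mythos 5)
Your setup and Case 1 reproduce the paper's argument almost verbatim: the same two-step expansion of $u_{k+2}^2-u_k^2$, viewed as a downward-opening quadratic in $s_k$ with vertex at a positive value of $s_k$, evaluated at $s_k=-1$ to get $-h\delta_k^2\bigl(8-h(u_k^2+u_{k+1}^2)\bigr)$, with negativity from $c>1/2$. That part is correct.

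The gap is in Case 2, which is where all the real work sits. On $-\frac{h\mu u_k^2}{4-hu_k^2}\le p_k<0$ the constant term of your quadratic, $h^2\delta_k^2u_k^2+4hp_k\delta_k=u_{k+1}^2-u_k^2$, tends to $0$ as $p_k$ approaches the left endpoint, while the unconstrained maximum of the quadratic over $s_k$ (attained at the vertex $s_k^*=2\mu/[\delta_k(4-hu_{k+1}^2)]>0$) equals $u_{k+1}^2-u_k^2+\frac{4h\mu^2}{4-hu_{k+1}^2}>0$ there. So completing the square cannot by itself yield a negative bound; you must first prove a uniform \emph{negative} upper bound on $s_k$ over this subregion and then use that the quadratic is increasing on $(-\infty,s_k^*]$. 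This is exactly what the paper does: it bounds $s_k=1-hu_k^2-h^2p_k\delta_k$ by its value at the endpoint $p_k=-\frac{h\mu u_k^2}{4-hu_k^2}$ (your boundary inequality, but applied to $s_k$ itself, not to a ``residual''), reparametrizes $h=4/(u_k^2+4c_k\mu)$ with $c\le c_k<\frac{1}{2h\mu}$, and shows $s_k\le -1+2h^2\mu^2<0$; only then does plugging $s_k=-1+2h^2\mu^2$ and $p_k=0$ into the quadratic give the worst case $-h\delta_k\mu(4-hu_{k+1}^2)(1-2h^2\mu^2)^2$. Your proposal names the right target bound but omits this $s_k$ estimate, which is precisely where the hypotheses on $c$ enter and where the factor $(1-2h^2\mu^2)^2$ comes from; without it the argument does not close. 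A minor further point: you cite Lemma~\ref{sublem:all_u_k^2_bounded_by_4/h} for $u_{k+1}^2\le u_0^2+\mu/c$ and $4-hu_{k+1}^2>0$, but that lemma itself cites the proof of the present lemma for the one-step increase bound, so you should derive $u_{k+1}^2-u_k^2\le\mu/c$ directly by maximizing $h\delta_k(hu_k^2\delta_k+4p_k)$ over $p_k$, as the paper does.
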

 \begin{proof}
 For every $u_k^2$, there exist a constant $c_k>0$, s.t. $h=\frac{4}{u_k^2+4c_k\mu}$. Since $\frac{2}{h}<u_k^2\le u_0^2<\frac{4}{h}$, we have $c \le c_k<\frac{1}{2h\mu}.$
Since
 \begin{align*}
 x_{k+1}^\top y_{k+1}-\mu=(x_k^\top y_k-\mu)(1-hu_k^2-h^2x_k^\top y_k(\mu-x_k^\top y_k))=(x_k^\top y_k-\mu)s_k,
 \end{align*}
 then $s_k=1-hu_k^2-h^2x_k^\top y_k(\mu-x_k^\top y_k)$ is bounded by the value at  $x_k^\top y_k=-\frac{h\mu u_k^2}{4-h u_k^2}$ or $x_k^\top y_k=\mu$, i.e., 
 \begin{align*}
 s_k=1-hu_k^2-h^2x_k^\top y_k(\mu-x_k^\top y_k)\le \max\left\{1-h u_k^2+\frac{4h^3u_k^2\mu^2}{(4-hu_k^2)^2},  1-h u_k^2 \right\}
 \end{align*}
Since $u_k^2>\frac{2}{h}$, we have $1-h u_k^2<-1$. For the other one, since $u_k^2>\frac{2}{h}$ and $c_k<\frac{1}{2h\mu}$,
\begin{align*}
1-h u_k^2+\frac{4h^3u_k^2\mu^2}{(4-hu_k^2)^2}=\frac{(1 - 3 c_k^2) u_k^2 + 4 c_k^3 \mu}{c_k^2 (u_k^2 + 4 c_k \mu)}< -1+2h^2\mu^2,
\end{align*}
 where $c>\frac{h \mu + \sqrt{2 - h^2 \mu^2}}{2 (1 - h^2 \mu^2)}$. This is because either $u_0^2>8\mu\Rightarrow h\mu<\frac{1}{3\mu}$ or $h\le \frac{1}{3\mu}$ and then $\frac{h \mu + \sqrt{2 - h^2 \mu^2}}{2 (1 - h^2 \mu^2)}<1\le c$. Also note here this is $c$ not $c_k$; this value  $-1+2h^2\mu^2$ is achieved when $u_k^2=2/h$ and $c_k=\frac{h \mu + \sqrt{2 - h^2 \mu^2}}{2 (1 - h^2 \mu^2)}$ or $\frac{1}{2h\mu}$.
 
 Also, when $0 \le x_k^\top y_k<\mu$, 
 \begin{align*}
 s_k=1-hu_k^2-h^2x_k^\top y_k(\mu-x_k^\top y_k)\le1-h u_k^2<-1.
 \end{align*}

We then prove $4-hu_{k+1}^2>0$. When $x_k^\top y_k<\mu$,
\begin{align*}
u_{k+1}^2-u_{k}^2=h(\mu-x_k^\top y_k)((4-h u_k^2)x_k^\top y_k+h u_k^2\mu).
\end{align*}
The maximum is achieved at $x_k^\top y_k=\frac{2-hu_k^2}{4-hu_k^2}\mu,\ i.e.,\ u_{k+1}^2-u_k^2\le \frac{\mu}{c_k}\le \frac{\mu}{c}$. Hence $4-hu_{k+1}^2>0$ when $c>1/2$.

Since $x_k^\top y_k<\mu$ and $s_k\le 0$,
 \begin{align*}
 u_{k+2}^2-u_k^2&=u_{k+2}^2-u_{k+1}^2+u_{k+1}^2-u_k^2\\
 &=h(\mu-x_{k+1}^\top y_{k+1})((4-h u_{k+1}^2)x_{k+1}^\top y_{k+1}+h u_{k+1}^2\mu)\\
 &\qquad+h(\mu-x_k^\top y_k)((4-h u_k^2)x_k^\top y_k+h u_k^2\mu)\\
 &=h (\mu - x_k^\top y_k) (((4 - h u_k^2) x_k^\top y_k + h u_k^2 \mu) + 
   s_k  (h u_{k+1}^2 \mu + (4 - h u_{k+1}^2) (s_k (x_k^\top y_k - \mu) + \mu)))\\
   &=h(\mu-x_k^\top y_k)\big(x_k^\top y_k  (4 - h u_k^2) + s_k^2 (4 - h u_{k+1}^2) (x_k^\top y_k  - \mu ) + h u_k^2 \mu  + 
 s_k (h u_{k+1}^2 \mu  + (4 - h u_{k+1}^2) \mu )\big).
 \end{align*}
 When $0 \le x_k^\top y_k<\mu$, $s<-1$, then
 \begin{align*}
 u_{k+2}^2-u_k^2\le -h(\mu-x_k^\top y_k)^2(8 - h (u_k^2 + u_{k+1}^2))<0  
 \end{align*}
When $ -\frac{h\mu u_k^2}{4-h u_k^2}\le x_k^\top y_k< 0$, $s_k\le 0$, then for $c\ge 2h\mu$, 
 \begin{align*}
 u_{k+2}^2-u_k^2&< h(\mu-x_k^\top y_k)\mu (-4 + h u_k^2 + 
   8 h^2 \mu^2 - (4 - h u_{k+1}^2) (1 - 2 h^2 \mu^2)^2)\\
   &<-h(\mu-x_k^\top y_k)\mu (4 - h u_{k+1}^2) (1 - 2 h^2 \mu^2)^2<0
 \end{align*}
 where this bound is achieved by taking $s_k=-1+2h^2\mu^2$ and $x_k^\top y_k=0$.
 \end{proof}

 \subsection{Phase 2: \texorpdfstring{$u_k^2\le \frac{2}{h}$}%
 {TEXT}}
 In this part, we will show the convergence of GD in Lemma \ref{lem:convergein2/h}. 
 
 There are two convergence patterns: (i) transversal convergence, i.e., oscillating around the valley; (ii) unilateral convergence, i.e., converging from one side of the valley.
 
 The key point of the proof of pattern (i) is to analyze the change of $s_k=1-h u_k^2-h^2x_k^\top y_k(\mu-x_k^\top y_k)$. We know $x_{k+1}^\top y_{k+1}-\mu=s_k(x_k^\top y_k-\mu)$, i.e., $s_k$ measures the change of the loss. If for some constant $K>0$ we have $|s_k|<1$ $\forall k\ge K$ such that $|x_k^\top y_k-\mu|$ is guaranteed to decrease to 0, then the convergence follows. Moreover, we will need to analyze $s_{2k}$ and $s_{2k+1}$ separately because in this case $s_k<0$ and $x_k^\top y_k-\mu$ changes sign at each step.
 
 For pattern (ii), we will show that the trajectory of GD is bounded in a subset of this region $|s_k|<1$ that guarantees the decrease of the loss.

 Before presenting our main result in phase 2, we first show a boundedness theorem when GD enters this phase.

  \begin{lemma}
 \label{sublem:2/h_all_u_k^2_bound}
 Once GD enters $\{\norm{x}^2+\norm{y}^2\le\frac{2}{h}\}$, it will stay bounded inside $\{\norm{x}^2+\norm{y}^2\le\frac{2}{h}+2\mu\}$ and re-enter $\{\norm{x}^2+\norm{y}^2\le\frac{2}{h}\}$ within finite steps where the number of such steps do not depend on the number of iteration.
 \end{lemma}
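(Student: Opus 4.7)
The plan is to split the argument into two parts: (i) a one-step upper bound showing that $u_k^2 := \|x_k\|^2 + \|y_k\|^2$ cannot jump much above the threshold $2/h$, giving the desired confinement to $\{u^2 \le 2/h + 2\mu\}$; and (ii) a uniform-in-time decrease estimate, obtained by re-running the Phase 1 analysis, that forces the iterate to re-enter $\{u^2 \le 2/h\}$ within a bounded number of steps.

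For the confinement, I would start from the identity
\begin{align*}
u_{k+1}^2 - u_k^2 = h(\mu - x_k^\top y_k)\big((4 - h u_k^2)\, x_k^\top y_k + h u_k^2 \mu\big)
\end{align*}
already derived in Lemma~\ref{sublem:4/h_onestepdecrease}. Viewing the right-hand side as a concave quadratic in $t = x_k^\top y_k$ (note $4 - h u_k^2 \ge 2 > 0$ when $u_k^2 \le 2/h$), a direct calculation shows that its maximum equals $4h\mu^2/(4 - h u_k^2)$, attained at $t^{*} = \mu(2 - h u_k^2)/(4 - h u_k^2)$. For $u_k^2 \le 2/h$, this is at most $2h\mu^2$, and the standing assumption $h \le 1/(3\mu)$ then yields $u_{k+1}^2 \le 2/h + 2\mu/3 < 2/h + 2\mu$. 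This gives the confinement part, and also the strict inclusion $u_{k+1}^2 < 4/h$ (since $2\mu < 2/h$), which is what is needed to invoke the Phase 1 decrease estimates.

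For re-entry, once $u_{k+1}^2 \in (2/h,\, 2/h + 2\mu)$ I would re-apply Lemmas~\ref{sublem:4/h_onestepdecrease} and~\ref{sublem:4/h_twostepdecrease} at this iterate: writing $h = 4/(u_{k+1}^2 + 4c\mu)$ gives $c = 1/(h\mu) - u_{k+1}^2/(4\mu) \ge 1/(2h\mu) - 1/2 \ge 1$ under $h\mu \le 1/3$, which satisfies the hypotheses of the two-step decrease lemma. Repeating the case analysis used in the proof of Lemma~\ref{lem:4/hto2/h}, I obtain a strictly positive constant $\beta = \beta(h,\mu) > 0$ such that $u^2$ decreases by at least $\beta$ during every one or two steps for which $u^2 > 2/h$, while the intermediate iterate still satisfies $u^2 \le 2/h + 2\mu$. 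Since the total excess above $2/h$ to be eliminated is at most $2\mu$, the number of steps required to return to $\{u^2 \le 2/h\}$ is at most $4\mu/\beta$, a constant depending only on $h$ and $\mu$ and in particular independent of which iteration the trajectory happened to escape from.

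The main obstacle will be making the decrease constant $\beta$ uniform over the excursion rather than over a single initial condition: one must check that the local parameter $c$ in the re-application of Lemma~\ref{sublem:4/h_twostepdecrease} stays bounded below throughout the excursion above $2/h$, and that the case-by-case lower bounds in the proof of Lemma~\ref{lem:4/hto2/h} can be taken simultaneously over all $u^2 \in (2/h,\, 2/h + 2\mu)$ rather than only at a fixed $u_0^2$. As in Lemma~\ref{lem:4/hto2/h}, a measure-zero exceptional set of boundary trajectories (those on which $x_k^\top y_k$ lands exactly on the boundary between the one-step and two-step regimes) will need to be excluded, but this set is already absorbed into the exceptional set of Theorem~\ref{thm:scalar_convergence_technical}.
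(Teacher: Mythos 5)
Your proposal is correct and follows essentially the same route as the paper: a one-step escape bound from $\{u^2\le 2/h\}$ (the paper's Lemma~\ref{sublem:2/h_increase_in_one_step} gives $u_{k+1}^2-u_k^2\le\mu$ via the same maximization at $x_k^\top y_k=\mu(2-hu_k^2)/(4-hu_k^2)$; your direct bound $2h\mu^2\le 2\mu/3$ is a slightly sharper version of it), followed by one more intermediate-step increase of at most $\mu$ from the two-step-decrease lemma to get confinement in $\{u^2\le 2/h+2\mu\}$, and then re-entry by re-invoking the Phase~1 decrease estimates. The obstacle you flag is the genuine one: near $x^\top y=\mu$ the per-excursion decrease is \emph{not} uniformly bounded below by a constant $\beta$, and the paper resolves this not by a uniform $\beta$ but by the separate oscillation argument of Lemma~\ref{lem:4/hto2/h} (growth of $|x_k^\top y_k-\mu|$ when $u_k^2\ge \frac{2}{h}+\epsilon h(\mu+\epsilon)$, or immediate re-entry otherwise), so your $4\mu/\beta$ step count should be understood as covering only the regions away from that curve.
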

 \begin{proof}
 If at step $K$, $u_K^2\le\frac{2}{h}$, then from Lemma~\ref{sublem:2/h_increase_in_one_step}, $u_{K+1}^2\le\frac{2}{h}+\mu$ and it returns to phase 1. From the proof of Lemma~\ref{sublem:4/h_twostepdecrease}, $u_{K+2}^2\le u_{K+1}^2+\mu\le\frac{2}{h}+2\mu$ and we know either $u_{K+2}^2<u_{K+1}^2$ or $u_{K+3}^2<u_{K+1}^2$ and so on until it re-enters $u_i^2\le\frac{2}{h}$ for some $i\ge K$. Therefore, all the $u_k^2\le\frac{2}{h}+2\mu$ once GD enters $\{\norm{x}^2+\norm{y}^2\le\frac{2}{h}\}$.
 \end{proof}
 
 Then by Lemma~\ref{sublem:2/h_all_u_k^2_bound}, we can always pick a $k$th iteration such that $u_k^2\le\frac{2}{h}$. Also, when $u_0^2>8\mu$, we have $h\mu<\frac{1}{3}$. Together with the choice of $h=\frac{1}{(2+c)\mu}$ when $0<u_0^2\le 8\mu$, we have $h\mu\le\frac{1}{3}$. Then we obtain the following convergence of GD inside phase 2.
 \begin{lemma}
 \label{lem:convergein2/h}
Given $h\mu\le\frac{1}{3}$, if GD enters $\{\|x\|^2+\|y\|^2\le \frac{2}{h} \}$, it  converge to $x^\top y=\mu$ inside this region except for a measure-0 set of initial conditions.
 \end{lemma}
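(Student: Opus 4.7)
The plan is to work with the reduced coordinates $e_k := x_k^\top y_k - \mu$ and $u_k^2 := \|x_k\|^2+\|y_k\|^2$, for which the GD iteration yields
$$
e_{k+1} = s_k\,e_k, \qquad s_k = 1 - h u_k^2 + h^2 e_k^2 + h^2 \mu e_k,
$$
together with the one-step norm identity $u_{k+1}^2 - u_k^2 = h(\mu - x_k^\top y_k)\bigl((4-h u_k^2)x_k^\top y_k + h u_k^2 \mu\bigr)$ already exploited in phase 1. By Lemma~\ref{sublem:2/h_all_u_k^2_bound}, once the trajectory enters $\{u^2 \le 2/h\}$ it remains in the bounded set $\{u^2 \le 2/h + 2\mu\}$ forever, so $s_k$ and $e_k$ stay in a compact region where smooth quantitative estimates are available. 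The assumption $h\mu \le 1/3$ enters exactly to keep $h^2\mu e_k$ and $h^2 e_k^2$ small perturbations of $1 - h u_k^2 \in [-1-2h\mu,\,1]$.

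I would then partition the admissible region into four pieces according to the signs of $e_k$ and $s_k$, mirroring Figure~\ref{fig:flow_chart_scalar_decomposition_proof}. In the \emph{unilateral} cells ($s_k \ge 0$), $|e_{k+1}| \le s_k\,|e_k|$ with $s_k$ bounded away from $1$ as long as $u_k^2$ is bounded away from $0$; moreover in these cells $u_{k+1}^2 - u_k^2$ has a definite sign, so $(u_k^2)$ is monotone and bounded, hence Cauchy, while $e_k \to 0$ geometrically. In the \emph{transversal} cells ($s_k < 0$), $e_k$ flips sign each step, so I would analyze the two-step composite factor $s_k s_{k+1}$. Using Lemma~\ref{sublem:2/h_increase_in_one_step} to bound how far $u_{k+1}^2$ can exceed $u_k^2$ during an oscillation (at most by $O(\mu)$), and Taylor-expanding $s_k$ around $e_k = 0$, I would show that for $|e_k|$ sufficiently small $|s_k s_{k+1}| < 1 - c$ for a constant $c > 0$, giving two-step contraction. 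Combining, once $|e_k|$ drops below a fixed threshold the iterate lies in a local basin of contraction and $e_k \to 0$.

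Next I would show the trajectory cannot avoid that basin indefinitely. By Lemma~\ref{sublem:2/h_all_u_k^2_bound}, any excursion back into phase~1 lasts only a bounded number of steps and re-enters phase~2, and by the one-/two-step decrease of $u_k^2$ in Lemmas~\ref{sublem:4/h_onestepdecrease} and \ref{sublem:4/h_twostepdecrease}, such excursions cannot recur infinitely without producing an unbounded decrease, a contradiction. Thus after finitely many excursions the iterate is permanently in $\{u^2 \le 2/h\}$, and by compactness together with the cell analysis above, $|e_k|$ eventually enters the basin of contraction and the full iterate $(x_k,y_k)$ converges to a point in $\{x^\top y = \mu\}$. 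The exceptional measure-zero set comprises initial conditions whose orbit eventually lands on the stable manifold of a saddle of $f$ (namely $\{x=0\}\cup\{y=0\}$): by the same Corollary~\ref{cor:nonsingularmap} argument used in Lemma~\ref{lemma:>2/hnotconverge}, the countable union $\bigcup_{n\ge 0}\psi^{-n}(\{x=0\}\cup\{y=0\})$ is Lebesgue-negligible because the singular locus $\{\det D\psi = 0\}$ is.

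The main obstacle is the quantitative bookkeeping for the transversal case: when $u_k^2$ is close to $2/h$, $s_k$ can be close to $-1$, so the factor $|s_k s_{k+1}|$ is only strictly less than $1$ after using that a transversal step necessarily perturbs $u_{k+1}^2$ away from $2/h$ in a controlled direction, and that $h\mu \le 1/3$ suppresses the bad $h^2\mu e_k$ term. Producing a clean uniform constant $c>0$ valid across all four sub-cells, rather than only asymptotically near $e=0$, is the delicate bit; the rest of the argument is a routine combination of the compactness afforded by Lemma~\ref{sublem:2/h_all_u_k^2_bound} with the already-established phase-1 decay estimates.
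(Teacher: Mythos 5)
Your overall architecture (the reduced variables $e_k$ and $u_k^2$, the invariant bounded set from Lemma~\ref{sublem:2/h_all_u_k^2_bound}, the split into unilateral versus transversal cells, and the preimage argument for the exceptional null set) matches the paper's proof. But the central estimate you rely on --- ``for $|e_k|$ sufficiently small, $|s_k s_{k+1}| < 1-c$'' in the transversal cells --- is false near the boundary $u_k^2 = 2/h$, and this borderline regime is exactly what the lemma is about. Writing $s_k = 1 - hu_k^2 + h^2\mu e_k + h^2 e_k^2$, take $u_k^2 = 2/h$ and $e_k$ small and negative: then $s_k = -1 + h^2 e_k(\mu + e_k) < -1$, and from the exact update $s_{k+1} = s_k + h^2\mu e_k(3+s_k) + h^2 e_k^2(3 + s_k^2 - hu_k^2)$ one gets $s_{k+1} \approx -1 + 3h^2\mu e_k < -1$ as well, so $s_k s_{k+1} \approx 1 - 4h^2\mu e_k > 1$. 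The two-step map is \emph{expansive} in $e$ there, and no local basin of uniform contraction exists around $\{e=0, u^2 \approx 2/h\}$. You flag this as ``the delicate bit,'' but it is not a bookkeeping refinement of your claim --- the claim has to be replaced.

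What the paper does instead is a three-part argument in this regime: (i) while $s_k \le -1$ (so $|e_k|$ is growing), Lemma~\ref{sublem:-mu<a<mu_s<-1_u_two-step_decrease} shows $u_{k+2}^2 - u_k^2 \le -h(4-4h\mu)e_k^2$, and since $|e_k|$ is bounded below in this sub-case, $u_k^2$ drops by a definite amount every two steps until $s_k$ must rise above $-1$; (ii) once $s_k > -1$, Lemmas~\ref{lem:s:x^T y<mu} and \ref{lem:s:x^T y>mu} establish the \emph{non-uniform} lower bound $s_{k+2} > -1 + \min\{c_0,\, c_1 e_k^2\}$ (the margin degrades as $e_k \to 0$, so there is never a uniform geometric rate); (iii) convergence is then closed not by geometric contraction but by a monotone-limit contradiction: $|e_k|$ is eventually nonincreasing, and if it converged to $C>0$ the bound in (ii) would give $|s_k|$ uniformly below $1$, forcing further decrease. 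You would also need the paper's extra invariant for the unilateral cells --- when $s_k>0$ and $|x_k^\top y_k|<\mu$ the norm $u_k^2$ need not be monotone, and the paper instead uses $|x_{k+1}-y_{k+1}|<|x_k-y_k|$ (Lemma~\ref{sublem:x-y_bound}) to keep the orbit in the monotone region. As it stands, your proposal identifies the right battlefield but does not supply the weapon that wins it.
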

 \begin{proof}
 First, from Corollary~\ref{cor:nonsingularmap}, we know the set of points converging to $(0,0)$ in finite steps is measure 0. For all the other points, we have the following discussion.

 If $s_k=0$, then $x_{k+1}^\top y_{k+1}=\mu$, i.e., it converges. 
 
 If $x_k^\top y_k<-\mu$ and $\frac{1}{h}+2h\mu^2<u_k^2\le\frac{2}{h}$, then by Lemma~\ref{sublem:2/h_onestepdecrease},
 \begin{align*}
u_{k+1}^2-u_{k}^2=h(\mu-x_k^\top y_k)((4-h u_k^2)x_k^\top y_k+h u_k^2\mu)\le 2h(\mu^2-(x_k^\top y_k)^2)<0.
\end{align*}
 Hence for $\delta_1>0$, when $-\mu-\delta_1<x_k^\top y_k<-\mu$, we have  $x_{k+1}^\top y_{k+1}>\mu$ and we will skip this step and directly consider the $(k+1)$th iteration; when $x_k^\top y_k\le-\mu-\delta_1$, $u_{k+1}^2-u_k^2\le -2h((\mu+\delta_1)^2-\mu^2)$. Namely when $x_k^\top y_k<-\mu$, either $u_{k+1}^2$ is some constant away from $u_k^2$ or we can ignore the decrease in this step and directly look at the next one with $x_{k+1}^\top y_{k+1}>\mu$.
 
 If $x_k^\top y_k<-\mu$ and $u_k^2\le \frac{1}{h}+2h\mu^2$, then 
  \begin{align*}
u_{k+1}^2-u_{k}^2&=h(\mu-x_k^\top y_k)(4x_k^\top y_k+h u_k^2(\mu-x_k^\top y_k))\\
&\le h(\mu-x_k^\top y_k)(4x_k^\top y_k+(1+2h^2\mu^2)(\mu-x_k^\top y_k))\\
&\le -4h\mu^2(1-2h^2\mu^2)<0.
\end{align*} 
 
 If $x_k^\top y_k>3\mu$, then $u_{k+1}^2-u_k^2\le 2h(\mu^2-x_k^2y_k^2)<-16h\mu^2$. Also, when $u_k^2\le 6\mu$, $x_k^\top y_k\le 3\mu$. 
 
 Next, for the rest of the region, we first consider $\frac{1}{h}+2h\mu^2<u_k^2\le \frac{2}{h}$ (by Lemma~\ref{sublem:2/h_all_u_k^2_bound}, we are always able to find $u_k^2\le\frac{2}{h}$). In this region, by Lemma~\ref{sublem:oscillate}, we know $s_k<0$, i.e., $(x_k^\top y_k-\mu)(x_{k+1}^\top y_{k+1}-\mu)<0$. We divide the whole region into two parts: $x_k^\top y_k>\mu$ and $x_k^\top y_k<\mu$. Therefore all the $x_{k+2i}$ should be on the same side for $i$ such that $\frac{1}{h}+2h\mu^2<u_{k+2i}^2\le \frac{2}{h}$.

 If $-\mu\le x_k^\top y_k<\mu$ and $s_k\le -1$, by Lemma~\ref{sublem:-mu<a<mu_s<-1_u_two-step_decrease}, we have $u_{k+2}^2-u_k^2\le -4h(1-h\mu)(\mu-x_k^\top y_k)^2<0$. Hence when GD does not converge, i.e., $\{(x_{k+2n},y_{k+2n}) \}_{n=0}^\infty$ does not converge 
 , $u_k^2$ will keep decreasing with $u_{k+2i}^2\le\frac{2}{h}$ for all $K+2i$ with $i\ge 0$ such that $s_{k+2i}\le -1$. Moreover, there exists $N>0$ s.t. $s_{k+2N}>-1$ because $s_k=1-hu_k^2-h^2x_k^\top y_k(\mu-x_k^\top y_k)$ and if $x_{k+2i},y_{k+2i}$ is the first iteration to leave this region, then it has to satisfy $x_{k+2i}^\top y_{k+2i}<-\mu$ which implies $s_{k+2i}>-1$, and then it follows from Lemma~\ref{lem:s:x^T y>mu} and~\ref{lem:s:x^T y<mu} just as the following two paragraphs of discussion; if it never leaves this region and GD is not converging, i.e., $|\mu-x_{k}^\top y_k|$ has a lower bound, then $u_{k+2i}^2$ will keep decreasing until $s_{k+2N}>-1$ (from the proof of Lemma~\ref{lem:s:x^T y<mu}, if $\mu-x_k^\top y_k$ is very small, then $s_{k+1}<s_k$ meaning both sides are blowing up and therefore $|\mu-x_k^\top y_k|$ has a lower bound if $s_k\le -1$). 
 
If $-\mu\le x_k^\top y_k<\mu$ and $s_k>-1$, from Lemma~\ref{lem:s:x^T y<mu} and Lemma~\ref{lem:s:x^T y>mu}, similar to the previous discussion, there exists $N_1>0$, s.t., for all $k>N_1$, $s_k>\min\{-1+c_0,-1+c_1(\mu-x_k^\top y_k)^2\}$ for some $c_0,c_1>0$, namely $|\mu-x_k^\top y_k|$ strictly decreases. Then it will either converge in $\frac{1}{h}+2h\mu^2<\|x\|^2+\|y\|^2\le \frac{2}{h}$
or enter $\|x\|^2+\|y\|^2\le \frac{1}{h}+2h\mu^2$.

If $\mu<x_k^\top y_k\le 3\mu$, from the definition of $s_k$, we have $s_k>-1$. Also by Lemma~\ref{lem:s:x^T y>mu}, we know $s_m>\min\{-1+c_0,-1+c_1(\mu-x_k^\top y_k)^2\}$ for all $m> k$. Hence similarly, it will either converge or enter $u_k^2\le \frac{1}{h}+2h\mu^2$.

Then we consider $u_k^2\le \frac{1}{h}+2h\mu^2$. From Lemma~\ref{sublem:s<1} and Lemma~\ref{sublem:lower_bound_s_in_1/h+2hmu^2}, we know $|s_k|<1$ and will be away from 1 by a constant in this area. We will show that GD stays bounded in the converging domain.
If $s_k\le 0$, it follows from the previous discussion and the proof of Lemma~\ref{sublem:s<1} and Lemma~\ref{sublem:lower_bound_s_in_1/h+2hmu^2}. If $s_k>0$, 
 when the trajectory is in $\{|x^\top y|>\mu\}$, from Lemma~\ref{sublem:2/h_onestepdecrease}, we have $u_{k+1}^2< u_k^2$; when it is in $\{|x^\top y|<\mu\}$, then $|x_{k+1}-y_{k+1}|<|x_k-y_k|$ from Lemma~\ref{sublem:x-y_bound}. Hence the trajectory will stay inside the monotone decreasing region. Next, since $|\mu-x_k^\top y_k|$ monotonically decreases and is lower bounded by 0, we have that $|\mu-x_k^\top y_k|$ converges. If $|\mu-x_k^\top y_k|$ converges to $C>0$, then $\min\{-1+c_0,-1+c_1(\mu-x_k^\top y_k)^2\}<s_k<1-c_2$ for some $c_0,c_1,c_2>0$ meaning $|\mu-x_k^\top y_k|$ does not converges to C. Contradiction. Therefore, $|\mu-x_k^\top y_k|$ converges to 0.
\end{proof}

 \vskip 0.3in

 \begin{lemma}
 \label{lem:s:x^T y<mu}
 When $-\mu\le x_k^\top y_k<\mu$ and $\frac{1}{h}+2h\mu^2<u_k^2\le \frac{2}{h}+2\mu$, if $s_k>-1$, then $s_{k+2}>\min\{-\frac{3}{4}-\frac{h^2\mu^2}{4}, -\frac{5h^2\mu^2}{2}, -1+h^2(\mu-x_k^\top y_k)^2 \}>-1$, for $\frac{9}{2}h^2\mu^2<1$.
 \end{lemma}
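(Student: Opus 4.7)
The plan is to write $s_{k+2}$ as an explicit function of the step-$k$ state and then bound it from below by a three-way case analysis matching the three entries of the minimum.

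First I would unfold two iterates. Setting $a_k = x_k^\top y_k$ and $v_k = u_k^2$, the identities
\[
a_{k+1} - \mu = s_k\,(a_k - \mu), \qquad v_{k+1} = v_k + h(\mu - a_k)\bigl((4 - hv_k)\,a_k + h v_k \mu\bigr),
\]
together with the analogous formulas shifted by one index, give closed-form expressions for $(a_{k+2}, v_{k+2})$ as polynomials in $(a_k, v_k)$ (with $s_k$ appearing as the shorthand $1 - hv_k - h^2 a_k(\mu - a_k)$). Substituting into the definition $s_{k+2} = 1 - h v_{k+2} - h^2 a_{k+2}(\mu - a_{k+2})$ and collecting terms produces the master expression I would then bound. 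A useful reorganization is to write $s_{k+2} = s_k - h(v_{k+2} - v_k) - h^2\bigl(a_{k+2}(\mu - a_{k+2}) - a_k(\mu - a_k)\bigr)$ so that contributions from changes in $v$ and in $a(\mu - a)$ are separated.

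Next I would partition the admissible region by the size of $|\mu - a_k|$ relative to $\mu$. Since $a_{k+1} - \mu = s_k(a_k - \mu)$ and $|s_k| \le 1 + hv_k + h^2\mu^2$ is bounded on the prescribed range of $v_k$, the quantity $(\mu - a_{k+1})^2 = s_k^2(\mu - a_k)^2$ feeds a controlled quadratic gain into $s_{k+2}$ via the $h^2 a_{k+2}(\mu - a_{k+2})$ term; when $|\mu - a_k|$ is bounded below, this directly yields $s_{k+2} > -1 + h^2(\mu - a_k)^2$. For $a_k$ close to $\mu$, the two-step drift $v_{k+2} - v_k$ is of order $(\mu - a_k)$ (using the formula for $v_{k+1} - v_k$ and iterating), so a Taylor-style expansion in the small quantity $\mu - a_k$, combined with the hypothesis $s_k > -1$ and $v_k \le 2/h + 2\mu$, furnishes the bound $s_{k+2} > -\tfrac{3}{4} - \tfrac{h^2\mu^2}{4}$. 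The remaining intermediate regime, where $a_k$ is neither close to $\mu$ nor near $-\mu$ and $s_k$ is near $-1$, is handled by plugging the worst-case values $v_k = 1/h + 2h\mu^2$ and $v_{k+1}$ from the explicit update into the master expression; here one uses $hv_k \le 1 + 2h^2\mu^2$ and the range $-\mu \le a_k < \mu$ to bound $|a(\mu - a)| \le 2\mu^2$, producing the $-\tfrac{5 h^2 \mu^2}{2}$ bound. The hypothesis $\tfrac{9}{2}h^2\mu^2 < 1$ is then used to verify that each of the three bounds is strictly greater than $-1$: for the first, $-\tfrac{3}{4} - \tfrac{h^2\mu^2}{4} > -1$ is equivalent to $h^2\mu^2 < 1$; for the second, $-\tfrac{5 h^2 \mu^2}{2} > -1$ requires exactly $h^2\mu^2 < \tfrac{2}{5}$, which follows from $\tfrac{9}{2}h^2\mu^2 < 1$; the third is trivial.

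The main obstacle is the algebraic bookkeeping in the master expression: $s_{k+2}$ is a polynomial of comparatively high degree in $(a_k, v_k)$ with many nearly-cancelling terms, and the choice of decomposition has to be made so that each of the three sub-bounds emerges from a clean dominant term rather than from a delicate cancellation. A secondary nuisance is that $s_{k+1}$ itself may lie outside $(-1, 1)$, so one cannot simply iterate any one-step bound on $s$; it is necessary to track $v_{k+1}$ and $a_{k+1}$ directly, and the fact that the condition on $v_k$ does not immediately transfer to $v_{k+1}$ forces reliance on the a priori boundedness $v_j \le 2/h + 2\mu$ guaranteed by Lemma~\ref{sublem:2/h_all_u_k^2_bound}.
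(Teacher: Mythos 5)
Your setup (unfolding two iterates, the recursions $a_{k+1}-\mu = s_k(a_k-\mu)$ and $v_{k+1}-v_k = h(\mu-a_k)((4-hv_k)a_k+hv_k\mu)$, and the decomposition $s_{k+2}=s_k - h(v_{k+2}-v_k)-h^2(a_{k+2}(\mu-a_{k+2})-a_k(\mu-a_k))$) is sound and matches the paper's starting point, but the case decomposition you propose --- splitting on the size of $\epsilon := |\mu - a_k|$ and assigning one of the three bounds to each regime --- does not work as assigned. The fatal case is $\epsilon$ small with $s_k$ close to $-1$: then, exactly as you observe, $v_{k+2}-v_k = \mathcal{O}(\epsilon)$ and hence $s_{k+2}=s_k+\mathcal{O}(\epsilon)$, so $s_{k+2}$ can be arbitrarily close to $-1$ and in particular far below the bound $-\tfrac34-\tfrac{h^2\mu^2}{4}$ you claim for the ``$a_k$ close to $\mu$'' regime. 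The only entry of the minimum that can survive there is $-1+h^2\epsilon^2$, and proving it requires showing that the $\mathcal{O}(\epsilon)$ increment is \emph{favorable} near $s_k=-1$, i.e.\ that $s_k - h^2\mu\epsilon\,(1+s_k)(3+s_k) > -1$ and that the residual $\mathcal{O}(\epsilon^2)$ and $\mathcal{O}(\epsilon^3)$ terms are bounded below by $h^2\epsilon^2$; this is a sign analysis in $s_k$, not in $\epsilon$. Conversely, your claim that a lower bound on $\epsilon$ ``directly yields'' $s_{k+2}>-1+h^2\epsilon^2$ is unsubstantiated: for $s_k$ away from $-1$ (say $s_k\ge -3/4$) and $\epsilon$ large, the two-step expression contains a term $-h^2\epsilon\mu(3+4s_k+s_ks_{k+1})$ that need not be dominated by $h^2\epsilon^2(1+s_k^2)$, which is precisely why the paper, after completing the square in $s_{k+1}$ (at $s_{k+1}=\mu/(2s_k\epsilon)$) and then in $\epsilon$ (at $\epsilon=(3+4s_k)\mu/(2(1+s_k^2))$), ends up with the weaker constants $-\tfrac34-\tfrac{h^2\mu^2}{4}$ and $-\tfrac{5h^2\mu^2}{2}$ rather than $-1+h^2\epsilon^2$ in that regime.

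So the missing idea is that the correct partition is on $s_k$ (the paper splits at $3+4s_k\gtrless 0$, i.e.\ $s_k\gtrless -3/4$), with the three entries of the minimum arising as: the first two from jointly minimizing the quadratic in $(s_{k+1},\epsilon)$ when $s_k\ge -3/4$, evaluated at the endpoints $s_k=0$ and $s_k=-3/4$; the third from the delicate near-$(-1)$ analysis when $-1<s_k<-3/4$. Your hypothesis-usage check ($\tfrac92 h^2\mu^2<1$ guaranteeing each entry exceeds $-1$) and your caution about $s_{k+1}$ leaving $(-1,1)$ are both correct, but as organized the argument has a regime in which the claimed bound is false, so the proof does not go through without reorganizing the cases around $s_k$.
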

 \begin{proof}
 From previous statement, $s_k<0$ when $u_k^2> \frac{1}{h}+2h\mu^2$. For
  $s_k=1-hu_k^2-h^2x_k^\top y_k(\mu-x_k^\top y_k)$, let $\epsilon=\mu-x_k^\top y_k$. Then $x_{k+1}^\top y_{k+1}-\mu=-s_k\epsilon$.
  \begin{align*}
  s_{k+1}=1-hu_{k+1}^2-h^2x_{k+1}^\top y_{k+1}(\mu-x_{k+1}^\top y_{k+1})=s_k-h^2\epsilon\mu(3+s_k)+h^2\epsilon^2(3+s_k^2-hu_k^2).
  \end{align*}
  Hence if $\epsilon$ is very small, we have $s_{k+1}<s_k$.
  Also, $x_{k+2}y_{k+2}-\mu=s_{k+1}(x_{k+1}^\top y_{k+1}-\mu)=-s_{k+1}s_k\epsilon$.
  Moreover, $u_{k+1}^2\le\frac{2}{h}+2\mu$ by Lemma~\ref{sublem:all_u_k^2_bounded_by_4/h}.

    \begin{align*}
  s_{k+2}&=1-hu_{k+2}^2-h^2x_{k+2}^\top y_{k+2}(\mu-x_{k+2}^\top y_{k+2})\\
  &=s_k-h^2\epsilon\mu(3+4s_k+s_ks_{k+1})+h^2\epsilon^2(3+s_k^2-hu_k^2+s_k^2(3+s_{k+1}^2-hu_{k+1}^2)))\\
  &\ge s_k-h^2\epsilon\mu(3+4s_k+s_ks_{k+1})+h^2\epsilon^2(1+(1+s_{k+1}^2)s_k^2)\\
  &\overset{s_{k+1}=\mu/(2s_k\epsilon)}{\ge}s_k-\frac{h^2\mu^2}{4}-h^2\epsilon(3+4s_k)+h^2\epsilon^2(1+s_k^2)
  \end{align*}
If $3+4s_k\ge0$, then 
\begin{align*}
s_{k+2}&\ge s_k-\frac{h^2\mu^2}{4}-h^2\epsilon(3+4s_k)+h^2\epsilon^2(1+s_k^2)\\
&\overset{\epsilon=(3+4s_k)\mu/(2(1+s_k^2))}{\ge} s_k-\frac{h^2\mu^2}{4}-\frac{h^2\mu^2(3+4s_k)^2}{4(1+s_k^2)}\\
&\overset{s_k=0\ or\ s_k=-3/4}{\ge}\min\{-\frac{3}{4}-\frac{h^2\mu^2}{4}, -\frac{5h^2\mu^2}{2} \}>-1.
\end{align*}
If $3+4s_k<0$, i.e., $-1<s_k<-\frac{3}{4}$, then
 \begin{align*}
  s_{k+2}&=s_k-h^2\epsilon\mu(3+4s_k+s_ks_{k+1})+h^2\epsilon^2(3+s_k^2-hu_k^2+s_k^2(3+s_{k+1}^2-hu_{k+1}^2))\\
  &=s_k-h^2\epsilon\mu(3+4s_k+s_k(s_k-h^2\epsilon\mu(3+s_k)+h^2\epsilon^2(3+s^2-hu_k^2)))\\
  &\qquad+h^2\epsilon^2(3+s_k^2-hu_k^2+s_k^2(3+s_{k+1}^2-hu_{k+1}^2))\\
  &=s_k - h^2 \mu \epsilon (s_k (3 + s_k) + 3 + s_k) - 
 h^4 \mu \epsilon^3 s_k (3 + s_k^2 - h u_k^2) \\
 &\qquad+ 
 h^2 \epsilon^2 (3 + s_k^2 - h u_k^2 + s_k^2 (3 + s_{k+1}^2 - h u_{k+1}^2) + 
    h^2 \mu^2 s_k (3 + s_k)).
  \end{align*}
When $-1<s_k<\frac{1-3h^2\epsilon\mu}{h^2\epsilon\mu}$,  $s_k - h^2 \mu \epsilon (s_k (3 + s_k) + 3 + s_k)>-1$. Since $\epsilon\le 2\mu$, we have  $\frac{1-3h^2\epsilon\mu}{h^2\epsilon\mu}\ge \frac{1-6h^2\mu^2}{2h^2\mu^2}\ge-\frac{3}{4}$ for $\frac{9}{2}h^2\mu^2<1$. Also, 
\begin{align*}
&- 
 h^4 \mu \epsilon^3 s_k (3 + s_k^2 - h u_k^2)>\frac{3}{4}h^4\mu\epsilon^3>0, \\
 &h^2 \epsilon^2 (3 + s_k^2 - h u_k^2 + s_k^2 (3 + s_{k+1}^2 - h u_{k+1}^2) + 
    h^2 \mu^2 s_k (3 + s_k))>h^2\epsilon^2>0.
\end{align*}
Hence 
\begin{align*}
s_{k+2}&>-1 - 
 h^4 \mu \epsilon^3 s_k (3 + s_k^2 - h u_k^2) \\
 &\qquad+ 
 h^2 \epsilon^2 (3 + s_k^2 - h u_k^2 + s_k^2 (3 + s_{k+1}^2 - h u_{k+1}^2) + 
    h^2 \mu^2 s_k (3 + s_k))>-1+h^2\epsilon^2
\end{align*}
 \end{proof}

 \begin{lemma}
 \label{lem:s:x^T y>mu}
 When $x_k^\top y_k>\mu$ and $\frac{1}{h}+2h\mu^2<u_k^2\le\frac{2}{h}+2\mu$, if $s_k>-1$, then $s_{k+1}>s_k$ and $s_{k+2}>\min\{-1+h^2(\mu-x_k^\top y_k)^2(\frac{1}{2}-\frac{3}{2}h^2\mu^2), -\frac{1}{2}-\frac{3}{2}h^4(\mu-x_k^\top y_k)^2\mu^2\}>-1$, for $8h^2\mu^2<1$. Moreover, if $u_{k+1}^2\le \frac{2}{h}$,  $s_k\le-1$, and $s_{k+1}>-1$, then  $s_{k+2}>s_k+h^2(\mu-x_k^\top y_k)^2(1-8h^2\mu^2)$. 
 \end{lemma}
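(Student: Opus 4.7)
I will follow the algebraic framework of Lemma~\ref{lem:s:x^T y<mu}. Set $\epsilon = \mu - x_k^\top y_k$, which is now negative since $x_k^\top y_k > \mu$. The one- and two-step recursions derived in the proof of that lemma,
\[
s_{k+1} - s_k = -h^2 \epsilon \mu (3 + s_k) + h^2 \epsilon^2 (3 + s_k^2 - h u_k^2),
\]
\[
s_{k+2} - s_k = -h^2 \epsilon \mu (3 + 4 s_k + s_k s_{k+1}) + h^2 \epsilon^2 \bigl( 3 + s_k^2 - h u_k^2 + s_k^2 (3 + s_{k+1}^2 - h u_{k+1}^2) \bigr),
\]
remain valid here since they follow from the GD update alone. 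Lemma~\ref{sublem:2/h_all_u_k^2_bound} provides the uniform bound $u_j^2 \le 2/h + 2 \mu$ for all relevant iterates, and the assumption $8 h^2 \mu^2 < 1$ will furnish the slack needed to conclude.

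For $s_{k+1} > s_k$, I note that $\epsilon < 0$ and $s_k > -1$ make the first term $-h^2 \epsilon \mu (3 + s_k)$ strictly positive, while $u_k^2 \le 2/h + 2 \mu$ gives $3 + s_k^2 - h u_k^2 \ge 1 - 2 h \mu > 0$, so both summands of $s_{k+1} - s_k$ are positive. For the lower bound on $s_{k+2}$ I split on the sign of the linear coefficient $A = 3 + 4 s_k + s_k s_{k+1}$. When $A \ge 0$ the linear-in-$\epsilon$ term is nonnegative (since $-\epsilon > 0$), and minimizing the quadratic coefficient over the admissible range of $(s_k, s_{k+1})$ yields the first branch $-1 + h^2 \epsilon^2 (\tfrac12 - \tfrac32 h^2 \mu^2)$. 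When $A < 0$, which forces $s_k$ close to $-1$, I substitute the one-step recursion for $s_{k+1}$ into the two-step expression and optimize the resulting polynomial in $(s_k, \epsilon)$ on the boundary of the subcase, producing the second branch $-\tfrac12 - \tfrac32 h^4 \epsilon^2 \mu^2$. This is the direct analogue of the case split $3 + 4 s_k \ge 0$ vs.\ $< 0$ used in Lemma~\ref{lem:s:x^T y<mu}.

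For the moreover statement I write $s_k = -1 - a$ with $a \ge 0$ and $s_{k+1} = -1 + b$ with $b > 0$. A direct expansion gives $A = -(3 a + b + a b) \le 0$, so the linear-in-$\epsilon$ term of $s_{k+2} - s_k$ equals $-h^2 |\epsilon| \mu (3 a + b + a b)$ and is nonpositive, but its magnitude is controlled because $s_k \le -1$ combined with $u_k^2 \le 2/h + 2 \mu$ forces $a \le 2 h \mu$. Using the strengthened hypothesis $u_{k+1}^2 \le 2/h$ together with $s_k^2 \ge 1$ shows that the quadratic coefficient is at least $2 - 2 h \mu$, so the positive quadratic term dominates after invoking $8 h^2 \mu^2 < 1$, yielding $s_{k+2} - s_k \ge h^2 \epsilon^2 (1 - 8 h^2 \mu^2)$. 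The main obstacle is the bookkeeping of the case analysis and the worst-case optimization in the second branch of the main bound: identifying the tuple $(s_k, s_{k+1})$ that minimizes $s_{k+2}$ subject to the one-step recursion and the constraint $A < 0$ is the delicate step, but it is structurally the same finicky polynomial optimization as the corresponding step of Lemma~\ref{lem:s:x^T y<mu}, only with the sign of $\epsilon$ reversed.
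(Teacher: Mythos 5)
Your recursions for $s_{k+1}-s_k$ and $s_{k+2}-s_k$ and the monotonicity claim $s_{k+1}>s_k$ are correct, and the overall framework matches the paper's. However, the case analysis you propose for the main lower bound on $s_{k+2}$ does not produce the two branches of the stated minimum. The two branches correspond to a split on the \emph{value} of $s_k$ (namely $s_k\le-\tfrac12$ versus $s_k>-\tfrac12$), not on the sign of $A=3+4s_k+s_ks_{k+1}$. The paper first substitutes the one-step recursion into the two-step expression, which turns the linear-in-$\epsilon$ coefficient into $3+4s_k+s_k^2=(s_k+1)(s_k+3)>0$ for every $s_k>-1$, so no sign split on the linear coefficient is ever needed; the residual quadratic coefficient is then $2s_k^2+h^2\mu^2 s_k(3+2s_k^2+s_k)$, which is bounded below by $\tfrac12-\tfrac32h^2\mu^2>0$ only when $s_k\le-\tfrac12$ (giving the first branch $-1+h^2\epsilon^2(\tfrac12-\tfrac32h^2\mu^2)$), while for $s_k\in(-\tfrac12,0]$ it can be slightly negative (about $-\tfrac32h^2\mu^2$) and one instead uses that the base $s_k$ plus the positive linear term already exceeds $-\tfrac12$ (giving the second branch $-\tfrac12-\tfrac32h^4\epsilon^2\mu^2$). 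Your plan of "minimizing the quadratic coefficient over the admissible range" in the $A\ge0$ case would return the negative value $\approx-\tfrac32h^2\mu^2$, and combined with $s_k>-1$ this only yields $s_{k+2}>-1-\tfrac32h^4\epsilon^2\mu^2$, which is weaker than both branches. Your attribution of the second branch to the $A<0$ case is also backwards: $A<0$ forces $s_k\lesssim-\tfrac35$, which is precisely the regime of the \emph{first} branch.

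The "moreover" part as sketched also has a gap. Writing $s_k=-1-a$, the bound $a\le 2h\mu$ (which is correct) is not sufficient to dominate the nonpositive linear term $-h^2|\epsilon|\mu(3a+b+ab)$ by the quadratic gain $h^2\epsilon^2(\cdot)$: the contribution $3h^2|\epsilon|\mu a$ can be of order $h^3|\epsilon|\mu^2$, which exceeds $Ch^2\epsilon^2$ whenever $|\epsilon|\ll h\mu^2$, and you do not control $b$ at all. To close this you need $a,b=\mathcal{O}(h^2|\epsilon|\mu)$, which follows from combining $s_k\le-1<s_{k+1}$ with the one-step recursion ($b=-a+h^2\epsilon\mu(3+s_k)+\cdots$ forces $a<h^2\epsilon\mu(3+s_k)+\cdots$); alternatively, the paper sidesteps the issue entirely by rewriting the linear coefficient as $3+4s_{k+1}+s_{k+1}^2=(s_{k+1}+1)(s_{k+1}+3)\ge0$ (absorbing the difference into the $\epsilon^2$ bracket), so that the linear term is nonnegative and can simply be discarded, leaving the quadratic bracket bounded below by $1-8h^2\mu^2$ via $u_k^2\le 4/h$, $u_{k+1}^2\le 2/h$, and $\epsilon\le 2\mu$.
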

 \begin{proof}
  From previous statement, $s_k<0$ when $u_k^2> \frac{1}{h}+2h\mu^2$.  Without loss of generality, we can just assume $u_k^2<\frac{3}{h}$ (otherwise $u_{k+2}^2<\frac{3}{h}$ and we can use this as our starting point). For $x_k^\top y_k>\mu$, $s_k=1-hu_k^2-h^2x_k^\top y_k(\mu-x_k^\top y_k)$. Let $x_k^\top y_k-\mu=\epsilon$. Then 
  \begin{align*}
  s_{k+1}=1-hu_{k+1}^2-h^2x_{k+1}^\top y_{k+1}(\mu-x_{k+1}^\top y_{k+1})=s_k+h^2\epsilon\mu(3+s_k)+h^2\epsilon^2(3+s_k^2-hu_k^2)>s_k,
  \end{align*}
  where $s_k>-3$ in this region.
  Also, $x_{k+2}^\top y_{k+2}-\mu=s_{k+1}s_k\epsilon$.
  
  Then for $s_k\le-1$, $u_{k+1}^2\le \frac{2}{h}$,  and $s_{k+1}>-1$,
\begin{align*}
s_{k+2}&=s_k+h^2\epsilon\mu(3+4s_k+s_ks_{k+1})+h^2\epsilon^2(3+s_k^2-hu_k^2+s_k^2(3+s_{k+1}^2-hu_{k+1}^2))\\
&=s_k+h^2\epsilon\mu(3+4s_{k+1}+s_{k+1}^2)+h^2\epsilon^2\big[(3+s_k^2-hu_k^2)(1-4h^2\epsilon\mu-h^2\epsilon\mu s_{k+1})\\
&\qquad+s_k^2(2-hu_{k+1}^2)+s_k^2(1+s_{k+1}^2)-h^2\mu^2(4+s_{k+1})(3+s_k)  \big]\\
&> s_k+h^2\epsilon^2(1-8h^2\mu^2 )
\end{align*}
where the inequality is achieved by $s_{k+1}>-1, u_k^2\le\frac{4}{h},u_{k+1}^2\le \frac{2}{h}, 8h^2\mu^2<1,\epsilon\le 2\mu$.

For $s_k>-1$,
\begin{align*}
s_{k+2}&=s_k+h^2\epsilon\mu(3+4s_k+s_k^2)+h^2\epsilon^2\big[ (3+s_k^2-hu_k^2)(1+s_k h^2\epsilon\mu)\\
&\qquad+s_k^2(3+s_{k+1}^2-hu_{k+1}^2)+h^2\mu^2s_k(3+s_k) \big]\\
&> s_k+h^2\epsilon\mu(3+4s_k+s_k^2)+h^2\epsilon^2\big[ s_k^2(1+2s_k h^2\mu^2)+s_k^2+h^2\mu^2s_k(3+s_k) \big]\\
&=s_k+h^2\epsilon\mu(3+4s_k+s_k^2)+h^2\epsilon^2\big[ 2s_k^2+h^2\mu^2s_k(3+2s_k^2+s_k) \big]
\end{align*}
When $s_k>-1$,  we have $s_k+h^2\epsilon\mu(3+4s_k+s_k^2)>-1$ and $h^2\epsilon\mu(3+4s_k+s_k^2)>0$. Since $8h^2\mu^2<1$, $2s_k^2+h^2\mu^2s_k(3+2s_k^2+s_k)\ge \frac{1}{2}-\frac{3}{2}h^2\mu^2>0$ for $-1<s_k\le-\frac{1}{2}$.  For $-\frac{1}{2}<s_k\le 0$, $ 2s_k^2+h^2\mu^2s_k(3+2s_k^2+s_k) \ge h^2\mu^2s_k(3+2s_k^2+s_k)\ge -\frac{3}{2}h^2\mu^2>-\frac{1}{2}$. Hence $s_{k+2}>\min\{-1+h^2\epsilon^2(\frac{1}{2}-\frac{3}{2}h^2\mu^2), -\frac{1}{2}-\frac{3}{2}h^4\epsilon^2\mu^2\}>-1$.
  
 \end{proof}

 \begin{lemma}
 \label{sublem:2/h_onestepdecrease}
 Given $u_k^2\le\frac{2}{h}$, when $|x_k^\top y_k|>\mu$, $u_{k+1}^2-u_k^2< 0$.
 \end{lemma}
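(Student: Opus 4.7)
The plan is to deduce this lemma as a corollary of Lemma~\ref{sublem:4/h_onestepdecrease} rather than to redo the computation from scratch. The factored identity
\[
u_{k+1}^2 - u_k^2 \;=\; h(\mu - x_k^\top y_k)\bigl[(4 - h u_k^2)\,x_k^\top y_k + h u_k^2 \mu\bigr]
\]
comes directly from expanding $\|x_{k+1}\|^2 + \|y_{k+1}\|^2$ under the GD update (the cross terms $2h(\mu-x_k^\top y_k)x_k^\top y_k$ appear twice and combine into the $4 x_k^\top y_k$ term). This is the same identity already established inside the proof of Lemma~\ref{sublem:4/h_onestepdecrease}, which states that the right-hand side is strictly negative whenever $u_k^2 < 4/h$ and either $x_k^\top y_k > \mu$ or $x_k^\top y_k < -h\mu u_k^2/(4 - h u_k^2)$. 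Since $u_k^2 \le 2/h < 4/h$, the global constraint of that lemma is automatic, so all that remains is to match the hypothesis $|x_k^\top y_k| > \mu$ to one of its two branches.

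For $x_k^\top y_k > \mu$ there is nothing to check: this is verbatim one branch of Lemma~\ref{sublem:4/h_onestepdecrease}. The one substantive step is the case $x_k^\top y_k < -\mu$, where I would show that the negative-side threshold $-h\mu u_k^2/(4 - h u_k^2)$ is at least $-\mu$ under the sharper phase-$2$ assumption $u_k^2 \le 2/h$. This reduces to the one-line algebraic inequality $h u_k^2 \le 4 - h u_k^2$, equivalent to $u_k^2 \le 2/h$, which gives $h u_k^2/(4 - h u_k^2) \le 1$ and hence $-h\mu u_k^2/(4 - h u_k^2) \ge -\mu$. Therefore $x_k^\top y_k < -\mu$ implies $x_k^\top y_k < -h\mu u_k^2/(4 - h u_k^2)$, and the second branch of Lemma~\ref{sublem:4/h_onestepdecrease} yields $u_{k+1}^2 - u_k^2 < 0$.

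There is essentially no obstacle; the content of this lemma is that inside the phase-$2$ ball $u_k^2 \le 2/h$ the two quadratic-in-$x_k^\top y_k$ thresholds of Lemma~\ref{sublem:4/h_onestepdecrease} collapse to the symmetric pair $|x_k^\top y_k| > \mu$. Strict inequality in the conclusion is inherited from the strict hypothesis $|x_k^\top y_k| > \mu$, together with the strict conclusion of Lemma~\ref{sublem:4/h_onestepdecrease}, so no extra care is needed at the boundary $u_k^2 = 2/h$.
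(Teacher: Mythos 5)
Your proposal is correct, but it takes a different route from the paper. The paper proves this lemma by a direct estimate: starting from $u_{k+1}^2-u_k^2=4h(\mu-x_k^\top y_k)x_k^\top y_k+h^2u_k^2(\mu-x_k^\top y_k)^2$, it uses $hu_k^2\le 2$ to bound the second term by $2h(\mu-x_k^\top y_k)^2$ and then factors the sum into $2h(\mu^2-(x_k^\top y_k)^2)$, which is negative exactly when $|x_k^\top y_k|>\mu$. You instead reduce to Lemma~\ref{sublem:4/h_onestepdecrease} by observing that $u_k^2\le 2/h$ forces the negative threshold $-h\mu u_k^2/(4-hu_k^2)$ to lie at or above $-\mu$, so both branches of that lemma are triggered by $|x_k^\top y_k|>\mu$; this is valid (the proof of Lemma~\ref{sublem:4/h_onestepdecrease} only uses $u_k^2<4/h$, not the specific form of $h$ in its hypothesis, and your strictness bookkeeping is fine). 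The trade-off is that your reduction only delivers the qualitative sign, whereas the paper's computation yields the quantitative decrement $u_{k+1}^2-u_k^2\le 2h\bigl(\mu^2-(x_k^\top y_k)^2\bigr)$, and that explicit bound is invoked later (in the proof of Lemma~\ref{lem:convergein2/h}, e.g.\ to get $u_{k+1}^2-u_k^2\le -2h((\mu+\delta_1)^2-\mu^2)$ and $<-16h\mu^2$). So if you adopt your shorter argument, you would still need to record the paper's one-line estimate separately for downstream use.
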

 \begin{proof}
 \begin{align*}
u_{k+1}^2-u_k^2&=h(\mu-x_k^\top y_k)(4x_k^\top y_k+h u_k^2(\mu-x_k^\top y_k))\\
&=h(\mu-x_k^\top y_k)4x_k^\top y_k+h^2 u_k^2(\mu-x_k^\top y_k)^2\\
&\le h(\mu-x_k^\top y_k)4x_k^\top y_k+2h (\mu-x_k^\top y_k)^2\\
&=2h(\mu^2-(x_k^\top y_k)^2)<0
\end{align*}
 \end{proof}
 
 \begin{lemma}
 \label{sublem:2/h_increase_in_one_step}
 Given $u_k^2\le\frac{2}{h}$, $u_{k+1}^2-u_k^2\le\mu$.
 \end{lemma}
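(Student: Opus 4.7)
The quantity $u_{k+1}^2 - u_k^2$ is a quadratic function of the single scalar $a := x_k^\top y_k$ (with $u_k^2$ and $\mu$ treated as parameters), so the plan is to maximize it over all real $a$ and then invoke the standing Phase~2 hypothesis $h\mu \le 1/3$ recorded just above Lemma~\ref{lem:convergein2/h}.

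The starting point is the identity
\begin{align*}
u_{k+1}^2 - u_k^2 = h(\mu - a)\bigl((4 - hu_k^2)\, a + hu_k^2\,\mu\bigr),
\end{align*}
which was already derived inside the proof of Lemma~\ref{sublem:4/h_twostepdecrease} (and can be re-derived directly from the GD recursion by expanding $\|x_{k+1}\|^2 + \|y_{k+1}\|^2$). The substitution $\alpha := \mu - a$ collapses this into a clean downward-opening parabola in $\alpha$:
\begin{align*}
u_{k+1}^2 - u_k^2 = 4h\mu\,\alpha \;-\; h(4 - hu_k^2)\,\alpha^2,
\end{align*}
where the leading coefficient is negative because $hu_k^2 \le 2 < 4$.

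The unconstrained maximum occurs at $\alpha^\star = 2\mu/(4 - hu_k^2)$ and equals $4h\mu^2/(4 - hu_k^2)$. The hypothesis $u_k^2 \le 2/h$ forces $4 - hu_k^2 \ge 2$, so this maximum is at most $2h\mu^2$. Finally, invoking $h\mu \le 1/3$ gives
\begin{align*}
u_{k+1}^2 - u_k^2 \;\le\; 2h\mu^2 \;=\; 2\mu\,(h\mu) \;\le\; \tfrac{2}{3}\mu \;<\; \mu,
\end{align*}
which is the claimed bound.

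There is essentially no technical obstacle, so I would not expect a "main difficulty" beyond bookkeeping. One minor subtlety worth acknowledging: by Cauchy--Schwarz and AM--GM, the physically admissible range of $a = x_k^\top y_k$ is $|a| \le u_k^2/2$, so the optimizer $\alpha^\star$ may in principle lie outside the feasible set. However, since the lemma only needs an upper bound, replacing the constrained maximum by the unconstrained one is harmless and in fact yields the cleanest possible constant. The same calculation, combined with $u_k^2 \le 2/h + \epsilon$ in place of $u_k^2 \le 2/h$, also furnishes the slightly weaker one-step bound that is implicitly used in the proof of Lemma~\ref{sublem:2/h_all_u_k^2_bound}.
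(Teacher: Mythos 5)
Your proof is correct and follows essentially the same route as the paper's: both maximize the same quadratic in $x_k^\top y_k$ to obtain the bound $\frac{4h\mu^2}{4-hu_k^2}$ at $x_k^\top y_k=\frac{2-hu_k^2}{4-hu_k^2}\mu$, and then bound this by $\mu$. The only cosmetic difference is that you close uniformly via $4-hu_k^2\ge 2$ and the standing bound $h\mu\le\frac{1}{3}$, whereas the paper splits into the two cases $h=\frac{4}{u_0^2+4c\mu}$ (citing the estimate $\frac{\mu}{c_k}\le\mu$ from Lemma~\ref{sublem:4/h_twostepdecrease}) and $h=\frac{1}{(2+c)\mu}$; your version is, if anything, slightly cleaner.
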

 \begin{proof}
 \begin{align*}
u_{k+1}^2-u_k^2&=h(\mu-x_k^\top y_k)(4x_k^\top y_k+h u_k^2(\mu-x_k^\top y_k))\le \frac{4h\mu^2}{4-h u_k^2},
\end{align*}
where the maximum is achieved at $x_k^\top y_k=\frac{2-hu_k^2}{4-hu_k^2}\mu$. When $h=\frac{4}{u_0^2+4c\mu}$, from the proof of Lemma~\ref{sublem:4/h_twostepdecrease}, $u_{k+1}-u_k^2\le\mu$. When $h=\frac{1}{(2+c)\mu}$ and in this case $u_k^2\le \frac{2}{h}=(1+\frac{c}{2}\mu)$, $u_{k+1}-u_k^2\le\frac{8}{7(2+c)}\mu<\mu$.
 \end{proof}

 \begin{lemma}
\label{sublem:-mu<a<mu_s<-1_u_two-step_decrease}
 When $-\mu\le x_k^\top y_k<\mu$, if $s_k=1-hu_k^2-h^2x_k^\top y_k(\mu-x_k^\top y_k)\le-1$, $u_{k+2}^2-u_k^2\le -h(4-4h\mu)(\mu - x_k^\top y_k)^2<0$.
 \end{lemma}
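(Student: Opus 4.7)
The plan is to reduce the two-step change $u_{k+2}^2-u_k^2$ to a compact form that separates the contribution of $s_k$ from the contribution of the norms $u_k^2, u_{k+1}^2$. Writing $\epsilon = \mu - x_k^\top y_k > 0$ and using the key relation $x_{k+1}^\top y_{k+1} - \mu = s_k(x_k^\top y_k-\mu)$, the unfactored expression for $u_{k+2}^2-u_k^2$ already derived at the end of the proof of Lemma~\ref{sublem:4/h_twostepdecrease} can be rearranged, after collapsing $hu_{k+1}^2\mu + (4-hu_{k+1}^2)\mu = 4\mu$ and substituting $x_k^\top y_k = \mu - \epsilon$, to
\begin{align*}
u_{k+2}^2 - u_k^2 \;=\; 4h\epsilon\mu(1+s_k) \;-\; h\epsilon^2\bigl[(4-hu_k^2) + s_k^2(4-hu_{k+1}^2)\bigr].
\end{align*}

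From here the argument is short. The hypothesis $s_k \le -1$ gives $1+s_k \le 0$, so the first term is nonpositive and can be dropped when forming an upper bound. The same hypothesis yields $s_k^2 \ge 1$, so provided that $4 - hu_{k+1}^2 \ge 0$, we may bound
\begin{align*}
(4-hu_k^2) + s_k^2(4-hu_{k+1}^2) \;\ge\; (4-hu_k^2) + (4-hu_{k+1}^2) \;=\; 8 - h(u_k^2 + u_{k+1}^2).
\end{align*}
The positivity of $4-hu_{k+1}^2$ and the final numerical constant are supplied by the ambient phase-2 context in which this lemma is invoked: $u_k^2 \le 2/h$, and by Lemma~\ref{sublem:2/h_increase_in_one_step} we then have $u_{k+1}^2 \le u_k^2 + \mu \le 2/h + \mu$, so $4 - hu_{k+1}^2 \ge 2 - h\mu > 0$ under $h\mu \le 1/3$, and $8 - h(u_k^2+u_{k+1}^2) \ge 2 + (2 - h\mu) = 4 - h\mu$.

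Plugging these back in gives $u_{k+2}^2 - u_k^2 \le -h(4 - h\mu)\epsilon^2 \le -h(4 - 4h\mu)\epsilon^2$, the last inequality being trivial since $h\mu\ge 0$; strict negativity follows from $h\mu \le 1/3 < 1$. This matches the claim. I do not expect a serious obstacle here: the only nontrivial step is the algebraic rearrangement into the compact identity above, which is pure bookkeeping from the formula already present in the proof of Lemma~\ref{sublem:4/h_twostepdecrease}. The one subtlety worth flagging is that the lemma is only invoked in phase 2, so the upper bounds $u_k^2 \le 2/h$ and $u_{k+1}^2 \le 2/h + \mu$, which are what turn the bracket into a manifestly positive quantity of size at least $4-h\mu$, should be stated explicitly as the operating assumption rather than left implicit.
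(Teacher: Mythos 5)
Your proof is correct and follows essentially the same route as the paper's: the identical rearrangement into $4h\epsilon\mu(1+s_k)-h\epsilon^2[(4-hu_k^2)+s_k^2(4-hu_{k+1}^2)]$, followed by bounding the $s_k$-dependence (the paper maximizes the quadratic at $s_k=-1$, which is the same as your "drop the linear term, use $s_k^2\ge 1$") to reach $-h(8-h(u_k^2+u_{k+1}^2))\epsilon^2$, and then a phase-2 boundedness fact (the paper cites Lemma~\ref{sublem:2/h_all_u_k^2_bound} giving $u_k^2,u_{k+1}^2\le \tfrac{2}{h}+2\mu$, hence $8-h(u_k^2+u_{k+1}^2)\ge 4-4h\mu$, where you use $u_k^2\le\tfrac{2}{h}$ plus Lemma~\ref{sublem:2/h_increase_in_one_step}). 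Your observation that the norm bounds are an implicit phase-2 hypothesis is accurate and matches how the paper itself supplies them.
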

 \begin{proof}
  \begin{align*}
 u_{k+2}^2-u_k^2&=h (\mu - x_k^\top y_k) (((4 - h u_k^2) x_k^\top y_k + h u_k^2 \mu) + 
   s  (h u_{k+1}^2 \mu + (4 - h u_{k+1}^2) (s (x_k^\top y_k - \mu) + \mu)))\\
   &= h (\mu - x_k^\top y_k)\left(-(4-hu_k^2+s^2(4-hu_{k+1}^2))(\mu-x_k^\top y_k)+4(1+s)\mu \right)\\
   &\le -h(8-h(u_k^2+u_{k+1}^2))(\mu - x_k^\top y_k)^2\\
   &\le -h(4-4h\mu)(\mu - x_k^\top y_k)^2<0
 \end{align*}
 where the first inequality is because when $s_k=-1$ it has the maximum value; the second is from Lemma~\ref{sublem:2/h_all_u_k^2_bound}.
 \end{proof}
 
 \begin{lemma}
 \label{sublem:s<1}
When $u_k^2\le \frac{2}{h}$, then $s_k< 1$ and will be away from 1 by a constant.
 \end{lemma}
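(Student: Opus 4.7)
The plan is to work directly with the identity
\[
1 - s_k \;=\; h u_k^2 + h^2\, x_k^\top y_k \,(\mu - x_k^\top y_k),
\]
which is immediate from the definition of $s_k$, and to lower-bound the right-hand side. The only two tools I need are the hypothesis $h u_k^2 \le 2$ (i.e.\ $u_k^2 \le 2/h$) together with the Cauchy--Schwarz / AM--GM bound $|x_k^\top y_k| \le \|x_k\|\|y_k\| \le u_k^2/2$.

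First, I would split on the sign of $x_k^\top y_k(\mu - x_k^\top y_k)$. In the easy case $0 \le x_k^\top y_k \le \mu$ the product is nonnegative, and the bound $1 - s_k \ge h u_k^2 > 0$ is immediate. In the remaining case $x_k^\top y_k < 0$ or $x_k^\top y_k > \mu$, the product is negative, and I would crude-bound its magnitude by $|x_k^\top y_k(\mu - x_k^\top y_k)| \le (x_k^\top y_k)^2 + \mu |x_k^\top y_k|$. Plugging in $|x_k^\top y_k| \le u_k^2/2$ and the hypothesis $h u_k^2 \le 2$ gives $h^2 (x_k^\top y_k)^2 \le \tfrac{1}{2} h u_k^2$ and $h^2 \mu |x_k^\top y_k| \le \tfrac{1}{2} h^2 \mu u_k^2$, so combining the two cases yields
\[
1 - s_k \;\ge\; \tfrac{1}{2}\, h u_k^2 \,(1 - h\mu).
\]
Since the regime relevant for Lemma~\ref{lem:convergein2/h} already enforces $h\mu \le 1/3$, this is at least $\tfrac{1}{3} h u_k^2$, hence strictly positive whenever $u_k \ne 0$, proving $s_k < 1$.

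To upgrade this into a uniform constant gap, I would combine the above with Lemma~\ref{sublem:2/h_all_u_k^2_bound}, which confines the iterates to a bounded region once Phase~2 is entered, and with Corollary~\ref{cor:nonsingularmap}, which rules out (except on a Lebesgue measure-zero set of initial conditions) trajectories asymptoting to the origin. On every remaining trajectory, $u_k^2$ is eventually bounded below by some positive constant $c_0$, turning $\tfrac{1}{2} h u_k^2(1-h\mu)$ into a uniform lower bound $\tfrac{1}{3} h c_0$.

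The main subtlety is not the algebra, which is routine, but the correct reading of "away from $1$ by a constant": the natural bound $\tfrac{1}{2} h u_k^2(1-h\mu)$ scales with $u_k^2$ and thus degenerates at the origin, so the uniform gap is meaningful only after excluding the null set of trajectories converging to $0$. This is consistent with how the lemma is invoked inside Lemma~\ref{lem:convergein2/h}, where precisely such a restriction is already in force via Corollary~\ref{cor:nonsingularmap}.
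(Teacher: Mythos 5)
Your derivation of the strict inequality $s_k<1$ is correct, and although it is packaged differently from the paper's (a sign split on $x_k^\top y_k(\mu-x_k^\top y_k)$ plus a triangle-inequality estimate, versus the paper's observation that the worst case is $x_k^\top y_k=-u_k^2/2$), the two computations are equivalent: substituting $hu_k^2\le 2$ into the paper's bound $s_k\le 1-hu_k^2+\frac{h^2u_k^2}{2}\bigl(\mu+\frac{u_k^2}{2}\bigr)$ gives exactly your estimate $1-s_k\ge \frac{1}{2}hu_k^2(1-h\mu)$. So for the first assertion of the lemma your argument is interchangeable with the paper's.

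The gap is in the second assertion ("away from $1$ by a constant"). You attribute the exclusion of trajectories with $u_k^2\to 0$ to Corollary~\ref{cor:nonsingularmap}, but that corollary only shows that preimages of null sets under \emph{finitely many} GD steps are null; it rules out landing on the origin in finitely many steps, not asymptoting to it. Moreover, even granting that the trajectory does not converge to the origin, this is strictly weaker than what you need, namely that $u_k^2\ge c_0>0$ for all large $k$: a priori a non-convergent trajectory could still dip arbitrarily close to the origin infinitely often, and then your bound $\frac{1}{2}hu_k^2(1-h\mu)$ degenerates along that subsequence. The mechanism the paper relies on is dynamical rather than measure-theoretic: when $u_k^2$ is small one has $x_k^\top y_k\approx 0$, hence $u_{k+1}^2-u_k^2=h(\mu-x_k^\top y_k)(4x_k^\top y_k+hu_k^2(\mu-x_k^\top y_k))\approx h^2\mu^2u_k^2>0$, so $u_k^2$ grows and escapes any neighborhood of the origin; equivalently, since $|s_k|<1$ in this regime (using also Lemma~\ref{sublem:lower_bound_s_in_1/h+2hmu^2}) the quantity $|\mu-x_k^\top y_k|$ is nonincreasing, which eventually forces $x_k^\top y_k\ge\epsilon>0$ and hence $u_k^2\ge 2x_k^\top y_k\ge 2\epsilon$. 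You should replace the appeal to Corollary~\ref{cor:nonsingularmap} by an argument of this kind; the paper's own justification of this step is admittedly only a sentence, but it points at the correct mechanism.
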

 \begin{proof}
 \begin{align*}
 s_k=1-hu_k^2-h^2x_k^\top y_k(\mu-x_k^\top y_k)\le 1-hu_k^2+h^2\frac{u_k^2}{2}\left(\mu+\frac{u_k^2}{2}\right)
 \end{align*}
 where the equality is achieved when $x_k^\top y_k=-\frac{u_k^2}{2}.$
 
 Since $0< u_k^2\le \frac{2}{h}$, 
  \begin{align*}
 s_k\le 1-hu_k^2+h^2\frac{u_k^2}{2}\left(\mu+\frac{u_k^2}{2}\right)\le \max\{1, h\mu \}<1.
 \end{align*}
Actually this $s_k$ will be away from 1 by a constant. This is because when $u_k^2$ are close to 0, $s_k$ will be close to 1; however, $u_k^2$ will be increasing because of the decrease of $|\mu-x_k^\top y_k|$ and thus $s_k$ will decrease accordingly.
 \end{proof}

 \begin{lemma}
 \label{sublem:oscillate}
 When $u_k^2>\frac{1}{h}+2h\mu^2$ and $-\mu\le x_k^\top y_k\le 3\mu$, $(x_{k+1}^\top y_{k+1}-\mu)(x_k^\top y_k-\mu)<0$, meaning it oscillates around $x^\top y=\mu$.
 \end{lemma}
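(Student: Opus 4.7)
The plan is to reduce the claim to the sign condition $s_k < 0$ via the one-step factorization
\[
x_{k+1}^\top y_{k+1} - \mu = s_k \, (x_k^\top y_k - \mu), \qquad s_k := 1 - h u_k^2 - h^2 x_k^\top y_k\,(\mu - x_k^\top y_k),
\]
which follows by directly expanding the GD update and which the paper has already used throughout. Multiplying by $(x_k^\top y_k - \mu)^2 \ge 0$, the product $(x_{k+1}^\top y_{k+1} - \mu)(x_k^\top y_k - \mu) = s_k\,(x_k^\top y_k - \mu)^2$ inherits the sign of $s_k$, so oscillation around $x^\top y = \mu$ is equivalent to $s_k < 0$ whenever $x_k^\top y_k \ne \mu$ (the equality case is trivial).

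Write $a := x_k^\top y_k$ and view $s_k$ as a function of $a$:
\[
s_k(a) = h^2 a^2 - h^2 \mu a + (1 - h u_k^2).
\]
This is an upward-opening parabola in $a$ with vertex at $a = \mu/2$, so on the closed interval $[-\mu, 3\mu]$ its maximum is attained at an endpoint. Evaluation gives $s_k(-\mu) = 1 - h u_k^2 + 2 h^2 \mu^2$ and $s_k(3\mu) = 1 - h u_k^2 + 6 h^2 \mu^2$. The hypothesis $u_k^2 > 1/h + 2 h \mu^2$ immediately yields $s_k(-\mu) < 0$, and by convexity this settles every $a \in [-\mu, \mu]$, where the maximum of $s_k$ is at $a = -\mu$.

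For the remaining subrange $a \in (\mu, 3\mu]$, I would close the gap by coupling the AM--GM inequality $a \le \|x_k\|\|y_k\| \le u_k^2/2$ (so $a$ near $3\mu$ forces $u_k^2 \ge 2a$) with the standing assumption $h\mu \le 1/3$ (equivalently $3\mu \le 1/h$) and the a priori bound $u_k^2 \le 2/h + 2\mu$ from Lemma~\ref{sublem:2/h_all_u_k^2_bound}. Substituting the geometric constraint $a \le u_k^2/2$ into $s_k(a) = 1 - h u_k^2 + h^2 a(a - \mu)$ reduces the endpoint bound to a one-variable inequality in $u_k^2$ whose strict negativity I would verify throughout the stated range.

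The main obstacle is precisely this right-endpoint analysis: the bare convex-quadratic argument produces the effective constant $6h^2\mu^2$ at $a = 3\mu$ instead of $2h^2\mu^2$, so strict negativity of $s_k(3\mu)$ is not immediate from the stated hypothesis alone. Resolving it requires the joint constraint $a \le u_k^2/2$ together with $h\mu \le 1/3$ to argue that $a$ can only approach $3\mu$ when $u_k^2$ is simultaneously large enough to dominate the $6h^2\mu^2$ correction term. Once this coupling is in place, $s_k(a) < 0$ uniformly on $[-\mu, 3\mu]$ and the oscillation inequality follows.
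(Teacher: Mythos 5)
Your reduction to the sign of $s_k$ and your handling of $x_k^\top y_k\in[-\mu,\mu]$ (indeed of all of $[-\mu,2\mu]$, where $a(a-\mu)\le 2\mu^2$) coincide with the paper's one-line proof, which simply writes $1-hu_k^2-h^2x_k^\top y_k(\mu-x_k^\top y_k)<1-h(\tfrac1h+2h\mu^2)+2h^2\mu^2=0$. The obstacle you flag at the right endpoint is genuine, and it is not resolved in the paper either: that chain of inequalities tacitly uses $x_k^\top y_k\,(x_k^\top y_k-\mu)\le 2\mu^2$, which holds only for $x_k^\top y_k\in[-\mu,2\mu]$ and degrades to $6\mu^2$ at $x_k^\top y_k=3\mu$, exactly as you computed.

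However, the repair you sketch cannot work, because under the stated hypotheses---even augmented by $x_k^\top y_k\le u_k^2/2$, $h\mu\le 1/3$, and the a priori bound $u_k^2\le \tfrac2h+2\mu$---the conclusion is false on part of $(2\mu,3\mu]$. Concretely, take $\mu=1$, $h=0.01$, $u_k^2=100.03>\tfrac1h+2h\mu^2=100.02$, and $x_k^\top y_k=3$; this is realizable (e.g.\ in $\RR^2$ with $\|x_k\|^2=99.03$, $\|y_k\|^2=1$, so $\|x_k\|\|y_k\|\approx 9.95>3$, and the angle chosen to give inner product $3$). All of your auxiliary constraints hold, yet $s_k=1-1.0003+6h^2\mu^2=0.0003>0$, hence $(x_{k+1}^\top y_{k+1}-\mu)(x_k^\top y_k-\mu)=4s_k>0$. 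More generally, for small $h\mu$ the sign flips whenever $x_k^\top y_k$ is near $3\mu$ and $u_k^2$ lies in the thin slab $(\tfrac1h+2h\mu^2,\ \tfrac1h+6h\mu^2)$. So your proof is incomplete exactly where you say it is, and the missing step is not merely unfinished but unprovable by the route you propose: a correct statement must either shrink the interval to $[-\mu,2\mu]$ (for which your argument, and the paper's, is complete) or exclude the exceptional slab when $x_k^\top y_k>2\mu$, and the way Lemma~\ref{lem:convergein2/h} invokes the oscillation claim would then need to be re-examined on that exceptional set.
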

 \begin{proof}
 \begin{align*}
 1 - h u_k^2 - h^2 x_k^\top y_k (\mu - x_k^\top y_k)< 1 - h (\frac{1}{h}+2h\mu^2)+ 2h^2 \mu^2=0
 \end{align*}
 \end{proof}
 
  \begin{lemma}
 \label{sublem:x-y_bound}
 When $-\mu\le x_k^\top y_k<\mu$ and $h\mu\le\frac{1}{3}$, we have $|x_{k+1}-y_{k+1}|<|x_k-y_k|$.
 \end{lemma}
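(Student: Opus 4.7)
The plan is to use the GD update equations to get an exact, clean factorization of $x_{k+1}-y_{k+1}$ in terms of $x_k-y_k$, and then bound the resulting scalar factor using the hypothesis on $x_k^\top y_k$ and $h\mu$.

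Concretely, I would first subtract the two GD updates:
\begin{align*}
x_{k+1}-y_{k+1} &= \bigl(x_k+h(\mu-x_k^\top y_k)y_k\bigr)-\bigl(y_k+h(\mu-x_k^\top y_k)x_k\bigr) \\
&= (x_k-y_k)-h(\mu-x_k^\top y_k)(x_k-y_k) \\
&= \bigl(1-h(\mu-x_k^\top y_k)\bigr)(x_k-y_k).
\end{align*}
Taking norms, this gives the clean identity $\|x_{k+1}-y_{k+1}\| = \bigl|1-h(\mu-x_k^\top y_k)\bigr|\cdot\|x_k-y_k\|$, so it only remains to show the scalar factor is strictly less than $1$.

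Next, I would use the hypothesis. The assumption $-\mu\le x_k^\top y_k<\mu$ means $0<\mu-x_k^\top y_k\le 2\mu$, so $0<h(\mu-x_k^\top y_k)\le 2h\mu$. Combining with $h\mu\le 1/3$ yields $0<h(\mu-x_k^\top y_k)\le 2/3$, hence $|1-h(\mu-x_k^\top y_k)|\le \max\{1-0,\,|1-2/3|\}=1$, and in fact the bound is strict: since $h(\mu-x_k^\top y_k)>0$ and $h(\mu-x_k^\top y_k)\le 2/3<2$, we obtain $|1-h(\mu-x_k^\top y_k)|<1$. Substituting back gives $\|x_{k+1}-y_{k+1}\|<\|x_k-y_k\|$ whenever $x_k\ne y_k$ (and equality $0=0$ in the trivial case $x_k=y_k$, which is consistent with the claim as stated).

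There is essentially no obstacle here: the whole argument hinges on recognizing the symmetric structure of the GD update that makes $x-y$ an eigendirection of the linearized iteration with eigenvalue $1-h(\mu-x^\top y)$, so the proof is purely algebraic once that observation is made. The only mild care is in tracking strictness of the inequality, which is immediate from $x_k^\top y_k<\mu$ (ruling out the factor being $1$) and $h\mu\le 1/3$ (ruling out the factor being $-1$ or smaller in magnitude).
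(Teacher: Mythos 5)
Your proposal is correct and matches the paper's own proof: both derive the exact factorization $x_{k+1}-y_{k+1}=\bigl(1-h(\mu-x_k^\top y_k)\bigr)(x_k-y_k)$ from the symmetric GD update and then bound the scalar factor via $-1<1-2h\mu\le 1-h(\mu-x_k^\top y_k)<1$ using $0<\mu-x_k^\top y_k\le 2\mu$ and $h\mu\le 1/3$. Your extra remark about the degenerate case $x_k=y_k$ is a harmless refinement the paper omits.
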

 \begin{proof}
 \begin{align*}
 |x_{k+1}-y_{k+1}|=|x_k-y_k|\ |1-h(\mu-x_k^\top y_k)|.
 \end{align*}
 Since $h\mu<1$, then 
 \begin{align*}
 -1<1-2h\mu\le1-h(\mu-x_k^\top y_k)<1.
 \end{align*}
 \end{proof}

 \begin{lemma}
 \label{sublem:lower_bound_s_in_1/h+2hmu^2}
  When $u_k^2\le \frac{1}{h}+2h\mu^2$ and $h\mu\le\frac{1}{3}$, $s_k\ge -\frac{1}{4}$.
  \end{lemma}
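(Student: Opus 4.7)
The plan is to directly bound $s_k$ from below by separately bounding the two terms in the definition
\[
s_k \;=\; 1 - h u_k^2 - h^2 \, x_k^\top y_k\,(\mu - x_k^\top y_k).
\]
Since we seek a lower bound on $s_k$, we need an upper bound on $h u_k^2 + h^2 x_k^\top y_k(\mu - x_k^\top y_k)$. The two summands will be handled independently, as they are controlled by two distinct facts: the assumed norm bound on $u_k^2$, and the (unconditional) maximum of a simple downward-opening parabola.

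First, I would use the hypothesis $u_k^2 \le \tfrac{1}{h} + 2h\mu^2$ to get immediately
\[
h u_k^2 \;\le\; 1 + 2 h^2 \mu^2.
\]
Next, I would observe that the map $t \mapsto t(\mu - t)$ on $\mathbb{R}$ attains its global maximum $\mu^2/4$ at $t = \mu/2$. Applying this with $t = x_k^\top y_k$ yields
\[
h^2 \, x_k^\top y_k\,(\mu - x_k^\top y_k) \;\le\; \tfrac{h^2 \mu^2}{4}.
\]
Note we do not need any sign restriction on $x_k^\top y_k$; this upper bound is universal. (For sanity one can check the two bounds are not mutually exclusive: $x_k^\top y_k = \mu/2$ only requires $u_k^2 \ge \mu$ via Cauchy--Schwarz, which is implied by $u_k^2 \le \tfrac{1}{h} + 2h\mu^2$ together with AM--GM when the hypothesis is nontrivial, so the two bounds can in principle be saturated simultaneously.)

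Combining the two estimates gives
\[
s_k \;\ge\; 1 - (1 + 2h^2 \mu^2) - \tfrac{h^2 \mu^2}{4} \;=\; -\tfrac{9}{4}\, h^2 \mu^2.
\]
Finally, the assumption $h\mu \le 1/3$ yields $h^2 \mu^2 \le 1/9$, so $s_k \ge -\tfrac{9}{4}\cdot \tfrac{1}{9} = -\tfrac{1}{4}$, as claimed. Since every step is a one-line elementary estimate, there is no real obstacle here; the only ``trick'' is recognizing that bounding the parabola $t(\mu-t)$ by its global maximum avoids having to case-split on the sign or magnitude of $x_k^\top y_k$, which makes the argument clean and uniform in the state. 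The constants $\tfrac{1}{h} + 2h\mu^2$ and $h\mu \le 1/3$ are precisely calibrated so that the final bound comes out to the clean value $-1/4$ used downstream in the proof of Lemma~\ref{lem:convergein2/h}.
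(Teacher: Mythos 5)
Your proposal is correct and follows exactly the same route as the paper: bound $hu_k^2$ by $1+2h^2\mu^2$ from the hypothesis, bound the parabola $x_k^\top y_k(\mu-x_k^\top y_k)$ by its global maximum $\mu^2/4$, and conclude $s_k\ge -\tfrac{9}{4}h^2\mu^2\ge-\tfrac14$ from $h\mu\le\tfrac13$. No differences worth noting.
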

 \begin{proof}
 By $h\mu\le\frac{1}{3}$,
\begin{align*}
1 - h u_k^2 - h^2 x_k^\top y_k (\mu - x_k^\top y_k)\ge -2h^2\mu^2-\frac{h^2\mu^2}{4}\ge -\frac{9h^2\mu^2}{4}\ge-\frac{1}{4}.
 \end{align*}
 \end{proof}

\vskip 0.2in
\section{Proof of Theorem~\ref{thm:alignment}, Theorem~\ref{thm:rank1_convergence}, and Theorem~\ref{thm:rank_one_isotropic_norm_balancing}}
\label{app:rank1_convergence_balancing}
The proof in this section mainly follows the strategy of the scalar case. More precisely, all the expressions, e.g., $u_{k+1}^2$ and $x_{k+1}^\top y_{k+1}-\mu$, in this rank-1 approximation case contains two parts: the one that can be handled using the technique in scalar case, and the other one measuring the extent of alignment between $x_k$ and $y_k$. Again, let $u_k^2=\|x_k\|^2+\|y_k\|^2$.

\begin{theorem}[Main 1]
\label{thm:nd_isotropic_main1}
 Let $h=\frac{4}{u_0^2+4c\mu}$. Assume $c\ge \sqrt{7}$ and 
$u_0^2>4\sqrt{7}\mu$. Then GD converges to a point in  $\{(x,y):\|x\|^2+\|y\|^2\le \frac{2}{h},\ x^\top y=\mu\}$ except for a Lebesgue measure-0 set of $(x_0,y_0)$.
\end{theorem}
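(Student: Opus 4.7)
The plan mirrors the two-phase strategy used for the scalar case (Theorem~\ref{thm:scalar_convergence_technical}) but additionally treats the alignment between $x_k$ and $y_k$ as a separate degree of freedom. Introduce $U_k := x_k^\top y_k$, $u_k^2 := \|x_k\|^2 + \|y_k\|^2$, and the non-alignment quantity $E_k := \|x_k\|^2\|y_k\|^2 - U_k^2 \ge 0$. A direct expansion of the GD update shows that both $u_{k+1}^2 - u_k^2$ and $U_{k+1}-\mu$ equal the formulas that appear in the scalar analysis of \eqref{eqn:scalarFactorizationObjective} plus correction terms of the form $O(E_k)$; for instance
\begin{align*}
U_{k+1} = U_k\bigl[1 - h(\|x_k\|^2+\|y_k\|^2) + h^2(U_k^2+\mu^2)\bigr] + h\mu\bigl(\|x_k\|^2+\|y_k\|^2 - 2hU_k^2\bigr) + h^2 E_k(U_k - 2h\mu).
\end{align*}
Consequently, once $E_k$ is shown to be small the dynamics of $(u_k^2, U_k)$ becomes a bounded perturbation of the scalar system already analysed in Appendix~\ref{app:scalar_convergence_balancing}.

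For the alignment part we invoke Theorem~\ref{thm:alignment}: its proof yields the recursion $E_{k+1} = r_k E_k$ with $r_k < 1 - c'$ uniformly once the iterate is controlled in norm. The crude a priori bound $u_k^2 \le 4/h$ throughout Phase~1, analogous to Lemma~\ref{sublem:all_u_k^2_bounded_by_4/h}, suffices to enforce this uniform contraction, so $E_k$ decays geometrically and the $O(E_k)$ corrections are absolutely summable.

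With $E_k$ controlled, Phase~1 proceeds by the rank-one analogues of Lemmas~\ref{sublem:4/h_onestepdecrease}--\ref{sublem:4/h_twostepdecrease}: when $U_k > \mu$ or $U_k < -\tfrac{h\mu u_k^2}{4-hu_k^2}$ one obtains a one-step decrease of $u_k^2$, and otherwise a two-step decrease. The strengthened requirement $c \ge \sqrt{7}$ (in place of the scalar $c \ge 1$) is what leaves enough slack to absorb the $E_k$-correction and keep the decrease strictly negative; the hypothesis $u_0^2 > 4\sqrt{7}\mu$ ensures $h \le 1/(2\sqrt{7}\mu)$, which is the analogue of the small-$h\mu$ regime used throughout Appendix~\ref{app:scalar_convergence_balancing}. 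A Sard-type measure-zero exclusion along the lines of Corollary~\ref{cor:nonsingularmap} rules out initial conditions whose orbit remains on the critical manifold $\{U=\mu\}$ while still in Phase~1. Therefore, almost everywhere, GD reaches $u_k^2 \le 2/h$ within finitely many steps.

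In Phase~2 the arguments of Lemma~\ref{lem:convergein2/h} transport verbatim up to the vanishing $E_k$-perturbation: the set $\{u_k^2 \le 2/h + 2\mu\}$ is forward-invariant, $|U_k - \mu|$ contracts to $0$, and the measure-zero set converging to the origin is excluded by Corollary~\ref{cor:nonsingularmap}. Combined with $E_k \to 0$, which forces $x_k$ and $y_k$ to become colinear, and the critical-point identities $(x^\top y)x = \mu y$, $(x^\top y)y = \mu x$, the limit satisfies $U = \mu$ and $\|x\|^2 = \|y\|^2 = \mu$; in particular $u^2 = 2\mu \le 2/h$ under the hypothesis. The main obstacle is the circular dependence between the two phases: the alignment contraction requires a priori norm control, while the norm-decrease lemmas require the alignment perturbation to already be small. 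Breaking this loop is exactly what costs the constant $\sqrt{7}$ in place of $1$: one first runs the alignment recursion using only the crude Phase-1 bound $u_k^2 \le 4/h$, extracts a geometric decay of $E_k$ that is independent of the scalar-side arguments, and then feeds that decay back to make the perturbed versions of the scalar lemmas go through.
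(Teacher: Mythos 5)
Your skeleton (two phases, plus tracking the non-alignment quantity $E_k=V_kW_k-U_k^2$) matches the paper's, but the way you break the circularity does not work. You claim that the crude bound $u_k^2\le 4/h$ already forces a uniform contraction $E_{k+1}=r_kE_k$ with $r_k<1-c'$, so that $E_k$ decays geometrically throughout Phase~1 and the scalar lemmas only need to absorb a summable perturbation. This is false: the contraction factor is $r_k=l_k^2$ with $l_k=1-h(V_k+W_k)+h^2(V_kW_k-\mu^2)$, and when $hu_k^2\in(2,4)$ one can have $l_k$ close to $-3$, hence $r_k$ well above $1$; $E_k$ can \emph{grow} during Phase~1. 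The paper's resolution is different and is the actual content of Lemmas~\ref{lem:rank1_onestepdecrease}--\ref{lem:rank1_twostepdecrease}: the correction to $u_{k+1}^2-u_k^2$ is $h(4-hu_k^2)(U_k^2-V_kW_k)\le 0$, and the two-step correction has coefficient $\mathrm{III}\ge 0$ (this is where $c\ge\sqrt7$ is spent), so the Phase-1 norm decrease holds with \emph{no} control on $E_k$ whatsoever; the only place $E_k$ needs care in Phase~1 is the sign analysis of $U_{k+1}-\mu$ near $U_k=\mu$ and $U_k=-\tfrac{h\mu u_k^2}{4-hu_k^2}$, handled by a dichotomy on whether $E_k\ge c_1$ or $E_k<c_1$ (Lemma~\ref{lem:rank1_4/h_to_2/h}). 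The alignment convergence is then proved only \emph{after} entering the bounded region $u_k^2\le \tfrac2h+2h\mu^2(1+\tfrac1{1-h^2\mu^2})$, and even there it is not a uniform contraction: Lemma~\ref{sublem:rank1_alignment} needs a long case analysis showing that whenever $l_k\le -1$ either $l$ increases by a fixed amount or by $\tfrac32h^2E_k$, so that eventually $|l_k|\le C_0<1$.

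Two further problems in your endgame. First, your displayed recursion for $U_{k+1}$ has the correction $h^2E_k(U_k-2h\mu)$ where it should be $h^2E_k(U_k-2\mu)$ (compare equation~\eqref{eqn:quadratic_terms_update}). Second, the claimed critical-point identities $(x^\top y)x=\mu y$, $(x^\top y)y=\mu x$ are wrong (the correct ones are $\|y\|^2x=\mu y$ and $\|x\|^2y=\mu x$), and your conclusion $\|x\|^2=\|y\|^2=\mu$ at the limit is false: the global minima form the set $\{x=\ell y,\ \|x\|\|y\|=\mu\}$ and the limit is generally not perfectly balanced. The bound $\|x\|^2+\|y\|^2\le 2/h$ at the limit must come from the Phase-2 confinement argument (Lemmas~\ref{sublem:rank1_2/h_all_bound} and~\ref{lem:rank1_2/h_converge}), not from an identity forcing $x=y$.
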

\begin{proof}
The proof follows from  Lemma~\ref{lem:rank1_4/h_to_2/h} and Lemma~\ref{lem:rank1_2/h_converge}.
\end{proof}

\begin{theorem}[Main 2]
\label{thm:nd_isotropic_main2}
 Let $h=\frac{4}{4\sqrt{7}\mu+4c\mu}=\frac{1}{(\sqrt{7}+c)\mu}$. Assume $c\ge \sqrt{7}$ and 
$u_0^2\le4\sqrt{7}\mu$. Then GD converges to a point in  $\{(x,y):\|x\|^2+\|y\|^2\le \frac{2}{h},\ x^\top y=\mu\}$ except for a Lebesgue measure-0 set of $(x_0,y_0)$.
\end{theorem}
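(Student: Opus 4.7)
The plan is to mirror the proof of Theorem~\ref{thm:scalar_convergence_technical_2} in the scalar case, with the scalar lemmas replaced by their rank-$1$ isotropic counterparts (Lemma~\ref{lem:rank1_4/h_to_2/h} and Lemma~\ref{lem:rank1_2/h_converge}, which govern ``Phase $1$'' escape into $\{u_k^2 \le 2/h\}$ and ``Phase $2$'' convergence inside $\{u_k^2 \le 2/h\}$, respectively). The alignment result (Theorem~\ref{thm:alignment}) is a separate input that is already proved, so I only have to track the norm quantity $u_k^2 = \|x_k\|^2 + \|y_k\|^2$ and $x_k^\top y_k$; alignment guarantees that the ``$x_k^\top y_k \to \mu$'' part corresponds to convergence to a true global minimum.

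First I would carve out the measure-zero set $\mathcal{B}$ of bad initial conditions. This set should contain three pieces: (i) the preimages under iterates of the GD map $\psi$ of the set $\{1 - h u^2 - h^2 x^\top y(\mu - x^\top y) = 0\}$, which by the polynomial-zero-set argument of Lemma~\ref{lemma:>2/hnotconverge} and Corollary~\ref{cor:nonsingularmap} (both only use that the Jacobian $\det D\psi$ is a nontrivial polynomial, and this still holds in the rank-$1$ isotropic case) is Lebesgue-null; (ii) the preimages of the origin, which is similarly null by the Jacobian argument; (iii) the hyperplane $\{u_0^2 = 0\}$. The same reasoning as in the scalar case then shows that for $(x_0,y_0) \notin \mathcal{B}$ GD cannot converge to a fixed point in $\{u^2 > 2/h,\ x^\top y = \mu\}$, nor to the origin.

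Next I would check that the chosen $h = \frac{1}{(\sqrt{7}+c)\mu}$ puts us inside the valid range for Phase~$2$. Since $c \ge \sqrt{7}$, one has $\tfrac{2}{h} = 2(\sqrt{7}+c)\mu \ge 4\sqrt{7}\mu \ge u_0^2$, so the starting point already satisfies $u_0^2 \le \tfrac{2}{h}$, and moreover $h\mu \le \tfrac{1}{2\sqrt{7}}$, which should be the analogue of the $h\mu \le 1/3$ smallness condition used in Lemma~\ref{lem:convergein2/h}. In the borderline case $u_0^2$ slightly exceeding $2/h$ (which can only occur in the transient overshoot when Phase~$2$ temporarily leaves the region; cf.\ Lemma~\ref{sublem:2/h_all_u_k^2_bound} in the scalar proof), I would invoke Lemma~\ref{lem:rank1_4/h_to_2/h} to guarantee re-entry into $\{u_k^2 \le 2/h\}$ within finitely many steps. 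Once the iterate is in this region and outside $\mathcal{B}$, Lemma~\ref{lem:rank1_2/h_converge} directly yields $x_k^\top y_k \to \mu$, which, combined with Theorem~\ref{thm:alignment} giving $|\cos\angle(x_k,y_k)| \to 1$, forces $(x_k,y_k)$ to converge to a global minimizer of~\eqref{problem:rank_one_isotropic} lying in the target set.

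The main obstacle I anticipate is verifying that the constant $\sqrt{7}$ (as opposed to the constants appearing in the scalar proof) is the correct threshold so that the hypotheses of Lemma~\ref{lem:rank1_2/h_converge} are met; this is essentially a bookkeeping question about how the extra term $\|x\|^2\|y\|^2 - (x^\top y)^2$ (which measures alignment and was absent in the scalar case) contributes to the one-step and two-step decrease estimates for $u_k^2$. I expect this to amount to replacing $\mu^2$ by quantities of the form $\mu^2 + O(\|x\|^2\|y\|^2 - (x^\top y)^2)$ in the key inequalities, with the Theorem~\ref{thm:alignment} decay controlling the error. Everything else is a straightforward transcription of the scalar-case argument flow: exclude $\mathcal{B}$, push into $\{u_k^2 \le 2/h\}$ via Lemma~\ref{lem:rank1_4/h_to_2/h} if needed, and apply Lemma~\ref{lem:rank1_2/h_converge} together with the alignment theorem to conclude.
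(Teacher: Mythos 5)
Your proposal is correct and follows the same route as the paper: the whole content of the paper's proof is the observation that $c\ge\sqrt{7}$ gives $u_0^2\le 4\sqrt{7}\mu\le 2(\sqrt{7}+c)\mu=\tfrac{2}{h}$, so the iterate starts in the Phase-2 region and Lemma~\ref{lem:rank1_2/h_converge} (whose hypothesis $h\mu\le\tfrac{1}{2\sqrt{7}}$ you correctly verify) immediately yields convergence to the global minimum. The extra scaffolding you add (explicit measure-zero set, re-entry via Lemma~\ref{lem:rank1_4/h_to_2/h}, separate invocation of alignment) is harmless but already subsumed in the statement and proof of that lemma.
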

\begin{proof}
Since $c\ge\sqrt{7}$, in this case we have $u_0^2\le 4\sqrt{7}\mu\le 2(\sqrt{7}+c)\mu=\frac{2}{h}$. Therefore, the convergence follows from Lemma~\ref{lem:rank1_2/h_converge}.
\end{proof}

Hence we can show the proof of the theorems in Section~\ref{sec:rank_one_approx_isotropic}.

\begin{proof}[Proof of Theorem~\ref{thm:alignment}]
By Lemma~\ref{lem:rank1_4/h_to_2/h} and~\ref{sublem:rank1_2/h_all_bound}, the proof follows from Lemma~\ref{sublem:rank1_alignment}.
\end{proof}

\begin{proof}[Proof of Theorem~\ref{thm:rank1_convergence} and~\ref{thm:rank_one_isotropic_norm_balancing}]
Similar to scalar case, $h=\frac{4}{u_0^2+4c\mu}$ for $c\ge\sqrt{7}$ implies $h\le\frac{4}{u_0^2+4\sqrt{7}\mu}$ and $h=\frac{1}{(\sqrt{7}+c)\mu}$ implies $h\le\frac{1}{2\sqrt{7}}$. Therefore, the convergence and balancing of GD follow from Theorem~\ref{thm:nd_isotropic_main1} and~\ref{thm:nd_isotropic_main2}. The second inequality of Theorem~\ref{thm:rank_one_isotropic_norm_balancing} follows from $x^\top y=\mu$ at the global minimum because of diagonal and non-negative $A$ and the best rank-1 approximation in SVD.
\end{proof}

\subsection{Phase 1}
The main theorem in phase 1 is the following.
\begin{lemma}
\label{lem:rank1_4/h_to_2/h}
Given $h=\frac{4}{u_0^2+4c\mu}$ for $c\ge\sqrt{7}$, GD will enter $\{\norm{x}^2+\norm{y}^2\le\frac{2}{h}\}$ except for a measure-0 set of initial conditions.
\end{lemma}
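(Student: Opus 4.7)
The plan is to mirror the proof of Lemma~\ref{lem:4/hto2/h} from the scalar case, with modifications to accommodate the new alignment gap $VW_k - a_k^2 \ge 0$, where I abbreviate $V_k = \|x_k\|^2$, $W_k = \|y_k\|^2$, $a_k = x_k^\top y_k$, and $u_k^2 = V_k + W_k$. First I would derive the key one-step recursions by direct calculation from the GD update for isotropic $A = \mu I$:
\begin{align*}
u_{k+1}^2 - u_k^2 &= -4h\bigl[a_k(a_k-\mu) + (V_kW_k - a_k^2)\bigr] + h^2 u_k^2\bigl[(\mu-a_k)^2 + (V_kW_k - a_k^2)\bigr], \\
x_{k+1}^\top y_{k+1} - \mu &= s_k(a_k-\mu) + h^2(V_kW_k - a_k^2)(a_k - 2\mu),
\end{align*}
where $s_k = 1 - hu_k^2 - h^2 a_k(\mu - a_k)$ is identical to the scalar case expression. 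These formulas reveal that the rank-1 isotropic dynamics decompose into a "scalar-like" part plus an alignment-gap correction; when $V_kW_k = a_k^2$ (perfect alignment), the dynamics reduces exactly to the scalar problem analyzed in Appendix~\ref{app:scalar_convergence_balancing}.

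Second, I would establish a boundedness lemma analogous to Lemma~\ref{sublem:all_u_k^2_bounded_by_4/h}, showing $u_k^2 \le 4/h$ is preserved throughout phase 1. The key observation is that the extra alignment term $-(V_kW_k-a_k^2)h(4 - hu_k^2)$ in $u_{k+1}^2 - u_k^2$ is nonpositive whenever $u_k^2 \le 4/h$, so the alignment gap can only help the decrease. This bootstraps via induction provided we verify that a single step cannot overshoot $4/h$, which requires the threshold $c \ge \sqrt 7$ (rather than the scalar $c \ge 1$) to absorb the worst-case contribution of $V_kW_k - a_k^2$ to $u_{k+1}^2$ through the quadratic-in-$h$ term $h^2 u_k^2(V_kW_k - a_k^2)$.

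Third, I would redo the case split of the scalar proof on the sign and magnitude of $a_k - \mu$ and the position of $a_k$ relative to the critical threshold $-h\mu u_k^2/(4-hu_k^2)$. In the one-step-decrease region ($a_k > \mu$ or $a_k < -h\mu u_k^2/(4-hu_k^2)$) the alignment term only sharpens the bound of Lemma~\ref{sublem:4/h_onestepdecrease}. In the two-step-decrease region, I must track both $s_k$ and the alignment correction through two iterations; here the worst case is when $V_kW_k - a_k^2$ is maximal and $a_k - 2\mu$ has an unfavorable sign in the $s_k(a_k-\mu)$ expression. I would isolate this worst case by maximizing over $(a_k, u_k^2, V_kW_k - a_k^2)$ subject to $u_k^2 \le u_0^2$ and $V_kW_k \le u_k^4/4$, then verify that the resulting polynomial inequality in $h\mu$ is still strictly negative under $c \ge \sqrt 7$. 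Finally, the measure-zero exceptional set is handled exactly as in Lemma~\ref{lemma:>2/hnotconverge} via Corollary~\ref{cor:nonsingularmap}: the bad set $\{s_k = 0\}$ and all its pre-images under the GD map $\psi$ are still algebraic hypersurfaces, so their union has Lebesgue measure zero.

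The main obstacle will be the two-step decrease estimate, since the alignment correction $h^2(V_kW_k - a_k^2)(a_k - 2\mu)$ in $x_{k+1}^\top y_{k+1} - \mu$ can flip the sign structure that made the scalar proof work (namely $s_k \le -1$ forcing $x_{k+1}^\top y_{k+1}$ to the opposite side of $\mu$). Carefully bookkeeping this perturbation through $V_{k+1}W_{k+1} - a_{k+1}^2$ and verifying that $\sqrt 7$ is the correct constant will be the heart of the calculation; I expect that a slightly loose but cleaner bound $V_kW_k - a_k^2 \le u_k^4/4$ combined with the sharp scalar-case estimates is sufficient, avoiding the need for a separate alignment-dynamics analysis at this stage (which is deferred to Theorem~\ref{thm:alignment}).
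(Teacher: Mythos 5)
Your overall strategy coincides with the paper's: write the rank-1 recursions as the scalar ones plus an alignment-gap correction, check that the correction to the one-step and two-step changes of $u_k^2$ is nonpositive so the scalar decrease lemmas survive, and recycle the scalar measure-zero machinery. Your two displayed recursions are correct; they agree, after expansion, with the paper's
$u_{k+1}^2-u_k^2=h(\mu-a_k)\big((4-hu_k^2)a_k+hu_k^2\mu\big)+h(4-hu_k^2)(a_k^2-V_kW_k)$
and
$x_{k+1}^\top y_{k+1}-\mu=s_k(a_k-\mu)+h^2(2\mu-a_k)(a_k^2-V_kW_k)$.
One small misattribution: keeping $u_k^2$ below $4/h$ does not require $c\ge\sqrt7$, since the net coefficient of $V_kW_k-a_k^2$ in $u_{k+1}^2-u_k^2$ is $-h(4-hu_k^2)<0$, so the gap can only push $u^2$ down; the constant $\sqrt7$ is instead consumed in the two-step decrease estimate (showing the paper's quantity $\mathrm{III}\ge 0$).

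The genuine gap is in the valley region $|a_k-\mu|<\epsilon$, exactly where you flag the main obstacle, and your proposed remedy (the loose bound $V_kW_k-a_k^2\le u_k^4/4$) will not close it. In that region the scalar proof does not obtain a constant per-step decrease of $u_k^2$; it argues instead that $|s_k|>1$ repels the iterate from the valley, so it must exit or drop below $2/h$. Under your crude bound the perturbation $h^2(2\mu-a_k)(a_k^2-V_kW_k)$ can be of size $O(h^2\mu u_k^4)=O(\mu)$, which swamps $\epsilon$ and destroys the repulsion argument entirely. What is needed, and what the paper does, is a dichotomy on the size of the alignment gap: either $V_kW_k-a_k^2\ge c_1$ for a fixed constant $c_1>0$, in which case the extra term $-h(4-hu_k^2)(V_kW_k-a_k^2)$ alone forces a constant per-step decrease of $u_k^2$ regardless of where $a_k$ sits; or $V_kW_k-a_k^2\le c_1$, in which case the perturbation to $x_{k+1}^\top y_{k+1}-\mu$ is at most $2h^2\mu c_1$ and, for $c_1$ small enough, the scalar valley-escape argument goes through. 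Some such control is unavoidable. Relatedly, your pre-image/hypersurface argument only excludes landing exactly on the critical set in finitely many steps; ruling out asymptotic convergence to unbalanced points in the annulus (the paper's non-convergence lemma for the rank-1 case) again needs the same case analysis on whether $V_kW_k-U_k^2$ converges, which your plan omits.
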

\begin{proof}
Since the expressions,  $u_{k+2}^2-u_k^2$ and $u_{k+2}^2-u_k^2$, of the rank-1 version have an additional term upper bounded by $c_0((x_k^\top y_k)^2-x_k^\top x_k y_k^\top y_k)$ for some constant $c_0>0$  which is non-positive, we have $u_k^2$ decreases by a constant for either one- or two-step if $(x_k^\top y_k)^2-x_k^\top x_k y_k^\top y_k<-c_1$ for some constant $c_1>0$ when $\frac{2}{h}< u_k^2<\frac{4}{h}$. Therefore, we can pick a $K>0$ s.t. $(x_k^\top y_k)^2-x_k^\top x_k y_k^\top y_k<-c_1$ for $k\le K$. If GD enters $\{\norm{x}^2+\norm{y}^2\le\frac{2}{h}\}$ within $K$ steps, then we are done. Otherwise, we can just assume that for all the $k>K$, $-c_1\le (x_k^\top y_k)^2-x_k^\top x_k y_k^\top y_k\le 0$. Then again, like the scalar case, we only need to consider when $x_k,y_k$ are close to  $x_k^\top y_k=\mu$ and $x_k^\top y_k=-\frac{h\mu u_k^2}{4-h u_k^2}$. For the region near  $x_k^\top y_k=-\frac{h\mu u_k^2}{4-h u_k^2}$, we can always choose a $c_1$ s.t. there exists a $\delta>0$, when $-\delta\le x_k^\top y_k+\frac{h\mu u_k^2}{4-h u_k^2}<0$, we have both $x_{k+1}^\top y_{k+1}>\mu$ and $s_k<0$. Then all the discussion just follows the scalar case. For the region near  $x_k^\top y_k=\mu$, i.e., $|x_k^\top y_k-\mu|<\epsilon$ for some $0<\epsilon<\mu$, consider 
\begin{align*}
x_{k+1}^\top y_{k+1}-\mu=(x_k^\top y_k-\mu)s_k+h^2(2\mu-x_k^\top y_k)((x_k^\top y_k)^2-x_k^\top x_ky_k^\top y_k).
\end{align*}
Similar to scalar decomposition, we only need to consider the case when $s_k\le-1$ because if $s_k> -1$ which can only happen when $x_k^\top y_k>\mu$, we have $u_k^2\le\frac{2}{h}+\epsilon_k h(\mu+\epsilon)$ and then $u_{k+1}^2\le\frac{2}{h}$. If $s_k\ge -1$ for all the following iterations, together with the bounded second term in $x_{k+1}^\top y_{k+1}-\mu$, i.e., $-2h^2\mu c_1\le h^2(2\mu-x_k^\top y_k)((x_k^\top y_k)^2-x_k^\top x_ky_k^\top y_k)\le 0$, we have that there exists $K>0$ s.t. $|x_{K}^\top y_{K}-\mu|\ge \epsilon_1$ for some $\epsilon_1>0$ unless it leaves this region $|x_{k}^\top y_{k}-\mu|< \epsilon$. Therefore, by the same discussion in scalar case, $u_k^2$ will decrease either in one step or two steps by at least a constant.

Moreover, by Lemma~\ref{lem:rank1_notconverge}, GD will not terminate inside $\{\frac{2}{h}<\norm{x}^2+\norm{y}^2<\frac{4}{h}\}$ in finite step. Thus, GD is guaranteed to enter $\{\norm{x}^2+\norm{y}^2\le\frac{2}{h}\}$ in finite steps.
\end{proof}

\begin{lemma}
\label{lem:rank1_notconverge}
GD will not converge to any fixed points in $\{\|x\|^2+\|y\|^2>\frac{2}{h}\}$ except for a measure-0 set.
\end{lemma}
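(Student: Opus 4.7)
The plan is to mirror the scalar non-convergence argument of Lemma~\ref{lemma:>2/hnotconverge}, with two adjustments: (i) a fixed point of the GD map $\psi$ in this rank-$1$ isotropic setting is a global minimum that additionally requires alignment $\|x\|^2\|y\|^2=\mu^2$, and (ii) the one-step recursion for $x_k^\top y_k-\mu$ picks up an extra term absent in the scalar case.

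First I would set $\psi(x,y)=(x+h\mu y-hx\|y\|^2,\, y+h\mu x-hy\|x\|^2)$ and verify that $\det D\psi$ is a polynomial in the $2n$ coordinates that is not identically zero (for instance, at the origin it evaluates to $(1-h^2\mu^2)^n\neq 0$ under the step-size restriction). Hence $\cL(\{\det D\psi=0\})=0$, and the proof of Corollary~\ref{cor:nonsingularmap} applies verbatim to give $\cL(\psi^{-n}(B))=0$ for every null set $B$ and every $n\ge 0$. Next, by direct expansion of the GD update I would derive
\begin{align*}
x_{k+1}^\top y_{k+1}-\mu = (x_k^\top y_k-\mu)\bigl[\,1-hu_k^2+h^2\|x_k\|^2\|y_k\|^2+h^2\mu^2\,\bigr]-h^2\mu\bigl(\|x_k\|^2\|y_k\|^2-\mu^2\bigr).
\end{align*}
Define $B$ to be the zero set (in $(x,y)$) of the right-hand side, so that any $(x,y)\in B$ maps in one step onto $\{u^\top v=\mu\}$. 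This is a non-trivial polynomial, so $\cL(B)=0$, and $G:=\bigcup_{n\ge 0}\psi^{-n}(B)$ has Lebesgue measure zero.

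For $(x_0,y_0)\notin G$, every iterate satisfies $x_k^\top y_k\neq\mu$. Suppose for contradiction that $(x_k,y_k)\to(x^*,y^*)$ with $\|x^*\|^2+\|y^*\|^2>2/h$; the limit is a fixed point, hence a global minimum, so $x^{*\top}y^*=\mu$ and $\|x^*\|^2\|y^*\|^2=\mu^2$. I would then compute the Jacobian $D\psi(x^*,y^*)$ and show that, in the $2$-dimensional subspaces spanned by $\{(\hat e,0),(0,\hat e)\}$ for each $\hat e$ either parallel or orthogonal to the (aligned) pair $x^*,y^*$, the restricted Jacobian has characteristic polynomial $(\lambda-1)(\lambda-(1-hu^{*2}))$. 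Thus $1-hu^{*2}$ is an eigenvalue of $D\psi(x^*,y^*)$ of modulus $>1$ when $u^{*2}>2/h$, making $(x^*,y^*)$ an unstable fixed point in the sense of Proposition~\ref{def:stable}. By the (center-)stable manifold theorem invoked as in Section~\ref{sec:general_matrix_factorization}, the local basin of attraction of $(x^*,y^*)$ is contained in a submanifold of codimension $\ge 1$. Covering the global-minimum manifold restricted to $\{u^2>2/h\}$ by countably many such local charts and pulling back under $\psi^{-n}$ (which preserves null sets by Step~1) yields a Lebesgue-null exceptional set containing every initial condition that converges into $\{u^2>2/h\}$.

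The main obstacle is the new subtracted term $h^2\mu(\|x_k\|^2\|y_k\|^2-\mu^2)$: along any converging trajectory both $|x_k^\top y_k-\mu|$ and $|\|x_k\|^2\|y_k\|^2-\mu^2|$ decay at the same linear rate, so one cannot mimic the scalar proof by discarding the subtracted term as lower-order. The Jacobian/stability route circumvents this by identifying the strict expansion in an explicit direction transverse to the minimum manifold. The neutral eigenvalue $1$ of $D\psi(x^*,y^*)$, of multiplicity $n$, is harmless because it merely parametrizes the tangent directions along the global-minimum manifold itself (consistent with Theorem~\ref{thm:alignment} and the homogeneous rescaling), and the stable manifold contribution is at most $n$-dimensional, strictly less than $2n$.
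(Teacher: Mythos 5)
Your proposal is correct, but it proves the lemma by a genuinely different mechanism than the paper. The paper's own proof stays at the level of the orbit: it writes $x_{k+1}^\top y_{k+1}-\mu=s_k(x_k^\top y_k-\mu)+h^2(2\mu-x_k^\top y_k)\bigl((x_k^\top y_k)^2-x_k^\top x_k\,y_k^\top y_k\bigr)$ (an identity algebraically equivalent to your grouping), observes that convergence to a fixed point forces the misalignment term to converge, and then shows that for $u_k^2\ge \frac{2}{h}+\epsilon h(\mu+\epsilon)$ one has $s_k\le-1$, so $|x_k^\top y_k-\mu|$ cannot decay to zero; the only measure theory it needs is the determinant/co-area argument (Corollary~\ref{cor:nonsingularmap}) excluding finite-step landings. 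You instead linearize $\psi$ at each candidate limit, identify the eigenvalue $1-hu^{*2}$ of modulus greater than one (your block computation in the parallel and orthogonal directions is right, and matches the factorization $\prod_i\bigl(\lambda^2+hu^{*2}\lambda+h^2(\mu^2-\mu_i^2)\bigr)$ underlying Theorem~\ref{thm:rank1_general_balancing}), and invoke the center-stable manifold theorem plus a countable cover and the null-preimage property. This is exactly the machinery the paper reserves for Section~\ref{sec:general_matrix_factorization}; applied here it is arguably cleaner and more robust, since it avoids tracking the coupling between $|x_k^\top y_k-\mu|$ and the misalignment (which, as you note, decay at comparable rates along a converging orbit), at the price of importing the stable-manifold theorem for a non-hyperbolic fixed point and the Lindel\"of covering of the continuum of minima, steps the paper itself only cites informally. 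Two small remarks: your set $G$ and the explicit recursion for $x_{k+1}^\top y_{k+1}-\mu$ become redundant once you commit to the stable-manifold route, because exact finite-step landings are already contained in $\bigcup_{n}\psi^{-n}(W^{cs}_{\mathrm{loc}})$; and the downstream use of this lemma inside Lemma~\ref{lem:rank1_4/h_to_2/h} also quietly relies on the quantitative non-contraction $|x_{k+1}^\top y_{k+1}-\mu|>|x_k^\top y_k-\mu|$ that falls out of the paper's $s_k\le-1$ computation, which your qualitative argument does not reproduce, though that is not part of the statement you were asked to prove.
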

\begin{proof}
Similar to the discussion in scalar case, the set of points converging to $x^\top y=\mu$ in finite step is measure-0.

Also, since $\mu x^\top y=x^\top xy^\top y=\mu^2$, the fixed points of GD map requires $x^\top xy^\top y=(x^\top y)^2$. This can be seen in the following expression
\begin{align*}
x_{k+1}^\top y_{k+1}-\mu=(x_k^\top y_k-\mu)(1-hu_k^2-h^2x_k^\top y_k(\mu-x_k^\top y_k))+h^2(2\mu-x_k^\top y_k)((x_k^\top y_k)^2-x_k^\top x_ky_k^\top y_k),
\end{align*}
where the convergence of $x^\top _ky_k-\mu$ requires the convergence of $x_k^\top x_ky_k^\top y_k-(x_k^\top y_k)^2$.

Hence if $x_k^\top x_ky_k^\top y_k-(x_k^\top y_k)^2$ does not converge, then GD will not converge. If $x_k^\top x_ky_k^\top y_k-(x_k^\top y_k)^2$ converges, then $c\ge x_k^\top x_ky_k^\top y_k-(x_k^\top y_k)^2\ge 0$ for all the rest of the iterations with some $c>0$, and we further assume $|x_k^\top y_k-\mu|<\epsilon$ for any $0<\epsilon<\mu$. As is shown in scalar case, when $u_k^2\ge \frac{2}{h}+\epsilon h(\mu+\epsilon)$, we have $s_k=1-hu_k^2-h^2x_k^\top y_k(\mu-x_k^\top y_k)\ge -1$. Then
\begin{align*}
x_{k+1}^\top y_{k+1}-\mu&=(x_k^\top y_k-\mu)s_k+h^2(2\mu-x_k^\top y_k)((x_k^\top y_k)^2-x_k^\top x_ky_k^\top y_k)
\end{align*}
will never converge (because $s_k\le -1$ and the second term is bounded).
\end{proof}

\begin{lemma}
\label{lem:rank1_onestepdecrease}
When $x_k^\top y_k>\mu$ or $x_k^\top y_k<-\frac{h\mu u_k^2}{4-hu_k^2}$, we have $u_{k+1}^2-u_k^2<0$.
\end{lemma}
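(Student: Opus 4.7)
The plan is to obtain an exact formula for $u_{k+1}^2 - u_k^2$ that decomposes as a ``scalar part'' (matching the scalar-factorization Lemma~\ref{sublem:4/h_onestepdecrease}) plus a non-positive ``alignment correction''. Write the GD update in the rank-1 isotropic setting as
\begin{align*}
x_{k+1} &= x_k + h\mu y_k - h \|y_k\|^2 x_k, \\
y_{k+1} &= y_k + h\mu x_k - h \|x_k\|^2 y_k,
\end{align*}
and introduce the shorthand $V_k = \|x_k\|^2$, $W_k = \|y_k\|^2$, $U_k = x_k^\top y_k$, so $u_k^2 = V_k + W_k$. Expanding $\|x_{k+1}\|^2 + \|y_{k+1}\|^2$ and collecting terms in $U_k$ and $V_kW_k$, I expect the identity
\begin{align*}
u_{k+1}^2 - u_k^2 = h(\mu - U_k)\bigl[h u_k^2 \mu + (4 - h u_k^2) U_k\bigr] - h(4 - h u_k^2)\bigl(V_k W_k - U_k^2\bigr).
\end{align*}
This is exactly the scalar formula from Lemma~\ref{sublem:4/h_onestepdecrease} (with $x_k^\top y_k$ playing the role of the scalar product) plus the correction $-h(4-hu_k^2)(V_kW_k - U_k^2)$.

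Next I would note two structural facts. First, by Cauchy--Schwarz $V_k W_k - U_k^2 \ge 0$, so the correction is non-positive whenever $4 - h u_k^2 > 0$. Second, the factor in brackets in the scalar part can be rewritten as $(4 - hu_k^2)\bigl(U_k + \tfrac{h\mu u_k^2}{4 - hu_k^2}\bigr)$, so the scalar part equals
\begin{align*}
h(4 - h u_k^2)(\mu - U_k)\Bigl(U_k + \tfrac{h\mu u_k^2}{4 - h u_k^2}\Bigr).
\end{align*}
Assuming the implicit regime $h u_k^2 < 4$ (which is needed for the stated threshold $-\tfrac{h\mu u_k^2}{4 - h u_k^2}$ to be a meaningful negative number and is consistent with the setting used throughout Section~\ref{sec:rank_one_approx_isotropic}), the two hypotheses are handled symmetrically: if $U_k > \mu$, then $(\mu - U_k) < 0$ while the other factor is $\ge \mu + \tfrac{h\mu u_k^2}{4 - h u_k^2} > 0$; if $U_k < -\tfrac{h\mu u_k^2}{4 - h u_k^2}$, then $(\mu - U_k) > 0$ while the other factor is strictly negative. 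In either case, the scalar part is strictly negative, and adding the non-positive correction preserves the strict inequality, giving $u_{k+1}^2 - u_k^2 < 0$.

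There is essentially no conceptual obstacle here; the main risk is purely algebraic, namely carrying out the expansion of $\|x_{k+1}\|^2 + \|y_{k+1}\|^2$ correctly and recognizing that the cross-term $h^2(V_k + W_k)(V_k W_k - U_k^2 + (\mu - U_k)^2) + \cdots$ rearranges into the clean decomposition above. Once the identity is in hand, the sign argument is immediate from Cauchy--Schwarz plus the scalar analysis already done in Lemma~\ref{sublem:4/h_onestepdecrease}, which is exactly why this rank-1 isotropic case inherits the scalar one-step-decrease behavior on the two advertised regions.
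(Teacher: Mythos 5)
Your proposal is correct and follows essentially the same route as the paper: the exact identity $u_{k+1}^2-u_k^2=h(\mu-U_k)\bigl[(4-hu_k^2)U_k+hu_k^2\mu\bigr]-h(4-hu_k^2)(V_kW_k-U_k^2)$, dropping the non-positive Cauchy--Schwarz correction (valid since $u_k^2<4/h$ in this phase), and then reusing the sign analysis of the scalar Lemma~\ref{sublem:4/h_onestepdecrease} on the remaining term. The algebraic identity you anticipated does check out, so there is no gap.
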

\begin{proof}
\begin{align*}
u_{k+1}^2-u_k^2&=h(\mu-x_k^\top y_k)((4-h u_k^2)x_k^\top y_k+h u_k^2\mu)+h(4-hu_k^2)((x_k^\top y_k)^2-x_k^\top x_ky_k^\top y_k)\\
&\le h(\mu-x_k^\top y_k)((4-h u_k^2)x_k^\top y_k+h u_k^2\mu).
\end{align*}
Hence the proof follows 1D case.
\end{proof}

\begin{lemma}
\label{lem:rank1_twostepdecrease}
When $-\frac{h\mu u_k^2}{4-hu_k^2}\le x_k^\top y_k<\mu$, we have $u_{k+2}^2-u_k^2<0$.
\end{lemma}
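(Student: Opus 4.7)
The plan is to mirror Lemma~\ref{sublem:4/h_twostepdecrease} while tracking the alignment defect $\Delta_k := \|x_k\|^2\|y_k\|^2 - (x_k^\top y_k)^2 \ge 0$ (Cauchy--Schwarz). Writing $U_k = x_k^\top y_k$, $V_k=\|x_k\|^2$, $W_k=\|y_k\|^2$, and using the isotropic rank-$1$ GD update $x_{k+1}=(1-hW_k)x_k + h\mu y_k$, $y_{k+1}=(1-hV_k)y_k + h\mu x_k$, a direct expansion yields the two identities
\begin{align*}
u_{k+1}^2 - u_k^2 &= h(\mu-U_k)\bigl((4-hu_k^2)U_k + hu_k^2\mu\bigr) - h(4-hu_k^2)\Delta_k,\\
U_{k+1}-\mu &= s_k(U_k-\mu) - h^2(2\mu-U_k)\Delta_k,
\end{align*}
where $s_k = 1-hu_k^2-h^2 U_k(\mu-U_k)$ is the \emph{same} scalar function that controlled the scalar case. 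In the regime $-\tfrac{h\mu u_k^2}{4-hu_k^2}\le U_k < \mu$, both identities differ from their scalar analogues only by terms proportional to $\Delta_k \ge 0$ that are non-positive as soon as $4-hu_k^2\ge 0$ and $2\mu - U_k > 0$.

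First I would verify $4-hu_{k+1}^2 > 0$: dropping the non-positive alignment term in the one-step $u_k^2$ identity gives the scalar increment bound $u_{k+1}^2 - u_k^2 \le \mu/c$ already established inside the proof of Lemma~\ref{sublem:4/h_twostepdecrease}, so $u_{k+1}^2 < 4/h$ whenever $c\ge\sqrt{7}$ and $u_0^2\le 4/h-4c\mu$. Next I would telescope:
\begin{align*}
u_{k+2}^2 - u_k^2 = A + B, \qquad B := -h(4-hu_k^2)\Delta_k - h(4-hu_{k+1}^2)\Delta_{k+1} \le 0,
\end{align*}
where $A$ is the scalar-shaped sum
\begin{align*}
A = h(\mu-U_k)\bigl((4-hu_k^2)U_k+hu_k^2\mu\bigr) + h(\mu-U_{k+1})\bigl((4-hu_{k+1}^2)U_{k+1}+hu_{k+1}^2\mu\bigr),
\end{align*}
and $U_{k+1}$ is given by the rank-$1$ relation above. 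It remains to show $A<0$.

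The strategy for $A$ is to write $U_{k+1} = \bar U_{k+1} - \eta_k$ with $\bar U_{k+1} := \mu + s_k(U_k-\mu)$ the scalar value and $\eta_k := h^2(2\mu-U_k)\Delta_k \ge 0$, then split $A = A_{\mathrm{scalar}}(\bar U_{k+1}) + A'(\eta_k)$. The first piece $A_{\mathrm{scalar}}$ is exactly the quantity bounded strictly below zero in the scalar proof by the worst-case substitutions $s_k = -1+2h^2\mu^2$ and $U_k\in\{0,-\tfrac{h\mu u_k^2}{4-hu_k^2}\}$, so it inherits the scalar negative margin $-h(\mu-U_k)\mu(4-hu_{k+1}^2)(1-2h^2\mu^2)^2$ or $-h(\mu-U_k)^2(8-h(u_k^2+u_{k+1}^2))$ proved there. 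The correction $A'(\eta_k)$ is linear-plus-quadratic in $\eta_k$ with coefficients controlled by $h,\mu,u_k^2,u_{k+1}^2$; the plan is to bound $|A'(\eta_k)|$ using $\Delta_k \le V_k W_k \le (u_k^2/2)^2$ and absorb it into the scalar margin by invoking the strengthened constant $c\ge\sqrt{7}$ (in place of $c\ge 1$ in the scalar case).

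The main obstacle is the last absorption step: the scalar margin is already tight at the extremal $s_k=-1+2h^2\mu^2$, and the $\eta_k$-correction must be shown not to overturn it for \emph{every} admissible $U_k$ in $[-\tfrac{h\mu u_k^2}{4-hu_k^2},\mu)$. This is precisely where $c\ge\sqrt{7}$ enters, furnishing the extra $h\mu$-slack needed to dominate the $O(\eta_k)$ term uniformly. Once $A<0$ is established, combining with $B\le 0$ gives $u_{k+2}^2 - u_k^2 < 0$, as claimed.
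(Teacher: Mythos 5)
Your two identities and the telescoping setup are correct, and the overall strategy (perturb off the scalar Lemma~\ref{sublem:4/h_twostepdecrease}) is the right one --- it is essentially what the paper does. But the specific split $u_{k+2}^2-u_k^2=A+B$ with the program ``show $A<0$ and $B\le 0$ separately'' cannot be completed. The correction $A'(\eta_k)$ does not vanish as $U_k\to\mu$: writing $M=-\eta_k=h^2(2\mu-U_k)(U_k^2-V_kW_k)$, at $U_k=\mu$ one has $A_{\mathrm{scalar}}=0$ (every term of the scalar two-step decrement carries a factor $\mu-U_k$), while
\begin{align*}
A=-4h\mu M-h(4-hu_{k+1}^2)M^2=4h^3\mu^2\Delta_k-h^5(4-hu_{k+1}^2)\mu^2\Delta_k^2,
\end{align*}
which is strictly positive for small $\Delta_k>0$, and by continuity $A>0$ for $U_k$ slightly below $\mu$ as well, which lies in the lemma's range. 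So $A$ carries a positive term of order $\Delta_k$ exactly where your available margin is of order $\mu-U_k$; no strengthening of the constant (in particular not $c\ge\sqrt{7}$) can absorb an $O(\Delta_k)$ term into a margin that tends to $0$. The constant $\sqrt{7}$ is not where the rescue comes from.

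The term that saves the lemma is the one you parked in $B$: the first-step alignment contribution $-h(4-hu_k^2)\Delta_k$ is also of order $\Delta_k$ and must be grouped \emph{with} $A'(\eta_k)$, not with $-h(4-hu_{k+1}^2)\Delta_{k+1}$. Since $M$ is itself proportional to $\Delta_k$, the sum $-h(4-hu_k^2)\Delta_k+A'(\eta_k)$ equals $hM^2(-4+hu_{k+1}^2)$ (manifestly $\le 0$) plus $-h\Delta_k$ times an explicit coefficient $\mathrm{III}=\mathrm{III}(U_k,u_k^2,u_{k+1}^2,s_k)$, and the real work of the proof is to show $\mathrm{III}\ge 0$ on the whole interval $-\frac{h\mu u_k^2}{4-hu_k^2}\le U_k<\mu$ by checking the two endpoints; this is where $c\ge\sqrt{7}$ and the lower bound on $s_k$ actually enter. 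With that regrouping the remainder is exactly $A_{\mathrm{scalar}}$, which is negative by the scalar lemma, plus non-positive leftovers. This is the paper's argument; your version needs the same regrouping before the absorption step can go through.
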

\begin{proof}
Let $s_k=1-hu_k^2-h^2x_k^\top y_k(\mu-x_k^\top y_k)$, the same as 1D.
\begin{align*}
x_{k+1}^\top y_{k+1}-\mu&=(x_k^\top y_k-\mu)(1-hu_k^2-h^2x_k^\top y_k(\mu-x_k^\top y_k))+h^2(2\mu-x_k^\top y_k)((x_k^\top y_k)^2-x_k^\top x_ky_k^\top y_k)\\
&\overset{\Delta}{=}(x_k^\top y_k-\mu)s_k+M.
\end{align*}
\begin{align*}
u_{k+2}^2&-u_k^2=u_{k+1}^2-u_k^2+u_{k+2}^2-u_{k+1}^2\\
&=h(\mu-x_k^\top y_k)((4-h u_k^2)x_k^\top y_k+h u_k^2\mu)+h(4-hu_k^2)((x_k^\top y_k)^2-x_k^\top x_ky_k^\top y_k)\\
&\qquad+h(\mu-x_{k+1}^\top y_{k+1})((4-h u_{k+1}^2)x_{k+1}^\top y_{k+1}+h u_{k+1}^2\mu)\\
&\qquad+h(4-hu_{k+1}^2)((x_{k+1}^\top y_{k+1})^2-x_{k+1}^\top x_{k+1}y_{k+1}^\top y_{k+1})\\
&=\bigg[h (\mu - x_k^\top y_k) (((4 - h u_k^2) x_k^\top y_k + h u_k^2 \mu) + 
    s_k  (h u_{k+1}^2 \mu + (4 - h u_{k+1}^2) (s_k (x_k^\top y_k - \mu) + \mu))) \bigg]\\
&\qquad+ 
\bigg[ h (4 - h u_k^2) ((x_k^\top y_k)^2 - x_k^\top x_k y_k^\top y_k) + h M^2 (-4 + h u_{k+1}^2) \\
 &\qquad+ 
 h M (-h u_{k+1}^2 \mu + (-4 + h u_{k+1}^2) \mu + 2 (4 - h u_{k+1}^2) (\mu - x_k^\top y_k) s_k)\bigg]\\
 &\overset{\Delta}{=}\mathrm{I}+\mathrm{II}.
\end{align*}
I is the same as 1D $u_{k+2}^2-u_k^2$ and hence $(1)<0$. For II,
\begin{align*}
\mathrm{II}&=h M^2 (-4 + h u_{k+1}^2)+ 
 h ((x_k^\top y_k)^2 - x_k^\top x_k y_k^\top y_k)\\
 &\qquad \times(\underbrace{4 - h u^2 + 
    h^2 (2 \mu - x_k^\top y_k) (-h u_{k+1}^2 \mu + (-4 + h u_{k+1}^2) \mu + 
       2 (4 - h u_{k+1}^2) (\mu- x_k^\top y_k) s_k)}_\mathrm{III}).
\end{align*}
From 1D results, $s_k<0$. Then, III achieves its lower bound at $x_k^\top y_k=\mu$ or $x_k^\top y_k=-\frac{h\mu u_k^2}{4-h u_k^2}$. For $x_k^\top y_k=\mu$, 
\begin{align*}
\mathrm{III}=4-h u_k^2-4h^2\mu^2\ge 4h\mu(c-h\mu)>0,
\end{align*}
where the inequality is from $u_k^2\le u_0^2=\frac{4}{h}-4c\mu$ and $c>h\mu$.

For $x_k^\top y_k=-\frac{h\mu u_k^2}{4-h u_k^2}$,
\begin{align*}
\mathrm{III}&=4-hu_k^2-\frac{4h^2(8-hu_k^2)(4-hu_k^2-2s_k(4-h u_{k+1}^2))\mu^2}{(4-hu_k^2)^2}\\
&\ge 4-hu_k^2-\frac{4h^2(8-hu_k^2)(4-hu_k^2-2s_k(4-h u_k^2))\mu^2}{(4-hu_k^2)^2}\\
&= 4-hu_k^2-\frac{4h^2(8-hu_k^2)(1-2s_k)\mu^2}{4-hu_k^2}\\
&\overset{u_k^2\le u_0^2}{\ge}  4-hu_k^2-\frac{4h^2(4+4ch\mu)(1-2s_k)\mu^2}{4ch\mu}\\
&\overset{u_k^2\le u_0^2}{\ge} \frac{4h\mu}{c}(-1+c^2+2s_k+ch(-1+2s_k)\mu)\\
&\overset{*}{\ge} c^2(1+8h^2\mu^2)+c(h\mu-\frac{h^3\mu^3}{2})-7-\frac{h^2\mu^2}{2}\\
&\overset{**}{\ge}0,
\end{align*}
where $*$ follows from
\begin{align*}
s_k&=1-hu_k^2-h^2 x_k^\top y_k (\mu-x_k^\top y_k)\\
&\overset{x_k^\top y_k=\mu/2}{\ge} 1-h u_k^2-\frac{h^2\mu^2}{4}\\
&\overset{u_k^2\le u_0^2}{\ge}-3+4ch\mu-\frac{h^2\mu^2}{4},
\end{align*}
and $**$ follows from $c\ge \sqrt{7}$.

From 1D discussion, $-4 + h u_{k+1}^2<0$. Also, $(x_k^\top y_k)^2-x_k^\top x_ky_k^\top y_k<0$. Hence we have $(3)\ge0$.

Overall, $(2)\le 0$ and therefore the whole discussion of (1) is the same as 1D.
\end{proof}

\subsection{Phase 2}
The proof in phase 2 is partly different from the scalar case due to the alignment. Before presenting the main convergence lemma, we first state the boundedness of GD in the following lemma.
\begin{lemma}
\label{sublem:rank1_2/h_all_bound}
  Once GD enters $\{\|x\|^2+\|y\|^2\le \frac{2}{h}\}$, it will stay bounded inside  $\{\|x\|^2+\|y\|^2\le \frac{2}{h}+2h\mu^2(1+\frac{1}{1-h^2\mu^2})\}$ and re-enter $\{\|x\|^2+\|y\|^2\le \frac{2}{h}\}$ within finite steps where the number of such steps do not depend on the number of iteration.
\end{lemma}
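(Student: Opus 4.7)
The plan is to follow the two-phase structure of the scalar-case Lemma~\ref{sublem:2/h_all_u_k^2_bound}, exploiting the fact that the additional term in $u_{k+1}^2 - u_k^2$ arising from the rank-1 geometry is \emph{non-positive} by Cauchy--Schwarz, so only the ``scalar-looking'' part of the update can push $u_k^2$ upward. The strategy decomposes into (i) a one-step growth bound quantifying how far past $2/h$ a single GD step can drive $u_k^2$, and (ii) a finite-time re-entry argument using the phase-1 decrease lemmas (Lemma~\ref{lem:rank1_onestepdecrease} and Lemma~\ref{lem:rank1_twostepdecrease}) to bring $u_k^2$ back to $\{u^2 \le 2/h\}$.

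For step (i), I would start from the identity
\begin{align*}
u_{k+1}^2 - u_k^2 = h(\mu - x_k^\top y_k)\bigl((4-hu_k^2)x_k^\top y_k + hu_k^2\mu\bigr) + h(4-hu_k^2)\bigl((x_k^\top y_k)^2 - \|x_k\|^2\|y_k\|^2\bigr).
\end{align*}
When $u_k^2 \le 2/h$ the second summand is non-positive while the first, viewed as a quadratic in $x_k^\top y_k$, attains maximum $\frac{4h\mu^2}{4-hu_k^2} \le 2h\mu^2$ at $x_k^\top y_k = \frac{(2-hu_k^2)\mu}{4-hu_k^2}$. Hence $u_{k+1}^2 \le 2/h + 2h\mu^2$. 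Applying the same maximization once more, now with the updated estimate $4 - hu_{k+1}^2 \ge 2(1-h^2\mu^2)$, yields $u_{k+2}^2 - u_{k+1}^2 \le \frac{2h\mu^2}{1-h^2\mu^2}$; chaining the two inequalities produces the stated upper bound $u_{k+2}^2 \le \frac{2}{h} + 2h\mu^2\bigl(1 + \frac{1}{1-h^2\mu^2}\bigr)$.

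For step (ii), whenever $u_j^2 > 2/h$, Lemma~\ref{lem:rank1_onestepdecrease} or Lemma~\ref{lem:rank1_twostepdecrease} delivers a one- or two-step strict decrease of $u^2$, and within a two-step cycle the intermediate iterate is controlled by the same one-step bound from (i). Hence no subsequent iterate can exceed the bound from step (i). Moreover, when $h\mu \le 1/(2\sqrt 7)$ the strip $(2/h,\; 2/h + 2h\mu^2(1+\frac{1}{1-h^2\mu^2})]$ sits strictly inside $(2/h, 4/h)$, and inspecting the proof of Lemma~\ref{lem:rank1_4/h_to_2/h} restricted to this strip yields a uniform lower bound $\beta = \beta(h,\mu) > 0$ on the per-cycle decrement of $u^2$. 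The number of cycles needed to re-enter $\{u^2 \le 2/h\}$ is therefore at most $\lceil 2h\mu^2(1+\frac{1}{1-h^2\mu^2})/\beta\rceil$, independent of the iteration index.

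The main obstacle is securing a uniform per-cycle decrement $\beta$ across the narrow excursion strip, particularly near the transitional regions $x_k^\top y_k \approx \mu$ and $x_k^\top y_k \approx -\frac{h\mu u_k^2}{4-hu_k^2}$, where the proof of Lemma~\ref{lem:rank1_4/h_to_2/h} relies on a delicate case analysis (skipping a step, or invoking the two-step lemma). Since $u_k^2$ is now bounded away from both $2/h$ and $4/h$ by quantities depending only on $h\mu$, the various constants appearing in that proof can be verified to stay uniformly positive, yielding the desired $k$-independent bound.
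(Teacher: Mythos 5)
Your proposal is correct and follows essentially the same route as the paper: a one-step increase bound of $2h\mu^2$ from inside $\{u^2\le 2/h\}$, a second-step bound of $\tfrac{2h\mu^2}{1-h^2\mu^2}$ using the Cauchy--Schwarz non-positivity of the alignment term, and then the phase-1 one-/two-step decrease lemmas for boundedness and finite-time re-entry. The only (immaterial) difference is that you obtain the second-step bound by maximizing the scalar quadratic in $x_{k+1}^\top y_{k+1}$ with the improved estimate $4-hu_{k+1}^2\ge 2(1-h^2\mu^2)$, whereas the paper maximizes over $\sqrt{x_{k+1}^\top x_{k+1}\,y_{k+1}^\top y_{k+1}}$; both yield identical constants.
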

\begin{proof}
  Assume $u_k^2\le\frac{2}{h}$. Then
  \begin{align*}
    u_{k+1}^2-u_k^2&=h(\mu-x_k^\top y_k)((4-h u_k^2)x_k^\top y_k+h u_k^2\mu)+h(4-hu_k^2)((x_k^\top y_k)^2-x_k^\top x_ky_k^\top y_k)\\
    &\le 2h(\mu^2-x_k^\top x_ky_k^\top y_k)\le 2h\mu^2,
  \end{align*}
  where the first inequality follows from $u_k^2\le\frac{2}{h}$. 
  If further $u_{k+1}^2\le\frac{2}{h}+2h\mu^2$, then by $x_{k+1}^\top y_{k+1}\ge -\sqrt{x_{k+1}^\top x_{k+1}y_{k+1}^\top y_{k+1}}$ and $-1+h^2\mu^2<0$, we have
  \begin{align*}
    u_{k+2}^2-u_{k+1}^2&=h(\mu-x_{k+1}^\top y_{k+1})((4-h u_{k+1}^2)x_{k+1}^\top y_{k+1}+h u_{k+1}^2\mu)\\
    &\qquad+h(4-hu_{k+1}^2)((x_{k+1}^\top y_{k+1})^2-x_{k+1}^\top x_{k+1}y_{k+1}^\top y_{k+1})\\
    &\le 2 h (x_{k+1}^\top x_{k+1}y_{k+1}^\top y_{k+1} (-1 + h^2 \mu^2) + \mu^2 (1 - 2 x_{k+1}^\top y_{k+1} h^2 \mu + 
    h^2 \mu^2))\\
    &\le 2 h (x_{k+1}^\top x_{k+1}y_{k+1}^\top y_{k+1} (-1 + h^2 \mu^2) + \mu^2 (1 + 2 \sqrt{x_{k+1}^\top x_{k+1}y_{k+1}^\top y_{k+1}} h^2 \mu + 
    h^2 \mu^2))\\
    &\le \frac{2h\mu^2}{1-h^2\mu^2}.
  \end{align*}
The first inequality is from $u_{k+1}^2\le\frac{2}{h}+2h\mu^2$; the second inequality follows from the two conditions mentioned above; the third inequality follows from choosing $\sqrt{x_{k+1}^\top x_{k+1}y_{k+1}^\top y_{k+1}}=\frac{h^2\mu^3}{1-h^2\mu^2}$.

From previous proof, we know when $u_k^2>\frac{2}{h}$, by Lemma~\ref{lem:rank1_4/h_to_2/h}, we have GD will enter $\{\norm{x}^2+\norm{y}^2\le\frac{2}{h}\}$ in finite steps with the decrease of $u_k^2$ in either one step or two steps. Then, $u_i^2\le \frac{2}{h}+2h\mu^2(1+\frac{1}{1-h^2\mu^2})$ for all $i\ge k$.
\end{proof}

Let $U_k=x_k^\top y_k,\ V_k=x_k^\top x_k,\ W_k=y_k^\top y_k$. Then we can rewrite the iteration for the three variables
\begin{align}
  \label{eqn:quadratic_terms_update}
  \begin{cases}
    &U_{k+1} = h (1 - h V_k) \mu W_k + h \mu V_k (1 - h W_k) + 
  U_k (1 + h^2 V_k W_k - h (V_k + W_k) + h^2 \mu^2)\\
    &V_{k+1} = 2 h \mu U_k + V_k - 2 h V_k W_k + 
  h^2 (\mu^2 W_k - 2 \mu U_k W_k + V_k W_k^2)\\
    &W_{k+1} = 2 h \mu U_k + W_k - 2 h V_k W_k + h^2 (\mu^2 V_k - 2 \mu U_k V_k + V_k^2 W_k)
  \end{cases}.
\end{align}

When $h=\frac{4}{u_0^2+4c\mu}$ or $h=\frac{1}{2\sqrt{7}\mu}$, we have $h\mu\le\frac{1}{2\sqrt{7}}$. The following is the main lemma in phase 2.
\begin{lemma}
\label{lem:rank1_2/h_converge}
  Given $h\mu\le\frac{1}{2\sqrt{7}}$, if GD enters $\{\norm{x}^2+\norm{y}^2\le\frac{2}{h}\}$, it converges to  $x^\top y=\norm{x}\norm{y}=\mu$, i.e., the global minimum, inside this region except for a measure-0 set of initial conditions. 
\end{lemma}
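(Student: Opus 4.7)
The plan is to imitate the scalar phase-2 argument (Lemma~\ref{lem:convergein2/h}) in terms of the quadratic quantities $U_k=x_k^\top y_k$, $V_k=\|x_k\|^2$, $W_k=\|y_k\|^2$ from \eqref{eqn:quadratic_terms_update}, treating the rank-1 case as a decaying perturbation of the scalar dynamics. The key identity (already used in Lemma~\ref{lem:rank1_twostepdecrease}) is
\[
U_{k+1}-\mu=(U_k-\mu)\,s_k+h^2(2\mu-U_k)(U_k^2-V_kW_k),\qquad s_k:=1-hu_k^2-h^2U_k(\mu-U_k),
\]
which differs from the scalar recursion only by the alignment term $h^2(2\mu-U_k)(U_k^2-V_kW_k)$. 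Since $U_k^2\le V_kW_k$ (Cauchy--Schwarz) and Theorem~\ref{thm:alignment} gives $V_kW_k-U_k^2\to 0$ exponentially, this term is a summable perturbation. Similarly, in the expression for $u_{k+1}^2-u_k^2$ (computed in Lemma~\ref{lem:rank1_onestepdecrease} and \ref{lem:rank1_twostepdecrease}), the extra contribution $h(4-hu_k^2)(U_k^2-V_kW_k)$ is non-positive because Lemma~\ref{sublem:rank1_2/h_all_bound} keeps $u_k^2<\tfrac{4}{h}$, so every one-step and two-step decrease estimate from the scalar phase-2 analysis transfers with the same (or better) constants.

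Having reduced to a perturbed scalar problem, I would reuse the scalar case split in $U_k$: (i) $|U_k|>\mu$ gives one-step decrease of $u_k^2$ (analogue of Lemma~\ref{sublem:2/h_onestepdecrease}); (ii) $-\mu\le U_k<\mu$ with $s_k\le -1$ gives two-step decrease (analogue of Lemma~\ref{sublem:-mu<a<mu_s<-1_u_two-step_decrease}); (iii) in the $|s_k|<1$ regimes I would invoke the margin bounds from Lemmas~\ref{lem:s:x^T y>mu} and~\ref{lem:s:x^T y<mu} (whose proofs are purely algebraic in $U_k$ and $u_k^2$ and therefore still apply; the hypothesis $h\mu\le \tfrac{1}{2\sqrt{7}}$ is stronger than the scalar $h\mu\le \tfrac{1}{3}$ and comfortably covers their side conditions). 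This yields constants $c_0,c_1>0$ and $k_0$ such that for $k\ge k_0$ in the relevant sub-region, $s_{k+2}\le \max\{-1+c_0,\,-1+c_1(U_k-\mu)^2\}^{-}$ and $s_k<1-c_2$, so the scalar map contracts $|U_k-\mu|$. Combined with Theorem~\ref{thm:alignment}, which gives $|V_kW_k-U_k^2|\le C\rho^k$ eventually, the recursion $|U_{k+1}-\mu|\le (1-\delta)|U_k-\mu|+C'\rho^k$ forces $U_k\to\mu$; together with $V_kW_k\to U_k^2$ and the $u_k^2$-bound, this yields $\|x_k\|\|y_k\|\to\mu$ and $x_k^\top y_k\to\mu$, i.e., convergence to a global minimum.

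The measure-zero exceptions come from two sources: trajectories that hit the origin or the saddle set $\{U=0,\,VW=0\}$ in finitely many steps, and trajectories whose alignment never strictly improves (e.g.\ $x_0$ orthogonal to $y_0$). For the first, I would argue exactly as in Corollary~\ref{cor:nonsingularmap} and Lemma~\ref{lemma:>2/hnotconverge}: the zero set of $\det(D\psi)$ is a real-algebraic hypersurface, hence Lebesgue-null, and preimages under the smooth map $\psi$ of any null set are null; the origin and the relevant saddles are fixed points, so their backward orbits are a countable union of null sets. For the second, I would use the $r_k$ factor from the proof of Theorem~\ref{thm:alignment}: the degenerate set $\{r_k\equiv 1\text{ eventually}\}$ is contained in a countable union of algebraic varieties and is therefore null. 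The main obstacle is case (iii) above: coupling the oscillation pattern $s_k\approx -1$, in which $U_k-\mu$ flips sign every step, with the exponentially small alignment perturbation, so that the sign-flip structure underlying Lemmas~\ref{lem:s:x^T y<mu} and~\ref{lem:s:x^T y>mu} is preserved. I would handle this by choosing $k_0$ large enough that the alignment residual is smaller than any fixed fraction of $|U_k-\mu|$ used in those lemmas' margin arguments, which is possible precisely because of the exponential rate from Theorem~\ref{thm:alignment} and the fact that $|U_k-\mu|$ is bounded below in the non-convergent branch of the dichotomy.
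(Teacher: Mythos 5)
Your proposal is correct and follows essentially the same route as the paper's proof: both treat the rank-1 phase-2 dynamics as the scalar phase-2 dynamics plus an alignment perturbation proportional to $V_kW_k-U_k^2$, verify that the extra terms in $u_{k+1}^2-u_k^2$ and in the $s_k$-recursion have the favorable sign (or are eventually negligible by the exponential decay from the alignment result), and close with the contraction-plus-exponentially-decaying-perturbation recursion for $|U_k-\mu|$ together with the measure-zero bookkeeping via Corollary~\ref{cor:nonsingularmap}. The only difference is one of emphasis: where you propose to absorb the perturbation to $s_{k+1},s_{k+2}$ by taking $k_0$ large, the paper instead checks algebraically that these extra terms are non-negative at every step, but both arguments reach the same conclusion.
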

\begin{proof}
  This proof follows the same method as the scalar decomposition. We need to prove that all the conclusions in scalar case also holds in this rank one decomposition.

  Consider $u_{k+1}^2-u_k^2$ when $x_k^\top y_k<-\mu$ or $x_k^\top y_k>\mu$,
  \begin{align*}
    u_{k+1}^2-u_k^2&=h(\mu-x_k^\top y_k)((4-h u_k^2)x_k^\top y_k+h u_k^2\mu)+h(4-hu_k^2)((x_k^\top y_k)^2-x_k^\top x_ky_k^\top y_k)\\
    &=h(\mu-x_k^\top y_k)((4-h u_k^2)x_k^\top y_k+h u_k^2\mu)+h(4-hu_k^2)(U_k^2-V_kW_k)\\
    &\le h(\mu-x_k^\top y_k)((4-h u_k^2)x_k^\top y_k+h u_k^2\mu),
  \end{align*}
  where the last expression is the same as scalar case and therefore under the conditions such that $u_{k+1}^2-u_k^2<0$ in scalar case, we also have $u_{k+1}^2-u_k^2<0$ in rank-one approximation.

As in scalar case, we also denote $s_k=1-hu_k^2-h^2x_k^\top y_k(\mu-x_k^\top y_k)$. First consider $\frac{1}{h}+2h\mu^2<u_k^2\le\frac{2}{h}$. If $-\mu\le x_k^\top y_k<\mu$, $u_{k+1}^2\ge u_k^2$, and $s_k\le -1$, consider $u_{k+2}^2-u_k^2$. From Lemma~\ref{lem:rank1_twostepdecrease}, the extra terms of $u_{k+2}^2-u_k^2$ compared to scalar case is 
\begin{align*}
  &h M^2 (-4 + h u_{k+1}^2)+ 
 h ((x_k^\top y_k)^2 - x_k^\top x_k y_k^\top y_k)\\
 &\qquad \times(\underbrace{4 - h u^2 + 
    h^2 (2 \mu - x_k^\top y_k) (-h u_{k+1}^2 \mu + (-4 + h u_{k+1}^2) \mu + 
       2 (4 - h u_{k+1}^2) (\mu- x_k^\top y_k) s_k)}_{\mathrm{III}})
\end{align*}
where $M=h^2(2\mu-x_k^\top y_k)((x_k^\top y_k)^2-x_k^\top x_ky_k^\top y_k)$. Then we will prove that $\mathrm{III}$ is positive and then all the extra terms will be less than 0. By rewriting $\mathrm{III}$ in terms of $x_k^\top y_k$, we get 
\begin{align*}
  \mathrm{III}&=2 (x_k^\top y_k)^2 h^2 s_k (4 - h u_{k+1}^2)+ 
  x_k^\top y_k (h^3 u_{k+1}^2 \mu - 6 h^2 s_k (4 - h u_{k+1}^2) \mu - 
     h^2 (-4 + h u_{k+1}^2) \mu)\\
     &\qquad - 2 h^3 u_{k+1}^2 \mu^2 + 
  4 h^2 s_k (4 - h u_{k+1}^2) \mu^2 + 2 h^2 (-4 + h u_{k+1}^2) \mu^2 +4 - h u_k^2\\
  &\ge 4 - h u_k^2 + 12 h^2 (-1 + 4 s_k) \mu^2 - 12 h^3 s_k u_{k+1}^2 \mu^2\\
  &\ge 4 - h u_k^2 + 12 h^2 (-1 + 4 s_k) \mu^2 - 12 h^3 s_k u_k^2 \mu^2\\
  &=4 - h u_k^2 + 12 h^2 (-1 + s_k (4 - h u_k^2)) \mu^2>0,
\end{align*}
where the first inequality is because it obtains the minimum at $x_k^\top y_k=-\mu$; the second inequality follows from ${u_{k+1}^2\ge u_k^2}$; the last inequality is from the range of $u_k^2$ and $s_k$. Therefore the extra terms of $u_{k+2}^2-u_k^2$ is always negative and then we have the same conclusion as scalar case, i.e., $u_{k+2}^2-u_k^2<-C(\mu-x_k^\top y_k)^2$ for some constant $C>0$.

Next, consider $s_{k+1}$ when $-\mu\le x_k^\top y_k\le 3\mu$ and $\frac{1}{h}+2h\mu^2<u_k^2\le\frac{2}{h}$ which implies $-1-\frac{h^2\mu^2}{4}\le s_k<0$. The extra terms of $s_{k+1}$ compared to scalar case is 
\begin{align*}
  h^2 (U_k^2-V_kW_k) \underbrace{\Big(- (4 - h u_k^2) + (-x_k^\top y_k + 2 \mu) (2 h^2 s_k (x_k^\top y_k - \mu) + 
  h^2 \mu) + h^4 x (-x_k^\top y_k + 2 \mu)^2\Big)}_{\mathrm{IV}},
\end{align*}
where $U_k^2-V_kW_k\le 0$. Then consider the upper bound of $\mathrm{IV}$
\begin{align*}
  \mathrm{IV}&=-4 - 2 (x_k^\top y_k)^2 h^2 s_k+ 
  x_k^\top y_k (-h^2 \mu + 6 h^2 s \mu) + h u_k^2 + 2 h^2 \mu^2 - 4 h^2 s_k \mu^2 \\
  &\le -4+hu_k^2+(-h^2-4h^2s_k)\mu^2<0,
\end{align*}
where the first inequality follows from $x_k^\top y_k\le 3\mu$. Therefore the extra terms is non-negative, namely all the lower bound for $s_{k+1}$ in the scalar case also holds for rank-one approximation.

Also we need to consider $s_{k+2}$ when $s_k>-1$, $-\mu\le x_k^\top y_k\le 3\mu$, and $\frac{1}{h}+2h\mu^2<u_k^2\le\frac{2}{h}$. For scalar case, from Lemma~\ref{lem:s:x^T y>mu} with $\epsilon=x_{k+1}^\top y_{k+1}-\mu$, we have
\begin{align*}
  s_{k+2}&=s_{k+1} + (h^2 (3 + s_{k+1}^2) - h^3 u_k^2) \epsilon^2 + h^2 \epsilon (-3 \mu - s_{k+1} \mu)\\
  &=3 h^2 \epsilon^2 + h^2 s_{k+1}^2 \epsilon^2 - h^3 u_k^2 \epsilon^2 - 
  3 h^2 \epsilon \mu + s_{k+1} (1 - h^2 \epsilon \mu).
\end{align*}
Since the minimum point w.r.t. $s_{k+1}$ is $\frac{-(1-h^2\epsilon\mu)}{2h^2\epsilon^2}<-1-\frac{h^2\mu^2}{4}$, we have if $s_{k+1}$ is larger, then $s_{k+2}$ is also larger. Given the same value for $s_k$ for both scalar and rank-one cases, $s_{k+1}$ is larger from the above discussion and thus $s_{k+2}$ is also larger, namely, the lower bound for $s_{k+2}$ also holds in rank-one cases.

Next consider $u_k^2\le\frac{1}{h}+2h\mu^2$ where $|s_k|<1$ and we need the iteration to have certain restriction such that it will be bounded in a region. If $s_k<0$, then it follows the same from the above discussion. If $s_k>0$ and $x_k^\top y_k>\mu$ or $x_k^\top y_k<-\mu$, then by $u_{k+1}^2-u_k^2<0$, it will stay in $\{x_k^\top y_k>\mu,u_k^2\le\frac{1}{h}+2h\mu^2\}$ until convergence. If $s_k>0$ and $-\mu\le x_k^\top y_k<\mu$, similar to scalar case, we consider $|x_k^\top x_k-y_k^\top y_k|=|V_k-W_k|$ and 
\begin{align*}
  |V_{k+1}-W_{k+1}|=|V_k-W_k|\cdot |1 - h^2 V_k W_k + 2 h^2 U_k \mu - h^2 \mu^2|,
\end{align*}
where $U_k=x_k^\top y_k$ and $U_k^2\le V_kW_k$. Then
\begin{align*}
  &1 - h^2 V_k W_k + 2 h^2 U_k \mu - h^2 \mu^2\le 1 - h^2 U_k^2 + 2 h^2 U_k \mu - h^2 \mu^2<1,\\
  &1 - h^2 V_k W_k + 2 h^2 U_k \mu - h^2 \mu^2\ge 1 - h^2 \frac{u_k^4}{4} -2 h^2 \mu - h^2 \mu^2\ge 1 - \frac{1}{4} (1 + 2 h^2 \mu^2)^2 - 3 h^2 \mu^2>0.
\end{align*}
Therefore if $-\mu\le x_k^\top y_k<\mu$ and $u_k^2\le\frac{1}{h}+2h\mu^2$, we have $|V_{k+1}-W_{k+1}|<|V_k-W_k|$, meaning together with $0\le\mu-x_{k+1}^\top y_{k+1}<\mu-x_k^\top y_k$, $x_{k+1}$ and $y_{k+1}$ is bounded in the monotone convergence region. 

So far we retrieve all the conditions for convergence. Therefore, from the proof of the convergence in scalar case, we have that there exists a constant $K>0$, s.t., $|s_k|<1$ and $0<2\mu-x_k^\top y_k< 2\mu $ for all $k>K$. Then let $S_k=h^2(2\mu-x_k^\top y_k)$ and we have
  \begin{align*}
    x_{k+1}^\top y_{k+1}-\mu&=(x_k^\top y_k-\mu)(1-hu_k^2-h^2x_k^\top y_k(\mu-x_k^\top y_k))\\
    &\qquad+h^2(2\mu-x_k^\top y_k)((x_k^\top y_k)^2-x_k^\top x_ky_k^\top y_k)\\
    &=(x_k^\top y_k-\mu)s_k+S_k(U_k^2-V_kW_k)
  \end{align*}
  In fact $x_k^\top y_k-\mu\to 0$ as $k\to 0$. First, since $S_k(U_k^2-V_kW_k)\to 0$ as $k\to 0$ by Lemma~\ref{sublem:rank1_alignment} and $|s_k|<1$, we have $x_k^\top y_k-\mu$ is bounded. Also, from previous discussion, $|s_k|\le \max\{C_2, 1-C_3(x_k^\top y_k-\mu)^2\}$ for some constant $C_2,C_3>0$. For any $\epsilon_0>0$, there exists $K_1>0$ and $\epsilon_1\ll\epsilon_0$ such that $|S_k(U_k^2-V_kW_k)|<\epsilon_1$ for all $k>K_1$ and $C_2\le 1-C_3\epsilon_1^2$. Therefore $x_k^\top y_k-\mu$ will decreases to $\cO(\epsilon_1)$. This is because
  \begin{align*}
    |x_{k+1}^\top y_{k+1}-\mu|&\le (1-C_3\epsilon_1^2) |x_k^\top y_k-\mu|+2\mu|U_{K_1}^2-V_{K_1}W_{K_1}|C_0^{2(k-{K_1})}\\
    &\le |(x_{K_1}^\top y_{K_1}-\mu)| (1-C_3\epsilon_1^2)^{k-{K_1}}\\
    &
    \qquad+2h^2\mu (V_{K_1}W_{K_1}-U_{K_1}^2)\sum_{i={K_1}}^k C_0^{2(i-{K_1})} (1-C_3\epsilon_1^2)^{k-i} \\
    &\le |(x_{K_1}^\top y_{K_1}-\mu)| (1-C_3\epsilon_1^2)^{k-{K_1}}+2h^2\mu (V_{K_1}W_{K_1}-U_{K_1}^2)(k-{K_1})C_4^{k-{K_1}}
  \end{align*}
  where $C_4=\max\{C_0,1-C_3\epsilon_1^2\}<1$ and we have $kC_4^k\to 0$ as $k\to\infty$. Therefore there exists $K_2\ge K_1$ s.t. $x_{K_2}^\top y_{K_2}-\mu=\cO(\epsilon_1)$. Then we will show for all $k>K_2$, $|x_{k}^\top y_{k}-\mu|<\epsilon_0$. As is shown before there exists $K_3>K_2$ and $\epsilon_2\ll \epsilon_1$ s.t. $|S_k(U_k^2-V_kW_k)|<\epsilon_2$. Since $|s_k|<1$, the increase of $|x_{k}^\top y_{k}-\mu|$ is also $\cO(\epsilon_1)$ and thus $|x_{k}^\top y_{k}-\mu|=\cO(\epsilon_1)<\epsilon_0$ for $k>K_2$. Then we have the convergence of $|x_{k}^\top y_{k}-\mu|\to 0$.
  
  In general, we have $x_k^\top y_k\to \mu$ and $|\cos(\angle(x_k,y_k))|\to 0$, which implies GD converges to the global minimum $x^\top y=\norm{x}\norm{y}=\mu$.
\end{proof}

For the proof of alignment, we only need to consider when $\|x_k\|^2+\|y_k\|^2\le\frac{2}{h}+2h\mu^2(1+\frac{1}{1-h^2\mu^2})$. 
Also, we have $V_{k}W_{k}-U_{k}^2\ge 0$ and 
\begin{align*}
  V_{k+1}W_{k+1}-U_{k+1}^2=(V_{k}W_{k}-U_{k}^2)(1 - h (V_k + W_k) + h^2 (V_k W_k - \mu^2))^2.
\end{align*}
Thus we denote $l_k=1 - h (V_k + W_k) + h^2 (V_k W_k - \mu^2)$ which characterizes the change of $V_{k}W_{k}-U_{k}^2$.

\begin{lemma}
\label{sublem:rank1_alignment}
  If $\|x_k\|^2+\|y_k\|^2\le\frac{2}{h}+2h\mu^2(1+\frac{1}{1-h^2\mu^2})$, then $V_kW_k-U_k^2$ converges to 0 and also $|\cos(\angle(x_k,y_k))|$ converges to 1.
\end{lemma}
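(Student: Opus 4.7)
The plan is to exploit the exact algebraic identity
\[
V_{k+1}W_{k+1}-U_{k+1}^{2} \;=\; l_k^{2}\,(V_k W_k - U_k^{2}), \qquad l_k := 1 - h(V_k+W_k) + h^{2}(V_k W_k - \mu^{2}),
\]
which I would verify by expanding the GD updates $x_{k+1}=(1-hW_k)x_k+h\mu y_k$ and $y_{k+1}=(1-hV_k)y_k+h\mu x_k$, substituting into the explicit formulas \eqref{eqn:quadratic_terms_update} for $V_{k+1},W_{k+1},U_{k+1}$, and collecting terms. Iterating gives $V_N W_N - U_N^{2} = (V_0 W_0 - U_0^{2})\prod_{k=0}^{N-1} l_k^{2}$, so the task reduces to showing $\prod_k l_k^{2}\to 0$ under the boundedness hypothesis and $h\mu\le 1/(2\sqrt{7})$.

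For the upper bound on $l_k$, I would apply AM--GM $V_kW_k \le (V_k+W_k)^{2}/4$ to obtain
\[
l_k \;\le\; \bigl(1 - \tfrac{h}{2}(V_k+W_k)\bigr)^{2} - h^{2}\mu^{2} \;\le\; 1 - h^{2}\mu^{2},
\]
using that the hypothesis forces $V_k+W_k \le 4/h$ so $(1-hs/2)^{2}\le 1$ for $s=V_k+W_k$. This uniform bound eliminates the danger of $l_k$ being close to $+1$.

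The hard part will be the lower bound: the naive estimate $l_k \ge 1 - h(V_k+W_k) - h^{2}\mu^{2}$ yields only $l_k \ge -1 - O(h^{2}\mu^{2})$, and $l_k=-1$ is in fact attained on the ``bad'' curve $\mathcal{B} = \{V+W = 2/h,\; VW = \mu^{2}\}$. To handle this I would argue dynamically that $\mathcal{B}$ is not invariant under the GD map. Parametrizing a point of $\mathcal{B}$ as $V_k = 1/h+a$, $W_k = 1/h-a$, $U_k = c\mu$ with $a=\sqrt{1/h^{2}-\mu^{2}}$ and $c\in[-1,1]$, a direct computation from the GD updates gives $V_{k+1}+W_{k+1} = 2/h$ again but $V_{k+1}W_{k+1} - \mu^{2} = a^{2}(1-\lambda^{2})$ where $\lambda = 1 - 2h^{2}\mu^{2}(1-ch)$. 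The assumption $h\mu\le 1/(2\sqrt{7})$ forces $|\lambda|<1$ strictly, with an explicit quantitative gap $1-\lambda^{2}\ge c_{0} h^{2}\mu^{2}$. Substituting gives $l_{k+1}\ge -1 + c_{1}h^{2}\mu^{2}$, so $l_{k+1}^{2}$ is strictly less than $1$ at the next step. A continuity and compactness argument on the (closed, bounded) orbit then upgrades this to a universal $c>0$ such that $l_k^{2}\,l_{k+1}^{2}\le 1-c$ for all $k$: either $(V_k,W_k,U_k)$ is already away from $\mathcal{B}$ and $l_k^{2}\le 1-c$ directly, or it is near $\mathcal{B}$ and then $l_{k+1}^{2}\le 1-c$ by the above.

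The two-step contraction implies $\prod_{k=0}^{N-1} l_k^{2} \to 0$ geometrically, so $V_kW_k - U_k^{2} \to 0$. For the cosine statement I would write
\[
1 - \cos^{2}\!\bigl(\angle(x_k,y_k)\bigr) \;=\; \frac{V_kW_k - U_k^{2}}{V_kW_k},
\]
and argue that the set of initial conditions whose trajectories approach the origin is Lebesgue measure-zero by the standard unstable-manifold argument used in Corollary~\ref{cor:nonsingularmap}. Off this null set, $V_kW_k$ stays bounded below along the tail, so $|\cos(\angle(x_k,y_k))|\to 1$ follows from alignment.
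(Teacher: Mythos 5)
Your setup is right: the identity $V_{k+1}W_{k+1}-U_{k+1}^2=l_k^2(V_kW_k-U_k^2)$ and the AM--GM upper bound $l_k\le 1-h^2\mu^2$ are exactly what the paper uses. The gap is in your lower-bound/contraction argument. First, the set where $l_k=-1$ is not the curve $\mathcal{B}=\{V+W=2/h,\ VW=\mu^2\}$ but the larger set $\{h(V+W)-h^2(VW-\mu^2)=2\}$ (e.g.\ $V+W=2/h+h\mu^2$, $VW=2\mu^2$ also gives $l=-1$ and lies in the admissible region), so analyzing the dynamics only on $\mathcal{B}$ does not cover the dangerous states. Second, and more importantly, your claimed uniform one-step gap is false even on $\mathcal{B}$: carrying out your own parametrization with $U_k=c\mu$, one finds $l_{k+1}=-1+4h^2\mu^2(1-c)(1-h^2\mu^2)\bigl(1-h^2\mu^2(1-c)\bigr)$, which degenerates to $-1$ as $c\to 1$, i.e.\ precisely as $V_kW_k-U_k^2=\mu^2(1-c^2)\to 0$. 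The recovery of $l$ above $-1$ is proportional to the very quantity you are trying to show converges, so there is no universal $c>0$ with $l_k^2l_{k+1}^2\le 1-c$; the compactness argument cannot manufacture one because the two-step map genuinely fails to contract near states with $l\le -1$ and $V_kW_k-U_k^2$ small.

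The missing idea, which is how the paper closes the argument, is an accumulation mechanism rather than a uniform contraction: whenever $l_k\le -1$ one has $l_k^2\ge 1$, so $V_kW_k-U_k^2$ is \emph{non-decreasing} and hence bounded below by its value $\delta_0$ at entry into that regime; the paper's case analysis shows $l$ then increases by at least $\tfrac{3}{2}h^2\delta_0$ every one or two steps (or by a fixed constant in the remaining cases), so after finitely many steps $l$ crosses above $-1$. A separate argument shows that once $l_k>-1$ it stays bounded as $l_i>-1+C$ for all subsequent $i$, with $C$ depending only on $h\mu$; only then does one get $|l_k|\le C_0<1$ for all large $k$ and geometric decay of $V_kW_k-U_k^2$. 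Your proposal needs this two-regime structure (transient escape from $\{l\le-1\}$ via accumulated increments, then a uniform bound forever after) to be a proof; as written, the trajectory could in principle linger with $l_k^2\ge 1$ and the product $\prod l_k^2$ would not tend to zero.
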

\begin{proof}
  By $0\le V_kW_k\le (\frac{V_k+W_k}{2})^2$ and the boundedness theorem, we have
  \begin{align*}
    V_k+W_k\le \frac{2}{h}+2h\mu^2(1+\frac{1}{1-h^2\mu^2}),
  \end{align*}
  and
  \begin{align}
    &l_k=1 - h (V_k + W_k) + h^2 (V_k W_k - \mu^2)\ge -1-h^2\mu^2(3+\frac{2}{1-h^2\mu^2})\label{eqn:lower_bound_of_l},\\
    &l_k=1 - h (V_k + W_k) + h^2 (V_k W_k - \mu^2)\le \frac{h^2}{4}(V_k+W_k)^2-h(V_k+W_k)+1-h^2\mu^2\nonumber\\
    &\qquad\qquad\le 1-h^2\mu^2<1.\nonumber
  \end{align}

  We will show that there exists $K>0$ and a constant $C>0$ only depending on $h\mu$, s.t. when $k>K$, $|l_k|<1-C$.

  If $V_k+W_k<\frac{2}{h}-h\mu^2$, then
  \begin{align*}
    l_k=1 - h (V_k + W_k) + h^2 (V_k W_k - \mu^2)>-1+h^2V_k W_k> -1.
  \end{align*}
  (If $V_kW_k=0$, then $V_kW_k-U_k^2=0$ meaning it just converge.)
  
  Therefore when $V_k+W_k<\frac{2}{h}-h\mu^2$, we have $|l_k|<1$. We only need to consider $l_{k+1}$ when $\frac{2}{h}-h\mu^2\le V_k+W_k\le\frac{2}{h}+2h\mu^2(1+\frac{1}{1-h^2\mu^2})$. Next we replace $l_k, V_k, W_k, U_k$ by $l,V,W,U$ for simplicity.
  \begin{align}
    l_{k+1}&=1-h(V_{k+1}+W_{k+1})+h^2(V_{k+1}W_{k+1}-\mu^2)\nonumber\\
    &=1-h(V_{k+1}+W_{k+1})+h^2(V_{k+1}W_{k+1}-U_{k+1}^2)+h^2(U_{k+1}^2-\mu^2)\nonumber\\
    &=1-h(V_{k+1}+W_{k+1})+h^2l^2(VW-U^2)+h^2(U_{k+1}^2-\mu^2)\nonumber\\
    &\overset{\eqref{eqn:quadratic_terms_update}}{=}l + h^2 (3 + l^2 - h (1 + 4 h^2 \mu^2) (V + W)) (V W - U^2)\nonumber \\
    &\qquad+ 
    h^2 U^2 (3 + (l + 2 h^2 \mu^2)^2 - h (1 + 4 h^2 \mu^2) (V + W))\nonumber\\
    &\qquad +
     h^2 \mu^2 (h^2 (W + V)^2 - h (W + V) + 4 h^4 V^2 W^2)\nonumber \\
     &\qquad+ 
    2 h^2 U \mu ((1 - (l + 2 h^2 \mu^2)) h (W + V) - 2 + 
       2 (l + 2 h^2 \mu^2) (1 - l - h^2 \mu^2))\label{eqn:l_{k+1}}
  \end{align}
  If $l\ge-4 h^2\mu^2$, we have
\begin{align*}
  l_{k+1}
  &=l + h^2 (3 + l^2 - h (1 + 4 h^2 \mu^2) (V + W)) (V W - U^2) \\
  &\qquad+ 
  h^2 U^2 (3 + (l + 2 h^2 \mu^2)^2 - h (1 + 4 h^2 \mu^2) (V + W))\\
  &\qquad +
   h^2 \mu^2 (h^2 (W + V)^2 - h (W + V) + 4 h^4 V^2 W^2) \\
   &\qquad+ 
  2 h^2 U \mu ((1 - (l + 2 h^2 \mu^2)) h (W + V) - 2 + 
     2 (l + 2 h^2 \mu^2) (1 - l - h^2 \mu^2))\\
  &\ge l+2 h^2 U \mu ((1 - (l + 2 h^2 \mu^2)) h (W + V) - 2 + 
  2 (l + 2 h^2 \mu^2) (1 - l - h^2 \mu^2))\\
  &\ge l-h^2\mu(V+W)(5 h^2 \mu^2 + (9 h^4 \mu^4)/8)\\
  &\ge -4h^2\mu^2 -h\mu(2+2h^2\mu^2(1+\frac{1}{1-h^2\mu^2}))(5 h^2 \mu^2 + (9 h^4 \mu^4)/8)\\
  &>-1,
\end{align*}
where the first inequality is because all the removed terms are positive when $\frac{2}{h}-h\mu^2\le V+W\le\frac{2}{h}+2h\mu^2(1+\frac{1}{1-h^2\mu^2})$; the second inequality follows from recollecting $l$ and take $l$ from quadratic formula with the $h(V+W)$ taken at the bound, and $U>-\frac{V+W}{2}$; the third inequality follows from the bounds of $h(V+W)$ and $l$
; the last inequality is from $h\mu\le\frac{1}{2\sqrt{7}}$.

If $l<-4h^2\mu^2\Rightarrow l+h^2\mu^2<0$. By rewriting $l_{k+1}$ in the following way
\begin{align*}
  l_{k+1}&=l+h^2 (4 h^4 V^2 W^2 - h (V + W) + h^2 (V + W)^2) \mu^2  +4 h^4 U^2 \mu^2 (l + h^2 \mu^2) \\
  &\qquad + 
  2 h^2 U \mu (-2 - h (V + W) (-1 + l + 2 h^2 \mu^2) - 
     2 (-1 + l + h^2 \mu^2) (l + 2 h^2 \mu^2))\\
     &\qquad + 
  h^2 V W (3 + l^2 - h (V + W) (1 + 4 h^2 \mu^2)),
\end{align*}
we have that the minimum of $l_{k+1}$ is achieved when $U=\pm\sqrt{VW}$. Thus we will consider two cases: $U\ge 0$ and $U<0$. Note we always have $U^2<VW$ (otherwise $VW-U^2=0$ just converges) and hence $l_{k+1}$ is larger than the corresponding value at $U=\pm\sqrt{VW}$. Let $q=h|U|, v=h\mu$, $q=cv$ for $c\ge 0$. Then we have
\begin{align}
  \label{eqn:alignment_q<1}
  q^2=c^2v^2\le h^2VW= h(V+W)-1+l-v^2<1,\quad i.e.,\quad c<\frac{1}{v}.
\end{align}
Next, let $$l'_{k+1}=l_{k+1}- h^2 (3 + l^2 - h (1 + 4 h^2 \mu^2) (V + W)) (V W - U^2)<l_{k+1}-h^2(\frac{1}{2}+l^2)(V W - U^2).$$

\tb{First} let $U\ge 0$ and $q=hU<h\sqrt{VW}$. Choose $q=hU=h\sqrt{VW}$. Then by $$h(V+W)=1-l+h^2(VW-\mu^2)=1-l+c^2v^2-v^2,$$
 we have the minimum value of $l_{k+1}$ when $U\ge 0$ 
\begin{align}
  \label{eqn:l'_k+1_with_c_v_l}
  l_{k+1}\ge l'_{k+1}&=l + (-1 + c) (-((-1 + l) l) + c (2 + l + l^2)) v^2 \\
  &\qquad- (-1 + c)^2 (1 + 
  c^2 - 2 l + 2 c l) v^4 + (-1 + c)^4 v^6. \nonumber
\end{align}

1) When $-1<l<-4h^2\mu^2$ and $c=1$, we have $l_{k+1}=l'_{k+1}=l=l_k$.

2) When $-1<l<-4h^2\mu^2$ and $c>1$, the derivative of \eqref{eqn:l'_k+1_with_c_v_l} with respect to $c$ is
\begin{align*}
  \frac{d}{dc}l'_{k+1}&=(1 - l) l v^2 + (-2 - l - l^2) v^2 + 2 v^4 - 6 l v^4 - 4 v^6 + 
 3 c^2 (2 v^4 - 2 l v^4 - 4 v^6) \\
 &\qquad+ 4 c^3 (-v^4 + v^6) + 
 2 c ((2 + l + l^2) v^2 - 2 v^4 + 6 l v^4 + 6 v^6),
\end{align*}
which is a third order polynominal with negative coefficient of the highest degree term and has a root in $[0,1]$. Also, when $c=1$, $\frac{d}{dc}l'_{k+1}=2v^2(1+l)>0$. Therefore, when $1<c<\frac{1}{v}$, $l'_{k+1}$ either increases or first increases then decreases, namely, the minimum of $l'_{k+1}$ is achieved at $c=1$ or $c=\frac{1}{v}$. If $c=1$, by the previous discussion, $l'_{k+1}=l$. If $c=\frac{1}{v}$, 
\begin{align*}
  l'_{k+1}&=1 + l^2 (-1 + v)^2 - v^2 - 2 v^3 + 5 v^4 - 4 v^5 + v^6 + 
  l (2 - 2 v + 5 v^2 - 6 v^3 + 2 v^4)\\
  &\ge \frac{v^2 (-24 + 44 v - 25 v^2 + 4 v^3)}{4 (-1 + v)^2}>-1,
\end{align*}
where the first inequality follows from the property of quadratic function by taking $l=-\frac{(2 - 2 v + 5 v^2 - 6 v^3 + 2 v^4)}{2(-1+v)^2}$. Therefore when $-1<l<-4h^2\mu^2$ and $c>1$, $l_{k+1}>-1$.

3) When $-1<l<-4h^2\mu^2$ and $c<1$, rewrite $l'_{k+1}$ with respect to $l$ and we have
\begin{align*}
  l'_{k+1}&=2 (-1 + c) c v^2 - (-1 + c)^2 v^4 - (-1 + c)^2 c^2 v^4 + (-1 + 
  c)^4 v^6 + l^2 ((1 - c) v^2 \\
  &+ (-1 + c) c v^2) + 
l (1 + (-1 + c) v^2 + (-1 + c) c v^2 + 2 (-1 + c)^2 v^4 - 
  2 (-1 + c)^2 c v^4),
\end{align*}
where the minimum point is $$l=-\frac{1}{2(-1+c)^2v^2}-\frac{1}{2}-(1-c)v^2<-\frac{1}{2v^2}-\frac{1}{2}<-1.$$
Therefore the minimum of $l'_{k+1}$ is obtained at $l=-1$, i.e.,
\begin{align*}
  l'_{k+1}&\ge -1 + (-1 + c)^2v^2(2  -  (3 - 2 c + c^2) v^2 + (-1 + 
    c)^2 v^4)\\
    &\ge -1 + (-1 + c)^2v^2(2  -  3 v^2 + (-1 + 
    c)^2 v^4)>-1,
\end{align*}
where the inequality is from $0\le c<1$. Therefore when $-1<l<-4h^2\mu^2$ and $c<1$, $l_{k+1}>-1$.

4) When $l\le -1$ and $c\ge \frac{5}{4}$, consider $l'_{k+1}-l$~\eqref{eqn:l'_{k+1}-l} and take the derivative $\frac{d}{dc}(l'_{k+1}-l)=\frac{d}{dc}l'_{k+1}$. From previous discussion, consider $\frac{d}{dc}l'_{k+1}$ at $c=\frac{5}{4}$, i.e.,
\begin{align*}
  \frac{d}{dc}l'_{k+1}=1/16 v^2 (48 + 8 l^2 - 23 v^2 + v^4 + l (40 - 6 v^2))>v^2 - (161 v^4)/16 + (325 v^6)/16>0,
\end{align*}
where the first inequality follows from $l>-1-6v^2$ by~\eqref{eqn:lower_bound_of_l}. Therefore the minimum of $l'_{k+1}-l$ is at $c=\frac{5}{4}$ or $c=\frac{1}{v}$. If $c=\frac{5}{4}$,
\begin{align*}
  l'_{k+1}-l=\frac{v^2}{256} (160 + 16 l^2 - 41 v^2 + v^4 - 8 l (-18 + v^2))>\frac{v^2}{256}(32 - 705 v^2 + 625 v^4)>0
\end{align*}
where the first inequality is from $l>-1-6v^2$ by~\eqref{eqn:lower_bound_of_l}. If $c=\frac{1}{v}$,
\begin{align*}
  l'_{k+1}-l&=l^2 (-1 + v)^2 + 
  l (-1 + v) (-1 + v - 4 v^2 + 2 v^3) + (-1 + v) (-1 - v + 2 v^3 - 
     3 v^4 + v^5)\\
     &\ge (1 - v)^3 (1 + 3 v + v^2 - v^3) >0
\end{align*}
where the first inequality is from $l\le -1$. Therefore when $l\le -1$ and $c\ge\frac{5}{4}$, $l_{k+1}-l_k>\min\{\frac{v^2}{256}(32 - 705 v^2 + 625 v^4),(1 - v)^3 (1 + 3 v + v^2 - v^3) \}>0$.

In the next two cases $c\le 1$ and $1<c<\frac{5}{4}$, we no longer assume $q=h\sqrt{VW}$ but want the same lower bound of $l'_{k+1}$. Since $q=hU<h\sqrt{VW}$, we have $h(V+W)>1-l+c^2v^2-v^2>0$. By rewriting $l'_{k+1}$ to be
\begin{align*}
  l'_{k+1}=&l - 4 c v^2 + 3 c^2 v^2 + 4 c^4 v^6 + 
  4 c v^2 (1 - l - v^2) (l + 2 v^2)+ v^2 h^2(V+W)^2\\
  &\qquad + 
  c^2 v^2 (l + 2 v^2)^2 + (-v^2 + 2 c v^2 (1 - l - 2 v^2) - 
     c^2 v^2 (1 + 4 v^2)) h(V+W),
\end{align*}
it can be seen from the minimum point w.r.t. $h(V+W)$ that when $h(V+W)>0$ and $0\le c <\frac{5}{4}$, $l'_{k+1}$ decreases as $h(V+W)$ decreases. Therefore by $h(V+W)>1-l+c^2v^2-v^2$, we have the same expression for the lower bound of $l'_{k+1}$ but with $U^2<VW$, i.e.,
\begin{align*}
  l_{k+1}&-l'_{k+1}=h^2 (3 + l^2 - h (1 + 4 h^2 \mu^2) (V + W)) (V W - U^2)>h^2(\frac{1}{2}+l^2)(V W - U^2),\\
  l'_{k+1}&>l + (-1 + c) (-((-1 + l) l) + c (2 + l + l^2)) v^2 \\
  &\qquad- (-1 + c)^2 (1 + 
  c^2 - 2 l + 2 c l) v^4 + (-1 + c)^4 v^6.
\end{align*}

5) When $l\le -1$ and $c\le 1$, consider $l'_{k+1}-l$
\begin{align}
  l'_{k+1}-l&>(-1 + c) (-((-1 + l) l) + c (2 + l + l^2)) v^2 \label{eqn:l'_{k+1}-l}\\
  &\qquad- (-1 + c)^2 (1 + 
  c^2 - 2 l + 2 c l) v^4 + (-1 + c)^4 v^6\nonumber\\
  &=(-1 + c)^2 l^2 v^2 + (-1 + c) l v^2 (1 + c - 2 (1 - c) v^2 + 
  2 (1 - c) c v^2)\nonumber\\
  &\qquad + (-1 + 
  c) v^2 (2 c + (1 - c) v^2 + (1 - c) c^2 v^2 + (-1 + c)^3 v^4)\nonumber
\end{align}
where the minimum point is $l=-\frac{1}{2}+\frac{1 - (1 - c) v^2 + (1 - c) c v^2}{1-c}>-1$. By $l\le -1$
we have 
\begin{align*}
  l'_{k+1}-l&>  v^2 (2(-1 + c)^2 - (-1 + c)^2 (3 - 2 c + c^2) v^2 + (-1 + c)^4 v^4) \\
  &=v^2((-1+c)^2(2-(3-2c+c^2)v^2)+(-1+c)^4v^4)\\
  &\ge v^2((-1+c)^2(2-3v^2)+(-1+c)^4v^4)\ge0.
\end{align*}
Therefore when $l\le -1$ and $c\le 1$, $l_{k+1}-l_k>h^2(\frac{1}{2}+l^2)(V W - U^2)\ge \frac{3}{2}h^2(V W - U^2)$.

6) When $l\le -1$ and $1<c<\frac{5}{4}$, consider $U_{k+1}$. By $h(V+W)-h^2VW=1-l-v^2$, we have
\begin{align*}
  U_{k+1}&=h (1 - h V) \mu W + h \mu V (1 - h W) + 
  U (1 + h^2 V W - h (V + W) + h^2 \mu^2)\\
  &=\mu ((c - 1) (l + v^2) + 1 + c v^2 - h^2 V W).
\end{align*} 
Thus denote $hU_{k+1}=c_{k+1}h\mu=c_{k+1}v$ and we have
\begin{align*}
  c_{k+1}&=(c - 1) (l + v^2) + 1 + c v^2 - h^2 V W\\
  &<(c - 1) (l + v^2) + 1 + c v^2 - c^2v^2<1.
\end{align*}
Then if $l_{k+1}-l\ge 0$, then $l_{k+2}-l=l_{k+2}-l_{k+1}+l_{k+1}-l>l_{k+2}-l_{k+1}>\frac{3}{2}h^2(V W - U^2)$. Otherwise, we have $l_{k+1}<l$ and consider $l_{k+2}$.
Also, the distance between 1 and $c_{k+1}$ is larger than that of 1 and $c$, i.e., 
\begin{align}
  \label{eqn:1-c_{k+1}-(c-1)}
  1-c_{k+1}-(c-1)&>(1-(c - 1) (l + v^2) + 1 + c v^2 - c^2v^2)-(c-1)\\
  &=1+l+v^2+c^2v^2+c(-1-l-2v^2)>0.\nonumber
\end{align}
 From previous discussion, $l'_{k+1}-l$ increases as $l$ decreases. Also $\frac{d}{dc}l'_{k+1}<0$ when $0\le c<1$, so $l'_{k+1}-l$ increases as $c$ decreases. By~\eqref{eqn:1-c_{k+1}-(c-1)}, we have $c_{k+1}<2-c$. Then 
 \begin{align*}
  l'_{k+2}-l&=l'_{k+2}-l_{k+1}+l_{k+1}-l>l'_{k+2}-l_{k+1}+l'_{k+1}-l\\
   &>(-1 + c) (-((-1 + l) l) + c (2 + l + l^2)) v^2 \\
   &\qquad\qquad- (-1 + c)^2 (1 + 
   c^2 - 2 l + 2 c l) v^4 + (-1 + c)^4 v^6 \\
   &\qquad+ (1 - 
   c) v^2 (-((-1 + l) l) - (-2 + c) (2 + l + l^2) \\
   &\qquad\qquad+ (-1 + c) (5 + 
      c^2 + 2 l - 2 c (2 + l)) v^2 - (-1 + c)^3 v^4)\\
  &=2 v^2 (-1 + 
  c)^2 ( (2 + l + l^2) + (-2 - (-1 + c)^2)  v^2 + (-1 + c)^2 v^4)\\
  &> 2 v^2 (-1 + 
  c)^2((2-3v^2)+(-1+c)^2v^4)>0,
 \end{align*}
where the second inequality follows from  $c_{k+1}<2-c$ and $l_{k+1}<l$; the last inequality is by $l\le-1$ and $1<c<\frac{5}{4}$. Therefore when $l\le -1$ and $1<c<\frac{5}{4}$, $l_{k+2}-l_k>\frac{3}{2}h^2(V W - U^2)$ or $2 v^2 (-1 + 
  c)^2((2-3v^2)+(-1+c)^2v^4)$.

\tb{Second}, let $U<0$ and $q=-hU<h\sqrt{VW}$. Choose $q=-hU=h\sqrt{VW}$. Then similarly we have the minimum value of $l_{k+1}$ when $U<0$ 
\begin{align}
  \label{eqn:l'_k+1_U<0}
  l'_{k+1}&=l + (1 + c) ((-1 + l) l + c (2 + l + l^2)) v^2 - (1 + c)^2 (1 + c^2 \\
  &\qquad- 
  2 l - 2 c l) v^4 + (1 + c)^4 v^6 \nonumber\\
  &=(1 + c)^2 l^2 v^2 + 
  l (1 + (-1 + c^2) v^2 + 2 (1 + c)^3 v^4) \nonumber\\
  &\qquad+ (1 + 
     c) v^2 (v^2 (-1 + v^2) + c^3 v^2 (-1 + v^2) + 
     c^2 v^2 (-1 + 3 v^2) + c (2 - v^2 + 3 v^4))\nonumber\\
  &\ge -1 + (-4 + 4 c + 3 c^2) v^2 + (15 + 16 c - 5 c^2 - 8 c^3 - 
  2 c^4) v^4\\
  &\qquad + (-5 - 4 c + 2 c^2 + c^3)^2 v^6,\nonumber
\end{align}
where the inequality is because when $l>-1-6v^2+c^2v^2$ (from~\eqref{eqn:lower_bound_of_l}), $l'_{k+1}$ increases as $l$ increases.

1) When $c\ge 1$, since $(-5 - 4 c + 2 c^2 + c^3)^2 v^6\ge 0$, we consider $(-4 + 4 c + 3 c^2) v^2 + (15 + 16 c - 5 c^2 - 8 c^3 - 
2 c^4) v^4$ denoted to be $l^{part}_{k+1}$
\begin{align*}
  \frac{d}{dc}l^{part}_{k+1}=(4 + 6 c) v^2 + (16 - 10 c - 24 c^2 - 8 c^3) v^4.
\end{align*}
It has a root between -2 and 0. Also when $c=1$, $\frac{d}{dc}l^{part}_{k+1}>0$; when $c=\frac{1}{v}$, $\frac{d}{dc}l^{part}_{k+1}<0$. Thus, the minimum of $l^{part}_{k+1}$ is achieved at $c=1$ or $c=\frac{1}{v}$, i.e., $l^{part}_{k+1}\ge v^2 (3 + 16 v^2)$. Then
\begin{align*}
  l'_{k+1}\ge -1+v^2 (3 + 16 v^2)+(-5 - 4 c + 2 c^2 + c^3)^2 v^6>0.
\end{align*}
Therefore when $c\ge 1$, $l_{k+1}>-1$.

2) When $0\le c<1$ and $l>-1$, from previous discussion of $l'_{k+1}$ we have
\begin{align*}
  l'_{k+1}&\ge -1 + 2 (1 + c)^2 v^2 - (1 + c)^2 (3 + 2 c + c^2) v^4 + (1 + c)^4 v^6\\
&\ge -1+(1+c)^2v^2(2-6v^2)+(1+c)^4v^6\\
&\ge -1+v^2(2-6v^2)+v^6>-1.
\end{align*} 
Therefore when $0\le c<1$ and $l>-1$, $l_{k+1}>-1$.

3) When $0\le c<1$ and $l\le -1$, consider $l'_{k+1}-l$. By~\eqref{eqn:l'_k+1_U<0},
\begin{align*}
  l'_{k+1}-l&=(1 + c)^2 l^2 v^2 + 
  l ((-1 + c^2) v^2 + 2 (1 + c)^3 v^4)\\
  &\qquad + (1 + c) v^2 (v^2 (-1 + v^2) + 
     c^3 v^2 (-1 + v^2) + c^2 v^2 (-1 + 3 v^2) + c (2 - v^2 + 3 v^4))\\
     &\ge 2 (1 + c)^2 v^2 - (1 + c)^2 (3 + 2 c + c^2) v^4 + (1 + c)^4 v^6\\
     &\ge v^2(2-6v^2)+v^6,
\end{align*}
where the first inequality is because the minimum point of $l$ is greater than -1. Therefore when $0\le c<1$ and $l\le -1$, $l_{k+1}-l_k>v^2(2-6v^2)+v^6$.

Actually all the proofs of $l_k>-1\Rightarrow l_{k+1}>-1+C$ for some $C>0$ are valid for all $0<V_k+W_k\le\frac{2}{h}+2h\mu^2(1+\frac{1}{1-h^2\mu^2})$.

In general, if $l_{k}>-1$, then there exists a constant $C>0$ only depending on $h\mu$ s.t. $l_{i}>-1+C$ for all $i>k$. If $l_k\le -1$, then either (1) $l_{k+1}$ increases at least by a fixed value only depending on $h\mu$, or (2) $l_{k+2}$ or $l_{k+1}$ increases by $\frac{3}{2}h^2(V_kW_k-U_k^2)$. If it keeps staying in case (2), then since $l_k\le -1$, $V_kW_k-U_k^2$ will be larger and larger until $l_i>-1$ for some $i$. Therefore, there exists a $K>0$, s.t. when $k>K$, $l_k>-1+C$. Then because $l_k\le 1-h^2\mu^2$, let $0<C_0=\max\{1-C,1-h^2\mu^2\}<1$ and we have $|l_k|\le C_0<1$ for all $k>K$. Further, $$V_kW_k-U_k^2\le C_0^{2(k-K)}(V_KW_K-U_K^2)\to 0\quad \mathrm{as}\ k\to\infty,$$
i.e., $1-\frac{U_k^2}{V_kW_k}\to 0\Rightarrow |\cos(\angle(x_k,y_k))|\to 1$.

Otherwise, there exists a $K>0$, s.t., $V_KW_K-U_K^2=0$, then $V_kW_k-U_k^2=0$ for all $k\ge K$. There are two cases. (i) $x_K$ and $y_K$ are already aligned. (ii) one of $V_K$ and $W_K$ is 0, WOLG assume $V_K=0$. Then from the GD update, 
\begin{align*}
  \begin{cases}
    & x_{k+1}=x_k+h(\mu I-x_k y_k^\top)y_k\\
    & y_{k+1}=y_k+h(\mu I-y_k x_k^\top)x_k
  \end{cases},
\end{align*}
$x_{K+1}$ and $y_{K+1}$ will be aligned for both cases. Then for all $k\ge K+1$, $x_k$ and $y_k$ is aligned, i.e., $|\cos(\angle(x_k,y_k))|=1$.
\end{proof}

\section{Proof of Theorem~\ref{thm:general_case}}
\label{app:general_matrix_factorization_balancing}

\subsection{General rank 1 approximation for non-negative diagonal matrix}

We first discuss an easy case where $A$ is a non-negative diagonal matrix (later on we will see this is actually the canonical case) and consider 
\begin{align}
\label{eqn:rank1_general}
\min_{x,y\in\RR^n} \|A-xy^\top \|_F^2/2.
\end{align}

Assume $A=\begin{pmatrix}
\mu_1I_{n_1}&&\\&\ddots&\\ &&\mu_m I_{n_m}
\end{pmatrix}$, where $I_{n_i}$ is an identity matrix of dimension $n_i\times n_i$, $\sum_{i=1}^m n_i=n$, $\mu_i\ge 0,\ \mu_i\ne\mu_j$ for $i\ne j$. Let  $S_i=\{1,2,\cdots,n_i\}+\sum_{j=1}^{i-1} n_j$, i.e., an index set describing the positional indices of $I_{n_i}$. Let $\mu_{max}=\max\{\mu_1,\cdots,\mu_m\}$ and $S_{max}$ be the corresponding index set. Let [$x_\infty$, $y_\infty$] be a fixed point of the GD map. Let $x_{s,\infty}$ and $y_{s,\infty}$ be the $s$th element of $x_\infty$ and $y_\infty$. 

The following theorem characterizes the fixed points of the GD map. 
\begin{theorem}
\label{thm:fixed_points}
The fixed points of GD map satisfy $\sum_{s\in S_i}x_{s,\infty}^2\cdot \sum_{s\in S_i}y_{s,\infty}^2=\mu_i^2,\  x_{s,\infty}=y_{s,\infty}=0$ for $s\notin S_i$, $\forall i=1,\cdots,n$. Thus $\|x_\infty\|\|y_\infty\|=\mu_i$, $\forall i=1,\cdots,n$.
\end{theorem}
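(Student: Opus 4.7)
The plan is to translate the fixed-point condition into two simultaneous eigenvalue-like equations for $A$, then exploit the block-diagonal structure of $A$ together with the distinctness of the $\mu_i$ to force the nonzero coordinates of $x_\infty, y_\infty$ to lie in a single block $S_i$.

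First I would write down the fixed-point condition for the GD map by setting $x_{k+1}=x_k=x_\infty$ and $y_{k+1}=y_k=y_\infty$:
\[
 (A-x_\infty y_\infty^\top)y_\infty=0, \qquad (A^\top -y_\infty x_\infty^\top)x_\infty=0.
\]
Since $A=A^\top$, these become the coupled system
\[
A y_\infty =\|y_\infty\|^2\, x_\infty, \qquad A x_\infty =\|x_\infty\|^2\, y_\infty. \tag{$\ast$}
\]
Because $A$ is diagonal with entry $\mu_i$ on each index $s\in S_i$, reading off the $s$-th coordinate of $(\ast)$ yields
\[
\mu_i\, y_{s,\infty}=\|y_\infty\|^2\, x_{s,\infty}, \qquad \mu_i\, x_{s,\infty}=\|x_\infty\|^2\, y_{s,\infty}.
\]

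Next I would multiply these two scalar equations to get $\mu_i^2\, x_{s,\infty}y_{s,\infty}=\|x_\infty\|^2\|y_\infty\|^2\, x_{s,\infty}y_{s,\infty}$, so for every index $s\in S_i$ one has either $x_{s,\infty}y_{s,\infty}=0$ or $\mu_i=\|x_\infty\|\,\|y_\infty\|$. Combined with the two scalar equations, an elementary case analysis (within each block $S_i$) shows that either both $x_{s,\infty}$ and $y_{s,\infty}$ vanish for all $s\in S_i$, or else $\mu_i=\|x_\infty\|\,\|y_\infty\|$ and the pair $(x_{|S_i},y_{|S_i})$ is nonzero there. The distinctness of the $\mu_i$ then immediately prevents this alternative from holding in two different blocks simultaneously: there is at most one index $i_\star$ with $\mu_{i_\star}=\|x_\infty\|\,\|y_\infty\|$.

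It follows that $x_{s,\infty}=y_{s,\infty}=0$ for all $s\notin S_{i_\star}$, and hence $\|x_\infty\|^2=\sum_{s\in S_{i_\star}} x_{s,\infty}^2$ and $\|y_\infty\|^2=\sum_{s\in S_{i_\star}} y_{s,\infty}^2$, which multiplied together give the claimed identity $\sum_{s\in S_{i_\star}}x_{s,\infty}^2\cdot\sum_{s\in S_{i_\star}}y_{s,\infty}^2=\mu_{i_\star}^2$. The only piece requiring mild care is the degenerate fixed points in which one of $x_\infty,y_\infty$ vanishes (or both): then $(\ast)$ reduces to $Ay_\infty=0$ (resp.\ $Ax_\infty=0$), so the nonzero vector must lie in $\ker A$, i.e.\ in the block $S_i$ with $\mu_i=0$ (if any), which is still consistent with the statement since $\|x_\infty\|\,\|y_\infty\|=0=\mu_i$. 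I expect this degenerate case, together with making the $\forall i$ phrasing precise (``there exists $i_\star\in\{1,\dots,m\}$ such that\dots''), to be the main bookkeeping obstacle; the algebra in $(\ast)$ is otherwise straightforward.
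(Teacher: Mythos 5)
Your proposal is correct and follows essentially the same route as the paper's proof: write the fixed-point equations $Ay_\infty=\|y_\infty\|^2x_\infty$, $Ax_\infty=\|x_\infty\|^2y_\infty$, read them off coordinate-wise using the block-diagonal structure, multiply to get $\mu_i^2=\|x_\infty\|^2\|y_\infty\|^2$ on any block with a nonzero coordinate, and invoke distinctness of the $\mu_i$. Your explicit treatment of the degenerate case ($x_\infty$ or $y_\infty$ zero, forcing support in $\ker A$) and of the ``$\forall i$'' phrasing is slightly more careful than the paper's, but adds nothing essentially new.
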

\begin{proof}
To solve the fixed points, we have $\forall i=1,\cdots,n$ and $x\in S_i$
\[
\begin{cases}
&(A-x_\infty y_\infty^\top )y_\infty=Ay_\infty-(y_\infty^\top y_\infty) x_\infty=0\\
&(A-x_\infty y_\infty^\top )^\top x_\infty =A^\top x_\infty -(x_\infty ^\top x_\infty )y_\infty =0
\end{cases}
\]
i.e.,
\[
\begin{cases}
&\mu_i y_{s,\infty }=(y_\infty ^\top y_\infty )x_{s,\infty }\\
&\mu_i x_{s,\infty }=(x_\infty ^\top x_\infty )y_{s,\infty }
\end{cases}
\Rightarrow \mu_i^2 x_{s,\infty } y_{s,\infty }=(x_\infty ^\top x_\infty )(y_\infty ^\top y_\infty )x_{s,\infty }y_{s,\infty }
\]

Obviously, $x_\infty=y_\infty=0$ is a fixed point. If $x_{s,\infty }\ne 0$, then $y_{s,\infty }\ne 0$ and we have $\mu_i^2=(x_\infty ^\top x_\infty )(y_\infty ^\top y_\infty )$. Since $\mu_i\ne\mu_j$ for $i\ne j$, there exists at most one $i$, s.t. $\mu_i^2=(x_\infty ^\top x_\infty )(y_\infty ^\top y_\infty )$ and all the $x_{s,\infty },y_{s,\infty }$ s.t. $s\in S_j$, $j\ne i$ is zero. 
\end{proof}

This theorem identifies that each of these fixed points is concentrated in one of the positions of the eigenvalue blocks. Note the fixed points contain both stable and unstable ones, i.e., all the global minima and saddles are of the above form. Moreover, the global minima are the points obeying $\sum_{s\in S_{max}}x_{s,\infty}^2\cdot \sum_{s\in S_{max}}y_{s,\infty}^2=\mu_{max}^2,\  x_{s,\infty}=y_{s,\infty}=0$ for $s\notin S_{max}$ and also $\|x_\infty\|\|y_\infty\|=\mu_{max}$. The saddles are the ones with $\mu_i\ne\mu_{max}$, $\forall i=1,\cdots,n$.

\begin{remark}[The alignment of $x_\infty$ and $y_\infty$ at the global minimum]
\label{rmk:alignment}
At the global minimum of the objective \eqref{eqn:rank1_general} with diagonal and non-negative $A$ defined above, $x_\infty y_\infty^\top $ is the rank-1 approximation of $A$ with the largest eigenvalue, i.e., we have $\mu_{max}=\Tr(x_\infty y_\infty^\top)=x_\infty^\top y_\infty$. Also, from the above theorem, $\|x_\infty\| \|y_\infty\|=\mu_{max}$. Therefore for each global minimum, there exists a scalar $l>0$, s.t., $x_\infty=l y_\infty$, i.e., $x_\infty$ and $y_\infty$ are aligned at the global minimum.
\end{remark}

Based on the analytical form of eigenvalues of the Jacobian of GD map~\eqref{eqn:rank1_general} in Theorem~\ref{thm:fixed_points}, we can establish the following relation between $x_\infty$ and $y_\infty$ at the global minimum via stability analysis.

\begin{theorem}
\label{thm:rank1_general_balancing}
 For almost all initial conditions, if GD converges to a minimum, 
 then $x_\infty^\top  y_\infty=\mu_{max}$ and
 \begin{align}
 \label{ineq:rank_one_general}
     \|x_\infty\|^2+\|y_\infty\|^2< \frac{2}{h};
 \end{align}
 the extent of balancing is quantified by
 \begin{align}
      \|x_\infty-y_\infty\|^2< \frac{2}{h}-2\mu_{max}.
 \end{align}

\end{theorem}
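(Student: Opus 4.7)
\medskip

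\textbf{Proof proposal.} The plan is to view GD as the discrete dynamical system $\psi(x,y)=(x+h(A-xy^\top)y,\ y+h(A^\top-yx^\top)x)$ on $\RR^{2n}$ and combine the fixed-point classification of Theorem~\ref{thm:fixed_points} with a linearized stability analysis at global minima. By the standard measure-theoretic result that the stable set of an unstable fixed point is Lebesgue-null (\citet{alligood1996chaos}), for almost every initial condition any limit of GD must be a \emph{stable} fixed point; hence it suffices to characterize which global minima are stable and extract the stated norm bound from that characterization.

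First, I would use Theorem~\ref{thm:fixed_points} together with Remark~\ref{rmk:alignment} to record that at any global minimum $(x_\infty,y_\infty)$ one has $x_{s,\infty}=y_{s,\infty}=0$ for $s\notin S_{\max}$, $x_\infty=\ell\, y_\infty$ for some $\ell>0$, and $\|x_\infty\|\,\|y_\infty\|=\mu_{\max}$; these immediately give $x_\infty^\top y_\infty=\mu_{\max}$, settling the first claim. Next I would linearize $\psi$ at $(x_\infty,y_\infty)$: write $D\psi=I_{2n}-h\,\nabla^2 f(x_\infty,y_\infty)$ and exploit the block/alignment structure to diagonalize $\nabla^2 f$ by decomposing $\RR^{2n}$ into (a) the $2$-dimensional aligned subspace $\mathrm{span}\{(x_\infty,y_\infty),\,(x_\infty,-y_\infty)\}$, (b) tangent directions to the continuous family of global minima (which yield $0$-eigenvalues of $\nabla^2 f$, hence eigenvalue $1$ of $D\psi$, and are irrelevant to stability), and (c) the off-block directions $s\notin S_{\max}$, which at a global minimum contribute eigenvalues depending only on the gap $\mu_{\max}-\mu_i$ and can be bounded by the assumption on $h$ (these are automatically stable because we are at the $\mu_{\max}$ block).

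The decisive eigenvalue comes from direction (a): a computation analogous to Theorem~\ref{thm:scalar_hessian_eigenvalue} gives $\|x_\infty\|^2+\|y_\infty\|^2$ as the corresponding eigenvalue of $\nabla^2 f$, so the associated eigenvalue of $D\psi$ is $1-h(\|x_\infty\|^2+\|y_\infty\|^2)$. Stability then forces $\bigl|1-h(\|x_\infty\|^2+\|y_\infty\|^2)\bigr|<1$, i.e.\ $\|x_\infty\|^2+\|y_\infty\|^2<2/h$. I would upgrade the weak inequality to strict by noting that the boundary set $\{1-h(\|x\|^2+\|y\|^2)=-1\}$ intersects the minimum manifold in a codimension-$1$ submanifold, whose basin under $\psi^{-1}$ is Lebesgue-null by the Corollary~\ref{cor:nonsingularmap} style argument (non-singularity of $D\psi$ away from a null set). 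The balancing inequality is then immediate: by alignment, $\|x_\infty-y_\infty\|^2=\|x_\infty\|^2+\|y_\infty\|^2-2x_\infty^\top y_\infty<2/h-2\mu_{\max}$.

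The main obstacle is step (c)---verifying that the off-block directions do not create additional unstable eigenvalues of $D\psi$ at a global minimum under the stated range of $h$, and simultaneously confirming that every non-$\mu_{\max}$ fixed point from Theorem~\ref{thm:fixed_points} is \emph{unstable}, so that the ``converges to a minimum'' hypothesis together with the almost-everywhere argument indeed picks out only the balanced stable minima. This requires a careful but bookkeeping-heavy block computation of $\nabla^2 f$ at each fixed point, showing that at a $\mu_i$-block fixed point with $\mu_i<\mu_{\max}$ some eigenvalue of $\nabla^2 f$ is negative (yielding $|1-h\lambda|>1$ for $h$ small, hence instability), while at a $\mu_{\max}$-block fixed point all non-aligned eigenvalues are nonnegative and controlled by $h$. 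Once that bookkeeping is done, the three conclusions fall out as described.
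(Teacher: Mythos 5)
Your proposal is correct and follows essentially the same route as the paper's proof: almost-everywhere convergence to stable fixed points, identification of $1-h\left(\|x_\infty\|^2+\|y_\infty\|^2\right)$ as an eigenvalue of the GD Jacobian at a global minimum, and the alignment fact $x_\infty^\top y_\infty=\mu_{\max}$ to convert the norm bound into the balancing bound. The only real difference is presentational --- the paper computes the full characteristic polynomial $\prod_i\left(\lambda^2+h(\|x_\infty\|^2+\|y_\infty\|^2)\lambda+h^2(\mu_{\max}^2-\mu_i^2)\right)$ by a block-determinant identity after using the first-order conditions to cancel the cross terms, which also settles your step-(c) worry, since at a $\mu_{\max}$-block minimum all roots lie in $[-h(\|x_\infty\|^2+\|y_\infty\|^2),0]$ and hence are controlled by the same decisive eigenvalue.
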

\begin{proof}
Since each unstable fixed point has its stable set with negligible size when compared to the stable set of stable fixed point, then for almost all initial conditions, if GD converges to a global minimum, this minimum is a stable fixed point of the GD map.

From the above theorem, we know the fixed points are $\mu_i^2=(x_\infty ^\top x_\infty )(y_\infty ^\top y_\infty )$. Assume $\mu^2=(x_\infty ^\top x_\infty )(y_\infty ^\top y_\infty )$. 

For the Jacobian $J=I_{2n}+\begin{pmatrix}
-h(y_\infty^\top y_\infty)I_n & h A -2h x_\infty y_\infty^\top \\
h A^\top -2h y_\infty x_\infty^\top   & -h(x_\infty^\top x_\infty)I_n 
\end{pmatrix}
=I_{2n}+\tilde{A}$, since the lower two blocks commute, we have
\begin{align*}
\det(\lambda I-\tilde{A})&=\det(\lambda^2 I_n+\lambda h(x_\infty^\top x_\infty+y_\infty^\top y_\infty)I_n+h^2\mu^2I_n-h^2A^2\\
&\qquad+h^2(2Ay_\infty x_\infty^\top +2x_\infty y_\infty^\top  A- 4x_\infty y_\infty^\top  y_\infty x_\infty^\top ))\\
&= \det(\lambda^2 I_n+\lambda h(x_\infty^\top x_\infty+y_\infty^\top y_\infty)I_n+h^2\mu^2I_n-h^2A^2\\
&\qquad+h^2(2(A-x_\infty y_\infty^\top )y_\infty x_\infty^\top +2x_\infty y_\infty^\top  (A-y_\infty x_\infty^\top ))) \\
&=\det(\lambda^2 I_n+\lambda h(x_\infty^\top x_\infty+y_\infty^\top y_\infty)I_n+h^2\mu^2I_n-h^2A^2)\\
 &=\prod_{i=1}^n(\lambda^2+h(x_\infty^\top x_\infty+y_\infty^\top y_\infty)\lambda+h^2(\mu^2-\mu_i^2)).
\end{align*}
 Hence the two values, 1 and $1-h(x_\infty^\top x_\infty+y_\infty^\top y_\infty)$, are eigenvalues of $J$ at each nonzero fixed point.  
 
 If $\mu=\max\{\mu_i\}$, then $|1-h(x_\infty^\top x_\infty+y_\infty^\top y_\infty)|< 1\Rightarrow $all the eigenvalues are bounded by 1. Hence $x_\infty^\top x_\infty+y_\infty^\top y_\infty< \frac{2}{h}$.
 
 If $\mu\ne\max\{\mu_i\}$, then $1+\frac{1}{2}(-h(x_\infty^\top x_\infty+y_\infty^\top y_\infty)+\sqrt{h^2(x_\infty^\top x_\infty+y_\infty^\top y_\infty)^2+4h^2((\mu_i^2-\mu^2))})>1$ for $\mu_i\ge\mu$. It has at least one unstable direction. It's stable set is measure 0. 
 
 The second inequality of the theorem follows from $x_\infty^\top y_\infty=\mu$ at the global minimum because of diagonal and non-negative $A$ and SVD.
 \end{proof}

\subsection{General matrix factorization for non-negative diagonal matrix}

In this section, we consider problem~\eqref{eqn:general} via GD iteration but with $A\in\mathbb{R}^{n\times n}$ to be an arbitrary non-negative diagonal matrix.
 
Let $\mu_1\ge\mu_2\ge\cdots\ge\mu_n\ge 0$ be the diagonal elements of $A$. Let $r=$ rank$(A)$ which of course satifies $r\leq n$. Also assume that $\fnorm{A}$ does not scale with $n$ or $d$ (without loss of generality we can assume $\fnorm{A}=\cO(1)$). Then we will have our balancing result in the following.

\begin{theorem}
\label{thm:general_case_diagonal}
For almost all initial conditions, if GD for \eqref{eqn:general} converges to a global minimum, then there exists $c=c(n, d) > c_0$ with constant $c_0 > 0$ independent of $n, d$, and learning rate $h$, such that, $(X, Y)$ satisfies
\begin{align*}
   c( \fnorm{X_\infty}^2+\fnorm{Y_\infty}^2)< \frac{2}{h},
\end{align*}
and the extent of balancing is quantified by
\begin{align*}
   \fnorm{X_\infty-Y_\infty}^2< \frac{2}{ch}-2\sum_{i=1}^{\min\{d,r\}}\mu_i.
\end{align*}
\end{theorem}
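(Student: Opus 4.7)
The plan is to extend the discrete-dynamical-systems stability argument used in Theorem~\ref{thm:rank1_general_balancing}. After the SVD reduction (Section~\ref{sec:general_matrix_factorization}) already places us in the setting of diagonal non-negative $A$, I first identify the structure of global minima of~\eqref{eqn:general}: at any global minimum, $XY^\top$ must coincide with the best rank-$\min(d,r)$ approximation $\sum_{i=1}^{\min(d,r)} \mu_i e_i e_i^\top$ of $A$. This forces $X$ and $Y$ to have nonzero rows only in the ``active'' indices $i \le \min(d,r)$, while within those indices the columns of $X$ and $Y$ satisfy an SVD-type alignment, leaving an $O(d)$ rotational freedom that constitutes the zero modes along the minimum manifold.

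Next, I linearize the GD map $\psi(u) = u - h \nabla f(u)$ at such a fixed point, with $u = (\mathrm{vec}(X), \mathrm{vec}(Y)) \in \mathbb{R}^{2nd}$, and analyze the eigenvalues of $\nabla\psi = I - h \nabla^2 f$. Exploiting the alignment of $(X, Y)$ at the minimum together with $A - XY^\top = 0$, the Hessian $\nabla^2 f$ block-diagonalizes into smaller pieces indexed by pairs of singular-direction indices: on-diagonal blocks governing perturbations within a single active direction, off-diagonal blocks coupling distinct active directions, and an ``inactive'' block for the zero rows. From the on-diagonal active blocks, one extracts an eigenvalue of the form $1 - c\,h\,(\fnorm{X}^2 + \fnorm{Y}^2)$ with $c = c(n,d)$; the factor $c$ encodes how the $d$ columns of $X, Y$ couple through $XY^\top$. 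Consistency with the $d=1$ case recovers $c = 1$, matching Theorem~\ref{thm:rank1_general_balancing}, and bounding $c$ below uniformly in $n, d$ gives the claimed $c_0 > 0$.

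Stability of the fixed point requires $|1 - c\,h\,(\fnorm{X}^2 + \fnorm{Y}^2)| < 1$, which rearranges to the first inequality $c(\fnorm{X}^2 + \fnorm{Y}^2) < 2/h$. The ``almost all initial conditions'' clause follows from the standard stable-manifold argument (as in \citet{alligood1996chaos} and the discussion in Section~\ref{sec:general_matrix_factorization}): GD converges to an unstable fixed point only from a measure-zero set of initializations. For the balancing bound, observe that $\mathrm{Tr}(XY^\top) = \sum_{i=1}^{\min(d,r)} \mu_i$ holds at every global minimum independent of the $O(d)$ representative, hence
\[
\fnorm{X - Y}^2 \;=\; \fnorm{X}^2 + \fnorm{Y}^2 - 2\,\mathrm{Tr}(XY^\top) \;<\; \frac{2}{ch} - 2\sum_{i=1}^{\min(d,r)} \mu_i.
\]

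The principal obstacle is the explicit spectral analysis of $\nabla\psi$ for general $d > 1$. In the rank-$1$ analog, the two blocks of the Jacobian commuted, yielding a clean determinantal factorization. For $d > 1$ this clean structure is lost, and one must first use the SVD-type alignment of $(X, Y)$ at the minimum to reduce the $2nd \times 2nd$ Jacobian into manageable low-dimensional blocks indexed by pairs of singular directions, then isolate the binding eigenvalue among them. Establishing that the coefficient $c(n,d)$ admits a universal lower bound $c_0 > 0$, rather than degenerating as $n$ or $d$ grows, is the delicate quantitative step and is ultimately what makes the balancing effect meaningful in high dimensions.
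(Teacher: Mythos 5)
Your overall framework is the right one and matches the paper's: view GD as the map $\psi(u)=u-h\nabla f(u)$, restrict attention to stable fixed points (unstable ones attract only a null set of initializations), derive the first inequality from the spectral radius condition on $I-hM$ with $M=\nabla^2 f$ at the minimum, and obtain the balancing bound from $\fnorm{X-Y}^2=\fnorm{X}^2+\fnorm{Y}^2-2\Tr(XY^\top)$ with $\Tr(XY^\top)=\sum_{i=1}^{\min\{d,r\}}\mu_i$. However, the central quantitative step --- that $M$ has an eigenvalue at least $c(\fnorm{X}^2+\fnorm{Y}^2)$ with $c$ bounded below uniformly in $n$ and $d$ --- is exactly the part you leave unproved. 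You propose to get it by block-diagonalizing the $2nd\times 2nd$ Hessian using the SVD-type alignment at the minimum and ``isolating the binding eigenvalue,'' but you yourself flag this as the principal obstacle and give no argument for why an eigenvalue of the asserted form exists for $d>1$, nor why the resulting $c(n,d)$ does not degenerate. Asserting the conclusion of the hard step is not a proof of it, so as written the proposal has a genuine gap.

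It is worth noting that the paper deliberately avoids the explicit spectral decomposition you propose (which, as you observe, loses the commuting-block structure available when $d=1$). Instead it uses a trace-counting argument: after vectorizing, $\Tr(M)=n(\fnorm{X}^2+\fnorm{Y}^2)$ exactly, and using the first-order conditions $(A-XY^\top)Y=0$, $(A-YX^\top)X=0$ together with the normalization $\fnorm{A}=\cO(1)$ one computes $\Tr(M^2)=2\fnorm{X}^2\fnorm{Y}^2+n(\fnorm{X^\top X}^2+\fnorm{Y^\top Y}^2)+d\fnorm{A-XY^\top}^2=\cO(n)$. From this the paper concludes the number $N$ of nonzero eigenvalues of the positive semidefinite matrix $M$ is $\cO(n)$, whence $\lambda_{\max}\ge \Tr(M)/N\ge c_0(\fnorm{X}^2+\fnorm{Y}^2)$, and stability ($|1-h\lambda_{\max}|\le 1$) gives the first inequality. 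Only the largest eigenvalue needs to be lower-bounded; no claim is made that $c(\fnorm{X}^2+\fnorm{Y}^2)$ is itself an eigenvalue. If you wish to complete your route, you would either need to carry out the block reduction and the uniform lower bound on $c(n,d)$ in full, or replace that step with a trace-type estimate of this kind.
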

\begin{proof}
 Let $r=$rank$(A)\le n$. Since each unstable fixed point has its stable set with size negligible when compared to the stable set of stable fixed point, then for almost all initial conditions, if GD converges to a global minimum, this minimum is a stable fixed point of the GD map.

For matrix $X=(x_1\ x_2\ \cdots\ x_d)\in\RR^{n\times d}$, where $x_i\in\RR^n$ is the $i$th column vector, we vectorize the variable and get $vec(X)=\begin{pmatrix}
x_1\\ x_2\\ \vdots \\ x_d
\end{pmatrix}
$. Then we can rewrite the GD iteration,
\[
\begin{cases}
&X_{k+1}=X_k+h(A-X_kY_k^\top )Y_k\\
&Y_{k+1}=Y_k+h(A^\top -Y_kX_k^\top )X_k
\end{cases}
\]
\[
\Rightarrow
\begin{cases}
&vec(X_{k+1})=vec(X_k)+h(I\otimes A-T(vec(Y_k)vec(X_k)^\top ))vec(Y_k)\\
&vec(Y_{k+1})=vec(Y_k)+h(I\otimes A-T(vec(X_k)vec(Y_k)^\top ))vec(X_k)
\end{cases},
\]
where $\otimes$ is the Kronecker product, and  $T:\RR^{nd\times nd}\to\RR^{nd\times nd}$ is a linear operator, s.t.
\begin{align*}
&T\left( \begin{pmatrix}
M_{11}&\cdots &M_{1d}\\
\vdots& &\vdots\\
M_{d1}&\cdots&M_{dd}
\end{pmatrix}   \right)=\begin{pmatrix}
M_{11}^\top &\cdots &M_{1d}^\top \\
\vdots& &\vdots\\
M_{d1}^\top &\cdots&M_{dd}^\top 
\end{pmatrix},\\
\mathrm{then}\qquad&T(vec(Y)vec(X)^\top )=\begin{pmatrix}
x_1 y_1^\top  & x_2 y_1^\top  &\cdots &x_d y_1^\top \\
x_1 y_2^\top  & x_2 y_2^\top  &\cdots &x_d y_2^\top \\
\vdots&\vdots& &\vdots\\
x_1 y_d^\top  & x_2 y_d^\top  &\cdots &x_d y_d^\top 
\end{pmatrix}.
\end{align*}

The Jacobian of the vectorized GD map is (replace $X_k$ and $Y_k$ by $X$ and $Y$) $I_{2nd}-hM$, where
\begin{align*}
 M=\begin{pmatrix}
Y^\top Y\otimes I_n & T(vec(Y)vec(X)^\top )+I\otimes (XY^\top -A)\\
 T(vec(X)vec(Y)^\top )+I\otimes (YX^\top -A) & X^\top X\otimes I_n
\end{pmatrix}.
\end{align*}
We would like to obtain an estimation of the eigenvalues of $I-hM$ for stability analysis. Since $M$ is the Hessian of the objective at the global minimum defined, all the eigenvalues are non-negative. Also, since $\fnorm{A}$ is independent of $h$, $n$ and $d$, we can just assume without loss of generality $\fnorm{A}=\cO(1)$ and then take $\fnorm{X}$ and $\fnorm{Y}$ to be $\cO(1)$ at each global minimum. Then
\begin{align*}
\Tr(M)=n(\fnorm{X}^2+\fnorm{Y}^2)=\cO(n).
\end{align*}
Next consider $\Tr(M^2)$. We only need the diagonal blocks of $M^2$. The two diagonal blocks are the following
\begin{align*}
M_1&=(Y^\top Y\otimes I_n)^2\\
&\qquad+[T(vec(Y)vec(X)^\top )+I\otimes (XY^\top -A)][T(vec(X)vec(Y)^\top )+I\otimes (YX^\top -A)],\\
M_2&=(X^\top X\otimes I_n)^2\\
&\qquad+[T(vec(X)vec(Y)^\top )+I\otimes (YX^\top -A)][T(vec(Y)vec(X)^\top )+I\otimes (XY^\top -A)].
\end{align*}
Since we evaluate this matrix at the minimum, we have the gradient equals 0, i.e.,
\begin{align*}
(A-XY^\top )Y=0, (A-YX^\top )X=0.
\end{align*}
By the mixed-product property of Kronecker product, we have
\begin{align*}
M_1&=(Y^\top Y)^2\otimes I_n+T(vec(Y)vec(X)^\top )T(vec(X)vec(Y)^\top )\\
&\qquad+I_d\otimes [(A-XY^\top )(A-YX^\top )],\\
M_2&=(X^\top X)^2\otimes I_n+T(vec(X)vec(Y)^\top )T(vec(Y)vec(X)^\top )\\
&\qquad+I_d\otimes [(A-YX^\top )(A-XY^\top )].
\end{align*}
Then
\begin{align*}
\Tr(M^2)=2\fnorm{X}^2\fnorm{Y}^2+n(\fnorm{X^\top X}^2+\fnorm{Y^\top Y}^2)+d\fnorm{A-XY^\top}^2.
\end{align*}
Also, this is the global minimum, meaning
\begin{align*}
\fnorm{A-XY^\top}^2=\Tr((A-XY^\top )(A-YX^\top ))=\begin{cases}
\sum_{i=d+1}^r \mu_i^2,&\ d<r\\
0,&\ d\ge r
\end{cases}
\end{align*}

Therefore, when $d\ge r$, we have
\begin{align*}
\Tr(M^2)=2\fnorm{X}^2\fnorm{Y}^2+n(\fnorm{X^\top X}^2+\fnorm{Y^\top Y}^2)=\cO(n).
\end{align*}
When $d<r$, we have
\begin{align*}
\Tr(M^2)&=2\fnorm{X}^2\fnorm{Y}^2+n(\fnorm{X^\top X}^2+\fnorm{Y^\top Y}^2)+d\fnorm{A-XY^\top}^2 \\
&<2\fnorm{X}^2\fnorm{Y}^2+n(\fnorm{X^\top X}^2+\fnorm{Y^\top Y}^2)+n\fnorm{A-XY^\top}^2= \cO(n).
\end{align*}
Therefore, the number of non-zero eigenvalues is $N=\cO(n)$. Let $\lambda_{max}$ be the largest eigenvalue of $M$. Then there exist a constant $c_0>0$
\begin{align*}
\lambda_{max}\ge \Tr(M)/N\ge c_0(\fnorm{X}^2+\fnorm{Y}^2).
\end{align*}
From stability analysis, we need the absolute values of the eigenvalues of $I-hM$ to be bounded by 1, i.e., 
\begin{align*}
1-h  c_0(\fnorm{X}^2+\fnorm{Y}^2) \ge 1-h\lambda_{max}\ge -1 \Rightarrow c_0(\fnorm{X}^2+\fnorm{Y}^2)\le \frac{2}{h}.
\end{align*}
Obviously, if we pick a constant $c=c(n,d)$ for each $n$ and $d$ instead of $c_0$, the above inequality also holds with $c(n,d)>c_0$.

For the derivation of the second inequality, since at the global minimum $\Tr(X_\infty Y_\infty^\top )=\sum_{i=1}^{\min\{d,r\}} \mu_i=\sum_{i=1}^{\min\{d,n\}} \mu_i$ because for non-negative diagonal $A$, its SVD of the non-zero part satisfies $U_{non-zero}=V_{non-zero}$ , we have
\begin{align*}
    \fnorm{X_\infty-Y_\infty}^2=\fnorm{X_\infty}^2+\fnorm{Y_\infty}^2-2\Tr(X_\infty Y_\infty^\top )\le \frac{2}{ch}-2\sum_{i=1}^{\min\{d,n\}} \mu_i.
\end{align*}
\end{proof}

\subsection{From diagonal matrices to general matrices}

\begin{proof}[Proof of Theorem~\ref{thm:general_case}]
First all the minima of this problem are global minima and all the saddles are unstable fixed points. Since each unstable fixed point has its stable set with size negligible when compared to the stable set of stable fixed point, then for almost all initial conditions, if GD converges to a point, for almost all situations, this point is a stable fixed point of the GD map. 

By singular value decomposition (SVD), $A=UDV^\top $, where $U,D,V\in\mathbb{R}^{n\times n}$, $U$ and $V$ are orthogonal matrices, and $D$ is a non-negative diagonal matrix. Let $X_k=UR_k,\ Y_k=VS_k.$ Then
\[
\begin{cases}
&X_{k+1}=X_k+h(A-X_kY_k^\top )Y_k\\
&Y_{k+1}=Y_k+h(A-X_kY_k^\top )^\top X_k
\end{cases}
\]
\[
\Leftrightarrow 
\begin{cases}
&U R_{k+1}=U R_k+h(UDV^\top -U R_kS_k^\top V^\top )VS_k\\
&VS_{k+1}=VS_k+h(UDV^\top -U R_kS_k^\top V^\top )^\top U R_k
\end{cases}
\]
\[
\Leftrightarrow 
\begin{cases}
&R_{k+1}= R_k+h(D-R_kS_k^\top )S_k\\
&S_{k+1}=S_k+h(D- R_kS_k^\top )^\top  R_k
\end{cases}.
\]
Therefore, problem \eqref{eqn:general} is equivalent to the following problem solved by GD
\begin{align}
\label{eqn_app:diag_matrix_factorization}
     \min_{R,S\in\mathbb{R}^{n\times d}} \fnorm{D-R S^\top }^2/2. 
\end{align}
From Theorem~\ref{thm:rank1_general_balancing} and~\ref{thm:general_case_diagonal}, we obtain: if GD for \eqref{eqn_app:diag_matrix_factorization} converges to a global minimum $(R,S)$, then there exists a constant $c>0$ independent of $n,\ r,\ d$, and $h$, s.t., the limiting minimum satisfies
\begin{align*}
  c( \fnorm{R}^2+\fnorm{S}^2)\le \frac{2}{h},
\end{align*}
and the extent of balancing is quantified by
\begin{align*}
  \fnorm{R-S}^2\le \frac{2}{ch}-2\sum_{i=1}^{\min\{d,r\}}\mu_i.
\end{align*}
Especially, when $d=1,n\in\mathbb{N}^+$, $c=1$.

Since $X=UR$, $Y=VS\Rightarrow R=U^\top X,\ S=V^\top Y$, we have $\fnorm{R}=\fnorm{X}$, $\fnorm{S}=\fnorm{Y}$, and $\fnorm{R-S}=\fnorm{U^\top X-V^\top Y}=\fnorm{ X-(UV^\top )Y}$. The we obtain the first and second inequalities in Theorem~\ref{thm:general_case}. Also, by SVD, $\fnorm{XY^\top}^2=\sum_{i=1}^{\min\{d,n\}}\mu_i^2\le \fnorm{X}^2\fnorm{Y}^2$,  we obtain the third inequality.
\end{proof}
\end{document}